%% file: main.tex
\pdfoutput=1
\documentclass{article}
\usepackage{iclr2027_conference,times}
\iclrfinalcopy

\input{math_commands.tex}
\input{counts.tex}

\input{counts_clamp.tex}
\usepackage{hyperref}
\usepackage{url}
\usepackage{booktabs}
\usepackage{amsmath,amssymb,amsthm}
\usepackage{dsfont}
\usepackage{graphicx}
\usepackage{tikz}
\usetikzlibrary{calc}
\usepackage{pgfplots}
\pgfplotsset{compat=1.17}
\usepackage{xcolor}
\usepackage{placeins}

\newtheorem{theorem}{Theorem}
\newtheorem{proposition}[theorem]{Proposition}
\newtheorem{lemma}[theorem]{Lemma}
\newtheorem{corollary}[theorem]{Corollary}
\theoremstyle{definition}
\newtheorem{definition}[theorem]{Definition}

\theoremstyle{remark}
\newtheorem{remark}[theorem]{Remark}

\newcommand{\pre}{L_{\mathrm{pre}}}
\newcommand{\gated}{L_{\mathrm{gated}}}
\newcommand{\post}{L_{\mathrm{post}}}
\newcommand{\gatedh}{L_{\mathrm{gated\_h}}}
\newcommand{\postw}{L_{\mathrm{post\_w}}}

\newcommand{\postwv}{L_{\mathrm{post\_wv}}}
\newcommand{\gatedhv}{L_{\mathrm{gated\_hv}}}
\newcommand{\act}{\operatorname{act}}
\newcommand{\PX}{\mathcal P_{\mathcal X}}
\newcommand{\QM}{\mathcal Q_{M}}
\newcommand{\KX}{\mathcal K_{\mathcal X}}
\newcommand{\Esc}{\operatorname{Esc}}

\renewcommand{\eqref}[1]{(\ref{#1})}

\title{A Tropical Geometry View of Forgetting:\\
A Per-Unit Projector for\\
Knowledge-Preserving Fine-Tuning}

\input{authors}

\begin{document}
\maketitle
\lhead{Preprint}

\begin{abstract}
Fine-tuning a language model on new text degrades what it already does. Replay-free projectors such as
Adam-NSCL and GPM forbid one shared subspace of a layer's inputs in every row of the update. The tropical
geometry of a ReLU layer shows why this is too coarse. In data space, the units' walls are tropical
hypersurfaces whose cells are dual to the upper vertices of a zonotope; in weight space, each old token is a
hyperplane, and the tokens cut out a polyhedron, the closure of the weights that keep every token on its
side. An exact identity joins the two pictures: the squared change of the layer's output under any weight
change splits into in-cell,
open$\to$closed and closed$\to$open terms, and the first two live on the tokens each unit fires on (its open
tokens). The identity names a gate-aware per-unit projector, and a budget-separation theorem prices exact
protection: it costs a unit the rank of its own open tokens, while a shared subspace pays at least the rank of
their union in every row. On
OPT-1.3b, where $96\%$ of (token, unit) pairs are closed, the projector forgets less than Adam-NSCL at all
six matched budgets from $9$ to $60$ constrained directions per row ($24$ of $27$ seed-pairs,
$p=1.5\times10^{-6}$), the gap widening from $1.1\times$ to $4.3\times$; with $1/5.5$ of the directions it
halves the forgetting of Adam-NSCL at GPM's energy threshold. On OPT-6.7b, it matches Adam-NSCL's
forgetting at matched budget while learning more. As the theory predicts, the open/closed partition is the
operative variable: open tokens beat random, sign-blind and anti-gate token sets on $18$ of $18$
seed-pairs and are equivalent to the first-order criterion, which inherits the partition. In pruning repair,
the identity shows that every derivative-based local model of the output error at the dense weights is blind
to pairs that open: the minimisers of the gate-weighted objective can leave the polyhedron, the objective's
closed-form solution is $1.94$ nats worse than no repair on OPT-1.3b, and a convex one-sided penalty bounds
the escape.
\end{abstract}
\section{Introduction}\label{sec:intro}

A language model fine-tuned on new text loses some of what it could already do. A leading family of
replay-free remedies constrains the update geometrically: Adam-NSCL \citep{wang2021adamnscl} and GPM
\citep{saha2021gpm} project each layer's update onto the approximate null space of the layer's old inputs;
in language models, the same subspace orients or initialises low-rank adapters
\citep{yang2024corda,tang2026loranull}, and AlphaEdit \citep{fang2025alphaedit} projects edits of the MLP
output matrix with it. All of them share one object, a single input subspace per layer, forbidden in
\emph{every} row of the update.

On a matrix whose outputs pass through a ReLU, that object is too coarse. A unit that is closed on a token
contributes exactly zero there, and keeps contributing zero until its pre-activation crosses the threshold;
in OPT-1.3b, $96\%$ of (token, unit) pairs are closed, and a shared subspace constrains all of them.
ReLU-family language models remain a live design point because this sparsity can be exploited at inference
\citep{mirzadeh2024relu,song2025prosparse}.

We start from the geometry (\S\ref{sec:identity}). In data space, a ReLU layer's walls are tropical
hypersurfaces and its gate patterns are dual to the upper vertices of a zonotope; dually, in weight space, the old
tokens cut out a polyhedron, the closure of the weights that keep every token on its side. An exact identity
(Theorem~\ref{thm:decomp}) splits the squared change in a layer's output under \emph{any} weight change into
in-cell, open$\to$closed and closed$\to$open terms, with no remainder and no reference to an objective; the
first two live on the tokens each unit fires on. From it we derive (\S\ref{sec:method}) a gate-aware
per-unit projector: after each step it holds each unit's pre-activation fixed on up to $C$ of the old tokens
it fires on and leaves its closed tokens free. A budget-separation theorem prices this exact protection: it costs a unit the rank of its own open tokens, while a shared subspace pays at
least the rank of their union in every row (Theorem~\ref{thm:budget}). The theory makes six predictions
(P1--P6, end of \S\ref{sec:method}), which organise the experiments.

The measurements bear them out (\S\ref{sec:exp}). On OPT-1.3b, the per-unit projector forgets less than
Adam-NSCL at all six matched budgets from $9$ to $60$ constrained directions per row ($24$ of $27$ seed-pairs,
$p=1.5\times10^{-6}$), the gap widening from $1.1\times$ to $4.3\times$; with $1/5.5$ of the directions it
halves the forgetting of Adam-NSCL at GPM's energy threshold. On OPT-6.7b, it matches Adam-NSCL's forgetting
at matched budget and learns more. At equal per-unit dimension, open tokens beat random, sign-blind and
anti-gate token sets on $18$ of $18$ seed-pairs and are equivalent to the first-order criterion. In pruning
repair, every derivative-based local model of the output error at the dense weights is blind to the
closed$\to$open term, so the minimisers of the gate-weighted objective can leave the polyhedron (the
objective's closed-form solution is $1.94$ nats worse than no repair on OPT-1.3b), and a one-sided penalty
bounds the escape (\S\ref{sec:repair}).

\paragraph{Contributions.}
\begin{itemize}
\item The two dual pictures of a threshold-activation layer, made rigorous: the tropical form and its dual
  subdivision, the activation-preserving polyhedron, and an exact three-term decomposition of the output
  change under any weight change (\S\ref{sec:identity}).
\item A per-unit projector derived from them, with exactness, budget-separation and first-order-inheritance
  results, a Ky Fan characterisation of the best per-unit and shared constraints, and six predictions that
  the experiments test (\S\S\ref{sec:method}--\ref{sec:exp}).
\item For pruning repair: curvature blindness, polyhedral escape and confinement by a convex one-sided
  penalty (\S\ref{sec:repair}).
\end{itemize}

\input{sec_theory}

\section{Experiments}\label{sec:exp}

\paragraph{Setup.} Pre-LN OPT-125m, OPT-1.3b and OPT-6.7b \citep{zhang2022opt} (ReLU, $\theta=0$; on OPT-6.7b
the \texttt{fc1} of the last four blocks). Each run is one fine-tuning transition from the pretrained model, the
setting of \citet{biderman2024loraforgets}, CorDA and LoRA-Null \citep{yang2024corda,tang2026loranull}. The
knowledge to preserve is the pretraining distribution, represented by MiniPile \citep{kaddour2023minipile},
a subset of the Pile, parts of which OPT was pretrained on; it is scored by held-out loss, as in
\citet{ibrahim2024continual}, and by five zero-shot tasks. A $2\,048$-token support builds the projectors,
and a disjoint $1\,024$-token probe measures forgetting. The new task is WikiText-103
\citep{merity2017pointer} or Python code from CodeParrot \citep{codeparrot2022clean}. A run trains
\texttt{fc1} on $65\,536$ tokens: AdamW \citep{loshchilov2019adamw} at lr $10^{-4}$, cosine schedule, $128$
steps, batch $4$. \emph{Forgetting} is the rise in old-probe NLL, \emph{learning} the change in held-out
new-task NLL (more negative is better). Runs are paired by seed; we report paired $t$-tests, sign-flip
permutation tests and random-effects meta-analyses \citep{dersimonian1986meta} (Appendix~\ref{app:cl:setup}).

\subsection{Matched budget, capacity and scale (P1, P4)}\label{sec:budget}

Theorem~\ref{thm:budget} prices protection in constrained directions per row, so that is the axis we hold
equal: each cap is paired with Adam-NSCL at $k_{\mathrm{sh}}=\mathrm{round}(\bar k)$, $\bar k$ the mean of its $k_i$
(Figure~\ref{fig:pareto}; every row in Table~\ref{tab:app:matched}). On OPT-1.3b, the per-unit projector
forgets less at all six budgets from $9$ to $60$ dims/row, on $24$ of $27$ seed-pairs (sign-flip permutation
$p=1.5\times10^{-6}$; random-effects difference $-0.018$ nats, 95\% CI $[-0.028,-0.009]$), and Adam-NSCL
learns at most $0.003$ nats more. The gap widens from $1.1\times$ at $9$ dims/row to $4.3\times$ at $60$: the
shared subspace plateaus at $0.046$--$0.050$ nats between $15$ and $47$ dims/row (seeds $0$--$2$), while the per-unit projector
keeps falling, to $0.010$. The reading P1 suggests is that each added shared direction is spent in every row,
including on the $96\%$ of pairs that are closed, while each added per-unit direction lands on a token its
own unit fires on. OPT-125m's units fire on more tokens (a mean of $111$ of $2\,048$, against $72$), and as P4
predicts, the per-unit advantage needs a larger cap there: at $5.7$ dims/row, where the cap covers under a
tenth of the open pairs, the shared subspace forgets less, and the two meet at $17$. From $17$ upward
the per-unit projector forgets less on $14$ of $18$ seed-pairs (exact permutation $p=5\times10^{-4}$) and
learns more, by $0.023$ nats at $17$ dims/row and $0.008$ at $42$.

\input{fig_pareto}

\paragraph{Capacity.} Against Adam-NSCL at a uniform $k_{\mathrm{sh}}=259$ per row (the mean dimension GPM's
energy criterion selects at $0.95$), our cap $256$ at $47.3$ dims/row forgets $2.3\times$ less ($3/3$,
$p=0.009$) and learns $0.052$ nats more, and cap $128$ learns more and forgets less on WikiText-103 ($5$ of
$6$ seeds) and on Python ($3$ of $3$), where every constrained method ends below the base old-probe NLL, so
forgetting less means improving the old probe more. Cap $256$ removes $95.9\%$ of unconstrained
forgetting on OPT-1.3b and $97.7\%$ on OPT-125m (Appendix~\ref{app:cl:capacity}).

\paragraph{Scale.} On OPT-6.7b, cap $128$ constrains $68.7$ dims/row and forgets the same as Adam-NSCL at $k_{\mathrm{sh}}=69$ ($0.0192$ against
$0.0184$, equivalent under TOST), and it learns $0.011$ nats more ($3/3$ seeds, paired $p=1.5\times10^{-4}$).
Both remove $85\%$ of the unconstrained forgetting of $0.126$.

\subsection{What the criterion reads (P2, P5)}\label{sec:necessity}

To isolate the criterion, we hold each unit's $k_i$ fixed (cap $64$, $\bar k=23.5$) and change only which
tokens fill it (Table~\ref{tab:necessity}). The unit's own open tokens beat rank-matched random tokens, a
sign-blind ranking by $|z-\theta|$ and the most closed tokens, on all $18$ of $18$ seed-pairs (permutation
$p=3.8\times10^{-6}$). The random-token contrast holds again on OPT-125m ($-0.0114$, $p=0.005$), on
Python ($-0.0224$, $p=0.0009$) and on OPT-6.7b at cap $128$ ($-0.0098$, $3/3$, $p=0.007$).

\begin{table}[t]\centering\footnotesize
\caption{\textbf{Identical per-unit dimension, different tokens.} OPT-1.3b, WikiText-103, $\bar k=23.5$, $n=6$
($n=3$ for random-within-open). Gate minus the other selector. Superiority: one-sided paired $t$, 95\% CI;
equivalence: TOST \citep{schuirmann1987tost} at $\delta=0.0068$ nats, $2.5\%$ of unconstrained forgetting,
90\% CI.}
\label{tab:necessity}
\begin{tabular}{lccc}
\toprule
other selector & $\Delta$ forget & CI & verdict \\
\midrule
random tokens (rank-matched) & $-0.0317$ & $[-0.0436,-0.0197]$ & gate better, $p=0.0005$ \\
sign-blind $|z-\theta|$      & $-0.0575$ & $[-0.0756,-0.0393]$ & gate better, $p=0.0002$ \\
anti-gate (most closed)      & $-0.0660$ & $[-0.0799,-0.0520]$ & gate better, $p<10^{-4}$ \\
\midrule
first-order $|\partial L/\partial z|$ & $+0.0024$ & $[-0.0011,+0.0059]$ & equivalent \\
random within the open set   & $+0.0015$ & $[-0.0021,+0.0051]$ & equivalent \\
\bottomrule
\end{tabular}
\end{table}

The two equivalences are the ones P2 names. Random tokens drawn \emph{within} each unit's open set protect
as well as its most active ones, although the two sets differ on $30\%$ of pairs and cover $35.2\%$ against
$59.6\%$ of the active energy. The criterion is therefore the partition itself, not a ranking inside it. The
first-order criterion agrees because it inherits the partition (Proposition~\ref{prop:firstorder}): $99.6\%$
of the pairs $|\partial L/\partial z|$ selects are open. The gate reads the same set off one forward pass,
with no gradient. As P5 predicts, protecting whole units is far weaker: freezing the \texttt{fc1} rows of the
units with the largest old-support activity, at the same $23.5$ dims/row, forgets $0.210$ against the
per-unit projector's $0.025$ ($6/6$, $p<10^{-4}$) and learns $0.039$ nats less. Freezing enough rows to
hold the same $59.6\%$ of the active energy costs $69$ dims/row and still forgets $0.172$
(Appendix~\ref{app:cl:rowfreeze}).

\subsection{Against the published method families}\label{sec:families}

Figure~\ref{fig:families} places the per-unit projector among the replay-free families on the
forgetting--learning plane, each swept over its own hyperparameter on \texttt{fc1}. EWC \citep{kirkpatrick2017ewc},
whose diagonal Fisher reads open pairs only (Proposition~\ref{prop:firstorder}), is dominated from $\lambda=4$
to $4\times10^4$ and reaches the per-unit frontier at $\lambda=4\times10^5$ ($0.0175$, between our caps $128$
and $256$; no paired difference significant). MIGU \citep{du2024migu}, which masks rows by output magnitude
on the new batch, forgets $2\times$ more than our cap $128$ at its default mask ratio $0.7$ and learns $0.051$ nats less ($p=0.0005$).
LoRA \citep{hu2022lora}, to which O-LoRA \citep{wang2023olora} reduces at the first task, never comes within
$5\times$ of our forgetting; the four configurations that learn significantly more than ours, by
$0.035$--$0.042$ nats, forget $12$--$16\times$ more. LoRA-Null \citep{tang2026loranull}, which initialises the
adapter in the null space of the old inputs, cuts LoRA's forgetting by $2.3$--$3\times$ and still forgets
$3.6$--$5.1\times$ more than our cap $128$ (dominated at rank $32$). L2-to-init, EWC with a unit Fisher, forgets
at $\lambda=0.3$ as little as our cap $256$ (TOST-equivalent) and learns $0.049$ nats less ($3/3$, $p=0.02$).
Adam-NSCL with its dimension set by GPM's energy criterion (mean $259$ per row) learns $0.014$ nats more than
our cap $128$ at similar forgetting, which the per-unit projector reaches with $1/7.5$ of the
dimensions and halves with $1/5.5$ ($-0.0117$, $3/3$), while learning $0.016$ nats less
(Appendix~\ref{app:cl:families}). Post-hoc interpolation toward the base, WiSE-FT \citep{wortsman2022wiseft},
composes with both projectors: interpolated, every constrained run learns more at zero forgetting than the
interpolated unconstrained run ($0.027$--$0.043$ nats, $3/3$ seeds for each method; Appendix~\ref{app:cl:wiseft}).

\input{fig_families}

\paragraph{Downstream accuracy.}\label{sec:beyond} The protection measured in old-probe NLL carries over to zero-shot accuracy on LAMBADA, HellaSwag,
ARC-easy, Winogrande and SciQ \citep{paperno2016lambada,zellers2019hellaswag,clark2018arc,sakaguchi2020winogrande,welbl2017sciq}
(Table~\ref{tab:zs}; LM Evaluation Harness, \citealp{gao2023lmeval}). Unconstrained fine-tuning on
WikiText-103 costs OPT-1.3b $2.94$ points of mean accuracy, and our cap $128$ costs $1.02$ ($3/3$ seeds, $p=0.013$); at matched budget the two
projectors agree to within $0.1$ points.

\section{Second application: repair after pruning}\label{sec:repair}

One-shot pruning zeroes a fraction of each weight matrix under a mask, such as Wanda's saliency mask
\citep{sun2024wanda}, and SparseGPT \citep{frantar2023sparsegpt} then \emph{repairs} the surviving weights so that the
layer reproduces its old output. We repair only the wall matrices (\texttt{fc1}, \texttt{gate\_proj}); attention
and the output matrix stay mask-only (Appendix~\ref{app:setup}). We call $\Esc(\hat w):=\sum_{x\in\mathcal X\cap\mathcal C_\le}\max(z_{\hat w}(x)-\theta,0)^2$, the squared
amount by which a repaired row's closed calibration tokens overshoot the threshold, its \emph{escape} from
$\PX$: it is zero exactly when every inactive constraint holds, and at $\theta=0$ it equals the
closed$\to$open term of Eq.~\eqref{eq:identity}. Here a row is $w\in\mathbb R^d$ with its bias fixed,
$\mathcal X$ is the calibration multiset, $\PX$ is the slice of Definition~\ref{def:poly} at that bias, $M$ is
the mask with kept coordinates $K$, $\QM$ is the set of masked $\hat w$ that satisfy every inactive
constraint, and $\zeta=(\langle w_{K^c},x_{K^c}\rangle)_{x\in\mathcal X\cap\mathcal C_>}$ is the mask's residual.

\begin{lemma}[Curvature blindness]\label{lem:blind}
Assume no calibration token lies on the wall, $z_w(x)\neq\theta$ for all $x\in\mathcal X$. Then $\post=\gated$
on a neighbourhood of $w$, where the gate-weighted objective
$\gated(\hat w)=\sum_{x\in\mathcal X\cap\mathcal C_>}(z_{\hat w}(x)-z_w(x))^2$
is the in-cell term evaluated as if no token changed cell. Hence every local model of $\post$ at $w$ built
from its derivatives, of any order (the second-order Taylor model, the Gauss--Newton matrix, the Fisher of a
Gaussian output model), is the corresponding model of $\gated$; both crossing terms vanish identically on the
neighbourhood.
\end{lemma}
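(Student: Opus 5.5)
The plan is to use continuity of the pre-activations together with the finiteness of the calibration multiset to show that no token changes cell near $w$. Then the exact decomposition of Theorem~\ref{thm:decomp} reduces $\post$ to its in-cell term, which is $\gated$.

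First I fix the setting. Here $\post(\hat w)=\sum_{x\in\mathcal X}(\act(z_{\hat w}(x))-\act(z_w(x)))^2$ is the post-activation output error of the row. For a threshold activation, $\act$ vanishes on $\mathcal C_\le$ and equals $z-\theta$ (or $z$, depending on the paper's convention; only differences matter) on $\mathcal C_>$. By Theorem~\ref{thm:decomp}, this error splits exactly into three terms with no remainder:
\begin{itemize}
\item an in-cell term over tokens open under both $w$ and $\hat w$;
\item an open$\to$closed term over tokens open under $w$ and closed under $\hat w$;
\item a closed$\to$open term over tokens closed under $w$ and open under $\hat w$.
\end{itemize}
Tokens closed under both contribute zero.

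The key step is the neighbourhood. Each map $\hat w\mapsto z_{\hat w}(x)=\langle\hat w,x\rangle+b$ is affine, hence continuous. Since $\mathcal X$ is finite and $z_w(x)\neq\theta$ for every $x$, the margin $\delta:=\min_{x\in\mathcal X}|z_w(x)-\theta|$ is strictly positive. Set $R:=\max_x\|x\|$ and take the open ball $U=\{\hat w:\|\hat w-w\|<\delta/R\}$ (any $\hat w$ works if $R=0$). Cauchy--Schwarz gives $|z_{\hat w}(x)-z_w(x)|<\delta$ for all $x$, so $z_{\hat w}(x)-\theta$ has the same sign as $z_w(x)-\theta$. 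Every token therefore stays in its cell on $U$.

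Consequently, on $U$ both crossing index sets are empty, so the two crossing terms vanish identically. This includes the closed$\to$open term, which at $\theta=0$ is $\Esc$. On the open tokens $\act(z_{\hat w})-\act(z_w)=z_{\hat w}-z_w$, so the in-cell term is exactly $\sum_{x\in\mathcal C_>}(z_{\hat w}(x)-z_w(x))^2=\gated(\hat w)$. This gives $\post=\gated$ on $U$.

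The "hence" clause follows because $U$ is an open set containing $w$. Two functions that agree on an open neighbourhood of a point have identical partial derivatives of every order at that point. Any local model built purely from derivatives at $w$ therefore coincides for $\post$ and $\gated$. This covers the Taylor polynomial of any order, the Hessian and the Gauss--Newton matrix, since the residuals $z_{\hat w}(x)-z_w(x)$ and their Jacobians agree on $U$. It also covers the Fisher of a Gaussian output model, which is a Gauss--Newton matrix of the same residuals.

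The main obstacle is bookkeeping rather than depth. One must make sure the Gauss--Newton and Fisher objects are defined from the residual map restricted to $U$, where $\post$'s residuals are literally $\gated$'s, rather than through $\act'$ evaluated at a kink. The no-wall hypothesis is exactly what keeps every token off the kink of $\act$, so I would state it explicitly at that point in the argument.
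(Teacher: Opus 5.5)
Your proof is correct and follows the paper's argument essentially step for step. Like the paper, you take the margin $\min_{x}|z_w(x)-\theta|$ and the ball of radius margin$/R$, use Cauchy--Schwarz to keep every token in its cell (so the ball lies in $\KX$), read off $\post=\gated$ from Theorem~\ref{thm:decomp} with empty crossing sets, and transfer derivatives of every order by agreement on an open set. The paper additionally writes the objects out explicitly: the Hessian $2\sum_{x\in\mathcal X\cap\mathcal C_>}xx^\top$, the Gauss--Newton matrix $J^\top J$ with rows $J_x=\mathds 1[x\in\mathcal C_>]\,x^\top$, and the Gaussian Fisher $\sigma^{-2}J^\top J$. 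You instead argue by agreement of the residual maps on $U$. For that, note that on $U$ the closed tokens' residuals are identically $0$, not $z_{\hat w}(x)-z_w(x)$.
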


\begin{theorem}[Polyhedral escape]\label{thm:escape}
Write $X_K=[A;\tilde A]$ for the kept columns of the calibration inputs, split into active and inactive rows.
\textbf{(i)} $\gated$ depends on the active tokens alone, through $A$ and the residual $\zeta$. \textbf{(ii)} If
$\ker A\not\subseteq\ker\tilde A$, in particular if $X_K$ has full column rank and there are fewer active
tokens than kept columns, the minimiser set of $\gated$ is an affine subspace on which $\gated$ is constant and
the escape is unbounded above. \textbf{(iii)} For $n$ i.i.d.\ calibration tokens with finite second moments, a
fixed mask and a non-singular active second moment on the kept coordinates, the minimiser is almost surely
unique for large $n$ and converges to a point $w^\star$ the active tokens determine, and its per-token escape
converges to $e^\star=\mathbb E\big[\mathds 1[z_w(x)\le\theta]\,(z_{w^\star}(x)-\theta)_+^2\big]$, which is
positive if and only if $\Pr[z_w(x)\le\theta<z_{w^\star}(x)]>0$.
\end{theorem}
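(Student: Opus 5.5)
The plan is to write $\hat w$ for a masked row, so $\hat w_{K^c}=0$ and $z_{\hat w}(x)=\langle \hat w_K,x_K\rangle+b$. The whole proof then reduces to linear algebra on $X_K=[A;\tilde A]$.

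\textbf{Part (i).} For an active token $x\in\mathcal X\cap\mathcal C_>$, the row's pre-activation splits as $z_w(x)=\langle w_K,x_K\rangle+\langle w_{K^c},x_{K^c}\rangle+b$. Hence
\[
z_{\hat w}(x)-z_w(x)=\langle \hat w_K-w_K,x_K\rangle-\zeta_x .
\]
Summing the squares gives
\[
\gated(\hat w)=\|A(\hat w_K-w_K)-\zeta\|^2 .
\]
This expression involves only $A$ and $\zeta$, which is (i).

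\textbf{Part (ii).} The quadratic $\|Av-c\|^2$, with $c=Aw_K+\zeta$, is minimised exactly on the affine set $v_0+\ker A$, where $v_0$ is any least-squares solution. $\gated$ is constant on that set. If $\ker A\not\subseteq\ker\tilde A$, pick $u\in\ker A$ with $\tilde A u\neq 0$, and let $\tilde x$ be an inactive token with $\langle u,\tilde x_K\rangle\ne0$. After flipping the sign of $u$ if needed, $\langle u,\tilde x_K\rangle>0$. Along $v_0+tu$ we have
\[
z(\tilde x)=z_{v_0}(\tilde x)+t\langle u,\tilde x_K\rangle\to\infty .
\]
So $\Esc\ge (z(\tilde x)-\theta)_+^2$ is unbounded above.

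For the special case: fewer active tokens than kept columns gives $\dim\ker A>0$. Full column rank of $X_K$ gives $\ker A\cap\ker\tilde A=\{0\}$. Together these show $\ker A\not\subseteq\ker\tilde A$.

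\textbf{Part (iii).} This is the main step.
\begin{enumerate}
\item \emph{Population limit.} Write $\Sigma_n=\tfrac1n A^\top A$, which is the empirical average of $\mathds 1[z_w(x)>\theta]x_Kx_K^\top$. By the strong law of large numbers (finite second moments), $\Sigma_n\to\Sigma=\mathbb E[\mathds 1[z_w>\theta]x_Kx_K^\top]$ almost surely. The same argument gives $\tfrac1n A^\top c\to m=\mathbb E[\mathds 1[z_w>\theta]x_K(\langle w,x\rangle-\langle w_K,x_K\rangle\ldots)]$, that is, the active cross-moment with the target $z_w(x)-b$.
\item \emph{Uniqueness and convergence of the minimiser.} Since $\Sigma$ is non-singular, eigenvalue continuity makes $\Sigma_n$ invertible for all large $n$ almost surely. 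The minimiser is then unique, $\hat v_n=\Sigma_n^{-1}\tfrac1nA^\top c$, and it converges almost surely to $w^\star_K=\Sigma^{-1}m$.
\item \emph{Convergence of the per-token escape.} The quantity is $\tfrac1n\sum_i g(\hat v_n,x_i)$ with $g(v,x)=\mathds 1[z_w(x)\le\theta](\langle v,x_K\rangle+b-\theta)_+^2$. The integrand depends on the random $\hat v_n$, so a plain LLN is not enough; this is the obstacle. I would handle it by a uniform LLN on a compact ball $B$ around $w^\star$. The bound $|g(v,x)-g(v',x)|\le C(1+\|x\|^2)\|v-v'\|$ on $B$ makes the family Lipschitz in $v$ with an integrable envelope. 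Alternatively, expand $(\cdot)_+^2$ and use $|a_+^2-b_+^2|\le|a-b|(|a|+|b|)$ together with the Cauchy--Schwarz inequality and the LLN for $\|x\|^2$. Either route gives the limit
\[
e^\star=\mathbb E\big[\mathds 1[z_w\le\theta](z_{w^\star}-\theta)_+^2\big].
\]
\item \emph{Positivity.} The integrand is nonnegative. It is positive exactly on the event $\{z_w\le\theta<z_{w^\star}\}$. Therefore $e^\star>0$ if and only if this event has positive probability.
\end{enumerate}
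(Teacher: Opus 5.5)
Your proposal is correct and follows the paper's proof essentially step for step. In (i) you write $\gated$ in least-squares form; in (ii) you move along a ray $v_0+tu$ with $u\in\ker A$ and $\langle u,\tilde x_K\rangle>0$ for some inactive token $\tilde x$; and in (iii) you use the strong law for $\tfrac1nA^\top A$ and the cross-moment, then eigenvalue continuity. Your second route for the escape (the bound $|a_+^2-b_+^2|\le|a-b|(|a|+|b|)$ with the law of large numbers for $1+\|x\|^2$) is exactly the paper's argument, which then finishes with the strong law at the fixed limit point. Parametrising by $\hat w_K$ instead of $\delta=\hat w_K-w_K$ is cosmetic.

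One thing to tidy: the garbled formula for $m$ should read $\mathbb E[\mathds 1[z_w(x)>\theta]\,x_K(z_w(x)-b)]$. Then $\Sigma^{-1}m=w_K+\Sigma^{-1}\mathbb E[\mathds 1[z_w(x)>\theta]\,x_K\langle w_{K^c},x_{K^c}\rangle]$ visibly depends only on the active cell. That is the sense of ``a point the active tokens determine'', and you should say so explicitly.
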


\begin{theorem}[Confinement by a one-sided penalty]\label{thm:confine}
For $\lambda>0$, $\gatedh(\hat w,\lambda)=\gated(\hat w)+\lambda\,\Esc(\hat w)$ over masked $\hat w$ is convex
and attains its minimum; its penalty is zero exactly on $\QM\supseteq\PX\cap\{\hat w=\hat w\odot M\}$; and every
$\hat w$ with $\gatedh(\hat w,\lambda)\le\gatedh(w\odot M,\lambda)$, every minimiser in particular, has escape
at most $\|\zeta\|^2/\lambda+E_0$, where $\|\zeta\|^2=\gated(w\odot M)$ and $E_0=\Esc(w\odot M)$ is the escape of the
mask alone. The escape is the same at every minimiser, is non-increasing in $\lambda$, and tends to $0$ as
$\lambda\to\infty$ if and only if $\QM\neq\varnothing$.
\end{theorem}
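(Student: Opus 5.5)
We work on the affine space of masked rows $V=\{\hat w:\hat w=\hat w\odot M\}$, parametrised by the kept coordinates $u=\hat w_K\in\mathbb R^{|K|}$, and we write the two terms separately.

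\textbf{Convexity and existence.} The gated term $\gated(\hat w)=\|Au-b\|^2$, where $b$ collects the targets $z_w(x)-\text{bias}$ on active tokens. It is a convex quadratic. The penalty $\Esc(\hat w)=\sum_{x\in\mathcal X\cap\mathcal C_\le}\max(\langle\tilde a_x,u\rangle+c_x-\theta,0)^2$ is a sum of squares of nonnegative convex piecewise-linear functions. Each summand is therefore convex, since $t\mapsto\max(t,0)^2$ is convex and nondecreasing. Hence $\gatedh$ is convex and continuous.

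For existence, we argue through recession directions. Suppose $d$ is a direction along which $\gatedh$ stays bounded. Then $Ad=0$, and $\langle\tilde a_x,d\rangle\le0$ for every inactive $x$. Along any such $d$ the function is non-increasing. A convex piecewise-quadratic function that is bounded below attains its infimum (Frank--Wolfe type theorem for convex quadratics over polyhedra). If we prefer a self-contained route, we split $\mathbb R^{|K|}$ into the finitely many polyhedral regions where the active set of the max is fixed. On each region the function is a convex quadratic bounded below on a polyhedron, so it attains its minimum there by Frank--Wolfe, and we take the best of the regions.

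\textbf{Zero set of the penalty.} $\Esc(\hat w)=0$ holds iff $z_{\hat w}(x)\le\theta$ for every closed calibration token. By definition this is membership in $\QM$. Any $\hat w\in\PX$ that is also masked satisfies all the inactive constraints of Definition~\ref{def:poly}, which gives the inclusion $\QM\supseteq\PX\cap\{\hat w=\hat w\odot M\}$.

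\textbf{Escape bound.} At $\hat w=w\odot M$ the active pre-activations differ from the dense ones exactly by $-\zeta$, so $\gated(w\odot M)=\|\zeta\|^2$. Now take any $\hat w$ with $\gatedh(\hat w,\lambda)\le\|\zeta\|^2+\lambda E_0$. Since $\gated\ge0$, we get $\lambda\Esc(\hat w)\le\|\zeta\|^2+\lambda E_0$, which is the stated bound. Every minimiser lies in this sublevel set.

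\textbf{Escape is the same at every minimiser.} This is the step needing care. Let $u_0,u_1$ be two minimisers. By convexity every point $u_t$ of the segment between them is a minimiser. The function $g(t)=\|Au_t-b\|^2+\lambda\Esc(u_t)$ is constant in $t$, and it is a sum of two convex functions of $t$. Both summands must therefore be affine on $[0,1]$: if one were strictly convex somewhere, the other would have to be strictly concave there, which is impossible.

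A convex quadratic $\|A(u_0+td)-b\|^2$ is affine in $t$ only if $Ad=0$, so the gated value is constant along the segment. Constancy of the total then makes $\Esc$ constant along the segment as well.

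\textbf{Monotonicity.} For $\lambda<\mu$ take minimisers $u_\lambda,u_\mu$ and write $G=\gated$, $E=\Esc$. Optimality of each gives
\[
G_\lambda+\lambda E_\lambda\le G_\mu+\lambda E_\mu,\qquad G_\mu+\mu E_\mu\le G_\lambda+\mu E_\lambda .
\]
Adding the two inequalities yields $(\mu-\lambda)(E_\mu-E_\lambda)\le0$, so the escape is non-increasing in $\lambda$. It is well defined as a function of $\lambda$ by the previous step.

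\textbf{Limit as $\lambda\to\infty$.} If $\QM\ne\varnothing$, pick $q\in\QM$. Comparing a minimiser with $q$ gives
\[
\lambda E_\lambda\le G_\lambda+\lambda E_\lambda\le G(q)+\lambda\cdot 0=G(q),
\]
so $E_\lambda\le G(q)/\lambda\to0$.

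Conversely, suppose $\QM=\varnothing$. The system $\tilde A u\le r$ (with the bias and threshold absorbed into $r$) is then infeasible. We claim $\inf_u\Esc(u)>0$. By Farkas' lemma there is $y\ge0$ with $y^\top\tilde A=0$ and $y^\top r<0$, so
\[
y^\top(\tilde Au-r)_+\;\ge\;y^\top(\tilde Au-r)\;=\;-y^\top r\;>\;0
\]
for every $u$. By Cauchy--Schwarz this gives $\Esc(u)\ge (y^\top r)^2/\|y\|^2$ uniformly in $u$. The escape is therefore bounded away from $0$ for every $\lambda$, and does not tend to $0$.

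The main obstacle is attainment of the minimum, since $\gatedh$ need not be coercive, together with the constancy of the escape across minimisers. Both are handled above by polyhedral piecewise-quadratic arguments. The Farkas step gives the converse of the limit statement cleanly.
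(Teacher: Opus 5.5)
Your proof is correct. Its skeleton matches the paper's: convexity of $u\mapsto u_+^2$ composed with affine maps, Frank--Wolfe for attainment, dropping $\gated\ge0$ for the escape bound, the exchange inequality for monotonicity, and comparison with a point of $\QM$ for the limit. Three steps are argued differently.

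\emph{Existence.} You apply Frank--Wolfe on each of the $2^{n_-}$ closed sign-pattern polyhedra. The paper instead lifts once to the quadratic programme in $(\delta,s)$ with $s\ge\tilde A\delta-c$, $s\ge0$, and minimises out the slack; this needs a single application and is reused later for the converse. Your recession-direction sentence does not by itself give attainment, since along those directions $\gatedh$ is only non-increasing, not constant. It is the region argument that carries this step.

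\emph{Invariance across minimisers.} You observe that convex summands with constant sum on a segment must each be affine, which forces $Ad=0$. This is shorter than the paper's midpoint/parallelogram computation. The paper's version proves more: the vector $(\tilde A\delta-c)_+$ itself is the same at every minimiser. Your argument also yields this if you apply it to each penalty summand separately.

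\emph{Converse of the limit.} Your Farkas certificate gives an explicit uniform floor $(y^\top r)^2/\|y\|^2$ and needs no attainment. The paper instead shows that $\min_\delta\Esc$ is attained (Frank--Wolfe on the lifted programme) and is positive because a zero would lie in $\QM$.

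Finally, any $q\in\QM$ suffices for the limit. The paper takes the attained $G^\star=\min_{\QM}\gated$ only to get the best constant $G^\star/\lambda$.
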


Proofs are in Appendices~\ref{app:repair} and~\ref{app:proof} (Remark~\ref{rem:fatconvex} covers the
implemented penalty at $\theta>0$). The bias witness $b\le\theta$ certifies $\QM\neq\varnothing$ on $81$--$96\%$
of the wall units measured (Remark~\ref{rem:E0}). Solved in closed form in \texttt{fp64},
$\gated$ is $1.94$ nats worse than no repair (OPT-1.3b, sparsity $0.7$). Early-stopped, its crossings
run closed$\to$open ($1.15\%$ against $0.18\%$ open$\to$closed), as its unpriced inactive block
predicts; the pre-activation objective $\pre$ runs the other way ($0.52\%$ against $0.34\%$;
Appendix~\ref{sec:signature}). Pricing both crossings orders the objectives: $\post<\pre$ on $46$ of $47$
probe cells, $15/15$ on WikiText-2 \citep{merity2017pointer} and $20/20$ on two 7B ReLU-family models, and a
price $\lambda>1$ beats $\lambda=1$ on ReLU, and on GELU with the exact objective (Pythia,
\citealp{biderman2023pythia}; Appendix~\ref{app:lambda}).

\section{Related work}\label{sec:related}

\paragraph{Projection-based continual learning.} GPM \citep{saha2021gpm}, Adam-NSCL \citep{wang2021adamnscl}
and AdNS \citep{kong2022adns} confine each layer's update to the complement of a shared subspace of its old
inputs, OGD \citep{farajtabar2020ogd} does so in parameter space, later work relaxes the subspace
\citep{lin2022trgp,saha2023sgp} or builds a LoRA in its complement \citep{liang2024inflora}, and in language models it orients or initialises adapters
\citep{yang2024corda,tang2026loranull}; sparse activations enlarge the null space \citep{abbasi2022shd}. We
change which pairs are constrained.

\paragraph{Other continual-learning methods.} O-LoRA \citep{wang2023olora} keeps task adapters orthogonal, LoRA
learns less and forgets less \citep{biderman2024loraforgets}, and MIGU \citep{du2024migu}, HAT, AGS-CL
\citep{serra2018hat,jung2020agscl} and neuron-level editors \citep{jiang2025nse,pan2025fine} decide per unit
whether it may move; EWC \citep{kirkpatrick2017ewc}, LDIFS \citep{mukhoti2024ldifs} and replay
\citep{ibrahim2024continual} complete the set.

\paragraph{Activation patterns and repair.} Layer-wise repair after pruning or quantisation
\citep{frantar2023sparsegpt,frantar2023gptq,li2021brecq,kim2025guidedquant} fits pre-activations or
curvature-weighted outputs. For piecewise-linear activations, the Hessian's diagonal blocks equal the
Gauss--Newton blocks \citep{botev2017practical}; Lemma~\ref{lem:blind} extends this to the objectives near $w$.
The inequalities of $\PX$ appear in provable repair
\citep{tao2023aprnn}, Net-Trim \citep{aghasi2017nettrim} and \citet{pilanci2020convex}, and our penalty is the
inactive-side term of the activation-boundary loss of \citet{heo2019boundary} at zero margin.

\paragraph{Tropical geometry of ReLU networks.} ReLU networks are tropical rational maps
\citep{zhang2018tropical,maragos2021tropical} whose number of linear regions \citet{montufar2014linear}
bound; tropical compression approximates their Newton polytopes from the weights
\citep{smyrnis2020multiclass,misiakos2022tropical}. We use the data-space subdivision and its weight-space dual.

\section{Conclusion}\label{sec:conclusion}

An exact identity joins a ReLU layer's two dual pictures. In fine-tuning, it names the per-unit constraint;
in pruning repair, it names the block that curvature cannot see.

\subsection*{AI use statement}
The authors designed the study and every experiment. We used a large language model assistant (Claude,
Anthropic) for the following. \emph{Implementation and checking:} it wrote and debugged the code for the
method, the baselines, the experiments and the analyses, and it cross-checked reported numbers against
the recorded runs, derivations against their statements, and citations against their sources.
\emph{Retrieval and discovery:} it searched for and summarised related work. \emph{Writing:} it drafted
sections of the paper and helped edit and polish the text. The authors take full responsibility for all
content.

\subsection*{Reproducibility statement}
The code release that accompanies the paper contains the implementation of the gate-aware per-unit projector (Definition~\ref{def:proj}), of the shared-subspace baseline, of the decomposition of Eq.~\eqref{eq:identity} and of the repair objectives of \S\ref{sec:repair}, the training and evaluation code, the per-run histories, the scripts that regenerate every table and figure, a CPU self-test of Eq.~\eqref{eq:identity}, Definition~\ref{def:proj}, Proposition~\ref{prop:zero} and Theorem~\ref{thm:confine} on random layers, and numerical checks of the statements of \S\S\ref{sec:identity}, \ref{sec:method} and~\ref{sec:repair}. Every run logs its fully resolved arguments, and every
fine-tuning run checks Eq.~\eqref{eq:identity} against its own activations. Models, corpora, training settings and
statistics are given in Appendices~\ref{app:cl:setup} and~\ref{app:setup}.

\bibliographystyle{iclr2027_conference}
\bibliography{references,references_cl}

\appendix
\input{app_prelim}\FloatBarrier

\input{app_cl_setup}\FloatBarrier

\input{app_cl_budget}\FloatBarrier

\input{app_cl_mechanism}\FloatBarrier
\input{app_repair}\FloatBarrier
\input{app_theory}\FloatBarrier

\input{app_proofs23}
\end{document}

%% file: math_commands.tex
\usepackage{amsmath,amsfonts,bm}

\def\eqref#1{equation~\ref{#1}}

\def\1{\bm{1}}

\DeclareMathAlphabet{\mathsfit}{\encodingdefault}{\sfdefault}{m}{sl}
\SetMathAlphabet{\mathsfit}{bold}{\encodingdefault}{\sfdefault}{bx}{n}

%% file: counts.tex
\newcommand{\PostPreProbe}{$46$ of $47$}   %

\newcommand{\PostPreWT}{$15$ of $15$}   %

\newcommand{\PostPreConfig}{$23$ of $24$}   %

\newcommand{\LamWin}{$9$ of $11$}   %

\newcommand{\LamWinFloor}{$7$ of $11$}   %

\newcommand{\LamRows}{$5$ of $9$}   %

\newcommand{\LamSeedTwo}{$6$ of $6$}   %

\newcommand{\SevenB}{$20$ of $20$}   %

\newcommand{\SevenBFloorVal}{$0.0061$}
\newcommand{\NoiseFloor}{$0.0068$}
\newcommand{\WTFloorVal}{$0.0150$}
\newcommand{\WTFloorMax}{$0.0284$}

%% file: authors.tex
\author{\normalfont
\begin{tabular}[t]{@{}l@{\hspace{4em}}l@{}}
\textbf{Yuyang Zhang} & \textbf{Xiaoyin Chen} \\
McGill University & Mila -- Quebec AI Institute \\
\texttt{yuyang.zhang@mail.mcgill.ca} & Universit\'e de Montr\'eal \\
 & \texttt{xiaoyin.chen@mila.quebec} \\[2ex]
\textbf{Chunlin Ren} & \textbf{Qihuang Zhang} \\
Nanyang Technological University & McGill University \\
\texttt{chunlin001@e.ntu.edu.sg} & \texttt{qihuang.zhang@mcgill.ca}
\end{tabular}}

%% file: sec_theory.tex
\section{Two dual pictures of a ReLU layer}\label{sec:identity}

A ReLU layer can be read in two spaces (Figure~\ref{fig:duality}). In \emph{data space} its walls tile the
inputs into cells, one per gate pattern, dual to the upper vertices of a zonotope. In \emph{weight space} each unit
is a point and each old token a hyperplane, and the tokens cut out a polyhedron, the closure of the weights
that keep every token on its side. Fine-tuning moves the point: inside it the output changes only on the
tokens the unit fires on, and crossing a hyperplane switches one token's gate. The identity below makes the
two kinds of damage exact; we use it to decide which motion to forbid, and which to price.

\input{fig1_tropical}

\paragraph{Notation.} A layer has $m$ units on inputs $x\in\mathbb R^d$. We write $\tilde x=(x,1)$ and
$\tilde w_i=(w_i,b_i)\in\mathbb R^{d+1}$, so $z_i(x)=\langle\tilde w_i,\tilde x\rangle$ carries the bias as the
weight of a ones column. The old tokens $\mathcal X=\{x_t\}_{t=1}^T$ (a multiset) give
$\tilde X\in\mathbb R^{T\times(d+1)}$ with rows $\tilde x_t$; $\tilde X_S$ keeps the rows indexed by $S$.

\begin{definition}[Wall and cells]\label{def:wall}
Let $\act_\theta(u)=u\cdot\mathds 1[u>\theta]$ with $\theta\ge0$. The \textbf{wall} of a unit with weights
$\tilde w$ is $\mathcal W_{\tilde w}=\{x:z_{\tilde w}(x)=\theta\}$. It splits $\mathbb R^d$ into the
\textbf{open cell} $\mathcal C_>=\{z_{\tilde w}>\theta\}$ and the \textbf{closed cell}
$\mathcal C_\le=\{z_{\tilde w}\le\theta\}$, on each of which the unit is affine. Unit $i$'s \textbf{open set}
is $O_i=\{t:z_i(x_t)>\theta\}$, and $G(x)=\{i:z_i(x)>\theta\}$ is the \textbf{gate pattern} of $x$.
\end{definition}

\paragraph{Data space.} In the max-plus semiring ($a\oplus b=\max(a,b)$, $a\odot b=a+b$) a ReLU unit is the
tropical binomial $0\oplus z$, whose wall is the tropical hypersurface on which its two terms tie
\citep{zhang2018tropical,maclagan2015tropical}. The sum of the units is their tropical product, with one term
per set $J$ of units, and its lifted Newton polytope is a zonotope. Write
$\tilde w^\theta_J=\sum_{i\in J}(w_i,b_i-\theta)$ and $Z_\theta=\sum_{i=1}^m[0,(w_i,b_i-\theta)]$.

\begin{proposition}[Tropical form and the dual subdivision]\label{prop:tropical}
\textbf{(i)} $\sum_i\max(0,z_i(x)-\theta)=\max_{J\subseteq[m]}\langle\tilde w^\theta_J,\tilde x\rangle$, and
$J$ attains the maximum iff $G(x)\subseteq J\subseteq\{i:z_i(x)\ge\theta\}$; so $G(x)$ always attains it,
uniquely iff $x$ lies on no wall. \textbf{(ii)} $J\mapsto\tilde w^\theta_J$ maps the gate patterns realised
off the walls injectively into the vertices of $Z_\theta$ exposed by functionals $(y,1)$, and onto them when
no unit has $(w_i,b_i)=(0,\theta)$. \textbf{(iii)} If $x,x'$ lie on no wall and $G(x')=G(x)\sqcup\{i\}$, the
two vertices span an edge of $Z_\theta$ parallel to $(w_i,b_i-\theta)$, dual to a nonempty relatively open
piece of the wall $\mathcal W_i$.
\end{proposition}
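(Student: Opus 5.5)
For (i) I would expand the left-hand side termwise. Each summand satisfies $\max(0,z_i(x)-\theta)=\max_{\epsilon_i\in\{0,1\}}\epsilon_i\langle(w_i,b_i-\theta),\tilde x\rangle$. Because a sum of maxima over independent choices is the maximum of the sums, the left-hand side equals $\max_{J}\sum_{i\in J}\langle(w_i,b_i-\theta),\tilde x\rangle=\max_J\langle\tilde w^\theta_J,\tilde x\rangle$. The maximum is attained exactly when every $i\in J$ has $z_i(x)-\theta\ge0$ and every $i\notin J$ has $z_i(x)-\theta\le0$. That is precisely $G(x)\subseteq J\subseteq\{i:z_i(x)\ge\theta\}$. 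The maximiser is unique iff these two sets coincide, i.e.\ iff no $z_i(x)=\theta$, which is the condition that $x$ lies on no wall.

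For (ii) I would use the standard description of a zonotope's faces. The vertices of $Z_\theta$ are the points $\tilde w^\theta_J$, and a functional $\ell$ exposes a single vertex iff $\ell$ is nonzero on every generator; it then selects $J=\{i:\ell(g_i)>0\}$. Take $\ell=(y,1)$ with $y=x$. By (i), $\ell$ is maximised over the sums $\tilde w^\theta_J$ uniquely at $J=G(x)$ whenever $x$ is off all walls, so $\tilde w^\theta_{G(x)}$ is a vertex exposed by $(x,1)$.

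Injectivity is the step I expect to be the main obstacle, because distinct $J$ could a priori give the same point. I would handle it as follows. Suppose $G(x)\neq G(x')$, both realised off the walls, but $\tilde w^\theta_{G(x)}=\tilde w^\theta_{G(x')}$. Then the functional $(x,1)$ attains its maximum at both subsets, contradicting the uniqueness in (i). For surjectivity, take a vertex $v$ exposed by some $(y,1)$, meaning $(y,1)$ attains its maximum over $Z_\theta$ only at $v$. Perturb $y$ slightly so that $\langle(w_i,b_i-\theta),(y,1)\rangle\neq0$ for all $i$. This is possible by genericity because no generator equals $(0,0)$, which is exactly the hypothesis $(w_i,b_i)\neq(0,\theta)$: a generator $(w_i,c)$ with $w_i=0$ has $c\neq0$, and one with $w_i\neq0$ vanishes only on a hyperplane in $y$. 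Since exposed vertices are stable under small perturbation of the exposing functional, $v$ is still the maximiser, now with $y$ off all walls. By (i), $v=\tilde w^\theta_{G(y)}$.

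For (iii), the two vertices differ by exactly the generator $g_i=(w_i,b_i-\theta)$, so the segment between them is parallel to $g_i$. To show it is an edge I would exhibit an exposing functional. I would move along the segment from $x$ to $x'$ and pick a point $x''$ near the crossing at which only $z_i=\theta$; this requires choosing $x,x'$ generically, or perturbing them inside their open cells so that the path meets walls transversally and one at a time. The functional $(x'',1)$ then vanishes on $g_i$ and is nonzero on every other generator. Hence its face of $Z_\theta$ is $\tilde w^\theta_{G(x)}+[0,g_i]$, an edge.

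The set of such $x''$, namely points on $\mathcal W_i$ with the sign pattern $G(x)$ on all other units, is a relatively open subset of the hyperplane $\mathcal W_i$ cut out by strict inequalities. It is nonempty because it contains $x''$, and it is exactly the set of $y$ whose functional $(y,1)$ exposes this edge, which is the duality claimed. There is one degenerate case to rule out: if $w_i=0$ the wall $\mathcal W_i$ is empty or the whole space, so no crossing could occur. This is excluded because $G$ changes between $x$ and $x'$ only at unit $i$, which forces $z_i$ to vary and hence $w_i\neq0$.
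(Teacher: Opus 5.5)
\textbf{Parts (i) and (ii).} These are correct and follow the paper's route. Part (i) is termwise maximisation. In (ii), the face of $Z_\theta=\operatorname{conv}\{\tilde w^\theta_J\}$ exposed by $(y,1)$ is the hull of the maximising points $\tilde w^\theta_J$, and injectivity comes from the uniqueness in (i).

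For surjectivity you perturb $y$ off the walls and use the stability of an exposed vertex. The paper argues directly instead: if $(y,1)$ exposes a vertex and $y$ lay on some $\mathcal W_j$, then $G(y)$ and $G(y)\cup\{j\}$ would both maximise at $y$, with points differing by $(w_j,b_j-\theta)\neq0$. So $y$ is already off the walls. Your own remark that a functional exposes a single vertex iff it is nonzero on every generator gives the same conclusion without perturbing. That remark holds exactly when no generator is zero, which is the surjectivity hypothesis.

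\textbf{The gap is in (iii).} Two of your steps need the crossing point $x''$ to carry the pattern $G(x)$ on every unit $j\neq i$: the face computation (the face of $(x'',1)$ is $\tilde w^\theta_{G(x)}+[0,g_i]$) and your proof that $\Omega$ is nonempty. You never establish this.

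Instead you worry that the segment from $x$ to $x'$ may meet other walls, and propose perturbing so that walls are met one at a time. That says nothing about the signs of the other $z_j-\theta$ at the moment $\mathcal W_i$ is reached, so it does not close the gap. It is also unnecessary.

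The missing observation, which is the paper's, is that along $x_s=(1-s)x+sx'$ each
\[
z_j(x_s)-\theta=(1-s)\bigl(z_j(x)-\theta\bigr)+s\bigl(z_j(x')-\theta\bigr)
\]
is affine in $s$.
\begin{itemize}
\item For $j\neq i$, the endpoint values are nonzero with the same sign, because $j\in G(x)\iff j\in G(x')$ and neither endpoint lies on a wall. So this expression never vanishes.
\item For $j=i$, it goes from negative to positive and vanishes at exactly one $s^\ast\in(0,1)$.
\end{itemize}
Hence $x_{s^\ast}$ lies on $\mathcal W_i$ and on no other wall, with pattern $G(x)$ on the remaining units. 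So $x_{s^\ast}\in\Omega$ with no genericity assumption. By (i), the maximisers at any $y\in\Omega$ are exactly $G(x)$ and $G(x)\cup\{i\}$, and this exposes the edge, which has positive length since $w_i\neq0$.

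Finally, you claim $\Omega$ is \emph{exactly} the set of $y$ whose $(y,1)$ exposes this edge. That is stronger than the statement, which needs only that every $y\in\Omega$ exposes it. The converse does hold under these hypotheses, but it needs an argument you have not given, so either supply it or assert only the inclusion.
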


At $\theta=0$ this is Figure~\ref{fig:duality}a--c; for $\theta>0$ the gate pattern is that of
$\max(0,z_i-\theta)$, so the proposition describes its cells verbatim. \paragraph{Weight space.} Fix one unit with base weights $\tilde w$. Each token defines the hyperplane
$H_t=\{\tilde v:\langle\tilde v,\tilde x_t\rangle=\theta\}$ of the unit's weight space ($\tilde x_t\neq0$),
and the pairing is symmetric: $x_t$ lies on the wall of $\tilde v$ exactly when $\tilde v$ lies on $H_t$.

\begin{definition}[Activation-preserving polyhedron]\label{def:poly}
With $O$ the base open set of the unit on $\mathcal X$, let
\begin{equation}\label{eq:ineq}
\PX=\big\{\tilde v:\ \langle\tilde v,\tilde x_t\rangle\ge\theta\ \ \forall t\in O,\ \ \
\langle\tilde v,\tilde x_t\rangle\le\theta\ \ \forall t\notin O\big\},
\end{equation}
and let the \textbf{pattern cell} $\KX\subseteq\PX$ be the weights that give every token its base gate bit.
The first group of constraints is the \textbf{active block}, the second the \textbf{inactive block}.
\end{definition}

\begin{proposition}[Data--weight duality]\label{prop:duality}
\textbf{(a)} Token $t$ is open at $\tilde v$ iff $\tilde v\in H_t^>=\{\langle\cdot,\tilde x_t\rangle>\theta\}$.
It changes cell between $\tilde w$ and $\hat w$ iff exactly one of them lies in $H_t^>$, closing if that one
is $\tilde w$, and along the segment $[\tilde w,\hat w]$ it changes cell at most once. \textbf{(b)}
$\KX=\PX\setminus\bigcup_{t\in O}H_t$ is convex, its interior is a chamber of the arrangement $\{H_t\}$, and
$\overline{\KX}=\PX$. \textbf{(c)} $\PX$ is full-dimensional, and each of its facets is $\PX\cap H_t$ for a
token $t$, an active facet if $t\in O$ and an inactive one if not.
\end{proposition}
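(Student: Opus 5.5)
Part (a) follows directly from the definitions, and (b), (c) are elementary polyhedral facts about the arrangement $\{H_t\}$.

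\textbf{Part (a).} By Definition~\ref{def:wall}, token $t$ is open at $\tilde v$ iff $z_{\tilde v}(x_t)=\langle\tilde v,\tilde x_t\rangle>\theta$, which is the definition of $H_t^>$. A cell change between $\tilde w$ and $\hat w$ is a change of the gate bit $\mathds 1[\langle\cdot,\tilde x_t\rangle>\theta]$. So the token changes cell iff exactly one endpoint lies in $H_t^>$, and it closes iff that endpoint is $\tilde w$. For the segment, $s\mapsto\langle(1-s)\tilde w+s\hat w,\tilde x_t\rangle$ is affine in $s$. An affine function of one variable lies above $\theta$ on a relatively open interval, possibly empty or the whole of $[0,1]$, and that interval contains an endpoint whenever it is proper. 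Hence the bit along $[0,1]$ has the form $\mathds 1[s>s_0]$, $\mathds 1[s<s_0]$, or is constant, so it flips at most once.

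\textbf{Part (b).} $\KX$ asks for $\langle\tilde v,\tilde x_t\rangle>\theta$ for $t\in O$ and $\le\theta$ for $t\notin O$. This is exactly $\PX$ with the hyperplanes $H_t$, $t\in O$, removed, so $\KX=\PX\setminus\bigcup_{t\in O}H_t$. It is an intersection of open and closed half-spaces, hence convex.

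For the chamber claim:
\begin{itemize}
\item The set defined by all strict inequalities, $\langle\tilde v,\tilde x_t\rangle>\theta$ on $O$ and $<\theta$ off $O$, is open and equals the interior of $\PX$.
\item It is nonempty because it contains a point near $\tilde w$ (see below), so it is a chamber, i.e.\ a connected component of the complement of $\bigcup_t H_t$.
\item It lies inside $\KX\subseteq\PX$, so it is also the interior of $\KX$.
\end{itemize}
Closure then follows from the standard fact that a convex set with nonempty interior $U$ satisfies $\overline U=\overline C$. Applied to $\PX$, which is closed, this gives $\overline{\KX}\supseteq\overline U=\PX$; the reverse inclusion is trivial.

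\textbf{The main obstacle is nonemptiness of the interior.} Inactive tokens may sit exactly on the wall ($\langle\tilde w,\tilde x_t\rangle=\theta$), so $\tilde w$ itself need not be interior. I would perturb the bias coordinate. Take $\tilde v=\tilde w-\varepsilon e_{d+1}+\delta$, where $e_{d+1}$ is the bias direction and $\delta$ is small.
\begin{itemize}
\item Every $\tilde x_t$ has last coordinate $1$, so shifting by $-\varepsilon e_{d+1}$ lowers each pre-activation by exactly $\varepsilon$.
\item Inactive tokens satisfy $z\le\theta$ at $\tilde w$, so they become strictly below $\theta$.
\item Active tokens have $z>\theta$ with a positive minimum gap $g$ over the finitely many $t\in O$. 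Choosing $\varepsilon<g$ keeps them strictly above $\theta$.
\end{itemize}
A small open ball around this point still satisfies every strict inequality, so $\PX$ is full-dimensional. This also settles the nonemptiness needed in (b). The argument uses $\tilde x_t\neq0$, which holds automatically because the last coordinate is $1$, and it uses finiteness of $\mathcal X$.

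\textbf{Part (c).} $\PX$ is a full-dimensional polyhedron given by the finite inequality system \eqref{eq:ineq}. For such a polyhedron every facet is $\PX\cap\{\text{some defining inequality tight}\}$ for some irredundant constraint, and each constraint tight on a facet defines a hyperplane $H_t$. I would cite this standard fact (e.g.\ via the Minkowski--Weyl representation). The facet is labelled active or inactive according to the block of its constraint, i.e.\ whether $t\in O$.

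One subtlety needs a remark. Duplicate tokens in the multiset, or positive multiples of the same $\tilde x_t$, give the same hyperplane, and could in principle appear in both blocks. They cannot: two tokens with proportional $\tilde x_t$ have equal last coordinate $1$, so $\tilde x_t=\tilde x_{t'}$, and they therefore share a gate bit. The active/inactive label of each facet is thus well defined.
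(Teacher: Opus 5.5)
Your proposal is correct and follows essentially the paper's route. Both arguments use the same bias-shift witness $\tilde w-\varepsilon e_{d+1}$, take the strict-inequality set as the interior and as a chamber, get the closure from convexity, and settle the facet labels by the last-coordinate argument. The differences are small. You cite standard facts that the paper proves from scratch in Lemma~\ref{lem:faces} and in its sign-vector argument: that a polyhedron with nonzero normals has the strict set as interior, that facets are tight constraints, and that nonempty sign cells are chambers. You also obtain $\operatorname{int}\KX$ by sandwiching it between the strict set and $\operatorname{int}\PX$, where the paper perturbs directly along $\tilde x_t$.
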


Part (a) is Figure~\ref{fig:duality}d--e: moving a unit's wall across a token in data space and moving its
weight point across that token's hyperplane are one event. Appendix~\ref{app:proofs23} proves all of
\S\S\ref{sec:identity}--\ref{sec:method} except Theorem~\ref{thm:decomp}.

\begin{theorem}[Exact decomposition by cell membership]\label{thm:decomp}
For any $\hat w\in\mathbb R^{d+1}$, any $\theta\ge0$ and finite $\mathcal X$, with
$\post:=\sum_{x\in\mathcal X}(\act_\theta(z_{\hat w}(x))-\act_\theta(z_{\tilde w}(x)))^2$,
\begin{equation}\label{eq:identity}
\post=\underbrace{\sum_{x\in S_{++}}\!\big(z_{\hat w}(x)-z_{\tilde w}(x)\big)^2}_{\text{in-cell}}
+\underbrace{\sum_{x\in S_{+-}}\! z_{\tilde w}(x)^2}_{\text{open}\to\text{closed}}
+\underbrace{\sum_{x\in S_{-+}}\! z_{\hat w}(x)^2}_{\text{closed}\to\text{open}},
\end{equation}
where $S_{++}=\mathcal X\cap\mathcal C_>\cap\hat{\mathcal C}_>$, $S_{+-}=\mathcal X\cap\mathcal C_>\cap\hat{\mathcal C}_\le$
and $S_{-+}=\mathcal X\cap\mathcal C_\le\cap\hat{\mathcal C}_>$, which with $S_{--}$ partition $\mathcal X$.
\end{theorem}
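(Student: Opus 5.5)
The plan is a pointwise case analysis: the identity is a sum over the finite multiset $\mathcal X$ of one squared difference per token. It therefore suffices to fix a single token $x$ and compute $(\act_\theta(z_{\hat w}(x))-\act_\theta(z_{\tilde w}(x)))^2$ according to which cells of the base and updated walls contain $x$. Then I will sum over the four classes $S_{++},S_{+-},S_{-+},S_{--}$.

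First I will check that these four sets partition $\mathcal X$, counting multiplicity. By Definition~\ref{def:wall}, $\mathcal C_>$ and $\mathcal C_\le$ are complementary for the base weights $\tilde w$, and $\hat{\mathcal C}_>$ and $\hat{\mathcal C}_\le$ are complementary for $\hat w$. So each token lies in exactly one of the four intersections. Equivalently, by Proposition~\ref{prop:duality}(a), this is the pair of gate bits $(\mathds 1[\tilde w\in H_t^>],\mathds 1[\hat w\in H_t^>])$. No wall tokens need special treatment, because ties $z=\theta$ are assigned to the closed cell in both the activation and the cells.

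Next I will use the unshifted form of the activation: $\act_\theta(u)=u$ if $u>\theta$ and $0$ otherwise. The four cases are then immediate.
\begin{itemize}
\item On $S_{++}$ both activations equal the pre-activations, so the summand is $(z_{\hat w}(x)-z_{\tilde w}(x))^2$.
\item On $S_{+-}$ the new activation is $0$ and the old one is $z_{\tilde w}(x)$, so the summand is $z_{\tilde w}(x)^2$.
\item On $S_{-+}$ the old activation is $0$ and the new one is $z_{\hat w}(x)$, so the summand is $z_{\hat w}(x)^2$.
\item On $S_{--}$ both activations vanish, so the summand is $0$.
\end{itemize}
Summing over the partition gives Eq.~\eqref{eq:identity} with no remainder.

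There is no genuine obstacle. The one point to handle carefully is the convention: the crossing terms are $z^2$ rather than $(z-\theta)^2$ precisely because $\act_\theta$ jumps from $0$ to $\theta$ at the wall instead of being shifted. I will remark on this explicitly. I will also note that the argument uses nothing about $\hat w$ (no smallness and no objective) and holds for every $\theta\ge0$, which is why the statement quantifies over arbitrary $\hat w$ and finite $\mathcal X$.
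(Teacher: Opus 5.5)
Your proposal is correct and is essentially the paper's proof: the paper also places each token in exactly one of the four cell intersections, reads off the activations case by case (both equal to the pre-activations on $S_{++}$, new one $0$ on $S_{+-}$, old one $0$ on $S_{-+}$, both $0$ on $S_{--}$), and sums. Your extra remarks on wall tokens going to the closed cell and on the unshifted $\act_\theta$ producing $z^2$ rather than $(z-\theta)^2$ are accurate and consistent with Definition~\ref{def:wall}.
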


\begin{proof}
Each $x$ lies in exactly one of the four intersections. On $S_{++}$ both activations equal their
pre-activations; on $S_{+-}$ the new one is $0$; on $S_{-+}$ the old one is $0$; on $S_{--}$ both are $0$.
Summing the squared differences over the four sets gives Eq.~\eqref{eq:identity}.
\end{proof}

Summed over units, the four sets become sets of (token, unit) pairs $(t,i)$. The pattern-changing terms are
exact sums over pair sets fixed by the two weight points, where convergence analyses bound them as a
remainder \citep{du2019gradient}. Measured, the pairwise case split behind it holds to a relative gap of at most
$5.8\times10^{-5}$ over $27$ pruned OPT cells and $9.0\times10^{-5}$ over $18$ cells on two 7B ReLU-family
models \citep{song2025prosparse,sparsellm2023relullamafix} (Appendix~\ref{app:identity}).

\begin{corollary}[Weight-space reading]\label{cor:weight}
$S_{+-}\cup S_{-+}$ are the tokens whose hyperplanes separate $\tilde w$ from $\hat w$ in the sense of
Proposition~\ref{prop:duality}a, and $S_{+-}$ those with $\tilde w$ on the open side. $\post$ is quadratic on every set of weights with a fixed gate pattern on
$\mathcal X$; on $\KX$ it equals $\|\tilde X_O(\hat w-\tilde w)\|^2$, and at $\theta=0$ it does on all of $\PX$.
\end{corollary}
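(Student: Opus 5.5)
The statement has three parts, and I would prove each by reading Theorem~\ref{thm:decomp} through Proposition~\ref{prop:duality}. Throughout, $\hat w$ is taken in the augmented space $\mathbb R^{d+1}$, so $z_{\hat w}(x)=\langle\hat w,\tilde x\rangle$.

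\emph{The separating hyperplanes.} By definition, $x_t\in S_{+-}\cup S_{-+}$ exactly when $x_t$ lies in the open cell for one of $\tilde w,\hat w$ and in the closed cell for the other. By Proposition~\ref{prop:duality}a this says that exactly one of $\tilde w,\hat w$ lies in $H_t^>$, i.e.\ $H_t$ separates them in that sense. Moreover $x_t\in S_{+-}$ precisely when the point in $H_t^>$ is $\tilde w$, which is the ``closing'' case of part (a). This step is immediate.

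\emph{Quadraticity on a fixed pattern.} Fix a set $\Omega$ of weights $\hat w$ that all induce the same gate pattern on $\mathcal X$. The base pattern is fixed by $\tilde w$, so on $\Omega$ the four sets $S_{++},S_{+-},S_{-+},S_{--}$ do not depend on $\hat w$. Each summand of Eq.~\eqref{eq:identity} is then fixed in form:
\begin{itemize}
\item the in-cell terms $(z_{\hat w}(x)-z_{\tilde w}(x))^2$ are quadratic in $\hat w$;
\item the open$\to$closed terms $z_{\tilde w}(x)^2$ are constants;
\item the closed$\to$open terms $z_{\hat w}(x)^2$ are quadratic in $\hat w$.
\end{itemize}
So $\post$ restricted to $\Omega$ is a quadratic in $\hat w$.

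\emph{The pattern cell $\KX$.} On $\KX$ every token keeps its base gate bit. Hence $S_{+-}=S_{-+}=\varnothing$ and $S_{++}=\{x_t:t\in O\}$. Only the in-cell term survives, and
\[
\post=\sum_{t\in O}\langle\hat w-\tilde w,\tilde x_t\rangle^2=\|\tilde X_O(\hat w-\tilde w)\|^2 .
\]

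\emph{Extension to all of $\PX$ at $\theta=0$.} This is the only step needing care, because the three-term form of Eq.~\eqref{eq:identity} is not literally the same expression there. By Proposition~\ref{prop:duality}b, $\PX\setminus\KX$ consists of weights lying on some $H_t$ with $t\in O$. Take $\hat w\in\PX$ and check each token.
\begin{itemize}
\item \emph{Inactive tokens} $t\notin O$: the inequality $\langle\hat w,\tilde x_t\rangle\le\theta$ keeps them closed. So $S_{-+}=\varnothing$, and these tokens contribute nothing to either side.
\item \emph{Active tokens} $t\in O$ with $\langle\hat w,\tilde x_t\rangle>\theta$: they lie in $S_{++}$ and contribute $\langle\hat w-\tilde w,\tilde x_t\rangle^2$, as required.
\item \emph{Active tokens} $t\in O$ with $\langle\hat w,\tilde x_t\rangle=\theta$: they lie in $S_{+-}$ and contribute $z_{\tilde w}(x_t)^2$. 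The target formula assigns them $(z_{\hat w}(x_t)-z_{\tilde w}(x_t))^2=(\theta-z_{\tilde w}(x_t))^2$.
\end{itemize}
The last two quantities agree if and only if $\theta=0$, since $z_{\tilde w}(x_t)>\theta\ge0$. So at $\theta=0$ the identity $\post=\|\tilde X_O(\hat w-\tilde w)\|^2$ holds on all of $\PX$.

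For $\theta>0$ the claim genuinely fails on the active facets. There $\act_\theta$ jumps from $\theta$ to $0$, so the extension cannot hold, which explains the restriction to $\theta=0$ in the statement. I expect this facet bookkeeping to be the main obstacle; the rest is routine.
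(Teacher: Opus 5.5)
Your proposal is correct and follows the paper's proof essentially step for step: part (a) by translating cell changes through Proposition~\ref{prop:duality}a, the quadratic claim by noting that the four index sets are constant on a fixed-pattern set, and the $\theta=0$ extension by the same token-by-token check, in which a token on an active hyperplane contributes $z_t^2=(\hat z_t-z_t)^2$ because $\hat z_t=0$. Your side remark that the extension fails for $\theta>0$, since $z_t^2\neq(\theta-z_t)^2$ when $z_t>\theta>0$, is also correct; the paper does not state it.
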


Moving inside $\KX$ costs only the in-cell term, and each separating token adds exactly one crossing term.
The two applications act on the two blocks: knowledge-preserving fine-tuning holds the active block on the
tokens it covers (exactly, or to within the ridge residual of Proposition~\ref{prop:zero}a, at the base inputs) and leaves the inactive block free (\S\ref{sec:method}); in pruning repair the curvature
objectives leave the inactive block unpriced, and pricing it bounds the escape (\S\ref{sec:repair}).

\section{The constraint the identity names, and what it predicts}\label{sec:method}

By Theorem~\ref{thm:decomp} a pair closed at base enters only the closed$\to$open term, and contributes zero
while it stays closed. At the base inputs the in-cell and open$\to$closed terms of unit $i$ are carried by
$O_i$ alone, and both vanish when row $i$'s update satisfies $\tilde X_{O_i}u_i=0$: one equality per open
token, and which tokens are open is a property of the unit, not of the layer.

\begin{definition}[Gate-aware per-unit projector]\label{def:proj}
Let $\tilde X$ hold the base model's inputs to the layer on the $T$-token \emph{old support}. For unit $i$ let
$h^{(i)}_t=\act_\theta(z_i(x_t))$, let the \emph{cap} $C$ bound each unit's budget, let $S_i\subseteq O_i$ index
the $C$ largest positive $h^{(i)}_t$ (ties broken by token index), and
\begin{equation}\label{eq:ki}
k_i=|S_i|=\min\bigl(C,\,|O_i|\bigr),\qquad A_i=\tilde X_{S_i}\in\mathbb R^{k_i\times(d+1)}.
\end{equation}
After each optimiser step the cumulative displacement $u_i=(dW_i,db_i)$ of row $i$ is replaced by
\begin{equation}\label{eq:proj}
u_i\ \leftarrow\ P_i(\rho_i)\,u_i=u_i-A_i^\top(A_iA_i^\top+\rho_iI)^{-1}A_iu_i ,
\end{equation}
with $P_i(0):=I-A_i^+A_i$, its limit as $\rho_i\to0$, and $P_i(\rho)=I$ when $k_i=0$.
\end{definition}

Everything in Eq.~\eqref{eq:proj} is built once from the base model, with $\rho_i=10^{-4}$ times the mean
diagonal of $A_iA_i^\top$ padded to $C\times C$ by the identity (Appendix~\ref{app:cl:setup}). At $\rho=0$ row $i$ loses $\operatorname{rank}A_i\le k_i$ directions (at $\rho>0$ they are shrunk,
not removed), so the reported dims/row, the mean $k_i$, bounds the directions removed from above. On OPT-1.3b at caps $16$ to $128$ the residual on the constrained
pairs, at the stored base inputs, is $5.4\times10^{-9}$ to $4.0\times10^{-8}$ of the layer's total
$|\Delta z|$ mass. All \texttt{fc1} layers train at once, so end-to-end numbers also include the drift of each
layer's inputs; the statements below concern one layer at its base inputs.

\begin{proposition}[Exactness]\label{prop:zero}
Let $u'=P_i(\rho)u$. \textbf{(a)} $P_i(0)$ is the orthogonal projector onto $\ker A_i$. For $\rho>0$ and $k_i\ge1$,
$\|A_iu'\|\le\frac{\rho}{\sigma_+^2+\rho}\|A_iu\|$, tightly, where $\sigma_+$ is the smallest positive singular
value of $A_i$. \textbf{(b)} At $\rho=0$, every pair in $S_i$ stays in $S_{++}$ and contributes zero to all
three terms, every pair closed at base contributes only to closed$\to$open, and unit $i$'s in-cell and
open$\to$closed terms are sums over $O_i\setminus S_i$, empty when $C\ge|O_i|$. \textbf{(c)} If $C\ge|O_i|$
and $\rho=0$, the flat $\tilde w_i+\ker\tilde X_{O_i}$ lies in $\bigcap_{t\in O_i}H_t^>$: the weight point
moves parallel to every active hyperplane, and can leave $\KX$ only through inactive facets
(Figure~\ref{fig:duality}f).
\end{proposition}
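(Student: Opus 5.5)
Part (a) is linear algebra on the SVD of $A_i$, part (b) reads off Theorem~\ref{thm:decomp} once we know $A_iu'=0$, and part (c) follows from (b) together with Proposition~\ref{prop:duality}. The three parts are handled in that order.

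\textbf{(a).} Write the thin SVD $A_i=U\Sigma V^\top$ with positive singular values $\sigma_1\ge\dots\ge\sigma_r=\sigma_+$. For $\rho=0$, $I-A_i^+A_i=I-VV^\top$, which is the orthogonal projector onto $(\operatorname{row}A_i)^\perp=\ker A_i$. It is also the limit of Eq.~\eqref{eq:proj}: $A_i^\top(A_iA_i^\top+\rho I)^{-1}A_i=V\,\mathrm{diag}\bigl(\sigma_j^2/(\sigma_j^2+\rho)\bigr)V^\top$ tends to $VV^\top$ as $\rho\to0$. For $\rho>0$ we compute
\[
A_iu'=A_iu-A_iA_i^\top(A_iA_i^\top+\rho I)^{-1}A_iu=\rho\,(A_iA_i^\top+\rho I)^{-1}A_iu .
\]
The vector $A_iu$ lies in $\operatorname{range}A_i=\operatorname{span}U$. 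On that span the operator $\rho(A_iA_i^\top+\rho I)^{-1}$ has eigenvalues $\rho/(\sigma_j^2+\rho)$, and the largest of these is $\rho/(\sigma_+^2+\rho)$. This gives the bound. Taking $u$ to be the right singular vector for $\sigma_+$ attains equality, so the bound is tight. The only subtlety is restricting to $\operatorname{range}A_i$: on the orthogonal complement of that range the factor would be $1$, but $A_iu$ never has a component there.

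\textbf{(b).} With $\rho=0$, $A_iu'=0$, so $z_{\hat w}(x_t)=z_{\tilde w}(x_t)$ for every $t\in S_i$. Since $S_i\subseteq O_i$, these values are $>\theta$, so each such pair stays open, lies in $S_{++}$, and has a zero in-cell summand. A pair closed at base lies in $\mathcal C_\le$, so by Theorem~\ref{thm:decomp} it belongs to $S_{-+}$ or $S_{--}$ and contributes only through the closed$\to$open term. The in-cell and open$\to$closed terms range over base-open tokens. Removing the tokens of $S_i$, which contribute zero, leaves sums over $O_i\setminus S_i$. When $C\ge|O_i|$, Eq.~\eqref{eq:ki} gives $S_i=O_i$, so $O_i\setminus S_i$ is empty.

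\textbf{(c).} If $C\ge|O_i|$, then $A_i=\tilde X_{O_i}$. For $v\in\tilde w_i+\ker\tilde X_{O_i}$ we have $\langle v,\tilde x_t\rangle=\langle\tilde w_i,\tilde x_t\rangle>\theta$ for all $t\in O_i$, so $v\in\bigcap_{t\in O_i}H_t^>$. The inner products with the $\tilde x_t$ are constant along the flat, so its direction space is parallel to every active hyperplane $H_t$. By Proposition~\ref{prop:duality}b, $\KX=\PX\setminus\bigcup_{t\in O}H_t$, and the flat meets no $H_t$ with $t\in O$. A point of the flat therefore fails to lie in $\KX$ only if it violates an inactive inequality. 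By Proposition~\ref{prop:duality}c and convexity, a path from $\tilde w_i$ that leaves $\KX$ must cross an inactive facet $\PX\cap H_t$ with $t\notin O$.

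The step needing most care is the tightness claim in (a), which requires $k_i\ge1$ and $r\ge1$; if $A_i=0$, the bound should be read trivially. The remaining steps are direct substitutions.
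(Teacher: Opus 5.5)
Your proof is correct and follows the paper's route: an SVD computation for (a), a direct reading of Theorem~\ref{thm:decomp} for (b), and for (c) the fact that the flat lies strictly inside every active half-space, combined with the facet description of Proposition~\ref{prop:duality}c. Your bound of $A_iu'=\rho(A_iA_i^\top+\rho I)^{-1}A_iu$ on the range of $A_i$ is the same calculation as the paper's expansion of $P_i(\rho)$ in the right singular vectors.

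Two small tightenings. In (c), the exit point in $\partial\PX$ comes from closedness of $\PX$ and continuity of the path, not from convexity: take the infimum of the times at which the path is outside $\PX$. You also need that every boundary point lies on some facet, which the full form of Proposition~\ref{prop:duality}c supplies. In (a), $k_i\ge1$ already forces $A_i\neq0$, because every row of $A_i$ ends in $1$, so $\sigma_+$ always exists and no degenerate case arises.
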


\begin{theorem}[Budget separation]\label{thm:budget}
Let $r_i=\operatorname{rank}\tilde X_{O_i}$, and call a subspace $U$ of row updates \emph{protecting} for unit $i$
if unit $i$'s in-cell term vanishes for every update in $U$. \textbf{(a)} $U$ is protecting for unit $i$ iff
$U\subseteq\ker\tilde X_{O_i}$, and then the open$\to$closed term vanishes on $U$ too; so protection costs at
least $r_i\le|O_i|$ directions, and with $C\ge|O_i|$ the projector $P_i(0)$ removes exactly $r_i$.
\textbf{(b)} One subspace shared by every row is protecting for all units iff it lies in
$\ker\tilde X_{O_\cup}$, $O_\cup=\bigcup_iO_i$, so it removes at least
$\operatorname{rank}\tilde X_{O_\cup}\ge\max_ir_i$ directions from every row. \textbf{(c)} With $B$
directions in total, per-unit subspaces protect any set $J$ of units with $\sum_{i\in J}r_i\le B$, while
freezing whole rows protects $\#\{i:r_i=0\}+\min(\lfloor B/(d+1)\rfloor,\#\{i:r_i>0\})$ units at best.
\end{theorem}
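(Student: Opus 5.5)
The plan is to reduce everything to the in-cell term of Theorem~\ref{thm:decomp} evaluated at the base inputs. The base open set $O_i$ is fixed. For a row update $u$, the in-cell term sums $\langle u,\tilde x_t\rangle^2$ over $t\in O_i$ with $\tilde w_i+u$ still open at $x_t$. The first step is to pin down what ``vanishes for every update in $U$'' requires, and the main care lies here. Because $U$ is a subspace, it is closed under scaling, and the in-cell set depends on $u$.

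\textbf{Part (a), only if.} Take $u\in U$ and $t\in O_i$. Since $z_i(x_t)>\theta$ strictly, for all small $s>0$ the scaled update $su\in U$ keeps $t$ open. The vanishing of the in-cell term at $su$ then forces $s^2\langle u,\tilde x_t\rangle^2=0$. So $\tilde X_{O_i}u=0$.

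\textbf{Part (a), if.} If $\tilde X_{O_i}u=0$, then $z_{\tilde w_i+u}(x_t)=z_i(x_t)>\theta$ for every $t\in O_i$. Hence $S_{+-}$ restricted to unit $i$ is empty, the open$\to$closed term vanishes, and every in-cell summand is zero.

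\textbf{Part (a), counting.} A subspace inside $\ker\tilde X_{O_i}$ removes at least $(d+1)-\dim\ker\tilde X_{O_i}=r_i$ directions. Also $r_i\le|O_i|$ because there are $|O_i|$ rows. When $C\ge|O_i|$ we have $S_i=O_i$ and $A_i=\tilde X_{O_i}$. Proposition~\ref{prop:zero}a says $P_i(0)$ is the orthogonal projector onto $\ker A_i$, whose complement has dimension $\operatorname{rank}A_i=r_i$.

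\textbf{Part (b).} A shared $U$ is protecting for all units iff $U\subseteq\bigcap_i\ker\tilde X_{O_i}$, by applying (a) to each unit. This intersection equals $\ker\tilde X_{O_\cup}$, since stacking rows intersects the kernels (repeated rows do not matter). The codimension is therefore $\operatorname{rank}\tilde X_{O_\cup}$. This is at least each $r_i$, because $\tilde X_{O_i}$ is a row-submatrix of $\tilde X_{O_\cup}$.

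\textbf{Part (c), per-unit.} Give unit $i\in J$ the subspace $\ker\tilde X_{O_i}$ at cost $r_i$, and leave other rows unconstrained at cost $0$. The total is $\sum_{i\in J}r_i\le B$, and (a) gives protection.

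\textbf{Part (c), row freezing.} Here a row is either frozen, with $U_i=\{0\}$ at cost $d+1$, or left entirely free, with $U_i=\mathbb R^{d+1}$. A free row protects unit $i$ iff $\ker\tilde X_{O_i}=\mathbb R^{d+1}$, i.e.\ $r_i=0$. So the protected units are all those with $r_i=0$ (at zero cost) plus the frozen units with $r_i>0$. The number of frozen rows is at most $\lfloor B/(d+1)\rfloor$, and freezing helps only on units with $r_i>0$. This gives the stated maximum, which is attained by freezing any $\min(\lfloor B/(d+1)\rfloor,\#\{i:r_i>0\})$ such rows.

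I expect the only real obstacle to be the first step of (a). The definition of ``protecting'' quantifies over updates whose cell membership changes with the update, so the conclusion must be derived from the strict inequality at base together with scaling inside the subspace. A token could otherwise escape to $S_{+-}$ for a large $u$ and hide its in-cell contribution. The scaling argument handles this, and it is why strict openness in Definition~\ref{def:wall} matters.
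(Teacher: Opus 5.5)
Your proposal is correct and follows the paper's proof essentially step by step. The only-if direction of (a) scales inside $U$ so that open tokens stay open; you do this token by token, while the paper uses one uniform scale $s=\delta/(2\|a\|_\infty)$ with $\delta=\min_{t\in O_i}(z_t-\theta)$. Part (b) intersects kernels, and (c) combines the per-unit construction with counting frozen rows against the units with $r_i=0$. The paper's full form also proves two extra facts, neither needed for the statement as given: any protected set $J$ satisfies $\sum_{i\in J}r_i\le B$, and row freezing never protects more units than per-unit subspaces.
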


In OPT-1.3b with $T=2\,048$ old tokens and $d+1=2\,049$, the median unit fires on $11$ tokens and the mean on
$72.1$, so $r_i\le11$ for at least half the units and $19.5\%$ never fire. Caps $64$ and $128$ give mean
$k_i$ of $23.5$ and $34.4$ and cover $59.6\%$ and $71.7\%$ of the active post-activation energy. Adam-NSCL
instead forbids the top $k_{\mathrm{sh}}$ eigendirections of $\tilde X^\top\tilde X/T$ in every row.

\begin{proposition}[Budget-limited optimum]\label{prop:kyfan}
Let $\theta=0$ and let each row's proposed update $u_i$ satisfy $\mathbb E[u_iu_i^\top]=\sigma^2I$, projected
onto $U_i$ of codimension $q\le d+1$. Unit $i$'s in-cell plus open$\to$closed terms are at most
$\|\tilde X_{O_i}P_{U_i}u_i\|^2$, whose expectation $D_i(U_i)=\sigma^2\operatorname{tr}(\Xi_iP_{U_i})$,
$\Xi_i=\tilde X_{O_i}^\top\tilde X_{O_i}$, has minimum $\sigma^2\sum_{j>q}\lambda_j(\Xi_i)$, attained when
$U_i^\perp$ is a top-$q$ eigenspace of the unit's own open-token Gram $\Xi_i$. One subspace shared by all rows is
best when $U^\perp$ is a top-$q$ eigenspace of $\sum_i\Xi_i=\sum_tn_t\tilde x_t\tilde x_t^\top$, $n_t$ the number of
units open on token $t$, and its total $\sum_iD_i$ is never below that of the best per-unit subspaces of the
same total codimension.
\end{proposition}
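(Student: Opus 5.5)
The plan has three steps: bound the two damage terms by a single quadratic form, minimise its expectation with Ky Fan's principle, and compare the shared and per-unit optima through additivity of the Ky Fan sums.

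\emph{Step 1: the bound.} At $\theta=0$ we have $\act_0(u)=\max(u,0)$, which is $1$-Lipschitz. Take a pair $t\in O_i$ in $S_{++}$. Its contribution to Theorem~\ref{thm:decomp} is $(z_{\hat w}-z_{\tilde w})^2=\langle P_{U_i}u_i,\tilde x_t\rangle^2$. Now take $t\in O_i$ in $S_{+-}$. Its contribution is $z_{\tilde w}(x_t)^2$. Since $z_{\tilde w}>0\ge z_{\hat w}$ there, this is at most $(z_{\tilde w}-z_{\hat w})^2$, which is again $\langle P_{U_i}u_i,\tilde x_t\rangle^2$. Only pairs in $O_i$ enter these two terms. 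Summing over them gives the bound $\|\tilde X_{O_i}P_{U_i}u_i\|^2$.

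\emph{Step 2: the per-unit optimum.} Because $P_{U_i}$ is symmetric and idempotent, the expectation is
\[
D_i(U_i)=\mathbb E\,\operatorname{tr}(P_{U_i}\Xi_iP_{U_i}u_iu_i^\top)=\sigma^2\operatorname{tr}(\Xi_iP_{U_i}).
\]
Write $P_{U_i}=I-Q$, where $Q$ is the orthogonal projector onto $U_i^\perp$, of rank $q$. Then $\operatorname{tr}(\Xi_iP_{U_i})=\operatorname{tr}\Xi_i-\operatorname{tr}(\Xi_iQ)$. By Ky Fan's maximum principle, $\operatorname{tr}(\Xi_iQ)\le\sum_{j\le q}\lambda_j(\Xi_i)$ over rank-$q$ projectors, with equality when $\operatorname{range}Q$ is a top-$q$ eigenspace. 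This gives the minimum $\sigma^2\sum_{j>q}\lambda_j(\Xi_i)$.

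\emph{Step 3: the shared subspace.} For a single $U$ used in every row, linearity gives $\sum_iD_i(U)=\sigma^2\operatorname{tr}\bigl((\sum_i\Xi_i)P_U\bigr)$. By the same Ky Fan argument this is minimised by a top-$q$ eigenspace of $\sum_i\Xi_i$. Expanding, $\sum_i\Xi_i=\sum_i\sum_{t\in O_i}\tilde x_t\tilde x_t^\top=\sum_tn_t\tilde x_t\tilde x_t^\top$ after exchanging the order of summation.

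\emph{The comparison.} I read "same total codimension" as each row having codimension $q$ in both schemes. A shared $U$ is then one feasible choice of per-unit family, namely $U_i=U$ for all $i$. Hence the per-unit optimum $\sigma^2\sum_i\sum_{j>q}\lambda_j(\Xi_i)$ is at most $\sum_iD_i(U)$ for every $U$, including the best one.

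The alternative reading allows unequal $q_i$ with $\sum_iq_i=mq$. The shared scheme still lies in the feasible set (take $q_i=q$ and $U_i=U$), so the inequality holds there as well. At the optimum one simply allocates the $q_i$ greedily by eigenvalue.

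\emph{Expected obstacle.} The only delicate point is the open$\to$closed bound, which is where $\theta=0$ is used. For $\theta>0$ the contribution $z_{\tilde w}^2$ can exceed $(z_{\tilde w}-z_{\hat w})^2$, since the activation jumps at $\theta$. Everything else is a standard trace computation.
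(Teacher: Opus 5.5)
Your proposal is correct and follows the paper's proof step for step. The steps are: the case split on $O_i$ at $\theta=0$ for the bound (the paper's Lemma~\ref{lem:open-ub}), the cyclic-trace computation of $D_i$, Ky Fan applied to the complementary projector $I-P_{U_i}$, and linearity plus exchange of summation for the shared subspace. The comparison likewise views the shared $U$ as an admissible per-unit allocation with $\operatorname{codim}U_i=q$ for every $i$. The only differences are that you cite Ky Fan where the paper proves it (Lemma~\ref{lem:kyfan}), and that your remark on greedy allocation of unequal $q_i$ is correct but not needed.
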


\begin{proposition}[The freed term is a near-wall term]\label{prop:freed}
For a closed token $t$ of unit $i$ and an update $u$, let $\gamma_t=\theta-z_i(x_t)\ge0$ and
$a_t=\langle u,\tilde x_t\rangle$. Token $t$ opens iff $a_t>\gamma_t$, so only the near-wall tokens
$N(u)=\{t\notin O_i:\gamma_t<\|u\|\,\|\tilde x_t\|\}$ can open, and at $\theta=0$ the closed$\to$open term is
$\sum_{t\notin O_i}(a_t-\gamma_t)_+^2\le\|\tilde X_{N(u)}u\|^2$.
\end{proposition}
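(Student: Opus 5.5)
The statement to prove is the last one above, Proposition~\ref{prop:freed}. The plan is a direct computation from Theorem~\ref{thm:decomp}, using linearity of the pre-activation in the weights. For unit $i$ with base weights $\tilde w_i$ and updated weights $\hat w=\tilde w_i+u$, linearity gives
\[
z_{\hat w}(x_t)=z_i(x_t)+\langle u,\tilde x_t\rangle=\theta-\gamma_t+a_t .
\]
For a closed token we have $z_i(x_t)\le\theta$, so $\gamma_t\ge0$. By Definition~\ref{def:wall}, the token is open at $\hat w$ iff $z_{\hat w}(x_t)>\theta$, which is exactly $a_t>\gamma_t$. This gives the first claim, and it agrees with Proposition~\ref{prop:duality}a.

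Second, I restrict which tokens can open. By Cauchy--Schwarz, $a_t\le|a_t|\le\|u\|\,\|\tilde x_t\|$. If a closed token opens, then $\gamma_t<a_t\le\|u\|\,\|\tilde x_t\|$, so $t\in N(u)$. Taking the contrapositive, every closed token outside $N(u)$ stays closed.

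Third, I compute the closed$\to$open term at $\theta=0$. By Theorem~\ref{thm:decomp} it equals $\sum_{t\in S_{-+}}z_{\hat w}(x_t)^2$. When $\theta=0$, $z_{\hat w}(x_t)=a_t-\gamma_t$, and $S_{-+}$ consists of the closed tokens with $a_t-\gamma_t>0$. On $S_{-+}$ the summand is therefore $(a_t-\gamma_t)_+^2$. On the remaining closed tokens $a_t\le\gamma_t$, so $(a_t-\gamma_t)_+=0$ there. Extending the sum to all $t\notin O_i$ thus adds only zeros, which gives the stated equality.

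Finally, the bound. The summand vanishes off $N(u)$, by the second step. On $N(u)$ with $a_t>\gamma_t\ge0$ we have $0<a_t-\gamma_t\le a_t$, so $(a_t-\gamma_t)_+^2\le a_t^2$. In every other case the summand is zero, which is also at most $a_t^2$. Summing over $t\in N(u)$ gives
\[
\sum_{t\in N(u)}a_t^2=\|\tilde X_{N(u)}u\|^2 .
\]
If $\tilde X_{N(u)}$ is read as a multiset of rows, repeated tokens are counted consistently on both sides, so the inequality is unaffected.

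There is no real obstacle; the only point needing care is the sign convention $\gamma_t\ge0$. That convention is what makes $(a_t-\gamma_t)_+\le|a_t|$ hold, and it fails for open tokens, which is why the statement restricts to $t\notin O_i$.
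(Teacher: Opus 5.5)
Your proposal is correct and follows the paper's proof essentially step for step. Linearity gives $z_{\hat w}(x_t)=\theta-\gamma_t+a_t$; Cauchy--Schwarz confines the opened tokens to $N(u)$; and at $\theta=0$ the inequality $0<a_t-\gamma_t\le a_t$ on opened tokens bounds the closed$\to$open term by $\sum_{t\in N(u)}a_t^2=\|\tilde X_{N(u)}u\|^2$.
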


\begin{proposition}[First-order criteria inherit the partition]\label{prop:firstorder}
Let the inputs be fixed and a loss $L=g(h)$ be differentiable in the post-activations
$h=(h_{t,i})$, $h_{t,i}=\act_\theta(z_{t,i})$ (any downstream network), with $\act_\theta'(\theta):=0$ as in automatic
differentiation. Then $\partial L/\partial z_{t,i}=\mathds 1[z_{t,i}>\theta]\,\partial g/\partial h_{t,i}$
vanishes on every closed pair; the row gradient
$\nabla_{\tilde w_i}L=\sum_{t\in O_i}(\partial L/\partial z_{t,i})\,\tilde x_t$ lies in the row space of
$\tilde X_{O_i}$; and the Fisher of row $i$ annihilates $\ker\tilde X_{O_i}$ (with $O_i$ taken over the
Fisher data), while every term of EWC's diagonal Fisher is indexed by an open pair.
\end{proposition}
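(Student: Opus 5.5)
The plan is to prove Proposition~\ref{prop:firstorder} by the chain rule through the elementwise activation, then read off the row gradient, the Fisher and EWC's diagonal from the same factorisation.

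\emph{Step 1: the pre-activation derivative.} The inputs are fixed, so $h_{t,i}$ depends on $z$ only through $z_{t,i}$, and $\partial L/\partial z_{t,i}=(\partial g/\partial h_{t,i})\,\act_\theta'(z_{t,i})$. For $u>\theta$ we have $\act_\theta(u)=u$ locally, with derivative $1$. For $u<\theta$ the function is $0$ locally, with derivative $0$. At $u=\theta$ the convention $\act_\theta'(\theta):=0$ applies. This is the only place the convention enters, and I will state it explicitly so the claim is read as one about the automatic-differentiation gradient rather than a classical derivative (for $\theta>0$ the function even jumps there). Together these give $\act_\theta'(u)=\mathds 1[u>\theta]$, which proves the displayed formula and its vanishing on closed pairs.

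\emph{Step 2: the row gradient.} We have $z_{t,i}=\langle\tilde w_i,\tilde x_t\rangle$ with $\tilde x_t$ fixed, so $\nabla_{\tilde w_i}L=\sum_t(\partial L/\partial z_{t,i})\tilde x_t$. By Step 1 the terms with $t\notin O_i$ vanish, leaving a combination of the rows of $\tilde X_{O_i}$. This puts the gradient in the row space of $\tilde X_{O_i}$, that is, orthogonal to $\ker\tilde X_{O_i}$.

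\emph{Step 3: the Fisher and EWC.} I take the Fisher of row $i$ to be $F_i=\mathbb E[\nabla_{\tilde w_i}\ell\,\nabla_{\tilde w_i}\ell^\top]$, where the expectation is over the Fisher data and the model's sampled labels, and $\ell$ is a per-sample log-likelihood of the form $g(h)$. Step 2 applies to each sample, so every gradient lies in the span of $\{\tilde x_t:t\in O_i\}$, with $O_i$ taken over all Fisher tokens. For $v\in\ker\tilde X_{O_i}$ this gives $F_iv=\mathbb E[\nabla\ell\,\langle\nabla\ell,v\rangle]=0$. For EWC's diagonal, the entry of weight $(i,j)$ is $\mathbb E[(\sum_t(\partial\ell/\partial z_{t,i})\tilde x_{t,j})^2]$, and by Step 1 each summand, and hence every term in the expanded square, carries a factor $\mathds 1[z_{t,i}>\theta]$. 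So every nonzero term is indexed by open pairs $(t,i)$.

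The main obstacle is bookkeeping rather than analysis. I need to fix what ``the Fisher of row $i$'' averages over, since with multi-token samples one square mixes several tokens, each of which must be open. I also need to make sure that ``inputs fixed'' excludes dependence of later layers' inputs on $z$ through other paths. The latter is absorbed into $g$, because $g$ is a function of the full $h$.
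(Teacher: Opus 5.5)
Your proposal is correct and follows the paper's proof essentially step for step. You take the chain rule through the elementwise activation, with the automatic-differentiation convention entering only at $z_{t,i}=\theta$. You write the row gradient as a combination of the rows of $\tilde X_{O_i}$, and you show the Fisher annihilates $\ker\tilde X_{O_i}$ because every per-sample gradient is orthogonal to it, with $O_i$ pooled over all Fisher tokens. For the EWC diagonal, you expand the square so that each term carries the indicators of open pairs, which is exactly the paper's argument.
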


\paragraph{Which matrices carry the relaxation.} On an ungated matrix every pair acts as open, so the exact
per-unit constraint of every row is the shared null space $\ker\tilde X$, which Adam-NSCL approximates; the
two exact constraints coincide there and differ only on a matrix whose outputs each pass through a threshold,
in OPT \texttt{fc1}, where we compare the methods. In a
ReLU-gated GLU the same constraint on the gate and up rows keeps every open pair's output
(Lemma~\ref{lem:glu}).

\paragraph{Predictions.}\label{sec:predictions} The results fix what \S\ref{sec:exp} measures.
\textbf{P1} (Theorem~\ref{thm:budget}, Proposition~\ref{prop:kyfan}): exact protection costs a unit $r_i$
directions of its own row but a shared subspace $\operatorname{rank}\tilde X_{O_\cup}$ in every row, and in the
damage bound $D_i$ the best per-unit constraint of each codimension is never worse than the best shared one.
The gate projector attains $D_i=0$ once its tokens span the row space of $\tilde X_{O_i}$; below that the
theory does not rank it against Adam-NSCL, and we predict that at matched dims/row it forgets less once the
cap covers most of each unit's open-token span (\S\ref{sec:budget}).
\textbf{P2} (Propositions~\ref{prop:zero} and~\ref{prop:firstorder}): at the base inputs a covered open pair
is held exactly and a covered closed pair only against opening, so we predict that at fixed $k_i$ sets chosen
within $O_i$ protect comparably, sets chosen off it lose, and a first-order criterion, which stays inside $O_i$ whenever at least $k_i$ open pairs have
$\partial g/\partial h\neq0$, should match the gate; among the published
families EWC is the one whose criterion reads it (\S\S\ref{sec:necessity},~\ref{sec:families}).
\textbf{P3} (Proposition~\ref{prop:freed}, $\theta=0$): at the base inputs only near-wall closed tokens $N(u)$
can open, and their term is at most $\|\tilde X_{N(u)}u\|^2$, so we predict that a constrained update frees far
less closed$\to$open mass than an unconstrained one. Measured end to end on the probe it is $1\,785$ at
$C=64$ and $1\,622$ for the shared subspace of the same size, against $9\,239$ (seeds $0$--$2$,
Appendix~\ref{app:cl:crossmass}); the theory does not order the two constrained methods.
\textbf{P4} (Proposition~\ref{prop:zero}b): exactness is reached unit by unit at $C\ge|O_i|$, so where open
sets are larger the per-unit advantage should need a larger cap (\S\ref{sec:budget}).
\textbf{P5} (Theorem~\ref{thm:budget}c): a frozen row spends $d+1$ directions on one unit, so protecting
whole units should be weaker at equal budget (\S\ref{sec:necessity}).
\textbf{P6} (Lemma~\ref{lem:blind}, Theorems~\ref{thm:escape}--\ref{thm:confine}): every derivative-based local model of the repair
error at the dense weights sees open pairs only, so its minimisers should violate inactive constraints of $\PX$, and pricing them should bound
the escape (\S\ref{sec:repair}).

%% file: fig1_tropical.tex
\begin{figure}[t]
\centering
\includegraphics[width=\textwidth]{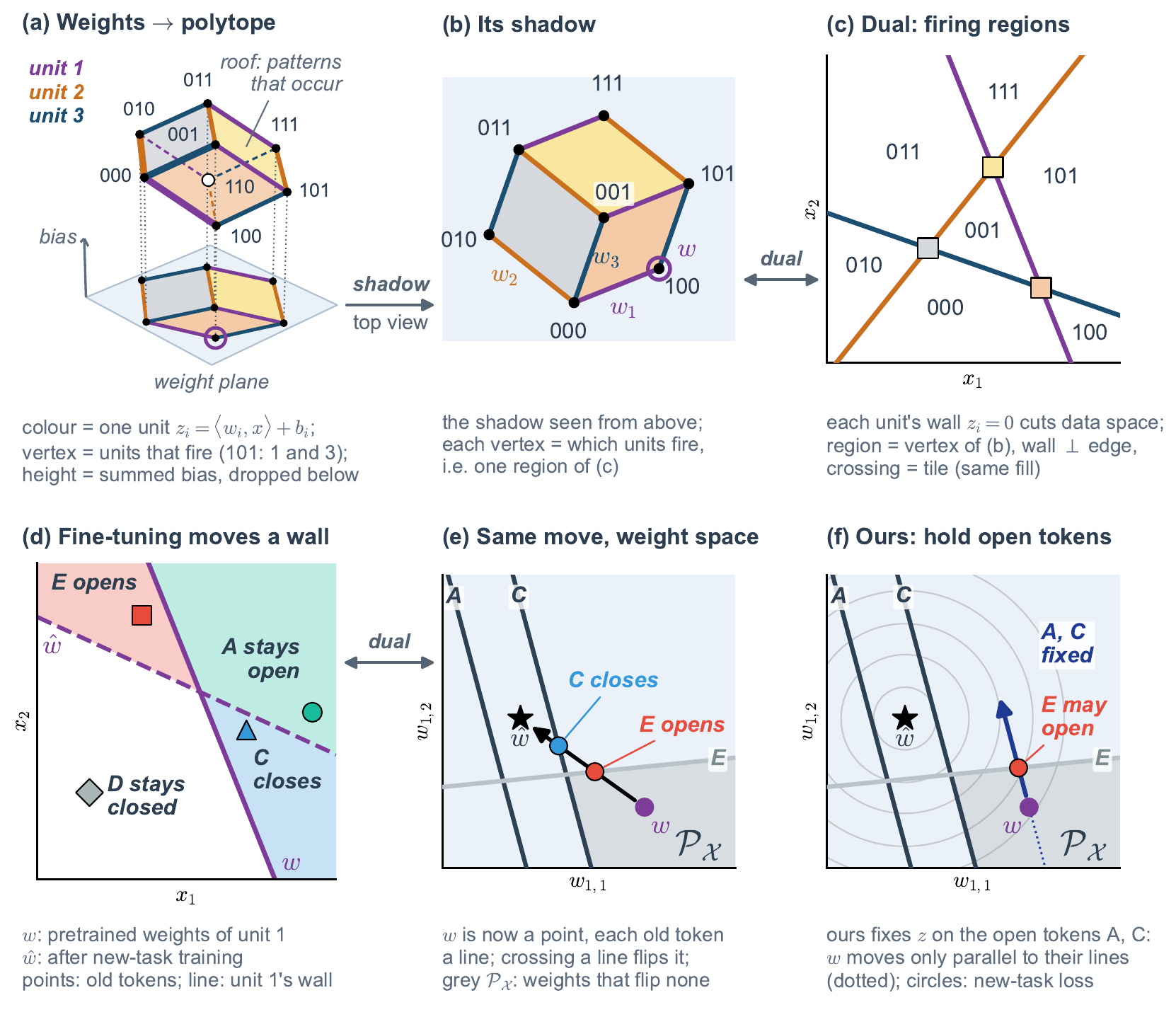}
\caption{\textbf{A layer's weights, its lifted Newton polytope, and its activation pattern}, for a toy layer
of three ReLU units on two inputs. Pale blue marks weight space and (c), (d) are data space; arrows marked
\emph{dual} join the two. \textbf{(a)} Each colour is a unit and each vertex a set of units that fire, in bits
($101$: units~1 and~3), placed at their summed weights and raised by their summed biases: the Newton polytope
of $\sum_i\max(0,z_i)$ lifted by the biases \citep{zhang2018tropical} (Appendix~\ref{app:prelim}).
\textbf{(b)} Its shadow, which subdivides the Newton polytope; the circled vertex is unit~1's weights $w$.
\textbf{(c)} The units' walls $z_i=0$ in data space. Regions are labelled by which units fire, each is a vertex
of (b), and pattern $110$ (hollow in (a)) occurs nowhere. \textbf{(d)} Fine-tuning moves unit~1's wall from
$w$ to $\hat w$, and colours say what that does to each old token: green stays open, blue closes, red opens,
grey stays closed (the sets $S_{++},S_{+-},S_{-+},S_{--}$ of Theorem~\ref{thm:decomp}). \textbf{(e)} The same
move in weight space: $w$ is a point and each old token a line, crossing which flips the token (dark: open at
$w$; light: closed; $D$'s line is off-panel). The grey region $\PX$ is the closure of the weights that flip
no token (Definition~\ref{def:poly}). \textbf{(f)} Our
projector (\S\ref{sec:method}) fixes the pre-activation on the open tokens $A$ and $C$, so $w$ moves only
parallel to their lines (dotted) and the closed token $E$ may open; circles are the new-task loss. Biases are fixed in (d)--(f),
and $A$, $C$ lie on one ray, so one direction stays free.}
\label{fig:duality}
\end{figure}

%% file: fig_pareto.tex
\begin{figure}[t]
\centering
\begin{tikzpicture}[font=\small]
\begin{scope}[xshift=0.0cm]
\begin{axis}[width=6.6cm,height=4.1cm,xlabel={constrained dims per row},ylabel={forgetting (nats)},xmin=0,xmax=138,ymin=0,title={OPT-1.3b},legend style={font=\scriptsize,draw=none,fill=none,at={(0.97,0.97)},anchor=north east},tick label style={font=\scriptsize},label style={font=\scriptsize},title style={font=\small},scaled y ticks=false,yticklabel style={/pgf/number format/fixed,/pgf/number format/precision=2},grid=major,grid style={gray!18}]
\addplot+[thick,gray!70!black,mark=square*,mark size=1.3,error bars/.cd,y dir=both,y explicit] coordinates {(9.000,0.0717) +- (0,0.0066) (12.000,0.0626) +- (0,0.0073) (15.000,0.0500) +- (0,0.0061) (20.000,0.0502) +- (0,0.0065) (24.000,0.0468) +- (0,0.0072) (28.000,0.0496) +- (0,0.0065) (34.000,0.0462) +- (0,0.0039) (40.000,0.0466) +- (0,0.0052) (47.000,0.0467) +- (0,0.0054) (60.000,0.0423) +- (0,0.0064) (128.000,0.0272) +- (0,0.0055)};
\addlegendentry{Adam-NSCL (shared)}
\addplot+[thick,red!75!black,mark=*,mark size=1.5,error bars/.cd,y dir=both,y explicit] coordinates {(9.067,0.0604) +- (0,0.0079) (15.048,0.0473) +- (0,0.0060) (23.519,0.0339) +- (0,0.0058) (34.421,0.0238) +- (0,0.0058) (47.257,0.0110) +- (0,0.0035) (60.332,0.0098) +- (0,0.0044)};
\addlegendentry{ours (per-unit, gate)}
\addplot[dashed,gray!70!black,domain=0:138,samples=2,forget plot] {0.0255};
\node[font=\scriptsize,gray!40!black,anchor=north east] at (axis cs:116,0.0255) {Adam-NSCL, uniform 259};
\end{axis}
\end{scope}
\begin{scope}[xshift=7.2cm]
\begin{axis}[width=6.6cm,height=4.1cm,xlabel={constrained dims per row},ylabel={forgetting (nats)},xmin=0,xmax=69,ymin=0,title={OPT-125m},legend style={font=\scriptsize,draw=none,fill=none,at={(0.97,0.97)},anchor=north east},tick label style={font=\scriptsize},label style={font=\scriptsize},title style={font=\small},scaled y ticks=false,yticklabel style={/pgf/number format/fixed,/pgf/number format/precision=2},grid=major,grid style={gray!18}]
\addplot+[thick,gray!70!black,mark=square*,mark size=1.3,error bars/.cd,y dir=both,y explicit] coordinates {(6.000,0.0289) +- (0,0.0014) (8.000,0.0209) +- (0,0.0010) (16.000,0.0202) +- (0,0.0013) (17.000,0.0195) +- (0,0.0018) (32.000,0.0191) +- (0,0.0009) (42.000,0.0126) +- (0,0.0012) (62.000,0.0142) +- (0,0.0016) (64.000,0.0144) +- (0,0.0013)};
\addlegendentry{Adam-NSCL (shared)}
\addplot+[thick,red!75!black,mark=*,mark size=1.5,error bars/.cd,y dir=both,y explicit] coordinates {(5.700,0.0637) +- (0,0.0030) (17.022,0.0224) +- (0,0.0011) (42.080,0.0086) +- (0,0.0012) (62.047,0.0046) +- (0,0.0012)};
\addlegendentry{ours (per-unit, gate)}
\end{axis}
\end{scope}
\end{tikzpicture}
\caption{\textbf{Forgetting at matched constrained dimensions.} Means over seeds 0--2 $\pm$ one standard error, WikiText-103, lr $10^{-4}$; dashed: Adam-NSCL at a uniform $259$ dims/row, the mean dimension GPM's energy criterion selects at $0.95$.}
\label{fig:pareto}
\end{figure}

%% file: fig_families.tex
\begin{figure}[t]
\centering
\begin{tikzpicture}[font=\small]
\begin{axis}[width=10.6cm,height=4.3cm,x dir=reverse,y dir=reverse,xlabel={learning: change in new-task NLL (nats) $\longrightarrow$ more learning},ylabel={forgetting (nats) $\longrightarrow$ less},ymode=log,log basis y=10,xmin=-0.675,xmax=-0.545,ymin=0.0062,ymax=0.7,ytick={0.01,0.02,0.05,0.1,0.2,0.5},yticklabels={0.01,0.02,0.05,0.1,0.2,0.5},legend cell align={left},legend style={font=\scriptsize,draw=none,fill=none,at={(1.02,1)},anchor=north west},tick label style={font=\scriptsize},label style={font=\scriptsize},grid=major,grid style={gray!18}]
\addplot[black!55,line width=1.1pt,opacity=0.6,forget plot] coordinates {(-0.6157,0.0098) (-0.6157,0.0110) (-0.6189,0.0110) (-0.6189,0.0175) (-0.6241,0.0175) (-0.6241,0.0227) (-0.6346,0.0227) (-0.6346,0.2841) (-0.6626,0.2841)};
\draw[-latex,thick,black!70] (axis cs:-0.645,0.14) -- (axis cs:-0.669,0.03) node[pos=0.45,sloped,above,font=\scriptsize,black!80]{better};
\node[font=\scriptsize,black!80,anchor=west] at (axis cs:-0.6201,0.0086) {$\leftarrow$ ours, caps $512$ and $256$};
\addplot[only marks,black,mark=x,mark size=2.2] coordinates {(-0.5878,0.2684)};
\addlegendentry{no constraint}
\addplot[only marks,orange!85!black,mark=triangle*,mark size=1.8] coordinates {(-0.5850,0.2695) (-0.5863,0.2743) (-0.5863,0.2578) (-0.5962,0.2092) (-0.6144,0.0937) (-0.6241,0.0175)};
\addlegendentry{EWC}
\addplot[only marks,violet,mark=diamond*,mark size=1.9] coordinates {(-0.5698,0.0486)};
\addlegendentry{MIGU}
\addplot[only marks,teal,mark=pentagon*,mark size=1.7] coordinates {(-0.5553,0.1360) (-0.6567,0.3601) (-0.6320,0.2558) (-0.6611,0.3319) (-0.6563,0.3875) (-0.6626,0.2841) (-0.5927,0.5328)};
\addlegendentry{LoRA}
\addplot[only marks,cyan!55!black,mark=pentagon,mark size=1.9] coordinates {(-0.5602,0.0853) (-0.6325,0.1221)};
\addlegendentry{LoRA-Null}
\addplot[only marks,brown!80!black,mark=o,mark size=1.8] coordinates {(-0.5944,0.2648) (-0.6328,0.1893) (-0.6252,0.0558) (-0.5694,0.0103)};
\addlegendentry{L2-to-init}
\addplot[only marks,gray!70!black,mark=square*,mark size=1.4] coordinates {(-0.6289,0.0717) (-0.6252,0.0468) (-0.6229,0.0462) (-0.5671,0.0255) (-0.6078,0.0272) (-0.6346,0.0227)};
\addlegendentry{Adam-NSCL}
\addplot[only marks,red!75!black,mark=*,mark size=1.7] coordinates {(-0.6280,0.0604) (-0.6237,0.0339) (-0.6210,0.0238) (-0.6189,0.0110) (-0.6157,0.0098)};
\addlegendentry{ours}
\addlegendimage{black!55,line width=1.1pt,opacity=0.6}
\addlegendentry{Pareto frontier}
\end{axis}
\end{tikzpicture}
\caption{\textbf{Method families on the forgetting--learning plane.} OPT-1.3b, WikiText-103, means over seeds 0--2. Axes reversed: up and right is better. Each marker is one setting of its family's own hyperparameter. Grey steps: the Pareto frontier of the plotted points, whose low-forgetting end is our caps $512$ and $256$. Settings, and five points off the plotted range, each learning less than unconstrained fine-tuning, are in Appendix~\ref{app:cl:families:more}.}
\label{fig:families}
\end{figure}

%% file: app_prelim.tex
\providecommand{\KX}{\mathcal K_{\mathcal X}}
\theoremstyle{plain}
\ifcsname theorem*\endcsname\else\newtheorem*{theorem*}{Theorem}\fi
\ifcsname proposition*\endcsname\else\newtheorem*{proposition*}{Proposition}\fi
\ifcsname corollary*\endcsname\else\newtheorem*{corollary*}{Corollary}\fi
\theoremstyle{remark}

\section{Preliminaries: a ReLU unit as a tropical binomial}\label{app:prelim}

In the max-plus notation of Section~\ref{sec:identity}, the summed output of a ReLU layer is the tropical
product of its units' binomials $0\oplus z_i$, and multiplying it out gives one monomial per subset $J$ of
units, the case $\theta=0$ of Proposition~\ref{prop:tropical}(i):
\begin{equation}\label{eq:app:layerpoly}
\sum_i\max(0,z_i)\;=\;\bigodot_i\,(0\oplus z_i)\;=\;\max_{J}\Big(\Big\langle\sum_{i\in J}w_i,\,x\Big\rangle+\sum_{i\in J}b_i\Big).
\end{equation}
Following \citet{zhang2018tropical}, lift each exponent $\sum_{i\in J}w_i$ by its coefficient
$\sum_{i\in J}b_i$. The convex hull of the lifted points is the zonotope $Z_0=\sum_i[0,(w_i,b_i)]$, and
projecting its upper faces gives a subdivision of the Newton polytope $\sum_i[0,w_i]$ whose vertices are the gate patterns
realised off the walls and whose cells are dual to the points where walls meet
(Figure~\ref{fig:duality}a--c). With real weights the exponents are real, so the polynomial is strictly a
tropical signomial; the duality used here is that of regular subdivisions and holds unchanged. For
$\theta>0$ the activation $\act_\theta$ is discontinuous and is not tropical; Proposition~\ref{prop:tropical}
describes its cells through the gate pattern it shares with $\max(0,z-\theta)$.
Figure~\ref{fig:app:tropical} draws Eq.~\eqref{eq:app:layerpoly} for a layer with one input, where every
object of Figure~\ref{fig:duality}a--c is one dimension lower, and Figure~\ref{fig:app:decomp} reads
Theorem~\ref{thm:decomp} one token at a time on the toy layer of Figure~\ref{fig:duality}.

\section{Proofs for Sections~\ref{sec:identity} and~\ref{sec:method}}\label{app:proofs23}

Every numbered statement of \S\S\ref{sec:identity}--\ref{sec:method} except Theorem~\ref{thm:decomp}, which the
body proves, is stated here in full and proved. A full form keeps the part letters of its body statement
(an unlettered one is split into parts); parts that appear only here carry a primed letter, as in (a$'$),
or a letter after the body's last one.

All statements concern one layer at fixed inputs. Tokens may repeat, so $\mathcal X=\{x_t\}_{t=1}^T$ is
a finite multiset and $\tilde X$ may have equal rows; nothing below assumes distinct tokens, full rank or
general position unless stated. Accordingly, the sets $O$, $O_i$, $S_i$, $S_{++}$, $S_{+-}$, $S_{-+}$,
$S_{--}$ and $N(u)$ are sets of token indices $t\in[T]$, a sum over $x\in\mathcal X\cap A$ runs over the
indices $t$ with $x_t\in A$, and $\tilde X_S\in\mathbb R^{|S|\times(d+1)}$ has one row per index in $S$;
for $S=\varnothing$ it is the empty matrix, of rank $0$ and with kernel $\mathbb R^{d+1}$. Throughout,
$d$, $m$ and $T$ are positive integers, $\theta\ge0$ is a real threshold, $[m]=\{1,\dots,m\}$, and
$\langle\cdot,\cdot\rangle$ and $\|\cdot\|$ are the Euclidean inner product and norm, with $\|\cdot\|_2$
the induced operator norm of a matrix. Every lifted token $\tilde x_t=(x_t,1)$ has last coordinate $1$,
so $\tilde x_t\neq0$; we say where this is used. For one unit with base weights $\tilde w$ and new
weights $\hat w=\tilde w+u$, $u\in\mathbb R^{d+1}$, we write $z_t=\langle\tilde w,\tilde x_t\rangle$ and
$\hat z_t=\langle\hat w,\tilde x_t\rangle=z_t+\langle u,\tilde x_t\rangle$; \emph{at the base inputs}
means that the $\tilde x_t$ are the base model's inputs to the layer and do not change with $u$.
Confining a row's update to a linear subspace $U\subseteq\mathbb R^{d+1}$ \emph{removes}
$\operatorname{codim}U=d+1-\dim U$ directions, and $P_U$ denotes the orthogonal projector onto $U$. For a
symmetric positive semidefinite $\Xi\in\mathbb R^{n\times n}$ with eigenvalues
$\lambda_1(\Xi)\ge\dots\ge\lambda_n(\Xi)$, listed with multiplicity, a \emph{top-$q$ eigenspace},
$0\le q\le n$, is the span of $q$ orthonormal eigenvectors whose eigenvalues are
$\lambda_1(\Xi),\dots,\lambda_q(\Xi)$.

For a set $V\subseteq\mathbb R^n$, $\operatorname{conv}V$ and $\operatorname{aff}V$ are its convex and
affine hulls and $\dim V=\dim\operatorname{aff}V$; the relative interior of a convex set is its interior
within its affine hull. For a linear functional $\ell$ on $\mathbb R^n$ and a compact convex set
$P\subseteq\mathbb R^n$, the \emph{face of $P$ exposed by $\ell$} is
$F_\ell(P)=\arg\max_{p\in P}\ell(p)$; a \emph{vertex} is an exposed face that is a single point, and an
\emph{edge} is an exposed face that is a segment of positive length. For a polyhedron $P$, possibly
unbounded, a \emph{face} is a set $P\cap\{y:c^\top y=\beta\}$ for an inequality $c^\top y\le\beta$
valid on $P$, and a \emph{facet} is a face of dimension $\dim P-1$. A \emph{chamber} of a finite family
of hyperplanes is a connected component of the complement of their union.

\begin{figure}[tbp]
\centering
\includegraphics[width=0.75\textwidth]{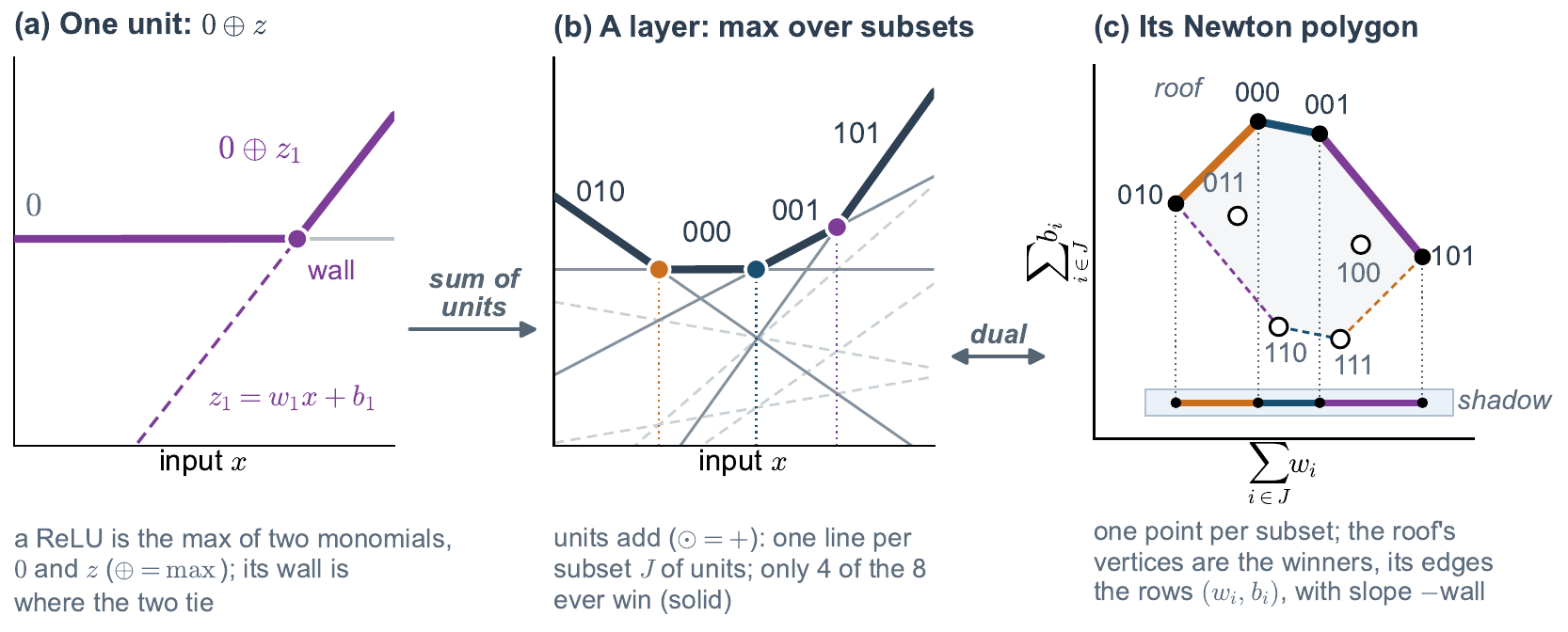}
\caption{\textbf{A ReLU layer as a tropical polynomial, for one input.} The toy layer has three units on a
scalar input $x$, with $z_1=x-1.2$, $z_2=-0.8x-0.8$ and $z_3=0.6x-0.12$; each unit keeps its colour from
Figure~\ref{fig:duality} (unit~1 purple, unit~2 amber, unit~3 navy).
\textbf{(a) One unit.} The grey line is the monomial $0$ and the dashed purple line the monomial $z_1$.
Their maximum, the thick line, is the ReLU $0\oplus z_1=\max(0,z_1)$. The two monomials tie at $x=1.2$, the
unit's wall: to its left the unit outputs $0$, to its right it outputs $z_1$.
\textbf{(b) The whole layer.} Adding the three outputs is tropical multiplication, $\bigodot_i(0\oplus z_i)$,
and multiplying out gives one line $\sum_{i\in J}z_i$ for each of the $8$ subsets $J$ of units
(Eq.~\eqref{eq:app:layerpoly}). The thick envelope is their maximum, which is the layer's summed output. Each
piece of the envelope is labelled by the subset that attains it, written as a bit string in which bit $i$ is
$1$ when unit $i$ is in $J$, that is, open. Reading left to right: at $x=-1.7$ only unit~2 fires, so the
winning line is $z_2$ and the output is $0.56$ (pattern $010$); between $x=-1.0$ and $x=0.2$ no unit fires
($000$); between $0.2$ and $1.2$ only unit~3 fires ($001$); beyond $1.2$ units~1 and~3 fire, and at $x=1.8$
the output is $z_1+z_3=1.56$ ($101$). The coloured dots are the three walls, where the envelope bends. The
four dashed lines never reach the envelope. Pattern $100$, for example, would need unit~1 open and unit~3
closed, that is $x>1.2$ and $x<0.2$ at once.
\textbf{(c) The same layer in exponent space.} Each subset $J$ becomes the point
$(\sum_{i\in J}w_i,\sum_{i\in J}b_i)$: $000$ is the origin, $001=(0.6,-0.12)$, $101=(1.6,-1.32)$, and so on.
The upper hull of the eight points, the roof, has as its vertices exactly the four patterns that win in (b).
Each roof edge is one unit's row. The navy edge from $000$ to $001$ is unit~3's row $(0.6,-0.12)$, and its
slope $-0.2$ is minus unit~3's wall at $x=0.2$. In the same way the amber edge has slope $1$, minus unit~2's
wall at $-1.0$, and the purple edge has slope $-1.2$, minus unit~1's wall at $1.2$. The hollow points lie
below the roof and never win: $100=(1.0,-1.2)$, for example, sits $0.6$ below the roof point above it. The
dashed edges bound the polygon from below and carry no winner. Dropping the height (dotted) projects the roof
onto the weight axis and splits it into three pieces, one per unit, which is the one-input counterpart of the
subdivision in Figure~\ref{fig:duality}b. The arrow marked \emph{dual} records that each roof vertex is a
region of (b) and each roof edge a wall.}
\label{fig:app:tropical}
\end{figure}

\begin{lemma}\label{lem:hull}
Let $V\subseteq\mathbb R^n$ be finite and nonempty, $P=\operatorname{conv}V$, $\ell$ a linear functional
on $\mathbb R^n$ and $M=\max_{v\in V}\ell(v)$. Then $\max_P\ell=M$ and
$F_\ell(P)=\operatorname{conv}\{v\in V:\ell(v)=M\}$.
\end{lemma}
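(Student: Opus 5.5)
The plan has two parts, the value of the maximum and then the maximising set, and it uses only linearity of $\ell$ and the description of $\operatorname{conv}V$ as the set of finite convex combinations of points of $V$.

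For the value, every $p\in P$ can be written as $p=\sum_{v\in V}\lambda_v v$ with $\lambda_v\ge0$ and $\sum_v\lambda_v=1$. Linearity then gives $\ell(p)=\sum_v\lambda_v\ell(v)\le M$. Since $V\subseteq P$ is finite and nonempty, $M$ is attained at some $v\in V\subseteq P$, so $\max_P\ell=M$. In particular the maximum exists and $F_\ell(P)$ is nonempty.

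For the face, write $V_M=\{v\in V:\ell(v)=M\}$. I will prove the two inclusions separately.

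\emph{The hull lies in the face.} Any convex combination of points of $V_M$ lies in $P$ and has $\ell$-value $M$, again by linearity. Hence $\operatorname{conv}V_M\subseteq F_\ell(P)$.

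\emph{The face lies in the hull.} Take $p\in F_\ell(P)$ and fix one representation $p=\sum_v\lambda_v v$ as above. Then
\[
0=M-\ell(p)=\sum_v\lambda_v\bigl(M-\ell(v)\bigr).
\]
Every summand on the right is nonnegative, so each summand is zero. Hence $\lambda_v=0$ whenever $\ell(v)<M$. The representation is therefore a convex combination of points of $V_M$ alone, and $p\in\operatorname{conv}V_M$.

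There is no real obstacle. The only point needing care is that the argument must work with an arbitrary representation of $p$, not a canonical one. That is harmless, because the vanishing of every summand holds for each representation separately.
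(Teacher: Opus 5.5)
Your proof is correct and follows the paper's argument essentially step for step. You write $p$ as a convex combination of the points of $V$ and use $M-\ell(p)=\sum_v\lambda_v(M-\ell(v))$, whose summands are nonnegative, to get both $\max_P\ell=M$ and the fact that any maximiser's representation is supported on $\{v\in V:\ell(v)=M\}$. The reverse inclusion then follows from linearity, as in the paper.
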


\begin{proof}
Let $p\in P$. Since $P=\operatorname{conv}V$, there are weights $\lambda_v\ge0$, $v\in V$, with
$\sum_v\lambda_v=1$ and $p=\sum_v\lambda_vv$. By linearity of $\ell$,
\[
M-\ell(p)=\sum_{v\in V}\lambda_v\bigl(M-\ell(v)\bigr),
\]
a sum of non-negative terms. Hence $\ell(p)\le M$, with equality if and only if $\lambda_v=0$ for every
$v$ with $\ell(v)<M$. The points $v\in V$ with $\ell(v)=M$ lie in $P$, so $\max_P\ell=M$. If $\ell(p)=M$,
the representation of $p$ uses only points of $V_M=\{v\in V:\ell(v)=M\}$, so
$p\in\operatorname{conv}V_M$; conversely, every point of $\operatorname{conv}V_M$ lies in $P$ and, by
linearity, has value $M$. Therefore $F_\ell(P)=\operatorname{conv}V_M$.
\end{proof}

\begin{lemma}\label{lem:zono}
Let $a_1,\dots,a_m\in\mathbb R^n$ and, for $J\subseteq[m]$, $a_J=\sum_{i\in J}a_i$, so that
$a_\varnothing=0$. Then the Minkowski sum $\sum_{i=1}^m[0,a_i]$ equals
$\operatorname{conv}\{a_J:J\subseteq[m]\}$.
\end{lemma}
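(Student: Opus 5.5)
We prove the two inclusions separately. Write $Z=\sum_{i=1}^m[0,a_i]$ and $Q=\operatorname{conv}\{a_J:J\subseteq[m]\}$.

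For $Q\subseteq Z$: each $a_J=\sum_{i\in J}a_i+\sum_{i\notin J}0$ is a Minkowski sum of endpoints of the segments, so $a_J\in Z$. A Minkowski sum of convex sets is convex: if $p=\sum_i s_i$ and $p'=\sum_i s_i'$ with $s_i,s_i'\in[0,a_i]$, then $\lambda p+(1-\lambda)p'=\sum_i(\lambda s_i+(1-\lambda)s_i')$ with each summand in $[0,a_i]$. Hence $Z$ contains the convex hull $Q$.

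For $Z\subseteq Q$: a point of $Z$ is $p=\sum_i\mu_ia_i$ with $\mu_i\in[0,1]$. We exhibit it as a convex combination of the $a_J$ by the product (independent-coin) weights
\[
\lambda_J=\prod_{i\in J}\mu_i\prod_{i\notin J}(1-\mu_i)\ \ge 0 .
\]
Expanding $\prod_{i=1}^m\bigl(\mu_i+(1-\mu_i)\bigr)=1$ gives $\sum_J\lambda_J=1$. By linearity,
\[
\sum_J\lambda_Ja_J=\sum_i a_i\sum_{J\ni i}\lambda_J .
\]
Summing the product over all choices for the coordinates $k\neq i$ leaves exactly $\mu_i$, so $\sum_{J\ni i}\lambda_J=\mu_i$. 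Therefore $\sum_J\lambda_Ja_J=\sum_i\mu_ia_i=p$, and $p\in Q$.

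An alternative is induction on $m$, using $\operatorname{conv}(A)+\operatorname{conv}(B)=\operatorname{conv}(A+B)$ for finite sets. The explicit weights avoid that auxiliary fact, so we use them.

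No step is a real obstacle. The only point needing care is the marginal identity $\sum_{J\ni i}\lambda_J=\mu_i$, which is the factorisation of the product measure on $\{0,1\}^m$. The case $m=1$ and the degenerate case $a_i=0$ require no special treatment.
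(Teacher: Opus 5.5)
Your proof is correct, and your inclusion $\operatorname{conv}\{a_J\}\subseteq Z$ is the same as the paper's. For $Z\subseteq\operatorname{conv}\{a_J\}$ you use a different decomposition.

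The paper sorts the coefficients, $\mu_{(1)}\ge\dots\ge\mu_{(m)}$, and sets $J_k=\{(1),\dots,(k)\}$, $\mu_{(0)}=1$, $\mu_{(m+1)}=0$. It then writes $p=\sum_{k=0}^m(\mu_{(k)}-\mu_{(k+1)})\,a_{J_k}$ and checks by telescoping that $a_{(j)}$ receives coefficient $\mu_{(j)}$.

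Both constructions write $p=\mathbb E[a_J]$ for a random set $J$ with $\Pr[i\in J]=\mu_i$. Yours makes the inclusions independent. The paper's is the comonotone coupling $J=\{i:\mu_i>U\}$, with a single $U$ uniform on $[0,1)$.

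Each choice buys something different. Your weights need no sorting or tie-breaking, and the marginal identity is a one-line factorisation. The paper's weights use at most $m+1$ of the $2^m$ points, all on one chain $\varnothing=J_0\subset J_1\subset\dots\subset J_m$. That is a Carath\'eodory-type economy: the image of the staircase triangulation of the cube $[0,1]^m$ under the linear map $\mu\mapsto\sum_i\mu_ia_i$, which sends $[0,1]^m$ onto $Z$.

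Proposition~\ref{prop:tropical}(ii) uses only the equality of sets, so either argument serves.
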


\begin{proof}
Write $Z=\sum_{i=1}^m[0,a_i]$. A Minkowski sum of convex sets is convex, and $Z$ contains every $a_J$
(take the $i$-th summand at $a_i$ for $i\in J$ and at $0$ otherwise), so
$\operatorname{conv}\{a_J\}\subseteq Z$. Conversely, let $p\in Z$, so that $p=\sum_i\lambda_ia_i$ with
every $\lambda_i\in[0,1]$. Order the indices as $(1),\dots,(m)$ so that
$\lambda_{(1)}\ge\dots\ge\lambda_{(m)}$, put $\lambda_{(0)}=1$ and $\lambda_{(m+1)}=0$, and let
$J_k=\{(1),\dots,(k)\}$ for $0\le k\le m$. We claim that
\[
p=\sum_{k=0}^m\bigl(\lambda_{(k)}-\lambda_{(k+1)}\bigr)\,a_{J_k}.
\]
Indeed, $a_{(j)}$ occurs in $a_{J_k}$ exactly when $k\ge j$, so its coefficient on the right is
$\sum_{k=j}^m(\lambda_{(k)}-\lambda_{(k+1)})=\lambda_{(j)}-\lambda_{(m+1)}=\lambda_{(j)}$. The weights
$\lambda_{(k)}-\lambda_{(k+1)}$ are non-negative, because $1\ge\lambda_{(1)}$, the $\lambda_{(k)}$ are
sorted and $\lambda_{(m)}\ge0$, and they sum to $\lambda_{(0)}-\lambda_{(m+1)}=1$. Hence
$p\in\operatorname{conv}\{a_J\}$.
\end{proof}

\begin{lemma}\label{lem:faces}
Let $K$ be a positive integer, let $g_1,\dots,g_K:\mathbb R^n\to\mathbb R$ be affine and non-constant, let
$P=\{y\in\mathbb R^n:g_t(y)\le0,\ t=1,\dots,K\}$, and assume that $P$ has nonempty interior.
\par\textbf{(i)} The boundary of $P$ is $\partial P=\bigcup_t(P\cap\{g_t=0\})$, and the interior points of $P$
are the points with $g_t<0$ for every $t$.
\par\textbf{(ii)} Every facet of $P$ equals $P\cap\{g_t=0\}$ for some $t$.
\par\textbf{(iii)} Every $y\in\partial P$ lies on a facet, and every facet containing $y$ equals
$P\cap\{g_t=0\}$ for some $t$ with $g_t(y)=0$.
\end{lemma}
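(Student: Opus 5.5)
The plan is to argue directly from the definition of $P$ as an intersection of closed half-spaces, proving (i) first and then deriving (ii) and (iii) from it through a local description of $P$ near a boundary point.

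\emph{Step (i).} Openness gives one inclusion: the set $\{y:g_t(y)<0\ \forall t\}$ is open, being a finite intersection of open sets, and it lies in $P$, so it lies in the interior. Conversely, suppose $y\in P$ with $g_t(y)=0$ for some $t$. Since $g_t$ is affine and non-constant, its linear part $c_t$ is nonzero, so $g_t(y+\varepsilon c_t)=\varepsilon\|c_t\|^2>0$ for every $\varepsilon>0$. Hence $y$ is not interior. This identifies the interior, and $\partial P=P\setminus\operatorname{int}P$ (as $P$ is closed) is exactly $\bigcup_t(P\cap\{g_t=0\})$.

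\emph{Step (ii).} Let $F$ be a facet, so $F=P\cap\{c^\top y=\beta\}$ with $c^\top y\le\beta$ valid on $P$ and $\dim F=n-1$; here $\dim P=n$ by the nonempty interior. Then $c\neq0$, because $c=0$ would make $F$ either $P$ (of dimension $n$) or empty. So $F$ lies in the hyperplane $\{c^\top y=\beta\}$, which contains no interior point of $P$: an interior point could be moved in direction $c$ and would violate validity. Thus $F\subseteq\partial P=\bigcup_t F_t$ with $F_t=P\cap\{g_t=0\}$.

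Each $F_t$ is convex, and $F$ is a convex set of dimension $n-1$ covered by finitely many convex sets $F\cap F_t$. A finite union of convex sets each of dimension below $n-1$ has empty relative interior in $\operatorname{aff}F$ (a measure or Baire argument). So some $F\cap F_t$ has dimension $n-1$, and then $\operatorname{aff}F=\{g_t=0\}=\{c^\top y=\beta\}$ as hyperplanes. Consequently $F=P\cap\{c^\top y=\beta\}=P\cap\{g_t=0\}=F_t$. This dimension-counting step is the main obstacle; the Baire/measure argument inside the $(n-1)$-dimensional affine hull is what I would use.

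\emph{Step (iii).} Take $y\in\partial P$ and let $T(y)=\{t:g_t(y)=0\}$, which is nonempty by (i). Fix an interior point $p$ of $P$ and move from $y$ slightly toward $p$. Choose $s\in T(y)$ and look for a point of $F_s$ near $y$ where only $g_s$ is tight. Using $p$ and generic perturbations within $\{g_s=0\}$, the other active constraints either become strict or coincide with $g_s$ on that hyperplane. Pick $s$ so that $F_s$ is full-dimensional near $y$. Concretely, restrict to a small ball $B$ around $y$ in which the inactive constraints stay strict. There $P\cap B$ is the cone $\{g_t\le0,\ t\in T(y)\}$ cut by $B$, which has nonempty interior (since $p-y$ points inward). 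Its boundary near $y$ is covered by the finitely many sets $F_t\cap B$, $t\in T(y)$, and the same dimension count as in (ii) shows that some $F_t\cap B$ with $t\in T(y)$ has dimension $n-1$. That $F_t$ is then a facet of $P$ containing $y$. Finally, any facet containing $y$ equals some $F_t$ by (ii), and $y\in F_t$ forces $g_t(y)=0$.
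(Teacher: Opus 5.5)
\textbf{Parts (i) and (ii).} Your (i) is the paper's argument. Your (ii) is correct but takes a different route. You cover the facet $F$ by the convex pieces $F\cap F_t$ and use that finitely many convex sets of dimension below $n-1$ are nowhere dense in $\operatorname{aff}F$. So some piece spans $\operatorname{aff}F$, which forces $\{g_t=0\}=\{c^\top y=\beta\}$ and gives $F=F_t$ at once. The paper instead takes a relative-interior point $y_0$ of $F$ and gets $g_t(y_0)=0$ from (i). Pushing past $y_0$ inside $F$ shows that $g_t$ vanishes on all of $F$, and the same push together with validity of $c^\top y\le\beta$ gives $P\cap\{g_t=0\}\subseteq F$. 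Yours is shorter once the nowhere-density fact is granted; the paper's uses nothing beyond convexity.

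\textbf{The gap in (iii).} You claim that ``the same dimension count as in (ii)'' yields some $F_t\cap B$, $t\in T(y)$, of dimension $n-1$. That count does not transfer. In (ii) the covered set $F$ was convex and already known to have nonempty relative interior in an $(n-1)$-dimensional affine space. Here the covered set $\partial P\cap B$ is not convex, and its affine hull is in general all of $\mathbb R^n$. That it contains an $(n-1)$-dimensional piece is essentially the content of (iii), so it cannot be assumed. Your earlier sentences (``generic perturbations\ldots'', ``pick $s$ so that $F_s$ is full-dimensional near $y$'') presuppose exactly this. The choice of $t$ genuinely matters: in $\{x_1\le0,\ x_2\le0,\ x_1+x_2\le0\}$ all three constraints are tight at $y=0$, yet $P\cap\{x_1+x_2=0\}=\{0\}$.

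\textbf{How to close it within your local picture.} Show that $\partial P$ is a graph near $y$.
\begin{itemize}
\item Put $d=p-y$ and let $\ell_t$ be the linear part of $g_t$. Every $t\in T(y)$ has $\ell_t(d)=g_t(p)<0$.
\item Hence a point $y+v+sd$ of $B$ with $v\in d^\perp$ lies in $P$ exactly when $s\ge\sigma(v):=\max_{t\in T(y)}\ell_t(v)/(-\ell_t(d))$.
\item Since $\sigma$ is continuous with $\sigma(0)=0$, the map $v\mapsto y+v+\sigma(v)d$ sends a neighbourhood of $0$ in $d^\perp$ into $\partial P\cap B\subseteq\bigcup_{t\in T(y)}F_t$.
\item Projecting along $d$, an open subset of the $(n-1)$-dimensional space $d^\perp$ is covered by the projections of finitely many convex sets. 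Your nowhere-density count, applied in $d^\perp$, then gives some $F_t$ with $t\in T(y)$ of dimension $n-1$, which is a facet through $y$.
\end{itemize}

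\textbf{The paper's route.} It avoids local analysis. It passes to a minimal subsystem $I$ describing $P$ and takes $s\in I$ with $g_s(y)=0$ by (i). Minimality gives a point $p$ with $g_s(p)>0$ and $g_t(p)\le0$ for $t\in I\setminus\{s\}$. Where the segment from an interior point to $p$ crosses $\{g_s=0\}$, every other constraint of $I$ is strict. So a relatively open patch of $\{g_s=0\}$ lies in $F_s$. Your closing sentence, that any facet through $y$ is some $F_t$ with $g_t(y)=0$, is fine via (ii).
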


\begin{proof}
Write $g_t(y)=c_t^\top y-\beta_t$ with $c_t\in\mathbb R^n\setminus\{0\}$, which is what non-constant
means, and $\beta_t\in\mathbb R$. The set $P$ is convex and, as a finite intersection of closed
half-spaces, closed.

\emph{(i)} Let $y\in P$. If $g_t(y)<0$ for every $t$, then by continuity of the finitely many $g_t$ a
neighbourhood of $y$ lies in $P$, so $y$ is an interior point. If $g_t(y)=0$ for some $t$, then for every
$\varepsilon>0$ the point $y+\varepsilon c_t$ satisfies $g_t(y+\varepsilon c_t)=\varepsilon\|c_t\|^2>0$
and lies outside $P$; hence $y$ is a limit of points outside $P$, that is, a boundary point. Since $P$
is closed, $\partial P\subseteq P$, and the two cases prove both claims.

\emph{(ii)} Let $F=P\cap\{c^\top y=\beta\}$ be a facet, where $c^\top y\le\beta$ is valid on $P$. Since
$\dim F=n-1<n=\dim P$, the face $F$ is neither $P$ nor empty, which forces $c\neq0$. Let $y_0$ be a
point of the relative interior of $F$, which is nonempty because $F$ is a nonempty convex set. The point
$y_0$ maximises $c^\top y$ over $P$, and $c\neq0$, so $y_0$ is not an interior point of $P$ (otherwise
$y_0+\varepsilon c\in P$ for small $\varepsilon>0$ would have a larger value); by (i), $g_t(y_0)=0$ for
some $t$. Fix such a $t$ and let $y\in F$. Because $y_0$ lies in the relative interior of $F$ and
$y_0+\varepsilon(y_0-y)\in\operatorname{aff}F$, the point $y_0+\varepsilon(y_0-y)$ lies in
$F\subseteq P$ for all small $\varepsilon>0$. Since $g_t$ is affine and $g_t(y_0)=0$,
\[
0\ge g_t\bigl(y_0+\varepsilon(y_0-y)\bigr)=(1+\varepsilon)g_t(y_0)-\varepsilon g_t(y)=-\varepsilon g_t(y),
\]
so $g_t(y)\ge0$, and $y\in P$ gives $g_t(y)\le0$; hence $g_t(y)=0$. Thus
$F\subseteq G:=P\cap\{g_t=0\}$. The set $G$ is convex and lies in the hyperplane $\{g_t=0\}$, so
$\dim G\le n-1=\dim F$; together with $F\subseteq G$ this gives $\operatorname{aff}F=\operatorname{aff}G$.
It follows that $y_0$, which has a neighbourhood in $\operatorname{aff}F$ contained in $F\subseteq G$,
lies in the relative interior of $G$. Now let $y\in G$. For all small $\varepsilon>0$ the point
$y_0+\varepsilon(y_0-y)$ lies in $G\subseteq P$, and validity of $c^\top y\le\beta$ together with
$c^\top y_0=\beta$ gives
\[
\beta\ge c^\top\bigl(y_0+\varepsilon(y_0-y)\bigr)=(1+\varepsilon)\beta-\varepsilon\,c^\top y ,
\]
so $c^\top y\ge\beta$. With validity this gives $c^\top y=\beta$, that is, $y\in F$. Therefore $F=G$.

\emph{(iii)} Let $I\subseteq[K]$ be minimal under inclusion with $P=\{y:g_t(y)\le0\ \text{for all }t\in I\}$;
it exists because $[K]$ is finite. Let $y\in\partial P$. Part (i), applied to the system
$\{g_t\}_{t\in I}$, which describes the same set $P$, gives an $s\in I$ with $g_s(y)=0$. We show that
$F_s=P\cap\{g_s=0\}$ is a facet. By minimality of $I$ there is a point $p\in\mathbb R^n$ with $g_s(p)>0$
and $g_t(p)\le0$ for every $t\in I\setminus\{s\}$, since otherwise the constraint $s$ could be dropped.
Let $q$ be an interior point of $P$; by (i), $g_t(q)<0$ for every $t$. The function
$\lambda\mapsto g_s((1-\lambda)q+\lambda p)$ is affine on $[0,1]$, negative at $0$ and positive at $1$, so
it vanishes at some $\lambda^\ast\in(0,1)$. At $y_0=(1-\lambda^\ast)q+\lambda^\ast p$, every
$t\in I\setminus\{s\}$ satisfies $g_t(y_0)=(1-\lambda^\ast)g_t(q)+\lambda^\ast g_t(p)<0$. By continuity
there is a neighbourhood $N$ of $y_0$ on which these finitely many inequalities stay strict, and since the
system $I$ describes $P$, this gives $N\cap\{g_s=0\}\subseteq F_s$. The set $N\cap\{g_s=0\}$ is a nonempty
relatively open subset of the hyperplane $\{g_s=0\}$, so $\dim F_s\ge n-1$, and $F_s$ lies in that
hyperplane, so $\dim F_s=n-1$. Since $g_s\le0$ is valid on $P$, the set $F_s$ is a face, hence a facet,
and it contains $y$. Finally, by (ii) a facet containing $y$ equals $P\cap\{g_t=0\}$ for some $t\in[K]$,
and $y$ lying in it forces $g_t(y)=0$.
\end{proof}

\paragraph{Proposition~\ref{prop:tropical}: the tropical form and the dual subdivision.}
Here $a_i=(w_i,b_i-\theta)\in\mathbb R^{d+1}$ for $i\in[m]$, so that
$\tilde w^\theta_J=\sum_{i\in J}a_i$ and $Z_\theta=\sum_{i=1}^m[0,a_i]$. For $y\in\mathbb R^d$ the linear
functional $\ell_y=\langle\cdot,(y,1)\rangle$ on $\mathbb R^{d+1}$ is the functional \emph{given by
$(y,1)$}. A set $J\subseteq[m]$ \emph{maximises at $x$} if it attains the maximum in (i) below, and a
gate pattern is \emph{realised off the walls} if it equals $G(x)$ for a point $x$ that lies on no wall.

\begin{proposition*}[Full form of Proposition~\ref{prop:tropical}]
\textbf{(i)} For every $x\in\mathbb R^d$,
$\sum_{i=1}^m\max(0,z_i(x)-\theta)=\max_{J\subseteq[m]}\langle\tilde w^\theta_J,\tilde x\rangle$; at
$\theta=0$ this is $\sum_i\act_0(z_i(x))=\bigodot_i\,(0\oplus z_i(x))$. A set $J\subseteq[m]$ maximises
at $x$ if and only if $G(x)\subseteq J\subseteq\{i:z_i(x)\ge\theta\}$. Hence the set $G(x)$ always
maximises at $x$, and it is the only maximiser if and only if $x$ lies on no wall.
\par\textbf{(ii)} The map $J\mapsto\tilde w^\theta_J$ sends the gate patterns realised off the walls
injectively into the set of vertices of $Z_\theta$ exposed by a functional $\ell_y$, $y\in\mathbb R^d$,
and onto that set when no unit is constant at the threshold, that is, when $(w_i,b_i)\neq(0,\theta)$ for
every $i$.
\par\textbf{(iii)} Let $x,x'\in\mathbb R^d$ lie on no wall, and let $G(x')=G(x)\sqcup\{i\}$ for a unit
$i\notin G(x)$. Then $[\tilde w^\theta_{G(x)},\tilde w^\theta_{G(x')}]$ is an edge of $Z_\theta$ with
direction $a_i=(w_i,b_i-\theta)$. The set
\[
\Omega=\bigl\{y\in\mathcal W_i:\ z_j(y)>\theta\ \text{for } j\in G(x),\ \ z_j(y)<\theta\ \text{for }
j\notin G(x)\cup\{i\}\bigr\}
\]
of points of $\mathcal W_i$ that lie on no other wall and at which the other units have pattern $G(x)$ is
a nonempty relatively open subset of the hyperplane $\mathcal W_i$, whose normal $w_i$ is the projection
of the edge's direction to the first $d$ coordinates; and for every $y\in\Omega$ the face of $Z_\theta$
exposed by $\ell_y$ is this edge.
\end{proposition*}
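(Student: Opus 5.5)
Part~(i) is a termwise identity. For a fixed $x$, the sum $\langle\tilde w^\theta_J,\tilde x\rangle=\sum_{i\in J}(z_i(x)-\theta)$ is maximised exactly by choosing every index with a positive summand and no index with a negative one; indices with a zero summand are optional. Written as sets, this says $J$ maximises iff $G(x)\subseteq J\subseteq\{i:z_i(x)\ge\theta\}$. The maximum value is $\sum_i\max(0,z_i(x)-\theta)$, and at $\theta=0$ this is the tropical product of the binomials by distributivity of $\odot$ over $\oplus$. The maximiser is unique iff no $z_i(x)=\theta$, that is, iff $x$ lies on no wall.

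For (ii) I will combine Lemma~\ref{lem:zono} with Lemma~\ref{lem:hull}. These identify $Z_\theta$ with $\operatorname{conv}\{\tilde w^\theta_J\}$, and they identify the face exposed by $\ell_x$ with the hull of the maximising points. By (i), for $x$ off the walls the maximiser set is $\{G(x)\}$, so the face is the single point $\tilde w^\theta_{G(x)}$, which is therefore an exposed vertex.

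Injectivity goes as follows. Suppose $\tilde w^\theta_{G(x)}=\tilde w^\theta_{G(x')}$ for $x,x'$ off the walls. Then $\ell_x$ takes the same value at both points, so $G(x')$ is also a maximiser at $x$. Uniqueness in (i) then forces $G(x')=G(x)$.

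For surjectivity, take a vertex exposed by $\ell_y$. The points $a_J$ attaining the maximum all coincide with it, and $G(y)$ is one of them. If $y$ is on some walls, I perturb $y$ to $y'$ off all walls within the open region where the strict signs of the $z_j-\theta$ persist. The difficulty is that the units with $z_i(y)=\theta$ must still receive a definite sign.

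My plan for this step uses the vertex being exposed. Let $E=\{i:z_i(y)=\theta\}$. If $i\in E$, then $J$ and $J\triangle\{i\}$ give equal $\ell_y$-values, so uniqueness of the vertex forces $a_i=0$. That would make $(w_i,b_i)=(0,\theta)$, which is excluded. So $E=\varnothing$, $y$ is already off the walls, and the vertex equals $\tilde w^\theta_{G(y)}$. I expect this to be the cleanest route, and it explains exactly where the hypothesis is used.

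For (iii), the two endpoints differ by $a_i$. I first show $\Omega\neq\varnothing$ by taking the segment from $x$ to $x'$. The affine function $z_i-\theta$ is negative at $x$ and positive at $x'$, so it vanishes at some point $y$. The subtle issue is that the other units need not keep their signs along the whole segment; I only know they agree at the two endpoints.

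So instead I plan to use convexity. The set $\{y:z_j(y)>\theta\ \forall j\in G(x),\ z_j(y)<\theta\ \forall j\notin G(x)\cup\{i\}\}$ is an intersection of open half-spaces. It contains both $x$ and $x'$, hence the whole segment, and the crossing point of $z_i=\theta$ lies in $\Omega$. Openness of $\Omega$ in $\mathcal W_i$ is then immediate from the strict inequalities. This also gives $w_i\neq0$: if $w_i=0$, then $z_i$ would be constant, contradicting the sign change.

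Finally, for $y\in\Omega$, part (i) gives exactly two maximisers, $G(x)$ and $G(x)\cup\{i\}$. Lemma~\ref{lem:hull} then says the exposed face is the segment between them. It has positive length because $a_i\neq0$, so it is an edge, and its direction $a_i$ projects to the normal $w_i$.
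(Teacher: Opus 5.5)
Your proposal is correct and follows the paper's proof essentially step for step. For (i) you maximise $\sum_{i\in J}(z_i(x)-\theta)$ term by term; for surjectivity in (ii) you use Lemmas~\ref{lem:zono} and~\ref{lem:hull} together with the observation that a wall index $j$ at $y$ gives two maximisers whose points differ by $a_j$, which forces $a_j=0$; and for (iii) you use convexity of the strict-sign region containing $x$ and $x'$ to place the crossing point of $\mathcal W_i$ in $\Omega$ (the ``subtle issue'' you raise is exactly what that convexity, the paper's convex-combination argument, resolves).
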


\begin{proof}
Fix $x\in\mathbb R^d$ and put $c_i=z_i(x)-\theta$ for $i\in[m]$. Since
$\langle a_i,\tilde x\rangle=\langle w_i,x\rangle+b_i-\theta=c_i$, every $J\subseteq[m]$ has
$\langle\tilde w^\theta_J,\tilde x\rangle=\sum_{i\in J}c_i$, the empty sum being $0$.

\emph{(i), the identity.} For every $J$, $\sum_{i\in J}c_i\le\sum_{i\in J}\max(0,c_i)\le\sum_{i\in[m]}\max(0,c_i)$,
because $c_i\le\max(0,c_i)$ and $\max(0,c_i)\ge0$. The set $\{i:c_i>0\}$ attains the right-hand side, so
$\max_J\langle\tilde w^\theta_J,\tilde x\rangle=\sum_i\max(0,c_i)$, which is the identity. At $\theta=0$,
$\max(0,z_i(x))=\act_0(z_i(x))=0\oplus z_i(x)$, and the ordinary sum of the $m$ units is their tropical
product $\bigodot_i(0\oplus z_i(x))$. Expanding this product by distributivity of $\odot$ over $\oplus$
gives $\bigoplus_J\bigodot_{i\in J}z_i(x)=\max_J\sum_{i\in J}z_i(x)$, which is the maximum above at
$\theta=0$.

\emph{(i), the maximisers.} For every $J$,
\[
\sum_{i\in[m]}\max(0,c_i)-\sum_{i\in J}c_i=\sum_{i\notin J}\max(0,c_i)+\sum_{i\in J}\max(0,-c_i),
\]
because $\max(0,c_i)-c_i=\max(0,-c_i)$. Every summand on the right is non-negative, so $J$ maximises at
$x$ if and only if every summand vanishes, that is, if and only if $c_i\le0$ for every $i\notin J$ and
$c_i\ge0$ for every $i\in J$. The first condition says $\{i:c_i>0\}\subseteq J$, that is,
$G(x)\subseteq J$, and the second says $J\subseteq\{i:c_i\ge0\}=\{i:z_i(x)\ge\theta\}$. In particular
$J=G(x)$ maximises. If $x$ lies on no wall, no $c_i$ is $0$, so $G(x)=\{i:z_i(x)\ge\theta\}$ and only one
set lies between them. If $x$ lies on the wall of a unit $j$, then $c_j=0$, and $G(x)$ and
$G(x)\cup\{j\}$ are two different maximisers.

\emph{(ii)} By Lemma~\ref{lem:zono}, $Z_\theta=\operatorname{conv}\{\tilde w^\theta_J:J\subseteq[m]\}$.
For $y\in\mathbb R^d$, the value of $\ell_y$ at $\tilde w^\theta_J$ is the term of $J$ in the maximum at
$y$, so by Lemma~\ref{lem:hull} the face $F_{\ell_y}(Z_\theta)$ is the convex hull of the points
$\tilde w^\theta_J$ of the sets $J$ that maximise at $y$.

\emph{Into.} Let $x$ lie on no wall. By (i) the set $G(x)$ is the only maximiser at $x$, so
$F_{\ell_x}(Z_\theta)=\{\tilde w^\theta_{G(x)}\}$, a vertex exposed by $\ell_x$.

\emph{Injective.} Let $x$ and $x'$ lie on no wall, with $\tilde w^\theta_{G(x')}=\tilde w^\theta_{G(x)}$.
The term of $G(x')$ at $x$ is then $\langle\tilde w^\theta_{G(x)},\tilde x\rangle$, the maximum, so
$G(x')$ maximises at $x$; since the maximiser at $x$ is unique by (i), $G(x')=G(x)$.

\emph{Onto.} Assume that $a_j\neq0$ for every $j$, which is the condition $(w_j,b_j)\neq(0,\theta)$. Let
$v$ be a vertex of $Z_\theta$ exposed by $\ell_y$ for some $y\in\mathbb R^d$, so that
$F_{\ell_y}(Z_\theta)=\{v\}$ and every $J$ that maximises at $y$ has $\tilde w^\theta_J=v$. Suppose, for
a contradiction, that $y$ lies on the wall of some unit $j$. Then $j\notin G(y)$ and
$z_j(y)\ge\theta$, so by (i) both $G(y)$ and $G(y)\cup\{j\}$ maximise at $y$; their points differ by
$a_j\neq0$, yet both equal $v$. Hence $y$ lies on no wall, $G(y)$ is a pattern realised off the walls, and
$v=\tilde w^\theta_{G(y)}$.

\emph{(iii)} Let $J=G(x)$, so that $G(x')=J\sqcup\{i\}$. For $j\neq i$, $j\in G(x)$ if and only if
$j\in G(x')$, and since neither $x$ nor $x'$ lies on $\mathcal W_j$, the numbers $z_j(x)-\theta$ and
$z_j(x')-\theta$ are nonzero and of the same sign. For the unit $i$, $i\notin G(x)$ and
$x\notin\mathcal W_i$ give $z_i(x)<\theta$, and $i\in G(x')$ gives $z_i(x')>\theta$. In particular
$z_i$ is not constant, so $w_i\neq0$ and $\mathcal W_i=\{y:\langle w_i,y\rangle=\theta-b_i\}$ is a
hyperplane with normal $w_i$.

\emph{The set $\Omega$ is nonempty.} For $s\in[0,1]$ let $x_s=(1-s)x+sx'$. Since each $z_j$ is affine,
$z_j(x_s)-\theta=(1-s)(z_j(x)-\theta)+s(z_j(x')-\theta)$. For $j\neq i$ this is a convex combination of
two nonzero numbers of the same sign, hence nonzero with that sign. For $j=i$ it is an affine function of
$s$ that is negative at $s=0$ and positive at $s=1$, so it vanishes at exactly one $s^\ast\in(0,1)$. The
point $x_{s^\ast}$ therefore lies on $\mathcal W_i$, has $z_j(x_{s^\ast})>\theta$ for $j\in J$ and
$z_j(x_{s^\ast})<\theta$ for $j\notin J\cup\{i\}$, and so belongs to $\Omega$.

\emph{The set $\Omega$ is relatively open.} It is the intersection of $\mathcal W_i$ with the finitely
many sets $\{z_j>\theta\}$, $j\in J$, and $\{z_j<\theta\}$, $j\notin J\cup\{i\}$, each of which is open in
$\mathbb R^d$ because $z_j$ is continuous; hence $\Omega$ is open in $\mathcal W_i$.

\emph{The exposed face.} Let $y\in\Omega$. Then $z_i(y)=\theta$, and for $j\neq i$, $z_j(y)\neq\theta$
with $z_j(y)>\theta$ if and only if $j\in J$. Hence $G(y)=J$ and $\{j:z_j(y)\ge\theta\}=J\cup\{i\}$, and by
(i) the maximisers at $y$ are exactly $J$ and $J\cup\{i\}$. By (ii) the face exposed by $\ell_y$ is
$\operatorname{conv}\{\tilde w^\theta_J,\tilde w^\theta_J+a_i\}=[\tilde w^\theta_{G(x)},\tilde
w^\theta_{G(x')}]$, a segment of length $\|a_i\|\ge\|w_i\|>0$. It is therefore an edge of $Z_\theta$, its
direction is $a_i$, and the first $d$ coordinates of $a_i$ form $w_i$, the normal of $\mathcal W_i$.
Since $\Omega$ is nonempty, this proves every claim of (iii).
\end{proof}

\paragraph{Proposition~\ref{prop:duality}: the data--weight duality.}
Here one unit is fixed, with base weights $\tilde w\in\mathbb R^{d+1}$, pre-activations
$z_t=\langle\tilde w,\tilde x_t\rangle$ and base open set $O=\{t:z_t>\theta\}$. Token $t$ is \emph{open at}
$\tilde v\in\mathbb R^{d+1}$ if $\langle\tilde v,\tilde x_t\rangle>\theta$ and closed otherwise, and it
\emph{changes cell} between two weight points if it is open at exactly one of them. The sets $H_t$,
$\PX$ and $\KX$ are those of Definition~\ref{def:poly}, so that
$\KX=\{\tilde v:\langle\tilde v,\tilde x_t\rangle>\theta\ (t\in O),\ \langle\tilde v,\tilde
x_t\rangle\le\theta\ (t\notin O)\}$, and $e_{d+1}$ is the last standard basis vector of $\mathbb R^{d+1}$.
Each $H_t$ is a hyperplane because $\tilde x_t\neq0$.

\begin{proposition*}[Full form of Proposition~\ref{prop:duality}]
\textbf{(a)} Token $t$ is open at $\tilde v\in\mathbb R^{d+1}$ if and only if $\tilde v$ lies in the open
half-space $H_t^>=\{\tilde v:\langle\tilde v,\tilde x_t\rangle>\theta\}$. For $\hat w\in\mathbb R^{d+1}$,
token $t$ changes cell between $\tilde w$ and $\hat w$ if and only if exactly one of them lies in
$H_t^>$, and it closes ($t\in S_{+-}$) if and only if that one is $\tilde w$. Along the segment
$[\tilde w,\hat w]$ it changes cell at most once: the gate bit
$s\mapsto\mathds 1[\langle(1-s)\tilde w+s\hat w,\tilde x_t\rangle>\theta]$, $s\in[0,1]$, is monotone.
\par\textbf{(b)} The pattern cell $\KX=\PX\setminus\bigcup_{t\in O}H_t$ is convex. Its interior is
$Q=\{\tilde v:\langle\tilde v,\tilde x_t\rangle>\theta\ (t\in O),\ \langle\tilde v,\tilde
x_t\rangle<\theta\ (t\notin O)\}$, which contains $\tilde w-\varepsilon e_{d+1}$ for every $\varepsilon$
with $0<\varepsilon<\min_{t\in O}(z_t-\theta)$ (every $\varepsilon>0$ if $O=\varnothing$) and is a chamber
of the arrangement $\{H_t\}_{t\in[T]}$; and $\overline{\KX}=\PX$.
\par\textbf{(c)} The set $\PX$ is a full-dimensional polyhedron, $\partial\PX=\bigcup_t(\PX\cap H_t)$, every
boundary point lies on a facet, and every facet of $\PX$ is $\PX\cap H_t$ for a token $t$: an active
facet if $t\in O$, an inactive one if not, a label that does not depend on which such $t$ is chosen.
\end{proposition*}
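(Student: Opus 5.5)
The three parts can be proved in order. Each reduces to affine functions of the weight point and to Lemma~\ref{lem:faces}.

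\emph{(a)} This is the definition rewritten: $t$ is open at $\tilde v$ iff $\langle\tilde v,\tilde x_t\rangle>\theta$, that is, iff $\tilde v\in H_t^>$. A change of cell means the gate bits at $\tilde w$ and $\hat w$ differ, so exactly one of the two points lies in $H_t^>$. The token closes exactly when the open point is $\tilde w$, which matches $S_{+-}=\mathcal C_>\cap\hat{\mathcal C}_\le$. For monotonicity, the map $s\mapsto\langle(1-s)\tilde w+s\hat w,\tilde x_t\rangle-\theta=(1-s)(z_t-\theta)+s(\hat z_t-\theta)$ is affine in $s$. Its strict-positivity set in $[0,1]$ is therefore an interval containing an endpoint, or empty, or all of $[0,1]$. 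Hence the gate bit is monotone and flips at most once.

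\emph{(b)} Write $\KX$ as the intersection of the open half-spaces $H_t^>$ for $t\in O$ with the closed half-spaces for $t\notin O$. This set is convex, and it equals $\PX$ minus the points with $\langle\tilde v,\tilde x_t\rangle=\theta$ for some $t\in O$, that is, $\PX\setminus\bigcup_{t\in O}H_t$.

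Next, $Q$ is open and contained in $\KX$, so $Q\subseteq\operatorname{int}\KX$. Conversely, a point of $\KX$ with $\langle\tilde v,\tilde x_t\rangle=\theta$ for some $t\notin O$ is not interior: moving along $+\tilde x_t$ leaves $\KX$, as in Lemma~\ref{lem:faces}(i). So $\operatorname{int}\KX=Q$.

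To see that $Q$ is nonempty, use that the last coordinate of every $\tilde x_t$ is $1$. Then $\langle\tilde w-\varepsilon e_{d+1},\tilde x_t\rangle=z_t-\varepsilon$. This is $>\theta$ for $t\in O$ when $\varepsilon<\min_{t\in O}(z_t-\theta)$, and it is $\le\theta-\varepsilon<\theta$ for $t\notin O$.

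$Q$ is also a chamber of the arrangement. It is convex, hence connected, and it misses every $H_t$. It is maximal among connected sets in the complement, because any path leaving $Q$ without touching a hyperplane would have to change the sign of some $\langle\cdot,\tilde x_t\rangle-\theta$, and by continuity that sign change forces the path to meet $H_t$.

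Finally, $\overline{\KX}=\PX$. The set $\PX$ is closed and contains $\KX$, which gives one inclusion. For the other, take $p\in\PX$ and $q\in Q$. For $\lambda\in(0,1]$, each defining inequality at $(1-\lambda)p+\lambda q$ is a convex combination of a non-strict inequality and a strict one, so the point lies in $Q$. Letting $\lambda\to0$ shows $p\in\overline Q$.

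\emph{(c)} Apply Lemma~\ref{lem:faces} with $g_t=\theta-\langle\cdot,\tilde x_t\rangle$ for $t\in O$ and $g_t=\langle\cdot,\tilde x_t\rangle-\theta$ for $t\notin O$. These are affine and non-constant because $\tilde x_t\neq0$. $\PX$ has nonempty interior $Q$ by (b). The lemma then gives the description of the boundary, that every boundary point lies on a facet, and that every facet is $\PX\cap H_t$.

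The delicate point, which I expect to be the main obstacle, is that the active/inactive label does not depend on the choice of $t$, since repeated or parallel tokens may define the same facet. Suppose a facet $F$ equals both $\PX\cap H_t$ and $\PX\cap H_s$ with $t\in O$ and $s\notin O$. Then $\operatorname{aff}F$ is the common hyperplane $H_t=H_s$, so $\tilde x_t$ and $\tilde x_s$ are parallel. Comparing last coordinates, both equal $1$, forces $\tilde x_t=\tilde x_s$. But then $z_t=z_s$, so $t$ and $s$ are either both open or both closed, contradicting $t\in O$ and $s\notin O$. (Equivalently, $\tilde w$ lies strictly on one side of the common hyperplane, which fixes the label.) Because of this, the label is well defined.
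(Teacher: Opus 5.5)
Your proposal is correct and follows the paper's proof essentially step for step. The matching steps are the affine-in-$s$ monotonicity argument for (a); for (b), the half-space description of $\KX$, the exit along $+\tilde x_t$, the witness $\tilde w-\varepsilon e_{d+1}$ and the convex-combination closure argument; and for (c), Lemma~\ref{lem:faces} together with the last-coordinate argument forcing $\tilde x_s=\tilde x_t$. The only variation is in showing that $Q$ is a chamber: you use paths and the intermediate value theorem, which tacitly needs the components of the open set $\mathbb R^{d+1}\setminus\bigcup_tH_t$ to be path-connected, whereas the paper splits a connected component into the relatively open sign cells $Q_\sigma$.

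One small slip: your parenthetical ``$\tilde w$ lies strictly on one side of the common hyperplane'' fails when an inactive token has $z_t=\theta$. Rely instead on your main argument, $\tilde x_t=\tilde x_s$ and hence $z_t=z_s$, which covers that case too.
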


\begin{proof}
\emph{(a)} By definition token $t$ is open at $\tilde v$ if and only if
$\langle\tilde v,\tilde x_t\rangle>\theta$, that is, $\tilde v\in H_t^>$; likewise
$x_t\in\mathcal W_{\tilde v}$ if and only if $\tilde v\in H_t$. Token $t$ changes cell if and only if its
gate bits at $\tilde w$ and $\hat w$ differ, that is, if and only if exactly one of the two points lies in
$H_t^>$. It closes, $t\in\mathcal C_>\cap\hat{\mathcal C}_\le=S_{+-}$, if and only if it is open at
$\tilde w$ and closed at $\hat w$, that is, if and only if the point in $H_t^>$ is $\tilde w$. Along the
segment, $f(s)=\langle(1-s)\tilde w+s\hat w,\tilde x_t\rangle=(1-s)z_t+s\hat z_t$ is affine in $s$. If
$\hat z_t\ge z_t$, then $f$ is non-decreasing and so is the bit $b(s)=\mathds 1[f(s)>\theta]$; if
$\hat z_t<z_t$, both are non-increasing. A monotone function with values in $\{0,1\}$ changes value at
most once: there are no $s_1<s_2<s_3$ with $b(s_1)=b(s_3)\neq b(s_2)$.

\emph{(b)} The set $\KX$ is the intersection of the open half-spaces $H_t^>$, $t\in O$, and the closed
half-spaces $\{\langle\cdot,\tilde x_t\rangle\le\theta\}$, $t\notin O$, hence convex. The points of
$\PX\setminus\bigcup_{t\in O}H_t$ satisfy $\ge\theta$ and $\neq\theta$ on $O$ and $\le\theta$ off $O$,
which is the definition of $\KX$.

\emph{The point $\tilde w-\varepsilon e_{d+1}$.} The last coordinate of every $\tilde x_t$ is $1$, so
$\langle\tilde w-\varepsilon e_{d+1},\tilde x_t\rangle=z_t-\varepsilon$. For $t\in O$ and
$0<\varepsilon<\min_{s\in O}(z_s-\theta)$ this exceeds $\theta$, and for $t\notin O$ it is at most
$\theta-\varepsilon<\theta$. Hence $\tilde w-\varepsilon e_{d+1}\in Q$, and $Q\neq\varnothing$.

\emph{The interior of $\KX$ is $Q$.} The set $Q$ is open, as a finite intersection of open half-spaces,
and $Q\subseteq\KX$, so $Q$ lies in the interior of $\KX$. Conversely, let $\tilde v$ be an interior point
of $\KX$. Then $\langle\tilde v,\tilde x_t\rangle>\theta$ for $t\in O$. Suppose, for a contradiction, that
$\langle\tilde v,\tilde x_t\rangle=\theta$ for some $t\notin O$. For every $\varepsilon>0$,
$\langle\tilde v+\varepsilon\tilde x_t,\tilde x_t\rangle=\theta+\varepsilon\|\tilde x_t\|^2>\theta$
because $\tilde x_t\neq0$, so $\tilde v+\varepsilon\tilde x_t\notin\KX$, and $\tilde v$ is not interior.
Hence $\langle\tilde v,\tilde x_t\rangle<\theta$ for every $t\notin O$, and $\tilde v\in Q$.

\emph{The set $Q$ is a chamber.} Let $A=\mathbb R^{d+1}\setminus\bigcup_{t\in[T]}H_t$ and, for
$\sigma\in\{+1,-1\}^T$, $Q_\sigma=\{\tilde v:\sigma_t(\langle\tilde v,\tilde x_t\rangle-\theta)>0\ \text{for
all }t\}$. Every point of $A$ has $\langle\tilde v,\tilde x_t\rangle\neq\theta$ for all $t$, which fixes
one $\sigma$, so the sets $Q_\sigma$ are pairwise disjoint, open and convex, and their union is $A$. Let
$\Gamma$ be a connected component of $A$ and $\tilde v\in\Gamma\cap Q_\sigma$. The sets $\Gamma\cap
Q_{\sigma'}$ are disjoint, relatively open in $\Gamma$ and cover it, so connectedness of $\Gamma$ gives
$\Gamma\subseteq Q_\sigma$. Conversely, $Q_\sigma$ is convex, hence connected, contains $\tilde v$ and lies
in $A$, so $Q_\sigma\subseteq\Gamma$. The chambers are therefore exactly the nonempty sets $Q_\sigma$, and
$Q=Q_{\sigma^0}$ with $\sigma^0_t=+1$ for $t\in O$ and $\sigma^0_t=-1$ otherwise is nonempty, hence a
chamber.

\emph{The closure.} Since $Q\subseteq\KX\subseteq\PX$ and $\PX$ is closed, $\overline{\KX}\subseteq\PX$.
Let $y\in\PX$, $q\in Q$ and $s\in(0,1]$, and put $y_s=(1-s)y+sq$. For $t\in O$,
$\langle y_s,\tilde x_t\rangle=(1-s)\langle y,\tilde x_t\rangle+s\langle q,\tilde x_t\rangle>(1-s)\theta+s\theta=\theta$,
because $\langle y,\tilde x_t\rangle\ge\theta$, $\langle q,\tilde x_t\rangle>\theta$ and $s>0$; for
$t\notin O$ the same computation gives $\langle y_s,\tilde x_t\rangle<\theta$. Hence $y_s\in Q$, and
$y_s\to y$ as $s\to0$. It follows that $\PX\subseteq\overline Q\subseteq\overline{\KX}\subseteq\PX$.

\emph{(c)} Put $g_t(\tilde v)=\theta-\langle\tilde v,\tilde x_t\rangle$ for $t\in O$ and
$g_t(\tilde v)=\langle\tilde v,\tilde x_t\rangle-\theta$ for $t\notin O$. Then $\PX=\{g_t\le0\text{ for all
}t\}$, each $g_t$ is affine with linear part $\pm\tilde x_t\neq0$, hence non-constant, and
$\{g_t=0\}=H_t$. The polyhedron $\PX$ contains the nonempty open set $Q$ of (b), so it has nonempty
interior and is full-dimensional, and Lemma~\ref{lem:faces} gives the description of $\partial\PX$, the
facet through every boundary point, and the form $\PX\cap H_t$ of every facet. For the label, let $F$ be a
facet with $F=\PX\cap H_t=\PX\cap H_s$. The facet has dimension $d$ and lies in $H_t\cap H_s$; if
$H_t\neq H_s$, this intersection is empty or an affine subspace of dimension $d-1$, which cannot contain
$F$, so $H_t=H_s$. The affine functions $\langle\cdot,\tilde x_t\rangle-\theta$ and
$\langle\cdot,\tilde x_s\rangle-\theta$ then have the same zero hyperplane, so their linear parts are
proportional, $\tilde x_s=c\,\tilde x_t$ for a real $c\neq0$; the last coordinates, both equal to $1$,
give $c=1$. Hence $\tilde x_s=\tilde x_t$, so $z_s=z_t$, and $s$ and $t$ have the same base gate bit.
\end{proof}

\begin{figure}[tbp]
\centering
\includegraphics[width=0.75\textwidth]{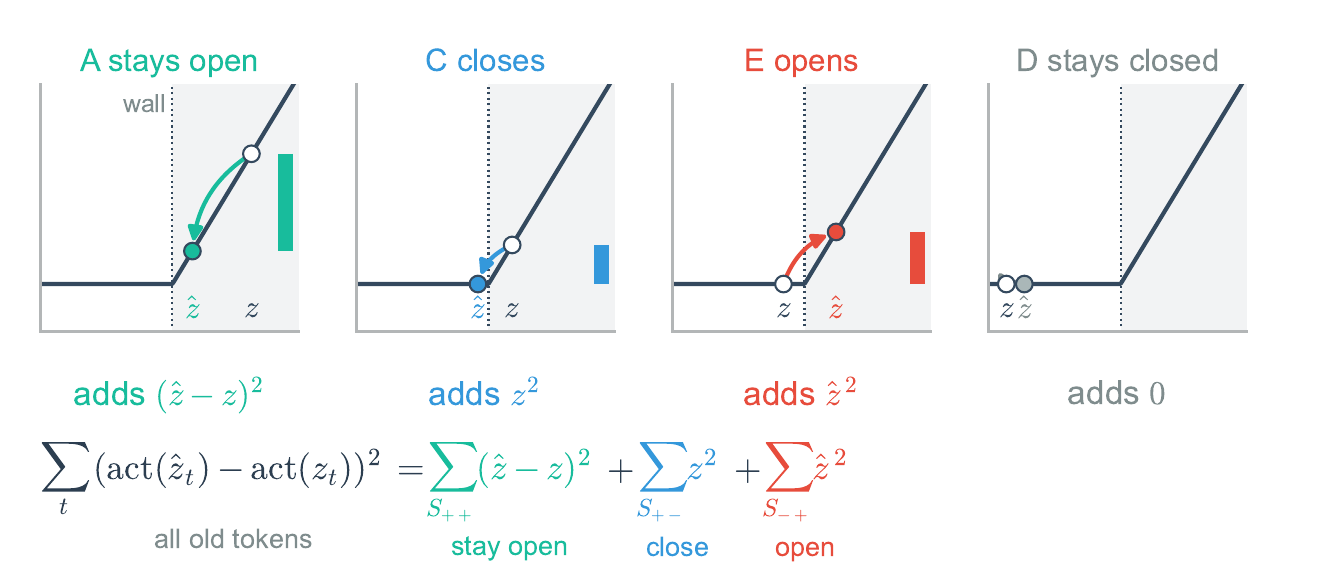}
\caption{\textbf{Theorem~\ref{thm:decomp}, one token at a time.} The four old tokens of
Figure~\ref{fig:duality}d under the update of unit~1 from $w$ to $\hat w$. Each panel draws the activation
$\act(z)=0\oplus z$ against the pre-activation; shading marks the open side of the wall (dotted). The hollow dot is
the token's pre-activation $z$ at $w$, the filled dot its pre-activation $\hat z$ at $\hat w$, and the bar on
the right is the change in the unit's output. The term under each panel is what that token adds to
Eq.~\eqref{eq:identity}, and the words under the formula name the tokens each sum runs over. $A$ stays open, moving from $z=1.50$ to $\hat z=0.38$; its output falls by $1.12$,
and it contributes the in-cell error $(\hat z-z)^2=1.26$. $C$ closes, moving from $0.45$ to $-0.20$; its
output falls from $0.45$ to $0$, whatever $\hat z$ is below the wall, so it contributes its old output
squared, $z^2=0.20$. $E$ opens, moving from $-0.40$ to $0.60$; its old output was $0$, so it contributes its
new output squared, $\hat z^{\,2}=0.36$. $D$ moves from $-2.17$ to $-1.83$ but stays closed, so its output is
$0$ before and after and it contributes nothing, however far it moves. The three terms add to $1.82$, which is
the total squared change of the unit's output over the four tokens, with no remainder.}
\label{fig:app:decomp}
\end{figure}

\paragraph{Corollary~\ref{cor:weight}: the weight-space reading.}
\begin{corollary*}[Full form of Corollary~\ref{cor:weight}]
Let $\hat w\in\mathbb R^{d+1}$, and let $O$, $\PX$ and $\KX$ be those of the base weights $\tilde w$.
\par\textbf{(a)} The set $S_{+-}\cup S_{-+}$ consists of the tokens whose hyperplanes separate $\tilde w$
from $\hat w$ in the sense of Proposition~\ref{prop:duality}a, that is, with exactly one of the two points
in $H_t^>$, and $S_{+-}$ consists of those with $\tilde w\in H_t^>$.
\par\textbf{(b)} On every set $W\subseteq\mathbb R^{d+1}$ of weights with a fixed gate pattern on
$\mathcal X$ (every $\hat w\in W$ gives each token the same gate bit), $\post$ is the restriction to $W$
of a polynomial of degree at most two in $\hat w$.
\par\textbf{(c)} On $\KX$, $\post$ is the in-cell quadratic $\|\tilde X_O(\hat w-\tilde w)\|^2$, and at
$\theta=0$ it equals that quadratic on all of $\PX$.
\end{corollary*}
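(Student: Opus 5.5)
Part (a) is a translation of the case split in Theorem~\ref{thm:decomp} into weight-space language. By Proposition~\ref{prop:duality}a, token $t$ is open at $\tilde w$ iff $\tilde w\in H_t^>$, and open at $\hat w$ iff $\hat w\in H_t^>$. Therefore $t\in S_{+-}\cup S_{-+}$ exactly when its gate bits at the two points differ, that is, when exactly one of $\tilde w,\hat w$ lies in $H_t^>$. Moreover, $t\in S_{+-}$ exactly when that point is $\tilde w$. No further work is needed beyond matching definitions.

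For (b), I fix a set $W$ on which every token keeps a gate bit $\beta_t\in\{0,1\}$, and I fix the base bits $\alpha_t=\mathds 1[z_t>\theta]$. For $\hat w\in W$ each token's contribution is $(\beta_t\langle\hat w,\tilde x_t\rangle-\alpha_t z_t)^2$. Here $\act_\theta(\hat z_t)=\beta_t\hat z_t$ because the bit is constant on $W$. Each such term is the square of an affine function of $\hat w$, so $\post$ restricted to $W$ is a sum of polynomials of degree at most two. Equivalently, on $W$ the sets $S_{++},S_{+-},S_{-+}$ of Theorem~\ref{thm:decomp} are fixed, and Eq.~\eqref{eq:identity} is then visibly quadratic: the first sum is a quadratic in $\hat w$, the second is constant, and the third is a quadratic.

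For (c), I take $\hat w\in\KX$. By definition this gives every token its base gate bit, so $S_{+-}=S_{-+}=\varnothing$ and $S_{++}=O$. Eq.~\eqref{eq:identity} then reduces to $\sum_{t\in O}\langle\hat w-\tilde w,\tilde x_t\rangle^2=\|\tilde X_O(\hat w-\tilde w)\|^2$.

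The only real step is extending this to all of $\PX$ at $\theta=0$. Here $\PX\setminus\KX$ consists of points with $\langle\hat w,\tilde x_t\rangle=0$ for some $t\in O$, which are exactly the active-facet tokens that sit on the wall. Such a token is closed at $\hat w$, so it lies in $S_{+-}$ and contributes $z_t^2$ via the open$\to$closed term. I will check directly that this equals the in-cell value: since $\hat z_t=0$, we have $(\hat z_t-z_t)^2=z_t^2$. The inactive tokens satisfy $\hat z_t\le0=\theta$ on $\PX$, so they stay closed and contribute nothing. Summing gives the same quadratic.

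I expect this boundary bookkeeping to be the main (mild) obstacle. The point is that at $\theta>0$ the argument fails, because a closing token contributes $z_t^2$ rather than $(\theta-z_t)^2$, owing to the jump of $\act_\theta$. That is why the statement restricts the extension to $\theta=0$. Alternatively, one could invoke continuity of $\post$ at $\theta=0$ together with $\overline{\KX}=\PX$ from Proposition~\ref{prop:duality}b, but I will use the direct computation.
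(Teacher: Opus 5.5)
Your proposal is correct and follows essentially the same route as the paper. Part (a) is the translation through Proposition~\ref{prop:duality}a. For (b), the paper fixes the four index sets, and your per-token form $(\beta_t\langle\hat w,\tilde x_t\rangle-\alpha_tz_t)^2$ is an equivalent rewriting. For (c), the paper likewise reduces to $S_{++}=O$ on $\KX$ and then checks directly at $\theta=0$ that active tokens with $\hat z_t=0$ contribute $z_t^2=(\hat z_t-z_t)^2$, while inactive tokens stay in $S_{--}$.
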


\begin{proof}
\emph{(a)} By definition $t\in S_{+-}\cup S_{-+}$ if and only if $t$ changes cell between $\tilde w$ and
$\hat w$, and $t\in S_{+-}$ if and only if it closes; Proposition~\ref{prop:duality}a translates both
conditions into the stated ones.

\emph{(b)} Each of the four index sets of Theorem~\ref{thm:decomp} is determined by the base gate bits
and the new gate bits, so all four are the same for every $\hat w\in W$. For such $\hat w$ the in-cell
term $\sum_{t\in S_{++}}\langle\hat w-\tilde w,\tilde x_t\rangle^2$ and the closed$\to$open term
$\sum_{t\in S_{-+}}\langle\hat w,\tilde x_t\rangle^2$ are polynomials of degree at most two in $\hat w$,
and the open$\to$closed term $\sum_{t\in S_{+-}}z_t^2$ does not depend on $\hat w$. By
Theorem~\ref{thm:decomp} their sum is $\post$.

\emph{(c)} Let $\hat w\in\KX$. Every token then has its base gate bit, so $S_{+-}=S_{-+}=\varnothing$ and
$S_{++}=O$, and Theorem~\ref{thm:decomp} gives
$\post=\sum_{t\in O}\langle\hat w-\tilde w,\tilde x_t\rangle^2=\|\tilde X_O(\hat w-\tilde w)\|^2$. Now let
$\theta=0$ and $\hat w\in\PX\setminus\KX$. Since $\hat w\in\PX$, $\hat z_t\ge0$ for $t\in O$ and
$\hat z_t\le0$ for $t\notin O$, and since $\KX=\PX\setminus\bigcup_{t\in O}H_t$ by
Proposition~\ref{prop:duality}b, the set $O_0=\{t\in O:\hat z_t=0\}$ is nonempty. The tokens of
$O\setminus O_0$ lie in $S_{++}$ and contribute $(\hat z_t-z_t)^2$. The tokens of $O_0$ lie in $S_{+-}$
and contribute $z_t^2$, which equals $(\hat z_t-z_t)^2$ because $\hat z_t=0$. The tokens off $O$ have
$z_t\le0$ and $\hat z_t\le0$, so they lie in $S_{--}$ and contribute $0$. Hence
$\post=\sum_{t\in O}(\hat z_t-z_t)^2=\|\tilde X_O(\hat w-\tilde w)\|^2$.
\end{proof}

\paragraph{Proposition~\ref{prop:zero}: exactness and the ridge residual.}
Here unit $i$, the base inputs and an integer cap $C\ge0$ are fixed, and $S_i$, $k_i$, $A_i$
and $P_i(\rho)$ are those of Definition~\ref{def:proj}. We write $z_t=z_i(x_t)$; for $u\in\mathbb R^{d+1}$
and $\rho\ge0$ we put $u'=P_i(\rho)u$, the new row is $\tilde w_i+u'$, and $\hat z_t=z_t+\langle
u',\tilde x_t\rangle$. The sets $H_t$, $\PX$ and $\KX$ are those of unit $i$, that is, of
Definition~\ref{def:poly} with $O=O_i$. Since $\theta\ge0$, $h^{(i)}_t>0$ if and only if $t\in O_i$; hence
$S_i\subseteq O_i$ and $k_i=\min(C,|O_i|)$.

\begin{proposition*}[Full form of Proposition~\ref{prop:zero}]
\textbf{(a)} The matrix $P_i(0)$ is the orthogonal projector onto $\ker A_i$: at $\rho=0$, $A_iu'=0$, and
$u'$ is the point of $\ker A_i$ nearest to $u$. Let $k_i\ge1$ (then $A_i\neq0$, since every row ends in
$1$) and let $\sigma_+$ be the smallest positive singular value of $A_i$. For $\rho>0$ the residual
satisfies $\|A_iu'\|\le\frac{\rho}{\sigma_+^2+\rho}\|A_iu\|$, with equality for a right singular vector of
$A_i$ belonging to $\sigma_+$.
\par\textbf{(a$'$)} Let $k_i\ge1$ and $\rho>0$. Then $P_i(\rho)=\rho(A_i^\top A_i+\rho I)^{-1}$; the vector
$u'$ is the unique minimiser of $\|v-u\|^2+\rho^{-1}\|A_iv\|^2$ over $v\in\mathbb R^{d+1}$; and
$\|P_i(\rho)-P_i(0)\|_2=\rho/(\sigma_+^2+\rho)$, so $P_i(\rho)\to P_i(0)$ as $\rho\to0$. The bound of (a)
depends on duplicate or dependent tokens only through $\sigma_+$.
\par\textbf{(b)} At the base inputs and $\rho=0$, every pair in $S_i$ stays in $S_{++}$ and contributes zero
to all three terms, and every pair closed at base contributes only to closed$\to$open. Unit $i$'s
in-cell and open$\to$closed terms are sums over $O_i\setminus S_i$, which is empty when $C\ge|O_i|$.
\par\textbf{(c)} Let $C\ge|O_i|$ and $\rho=0$. The flat $F=\tilde w_i+\ker\tilde X_{O_i}$, which contains every
new row $\tilde w_i+u'$, is parallel to every active hyperplane $H_t$, $t\in O_i$, and lies in
$\bigcap_{t\in O_i}H_t^>$, so it meets $\KX$ exactly where it meets $\PX$. Every point where $F$ meets
$\partial\PX$ lies on a facet, and every facet through such a point is inactive
(Figure~\ref{fig:duality}f). In particular, every continuous path in $F$ that starts in $\KX$ and ends
outside $\KX$ meets $\partial\PX$.
\par\textbf{(d)} Let $S\subseteq[T]$ be any selected set, open or closed, and $u'=(I-\tilde X_S^+\tilde X_S)u$.
At the base inputs every token of $S$ keeps its pre-activation and contributes zero to $\post$, and a
closed token $t$ with $\theta-z_i(x_t)\ge\|u'\|\,\|\tilde x_t\|$ contributes zero whether it is in $S$ or
not.
\end{proposition*}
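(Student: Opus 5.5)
For (a), I would compute with the singular value decomposition $A_i=U\Sigma V^\top$ and read off both $P_i(0)$ and $P_i(\rho)$ in the right singular basis.
\begin{itemize}
\item $A_i^+A_i$ is the orthogonal projector onto the row space of $A_i$. Hence $P_i(0)=I-A_i^+A_i$ is the orthogonal projector onto $\ker A_i$, which gives $A_iu'=0$ and the nearest-point property.
\item For $\rho>0$, the push-through identity $A_i^\top(A_iA_i^\top+\rho I)^{-1}=(A_i^\top A_i+\rho I)^{-1}A_i^\top$ gives $P_i(\rho)=I-(A_i^\top A_i+\rho I)^{-1}A_i^\top A_i=\rho(A_i^\top A_i+\rho I)^{-1}$.
\item This is (a$'$). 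The minimisation claim follows by setting the gradient of the strictly convex quadratic $\|v-u\|^2+\rho^{-1}\|A_iv\|^2$ to zero.
\item In the right singular basis, $P_i(\rho)$ acts as $\rho/(\sigma_j^2+\rho)$ on the direction of a positive singular value $\sigma_j$, and as $1$ on $\ker A_i$.
\item Thus $A_iP_i(\rho)$ scales each positive singular component by $\rho/(\sigma_j^2+\rho)\le\rho/(\sigma_+^2+\rho)$. This bounds $\|A_iu'\|$, with equality on the $\sigma_+$ singular vector.
\item $P_i(\rho)-P_i(0)$ has eigenvalues $\rho/(\sigma_j^2+\rho)$ on the row space and $0$ elsewhere, so its operator norm is $\rho/(\sigma_+^2+\rho)$.
\item Duplicate or dependent rows change only the set of nonzero singular values, so they enter the bound only through $\sigma_+$.
\end{itemize}

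For (b), (c) and (d), the key fact is that $\hat z_t=z_t+\langle u',\tilde x_t\rangle$ at the base inputs.
\begin{itemize}
\item \textbf{(b)} For $t\in S_i$, $A_iu'=0$ gives $\hat z_t=z_t>\theta$. So $t\in S_{++}$ and its in-cell term is $0$; it is in neither crossing set. A pair closed at base lies in $S_{-+}$ or $S_{--}$, so it feeds only the closed$\to$open term. The in-cell and open$\to$closed terms therefore run over $O_i\setminus S_i$, and when $C\ge|O_i|$ we have $k_i=|O_i|$ and $S_i=O_i$.
\item \textbf{(c)} With $S_i=O_i$, every $v\in\ker\tilde X_{O_i}$ satisfies $\langle v,\tilde x_t\rangle=0$ for $t\in O_i$. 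So the direction space of $F$ is parallel to each $H_t$, and every point of $F$ has value $z_t>\theta$, putting $F$ inside $\bigcap_{t\in O_i}H_t^>$. Then $F\cap\PX=F\cap\KX$, using $\KX=\PX\setminus\bigcup_{t\in O}H_t$ from Proposition~\ref{prop:duality}b.
\item For a boundary point $y\in F\cap\partial\PX$, Proposition~\ref{prop:duality}c (via Lemma~\ref{lem:faces}(iii)) says every facet through $y$ is $\PX\cap H_t$ with $y\in H_t$. Since $y\notin H_t$ for $t\in O_i$, each such $t$ is inactive, and the label is well defined by Proposition~\ref{prop:duality}c.
\item For the path claim: the path starts in $\KX\subseteq\PX$ and leaves $\KX$, hence leaves $\PX$ because $F\cap\PX=F\cap\KX$. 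Since $\PX$ is closed and the path is continuous, a connectedness argument on the first exit time gives a point of $\partial\PX$ on the path.
\item \textbf{(d)} The same orthogonal-projector argument shows $\tilde X_Su'=0$. So each token of $S$ keeps its pre-activation and its gate bit, and contributes zero whether open or closed.
\item Still in (d), for a closed token, Cauchy--Schwarz gives $\hat z_t\le z_t+\|u'\|\|\tilde x_t\|\le\theta$. It stays closed and contributes $0$.
\end{itemize}

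The main obstacle is (c): getting the facet labelling right when tokens repeat. This needs the well-definedness part of Proposition~\ref{prop:duality}c, which I take as given. The rest is routine linear algebra.
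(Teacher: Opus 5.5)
Your proposal is correct and follows the paper's proof essentially step for step. It reads $P_i(0)$ and $P_i(\rho)$ in the right singular basis for (a) and (a$'$), uses the base-input identity $\hat z_t=z_t+\langle u',\tilde x_t\rangle$ for (b) and (d), and combines Proposition~\ref{prop:duality}c with a first-exit (infimum) argument for (c); the only cosmetic difference is that you get $P_i(\rho)=\rho(A_i^\top A_i+\rho I)^{-1}$ from the push-through identity instead of diagonalising $A_iA_i^\top+\rho I$ explicitly, which changes nothing downstream.
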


\begin{proof}
If $k_i=0$, then $A_i$ has no rows, $\ker A_i=\mathbb R^{d+1}$, and $P_i(\rho)=I=P_i(0)$ by
Definition~\ref{def:proj}; so $P_i(0)$ is the orthogonal projector onto $\ker A_i$ and $u'=u$ is the
nearest point, which is the first sentence of (a); the rest of (a), and (a$'$), assume $k_i\ge1$. The
proofs of (b), (c) and (d) below use only the first sentence of (a) and therefore also hold when
$k_i=0$.

Let $k_i\ge1$. Every row of $A_i$ ends in $1$, so $A_i\neq0$. Let $r=\operatorname{rank}A_i\ge1$ and let
$A_i=\sum_{j=1}^r\sigma_jp_jq_j^\top$ be a compact singular value decomposition, with $\sigma_j>0$ and
orthonormal families $\{p_j\}\subseteq\mathbb R^{k_i}$ and $\{q_j\}\subseteq\mathbb R^{d+1}$, so that
$\sigma_+=\min_j\sigma_j$. Let $\Pi=\sum_jq_jq_j^\top$, the orthogonal projector onto the row space
$\operatorname{span}\{q_j\}$ of $A_i$; then $I-\Pi$ is the orthogonal projector onto its orthogonal
complement, which is $\ker A_i$.

\emph{(a), the projector.} The pseudo-inverse is $A_i^+=\sum_j\sigma_j^{-1}q_jp_j^\top$, so
$A_i^+A_i=\sum_{j,l}\sigma_j^{-1}\sigma_lq_j(p_j^\top p_l)q_l^\top=\Pi$ and $P_i(0)=I-\Pi$. By the
projection theorem, $(I-\Pi)u$ is the unique point of $\ker A_i$ nearest to $u$, and $A_i(I-\Pi)u=0$
because the range of $I-\Pi$ is $\ker A_i$.

\emph{The rest of (a), and (a$'$).} Let $\rho>0$ and $E=\sum_jp_jp_j^\top$. Then
$A_iA_i^\top+\rho I=\sum_j(\sigma_j^2+\rho)p_jp_j^\top+\rho(I-E)$ acts as multiplication by
$\sigma_j^2+\rho>0$ on each $p_j$ and by $\rho>0$ on the range of $I-E$, so it is invertible, with inverse
$\sum_j(\sigma_j^2+\rho)^{-1}p_jp_j^\top+\rho^{-1}(I-E)$. Since $A_i^\top=\sum_j\sigma_jq_jp_j^\top$ vanishes
on the range of $I-E$,
\[
A_i^\top(A_iA_i^\top+\rho I)^{-1}A_i=\sum_j\frac{\sigma_j^2}{\sigma_j^2+\rho}\,q_jq_j^\top,
\qquad\text{hence}\qquad
P_i(\rho)=(I-\Pi)+\sum_j\frac{\rho}{\sigma_j^2+\rho}\,q_jq_j^\top .
\]
The same basis diagonalises $A_i^\top A_i+\rho I=\sum_j(\sigma_j^2+\rho)q_jq_j^\top+\rho(I-\Pi)$, and
$\rho$ times its inverse is $(I-\Pi)+\sum_j\rho(\sigma_j^2+\rho)^{-1}q_jq_j^\top=P_i(\rho)$. The objective
$\phi(v)=\|v-u\|^2+\rho^{-1}\|A_iv\|^2$ is a quadratic with Hessian $2I+2\rho^{-1}A_i^\top A_i$, which is
positive definite, so its unique minimiser is the unique zero of
$\nabla\phi(v)=2(v-u)+2\rho^{-1}A_i^\top A_iv$, namely
$v=(I+\rho^{-1}A_i^\top A_i)^{-1}u=\rho(A_i^\top A_i+\rho I)^{-1}u=u'$. Next, $A_iq_l=\sigma_lp_l$ and
$A_i(I-\Pi)=0$ give $A_iu'=\sum_j\frac{\rho\sigma_j}{\sigma_j^2+\rho}(q_j^\top u)\,p_j$, and orthonormality
of the $p_j$ gives
\[
\|A_iu'\|^2=\sum_j\Big(\frac{\rho}{\sigma_j^2+\rho}\Big)^2\sigma_j^2(q_j^\top u)^2
\le\Big(\frac{\rho}{\sigma_+^2+\rho}\Big)^2\sum_j\sigma_j^2(q_j^\top u)^2
=\Big(\frac{\rho}{\sigma_+^2+\rho}\Big)^2\|A_iu\|^2 ,
\]
where the inequality holds because $s\mapsto\rho/(s+\rho)$ is decreasing on $s>0$ and every
$\sigma_j\ge\sigma_+$, and the last equality because $A_iu=\sum_j\sigma_j(q_j^\top u)p_j$. For $u=q_{j^\ast}$
with $\sigma_{j^\ast}=\sigma_+$, a right singular vector belonging to $\sigma_+$, both sides equal
$(\rho/(\sigma_+^2+\rho))^2\sigma_+^2$. Finally,
$P_i(\rho)-P_i(0)=\sum_j\rho(\sigma_j^2+\rho)^{-1}q_jq_j^\top$ is symmetric with eigenvalues
$\rho/(\sigma_j^2+\rho)$ and $0$, so its operator norm is $\max_j\rho/(\sigma_j^2+\rho)=\rho/(\sigma_+^2+\rho)$,
which tends to $0$ as $\rho\to0$. Repeated or linearly dependent rows of $A_i$ change $r$ and the
$\sigma_j$, and the bound sees them only through $\sigma_+$.

\emph{(b)} At the base inputs the new pre-activation of token $t$ is $\hat z_t=z_t+\langle u',\tilde
x_t\rangle$. Let $\rho=0$ and $t\in S_i$. The number $\langle u',\tilde x_t\rangle$ is the entry of $A_iu'$
in the row of $t$, which is $0$ by (a); hence $\hat z_t=z_t>\theta$, as $S_i\subseteq O_i$. The pair is
therefore in $S_{++}$ with in-cell contribution $(\hat z_t-z_t)^2=0$, and in neither crossing set. A pair
closed at base has $z_t\le\theta$, so it lies in $S_{-+}\cup S_{--}$ and can contribute only to
closed$\to$open. The in-cell and open$\to$closed terms are sums over $S_{++}\cup S_{+-}=O_i$, and the
pairs of $S_i$ contribute zero to them, so they are sums over $O_i\setminus S_i$. Since $S_i\subseteq O_i$
and $|S_i|=k_i=\min(C,|O_i|)$, the set $O_i\setminus S_i$ is empty if and only if $C\ge|O_i|$.

\emph{(c)} Let $C\ge|O_i|$ and $\rho=0$. Then $S_i=O_i$ by (b), $A_i=\tilde X_{O_i}$, and
$u'\in\ker\tilde X_{O_i}$ by (a), so $\tilde w_i+u'\in F$. The direction space of $F$ is
$\ker\tilde X_{O_i}=\bigcap_{t\in O_i}\tilde x_t^\perp$, which is contained in the direction space
$\tilde x_t^\perp$ of each active hyperplane $H_t$: the flat is parallel to all of them. For
$v\in\ker\tilde X_{O_i}$ and $t\in O_i$, $\langle\tilde w_i+v,\tilde x_t\rangle=z_t>\theta$, so
$F\subseteq\bigcap_{t\in O_i}H_t^>$. A point of $F$ thus satisfies every active constraint strictly, so it
lies in $\KX$ if and only if it satisfies the inactive constraints, that is, if and only if it lies in
$\PX$. Let $y\in F\cap\partial\PX$. By Proposition~\ref{prop:duality}c, which rests on
Lemma~\ref{lem:faces}(iii), the point $y$ lies on a facet, and every facet through $y$ is $\PX\cap H_t$ for
a token $t$ with $y\in H_t$. Since $y\in H_s^>$ for every $s\in O_i$, $y$ lies on no active hyperplane, so
$t\notin O_i$ and the facet is inactive; by Proposition~\ref{prop:duality}c the label does not depend on
the choice of $t$. For the path, let $\alpha:[0,1]\to F$ be continuous with $\alpha(0)\in\KX$ and
$\alpha(1)\notin\KX$. By the first claim $\alpha(1)\notin\PX$, so
$s^\ast=\inf\{s\in[0,1]:\alpha(s)\notin\PX\}$ is well defined. Since $\PX$ is closed, $\alpha(0)\in\PX$ and
$\alpha(s)\in\PX$ for every $s<s^\ast$, continuity gives $\alpha(s^\ast)\in\PX$; hence $s^\ast$ is not in
the set, and by the definition of the infimum there are $s_n>s^\ast$ with $s_n\to s^\ast$ and
$\alpha(s_n)\notin\PX$. So $\alpha(s^\ast)$ is a point of $\PX$ that is a limit of points outside $\PX$,
that is, a point of $F\cap\partial\PX$.

\emph{(d)} If $S=\varnothing$, then $\tilde X_S^+\tilde X_S=0$ and $u'=u$, and the first claim is empty.
If $S\neq\varnothing$, the argument of (a), which uses no property of $A_i$ other than being a real
matrix, applies to $\tilde X_S$ and gives $\tilde X_Su'=0$. Hence $\hat z_t=z_t$ for $t\in S$, both
activations agree, and the token contributes $0$ to $\post$ (it lies in $S_{++}$ with zero difference if
it is open, and in $S_{--}$ if it is closed). For a closed $t$ with $\theta-z_t\ge\|u'\|\,\|\tilde x_t\|$,
the Cauchy--Schwarz inequality gives $\hat z_t\le z_t+\|u'\|\,\|\tilde x_t\|\le\theta$, so $t\in S_{--}$
and contributes $0$, whether or not $t\in S$.
\end{proof}

\paragraph{Theorem~\ref{thm:budget}: budget separation.}
Here the base inputs are fixed. For a unit $i$ and an update $u\in\mathbb R^{d+1}$ of row
$i$, the in-cell, open$\to$closed and closed$\to$open terms of unit $i$ are those of
Theorem~\ref{thm:decomp} with base weights $\tilde w_i$ and new weights $\tilde w_i+u$. A linear subspace
$U\subseteq\mathbb R^{d+1}$ is \emph{protecting} for unit $i$ if unit $i$'s in-cell term vanishes for every
$u\in U$, and row $i$ is \emph{confined to} $U$ if its update is required to lie in $U$. We write
$r_i=\operatorname{rank}\tilde X_{O_i}$ and $O_\cup=\bigcup_iO_i$.

\begin{theorem*}[Full form of Theorem~\ref{thm:budget}]
\textbf{(a)} A subspace $U$ is protecting for unit $i$ if and only if $U\subseteq\ker\tilde X_{O_i}$, and
then unit $i$'s open$\to$closed term also vanishes for every $u\in U$. A protecting subspace removes at
least $r_i\le|O_i|$ directions from row $i$. If $C\ge|O_i|$, the projector $P_i(0)$ confines row $i$ to
$\ker\tilde X_{O_i}$, which is protecting for unit $i$, and removes exactly $r_i$ directions.
\par\textbf{(b)} One subspace $U$ shared by every row is protecting for all units if and only if
$U\subseteq\ker\tilde X_{O_\cup}$, so it removes at least
$\operatorname{rank}\tilde X_{O_\cup}\ge\max_ir_i$ directions from every row.
\par\textbf{(c)} Let $B\ge0$ be an integer. If every row $i$ is confined to a subspace $U_i$, with
$\sum_i\operatorname{codim}U_i\le B$, the set $J$ of units $i$ for which $U_i$ is protecting satisfies
$\sum_{i\in J}r_i\le B$; conversely, for every $J\subseteq[m]$ with $\sum_{i\in J}r_i\le B$ there are such
$U_i$ under which every unit of $J$ is protected. Per-unit subspaces therefore protect up to
$\max\{|J|:\sum_{i\in J}r_i\le B\}$ units. If instead each row is either frozen ($U_i=\{0\}$, costing
$d+1$) or free ($U_i=\mathbb R^{d+1}$, costing $0$), with total cost at most $B$, the best choice protects
exactly $\#\{i:r_i=0\}+\min(\lfloor B/(d+1)\rfloor,\#\{i:r_i>0\})$ units, which is never more than the
per-unit count.
\end{theorem*}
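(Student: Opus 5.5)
The whole argument runs through one reduction: unit $i$'s in-cell term at update $u$ is a quadratic form in $u$ restricted to the tokens that stay open. Since the decomposition of Theorem~\ref{thm:decomp} depends on which tokens cross, I will first pin down the small-update regime and then use scaling.

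\textbf{Part (a).} For $u\in\ker\tilde X_{O_i}$, every $t\in O_i$ has $\hat z_t=z_t>\theta$. So $S_{+-}=\varnothing$, $S_{++}=O_i$, and the in-cell term is $\sum_{t\in O_i}\langle u,\tilde x_t\rangle^2=0$. The open$\to$closed term is an empty sum and also vanishes.

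Conversely, suppose $U$ is protecting and $u\in U$. Let $\delta=\min_{t\in O_i}(z_t-\theta)>0$; if $O_i=\varnothing$ there is nothing to prove. Choose $s>0$ small enough that $s|\langle u,\tilde x_t\rangle|<\delta$ for all $t\in O_i$. Then $su\in U$ keeps every $t\in O_i$ open, so $S_{++}=O_i$. The in-cell term at $su$ is $s^2\|\tilde X_{O_i}u\|^2$, and it vanishes by protection, so $u\in\ker\tilde X_{O_i}$. This scaling step is the only non-formal point: protection is defined through a term whose index set depends on $u$, and linearity of $U$ is what lets us restrict to small updates.

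Codimension counting gives $\operatorname{codim}U\ge\operatorname{codim}\ker\tilde X_{O_i}=r_i\le|O_i|$. For the projector, when $C\ge|O_i|$ we have $S_i=O_i$, so by Proposition~\ref{prop:zero}(a) $P_i(0)$ is the orthogonal projector onto $\ker\tilde X_{O_i}$. It therefore removes exactly $r_i$ directions.

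\textbf{Part (b).} A subspace $U$ is protecting for all units iff $U\subseteq\bigcap_i\ker\tilde X_{O_i}$. This intersection equals $\ker\tilde X_{O_\cup}$, since both are the vectors orthogonal to every $\tilde x_t$ with $t\in O_\cup$. Hence $\operatorname{codim}U\ge\operatorname{rank}\tilde X_{O_\cup}$. Because $\tilde X_{O_i}$ is a row-submatrix of $\tilde X_{O_\cup}$, this rank is at least $r_i$ for every $i$, hence at least $\max_i r_i$.

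\textbf{Part (c).} Let $J$ be the set of units whose $U_i$ is protecting. By (a), $\operatorname{codim}U_i\ge r_i$ for each $i\in J$, so $\sum_{i\in J}r_i\le\sum_i\operatorname{codim}U_i\le B$. Conversely, given $J$ with $\sum_{i\in J}r_i\le B$, take $U_i=\ker\tilde X_{O_i}$ for $i\in J$ and $U_i=\mathbb R^{d+1}$ otherwise. The total codimension is $\sum_{i\in J}r_i\le B$, and every unit of $J$ is protected.

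\textbf{Row freezing.} A frozen row ($U_i=\{0\}$) is always protecting. A free row is protecting iff $\mathbb R^{d+1}\subseteq\ker\tilde X_{O_i}$, i.e.\ iff $r_i=0$. The number of frozen rows is at most $\lfloor B/(d+1)\rfloor$, so the best strategy freezes only units with $r_i>0$, up to that count. This protects exactly $\#\{i:r_i=0\}+\min(\lfloor B/(d+1)\rfloor,\#\{i:r_i>0\})$ units.

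This count never exceeds the per-unit count. Let $J$ be the set of units protected by the best freezing choice. Each frozen unit has $r_i\le d+1$, since $\tilde X_{O_i}$ has $d+1$ columns, and units with $r_i=0$ cost nothing. So $\sum_{i\in J}r_i\le(d+1)\lfloor B/(d+1)\rfloor\le B$, and $J$ is admissible for the per-unit maximum.
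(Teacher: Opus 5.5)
Your proof is correct and follows the paper's argument essentially step for step. The scaling reduction in (a) is the paper's (it argues by contradiction with $s=\delta/(2\|a\|_\infty)$, you argue directly), and so are the identity $\bigcap_i\ker\tilde X_{O_i}=\ker\tilde X_{O_\cup}$ in (b), the choice $U_i=\ker\tilde X_{O_i}$ on $J$ in (c), and the use of $r_i\le d+1$ to show that the frozen-row protected set is admissible. The one point you leave implicit is that the per-unit maximum is attained even though your construction also protects the units outside $J$ with $r_i=0$; that larger set is still admissible, so this follows at once.
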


\begin{proof}
Fix unit $i$, write $O=O_i$, and for an update $u$ let $a=\tilde X_Ou\in\mathbb R^{|O|}$, so that
$\hat z_t=z_t+a_t$ for $t\in O$.

\emph{(a)} If $O=\varnothing$, the in-cell and open$\to$closed terms are sums over the empty set and vanish
for every $u$, $\ker\tilde X_O=\mathbb R^{d+1}$ and $r_i=0$, so every claim holds. Let $O\neq\varnothing$.
If $\tilde X_Ou=0$, every $t\in O$ keeps $\hat z_t=z_t>\theta$, so $S_{++}=O$ with zero in-cell
contribution and $S_{+-}=\varnothing$; both terms vanish on $\ker\tilde X_O$, and every subspace of it is
protecting. Conversely, let $U$ be protecting and suppose, for a contradiction, that some $u\in U$ has
$a\neq0$. Put $\delta=\min_{t\in O}(z_t-\theta)$, which is positive because $O$ is finite and nonempty,
and $s=\delta/(2\|a\|_\infty)$. Then $su\in U$, and every $t\in O$ has
$|\hat z_t-z_t|=s|a_t|\le\delta/2$, so $\hat z_t\ge z_t-\delta/2>\theta$ and $t$ stays open. The in-cell
term of $su$ is therefore $\sum_{t\in O}s^2a_t^2=s^2\|a\|^2>0$, which contradicts protection. Hence
$U\subseteq\ker\tilde X_O$, and by rank--nullity $\dim U\le d+1-r_i$, that is,
$\operatorname{codim}U\ge r_i$. Finally $r_i\le|O|$, the number of rows of $\tilde X_O$.

\emph{(a), the projector.} If $C\ge|O|$, then $S_i=O$ and $A_i=\tilde X_O$ by Proposition~\ref{prop:zero}b,
and by Proposition~\ref{prop:zero}a the range of $P_i(0)$ is $\ker\tilde X_O$, which is protecting by the
first claim of (a) and has codimension $r_i$ by rank--nullity.

\emph{(b)} By (a), a subspace $U$ is protecting for every unit if and only if
$U\subseteq\bigcap_i\ker\tilde X_{O_i}$. The rows of $\tilde X_{O_\cup}$ are, up to repetition, the rows of
all the $\tilde X_{O_i}$, so $\bigcap_i\ker\tilde X_{O_i}=\ker\tilde X_{O_\cup}$. Hence
$\operatorname{codim}U\ge\operatorname{rank}\tilde X_{O_\cup}$, and
$\operatorname{rank}\tilde X_{O_\cup}\ge\operatorname{rank}\tilde X_{O_i}=r_i$ for every $i$, since the
rows of $\tilde X_{O_i}$ are among those of $\tilde X_{O_\cup}$. The subspace $U=\ker\tilde X_{O_\cup}$
attains the bound.

\emph{(c)} Let row $i$ be confined to $U_i$ for every $i$, with $\sum_i\operatorname{codim}U_i\le B$, and
let $J$ be the set of protected units. By (a), $\operatorname{codim}U_i\ge r_i$ for $i\in J$, so
$\sum_{i\in J}r_i\le\sum_{i\in J}\operatorname{codim}U_i\le B$. Conversely, let $\sum_{i\in J}r_i\le B$, and
choose $U_i=\ker\tilde X_{O_i}$ for $i\in J$ and $U_i=\mathbb R^{d+1}$ otherwise. The total codimension is
$\sum_{i\in J}r_i\le B$, and every unit of $J$ is protected by (a). Hence the largest number of units that
per-unit subspaces protect is $\max\{|J|:\sum_{i\in J}r_i\le B\}$; the protected set of the construction
also contains the units with $r_i=0$, and it is itself admissible, so the maximum is attained.

Now let each row be frozen or free. A frozen row is confined to $\{0\}\subseteq\ker\tilde X_{O_i}$ and is
protected; by (a) a free row is protected if and only if $\mathbb R^{d+1}\subseteq\ker\tilde X_{O_i}$, that
is, if and only if $r_i=0$. Let $Z=\{i:r_i=0\}$ and let $\mathcal F$ be the set of frozen rows. The budget
gives $(d+1)|\mathcal F|\le B$, that is, $|\mathcal F|\le\lfloor B/(d+1)\rfloor$, and the protected set is
$Z\cup\mathcal F$, of size
$|Z|+|\mathcal F\setminus Z|\le|Z|+\min(\lfloor B/(d+1)\rfloor,\#\{i:r_i>0\})$. Freezing
$\min(\lfloor B/(d+1)\rfloor,\#\{i:r_i>0\})$ rows with $r_i>0$ respects the budget and attains this
bound, which proves the formula. The protected set $Z\cup\mathcal F$ of any admissible choice satisfies
$\sum_{i\in Z\cup\mathcal F}r_i=\sum_{i\in\mathcal F\setminus Z}r_i\le(d+1)|\mathcal F|\le B$, because
$r_i\le d+1$, the number of columns of $\tilde X_{O_i}$; so it is admissible for the per-unit count, which
is therefore at least as large.
\end{proof}

\paragraph{Proposition~\ref{prop:kyfan}: the budget-limited optimum.}
Here $\theta=0$ and the inputs are the base inputs. For each row $i$, $U_i\subseteq\mathbb
R^{d+1}$ is a fixed linear subspace, $u_i$ is a random vector in $\mathbb R^{d+1}$ with
$\mathbb E[u_iu_i^\top]=\sigma^2I$ for a real $\sigma>0$ that is the same for every row, and the applied
update of row $i$ is $P_{U_i}u_i$. We write $\Xi_i=\tilde X_{O_i}^\top\tilde X_{O_i}$, which is $0$ if
$O_i=\varnothing$, and $\lambda_j(\Xi_i)$, $j\in[d+1]$, for its eigenvalues in non-increasing order.

\begin{proposition*}[Full form of Proposition~\ref{prop:kyfan}]
Unit $i$'s in-cell plus open$\to$closed terms are at most $\|\tilde X_{O_i}P_{U_i}u_i\|^2$, with equality
when no open token closes, and $D_i(U):=\mathbb E\|\tilde X_{O_i}P_Uu_i\|^2=\sigma^2\operatorname{tr}(\Xi_iP_U)$
for every subspace $U$.
\par\textbf{(a)} For every integer $q$ with $0\le q\le d+1$,
$\min_{\operatorname{codim}U=q}D_i(U)=\sigma^2\sum_{j>q}\lambda_j(\Xi_i)$, attained when $U^\perp$ is a
top-$q$ eigenspace of the unit's own open-token Gram $\Xi_i$.
\par\textbf{(b)} For one $U$ shared by all rows, $\sum_iD_i(U)=\sigma^2\operatorname{tr}(\bar\Xi P_U)$ with
$\bar\Xi=\sum_i\Xi_i=\sum_tn_t\tilde x_t\tilde x_t^\top$, where $n_t$ is the number of units open on token $t$;
over $\operatorname{codim}U=q$ it is minimised when $U^\perp$ is a top-$q$ eigenspace of $\bar\Xi$, with
minimum $\sigma^2\sum_{j>q}\lambda_j(\bar\Xi)$.
\par\textbf{(c)} For every integer $q$ with $0\le q\le d+1$, the minimum of $\sum_iD_i(U_i)$ over per-unit
subspaces with $\sum_i\operatorname{codim}U_i\le mq$ is at most $\sigma^2\sum_i\sum_{j>q}\lambda_j(\Xi_i)$,
the value of the allocation $\operatorname{codim}U_i=q$ for every $i$, which is at most the shared minimum
of (b).
\par\textbf{(d)} Let $U_i=\ker\tilde X_{S_i}$, the subspace to which the gate projector $P_i(0)$ confines row
$i$, so that $U_i^\perp=\operatorname{span}\{\tilde x_t:t\in S_i\}$. If these tokens span the row space of
$\tilde X_{O_i}$, in particular if $C\ge|O_i|$, then $\operatorname{codim}U_i=r_i$ and $D_i(U_i)=0$, the
minimum of $D_i$ over all subspaces.
\end{proposition*}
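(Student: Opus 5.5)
The plan has four stages: bound the two terms pointwise, compute the expectation, apply Ky Fan to each part, and settle (d) from (a) of Theorem~\ref{thm:budget}.

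\emph{Preamble.} Fix unit $i$, write $O=O_i$ and $u=P_{U_i}u_i$, and set $a=\tilde X_Ou$. At $\theta=0$ the in-cell term is $\sum_{t\in S_{++}}a_t^2$ by Theorem~\ref{thm:decomp}. A token $t\in S_{+-}$ has $z_t>0\ge\hat z_t=z_t+a_t$, so $z_t\le -a_t=|a_t|$ and $z_t^2\le a_t^2$. Summing over $O=S_{++}\sqcup S_{+-}$ gives the bound $\|\tilde X_Ou\|^2$. Equality holds when $S_{+-}=\varnothing$.

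For the expectation, write $\|\tilde X_OP_Uu_i\|^2=\operatorname{tr}(P_U\Xi_iP_U\,u_iu_i^\top)$. Taking expectations with $\mathbb E[u_iu_i^\top]=\sigma^2I$ gives $\sigma^2\operatorname{tr}(P_U\Xi_iP_U)=\sigma^2\operatorname{tr}(\Xi_iP_U)$, using $P_U^2=P_U$ and cyclicity.

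\emph{(a).} Write $\operatorname{tr}(\Xi_iP_U)=\operatorname{tr}\Xi_i-\operatorname{tr}(\Xi_iP_{U^\perp})$, with $\dim U^\perp=q$. Ky Fan's maximum principle says that $\operatorname{tr}(\Xi P_V)$ over $q$-dimensional $V$ is at most $\sum_{j\le q}\lambda_j(\Xi)$, with equality at a top-$q$ eigenspace. The minimum is therefore $\sigma^2\sum_{j>q}\lambda_j(\Xi_i)$. I will prove Ky Fan briefly rather than cite it. Let $V$ have orthonormal basis $v_1,\dots,v_q$, and expand in an eigenbasis $e_j$: $\operatorname{tr}(\Xi P_V)=\sum_j\lambda_jc_j$ with $c_j=\sum_l\langle v_l,e_j\rangle^2\in[0,1]$ and $\sum_jc_j=q$. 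A linear program over such $c$ is maximised by putting $c_j=1$ on the top $q$ indices. This is the one step that needs care with ties, and the paper's definition of a top-$q$ eigenspace with multiplicity handles them.

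\emph{(b).} Linearity of the trace gives $\sum_iD_i(U)=\sigma^2\operatorname{tr}(\bar\Xi P_U)$. Next, $\bar\Xi=\sum_i\sum_{t\in O_i}\tilde x_t\tilde x_t^\top=\sum_tn_t\tilde x_t\tilde x_t^\top$ by swapping the sums. Applying (a) to $\bar\Xi$ gives the minimiser and the minimum.

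\emph{(c).} Giving every row codimension $q$ is admissible, since the total is $mq$. By (a) each row then attains $\sigma^2\sum_{j>q}\lambda_j(\Xi_i)$. For any shared $U$ of codimension $q$, each summand obeys $D_i(U)\ge\min_{\operatorname{codim}V=q}D_i(V)$. Summing over $i$ shows the per-unit value is at most $\sum_iD_i(U)$, and hence at most the shared minimum.

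\emph{(d).} If the tokens of $S_i$ span the row space of $\tilde X_{O_i}$, then $U_i^\perp=\operatorname{span}\{\tilde x_t:t\in S_i\}$ is that row space. Its dimension is $r_i$, so $\operatorname{codim}U_i=r_i$. Moreover $\ker\tilde X_{S_i}=\ker\tilde X_{O_i}$, so $\tilde X_{O_i}P_{U_i}=0$ and $D_i(U_i)=0$. This is the minimum because $D_i\ge0$. When $C\ge|O_i|$ we have $S_i=O_i$ by Proposition~\ref{prop:zero}b, so the spanning hypothesis holds.

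I expect no real obstacle. The only delicate points are the sign argument for the open$\to$closed tokens at $\theta=0$ and the Ky Fan step.
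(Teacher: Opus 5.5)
Your proof is correct and follows essentially the paper's route: the same pointwise bound on closing tokens at $\theta=0$ (the paper's Lemma~\ref{lem:open-ub}), the same trace computation, a Ky Fan argument with the same diagonal weights $c_j=e_j^\top P_Ve_j\in[0,1]$ summing to $q$, and the same arguments for (b)--(d). There are two small differences. You close the Ky Fan linear-programming step by a greedy appeal, whereas the paper gives an explicit comparison against $\lambda_q$. And although your plan says (d) comes from Theorem~\ref{thm:budget}(a), your actual argument, like the paper's, works directly from the equality of the two row spaces and their orthogonal complements.
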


\begin{lemma}\label{lem:open-ub}
Let $\theta=0$, fix unit $i$ at the base inputs, and let $u\in\mathbb R^{d+1}$. Unit $i$'s in-cell plus
open$\to$closed terms for the update $u$ are at most $\|\tilde X_{O_i}u\|^2$, with equality when no token
of $O_i$ closes.
\end{lemma}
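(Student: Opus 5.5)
The plan is to read the bound off Theorem~\ref{thm:decomp} directly, one token of $O_i$ at a time. Write $z_t=z_i(x_t)$ and $\hat z_t=z_t+a_t$ with $a=\tilde X_{O_i}u$. At $\theta=0$ the tokens of $O_i$ have $z_t>0$. These are exactly the tokens in $S_{++}\cup S_{+-}$, the two sets over which the in-cell and open$\to$closed terms run. The tokens closed at base enter neither term, so the sum we must bound is
\[
\sum_{t\in O_i,\ \hat z_t>0}a_t^2+\sum_{t\in O_i,\ \hat z_t\le0}z_t^2 .
\]
The right-hand side is $\|\tilde X_{O_i}u\|^2=\sum_{t\in O_i}a_t^2$. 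It therefore suffices to compare the two sums term by term.

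For $t\in S_{++}$ the summand is $a_t^2$ on both sides, so these terms agree. For $t\in S_{+-}$ we have $z_t>0\ge\hat z_t=z_t+a_t$, hence $a_t\le -z_t<0$ and $|a_t|\ge z_t>0$. This gives $z_t^2\le a_t^2$. Summing over $O_i$ yields the inequality. If no token of $O_i$ closes, then $S_{+-}=\varnothing$, both sums run over $S_{++}=O_i$ only, and equality holds. The case $O_i=\varnothing$ is trivial, since both sides are $0$.

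The only step needing care is the sign argument on $S_{+-}$. It is the one place where $\theta=0$ is used. For $\theta>0$ the closing token contributes $z_t^2$ while only $|a_t|\ge z_t-\theta$ is guaranteed, so the bound could fail. I would state explicitly that the argument relies on the threshold being at the origin.
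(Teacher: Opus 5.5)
Your proposal is correct and follows the paper's proof: compare token by token over $O_i$, note that a token that stays open contributes $a_t^2$, and that a closing token has $\hat z_t\le0<z_t$, so $z_t^2\le(z_t-\hat z_t)^2=a_t^2$, with equality when $S_{+-}=\varnothing$. Your remark that this is the only place $\theta=0$ is used, and that the bound can fail for $\theta>0$, is also accurate.
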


\begin{proof}
Let $t\in O_i$, so that $z_t>0$ and $\hat z_t=z_t+\langle u,\tilde x_t\rangle$. If $t$ stays open, it
contributes $(\hat z_t-z_t)^2=\langle u,\tilde x_t\rangle^2$ to the in-cell term. If $t$ closes, then
$\hat z_t\le0<z_t$, so $z_t-\hat z_t\ge z_t>0$, and its contribution $z_t^2$ to the open$\to$closed term is
at most $(z_t-\hat z_t)^2=\langle u,\tilde x_t\rangle^2$. Tokens off $O_i$ enter neither term. Summing over
$t\in O_i$ gives the bound $\sum_{t\in O_i}\langle u,\tilde x_t\rangle^2=\|\tilde X_{O_i}u\|^2$, and every
step is an equality when no token of $O_i$ closes.
\end{proof}

\begin{lemma}[Ky Fan]\label{lem:kyfan}
Let $\Xi\in\mathbb R^{n\times n}$ be symmetric positive semidefinite with eigenvalues
$\lambda_1\ge\dots\ge\lambda_n$, and let $Q\in\mathbb R^{n\times n}$ be an orthogonal projector of rank $q$,
$0\le q\le n$. Then $\operatorname{tr}(\Xi Q)\le\sum_{j\le q}\lambda_j$, with equality when $Q$ projects onto a
top-$q$ eigenspace of $\Xi$.
\end{lemma}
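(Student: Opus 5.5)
The plan is to diagonalise $\Xi$ and reduce the trace to a weighted sum of eigenvalues with constrained weights. By the spectral theorem, write $\Xi=\sum_{j=1}^n\lambda_jv_jv_j^\top$ with $\{v_j\}$ an orthonormal basis of $\mathbb R^n$. Then
\[
\operatorname{tr}(\Xi Q)=\sum_{j=1}^n\lambda_j\,v_j^\top Qv_j=\sum_{j=1}^n\lambda_j c_j,\qquad c_j:=v_j^\top Qv_j=\|Qv_j\|^2 .
\]
The second form of $c_j$ uses $Q=Q^\top=Q^2$.

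Next we record the two constraints on the weights $c_j$. First, $0\le c_j\le\|v_j\|^2=1$, since an orthogonal projector does not increase norms. Second, $\sum_jc_j=\operatorname{tr}(Q)=q$, because the trace is basis-independent and a rank-$q$ orthogonal projector has $q$ eigenvalues equal to $1$ and the rest $0$.

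The main step is the elementary rearrangement bound: maximising $\sum_j\lambda_jc_j$ over $c\in[0,1]^n$ with $\sum c_j=q$, for non-increasing $\lambda_j$, gives at most $\sum_{j\le q}\lambda_j$. Concretely,
\[
\sum_{j\le q}\lambda_j-\sum_j\lambda_jc_j=\sum_{j\le q}\lambda_j(1-c_j)-\sum_{j>q}\lambda_jc_j\ \ge\ \lambda_q\Big(\sum_{j\le q}(1-c_j)-\sum_{j>q}c_j\Big)=\lambda_q\big(q-\textstyle\sum_jc_j\big)=0 .
\]
The inequality holds because $1-c_j\ge0$ with $\lambda_j\ge\lambda_q$ for $j\le q$, and $c_j\ge0$ with $\lambda_j\le\lambda_q$ for $j>q$. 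The edge case $q=0$ is trivial, since then $Q=0$ and both sides are $0$. Note that positive semidefiniteness is not even needed for this step; only symmetry is.

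For equality, we check the attaining projector directly. Let $Q$ project onto the span of orthonormal eigenvectors $v_1,\dots,v_q$ for $\lambda_1,\dots,\lambda_q$, which is a top-$q$ eigenspace by the paper's definition. Then $Q=\sum_{j\le q}v_jv_j^\top$, and completing to an eigenbasis gives $c_j=1$ for $j\le q$ and $c_j=0$ otherwise, so the trace equals $\sum_{j\le q}\lambda_j$.

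One point needs care. A top-$q$ eigenspace chosen with different eigenvectors under repeated eigenvalues is still spanned by some orthonormal eigenvectors carrying exactly $\lambda_1,\dots,\lambda_q$. The computation can therefore be run in a basis adapted to that choice, so no ambiguity arises. There is no real obstacle here; the only step needing thought is the weighted-sum inequality, which the single displayed comparison against $\lambda_q$ settles.
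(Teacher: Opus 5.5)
Your proof is correct and follows the paper's argument essentially step for step. It diagonalises $\Xi$ and writes the trace with weights $c_j=\|Qe_j\|^2\in[0,1]$ summing to $q$, compares termwise against $\lambda_q$ (with $q=0$ handled separately), and checks equality directly on a projector spanned by top eigenvectors; your remark that positive semidefiniteness is not used is also accurate.
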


\begin{proof}
Let $e_1,\dots,e_n$ be orthonormal eigenvectors with $\Xi e_j=\lambda_je_j$, so that
$\Xi=\sum_j\lambda_je_je_j^\top$ and $\operatorname{tr}(\Xi Q)=\sum_j\lambda_jc_j$ with $c_j=e_j^\top Qe_j$.
Since $Q$ is symmetric and idempotent, $c_j=\|Qe_j\|^2\in[0,1]$, and $\sum_jc_j=\operatorname{tr}Q=q$. If
$q=0$, then $Q=0$ and both sides are $0$. If $q\ge1$,
\[
\sum_j\lambda_jc_j-\sum_{j\le q}\lambda_j=\sum_{j\le q}\lambda_j(c_j-1)+\sum_{j>q}\lambda_jc_j
\le\lambda_q\Bigl(\sum_{j\le q}(c_j-1)+\sum_{j>q}c_j\Bigr)=\lambda_q\Bigl(\sum_jc_j-q\Bigr)=0,
\]
where for $j\le q$ we used $c_j-1\le0$ and $\lambda_j\ge\lambda_q$, and for $j>q$ we used $c_j\ge0$ and
$\lambda_j\le\lambda_q$. If $Q$ projects onto the span of orthonormal eigenvectors $f_1,\dots,f_q$ with
eigenvalues $\lambda_1,\dots,\lambda_q$, then
$\operatorname{tr}(\Xi Q)=\sum_{l\le q}f_l^\top\Xi f_l=\sum_{l\le q}\lambda_l$.
\end{proof}

\begin{proof}[Proof of the proposition]
The bound and its equality case are Lemma~\ref{lem:open-ub} applied to the update $P_{U_i}u_i$. For the
expectation, let $U$ be a subspace. Since $P_U$ is symmetric and idempotent,
$\|\tilde X_{O_i}P_Uu_i\|^2=u_i^\top P_U\Xi_iP_Uu_i=\operatorname{tr}(P_U\Xi_iP_U\,u_iu_i^\top)$, and linearity of
the trace and of the expectation gives
\[
D_i(U)=\operatorname{tr}\bigl(P_U\Xi_iP_U\,\mathbb E[u_iu_i^\top]\bigr)=\sigma^2\operatorname{tr}(P_U\Xi_iP_U)
=\sigma^2\operatorname{tr}(\Xi_iP_U^2)=\sigma^2\operatorname{tr}(\Xi_iP_U),
\]
where the third equality is cyclicity of the trace.

\emph{(a)} Let $\operatorname{codim}U=q$ and $Q=I-P_U$, the orthogonal projector onto $U^\perp$, of rank
$q$. By Lemma~\ref{lem:kyfan},
$\operatorname{tr}(\Xi_iP_U)=\operatorname{tr}\Xi_i-\operatorname{tr}(\Xi_iQ)\ge\sum_j\lambda_j(\Xi_i)-\sum_{j\le
q}\lambda_j(\Xi_i)=\sum_{j>q}\lambda_j(\Xi_i)$, with equality when $U^\perp$, the range of $Q$, is a top-$q$
eigenspace of $\Xi_i$. Such a subspace exists, since $\Xi_i$ has an orthonormal eigenbasis, so the minimum is
attained and equals $\sigma^2\sum_{j>q}\lambda_j(\Xi_i)$.

\emph{(b)} For a shared $U$, linearity gives
$\sum_iD_i(U)=\sigma^2\sum_i\operatorname{tr}(\Xi_iP_U)=\sigma^2\operatorname{tr}(\bar\Xi P_U)$. Exchanging the
order of summation,
$\bar\Xi=\sum_i\sum_{t\in O_i}\tilde x_t\tilde x_t^\top=\sum_t\#\{i:t\in O_i\}\,\tilde x_t\tilde x_t^\top=\sum_tn_t\tilde x_t\tilde x_t^\top$.
The matrix $\bar\Xi$ is symmetric positive semidefinite, and the argument of (a) with $\bar\Xi$ in place of
$\Xi_i$ gives the minimiser and the minimum.

\emph{(c)} The allocation $\operatorname{codim}U_i=q$ for every $i$ has total codimension $mq$, so it is
admissible, and choosing each $U_i$ by (a) gives the total $\sigma^2\sum_i\sum_{j>q}\lambda_j(\Xi_i)$; the
minimum over admissible allocations is at most this value. For every subspace $U$ of codimension $q$,
$\sum_iD_i(U)\ge\sum_i\min_{\operatorname{codim}U_i=q}D_i(U_i)=\sigma^2\sum_i\sum_{j>q}\lambda_j(\Xi_i)$,
because each term is at least its own minimum; taking the minimum over $U$ gives the second inequality.
Since the allocation used has total codimension exactly $mq$, the comparison also holds among per-unit
subspaces of the same total codimension as the shared one.

\emph{(d)} Since $S_i\subseteq O_i$, $\operatorname{span}\{\tilde x_t:t\in S_i\}$ is contained in the row
space of $\tilde X_{O_i}$, and by hypothesis the two are equal. Hence
$U_i=\ker\tilde X_{S_i}=(\operatorname{span}\{\tilde x_t:t\in S_i\})^\perp$ equals the orthogonal complement
of the row space of $\tilde X_{O_i}$, which is $\ker\tilde X_{O_i}$, of codimension $r_i$ by rank--nullity.
Every $v\in U_i$ has $\tilde X_{O_i}v=0$, so $\tilde X_{O_i}P_{U_i}=0$, hence $\Xi_iP_{U_i}=0$ and
$D_i(U_i)=0$. Since $D_i(U)\ge0$ for every $U$, this is the minimum. If $C\ge|O_i|$, then $S_i=O_i$ by
Proposition~\ref{prop:zero}b, and the hypothesis holds.
\end{proof}

\paragraph{Proposition~\ref{prop:freed}: the freed term is a near-wall term.}
Here unit $i$, the base inputs and an update $u\in\mathbb R^{d+1}$ of row $i$ are fixed, so
that token $t$ has new pre-activation $\hat z_t=z_i(x_t)+a_t$ with $a_t=\langle u,\tilde x_t\rangle$. For
a closed token $t\notin O_i$, $\gamma_t=\theta-z_i(x_t)\ge0$ is its margin. A closed token \emph{opens} if
$\hat z_t>\theta$, and the closed$\to$open term is $\sum_{t\in S_{-+}}\hat z_t^2$.

\begin{proposition*}[Full form of Proposition~\ref{prop:freed}]
\textbf{(a)} A closed token $t$ opens if and only if $a_t>\gamma_t$, and the closed$\to$open term equals
$\sum_{t\notin O_i,\,a_t>\gamma_t}(\theta+a_t-\gamma_t)^2$.
\par\textbf{(b)} Only near-wall tokens, those of $N(u)=\{t\notin O_i:\gamma_t<\|u\|\,\|\tilde x_t\|\}$, can open.
\par\textbf{(c)} At $\theta=0$ the term is $\sum_{t\notin O_i}(a_t-\gamma_t)_+^2\le\|\tilde X_{N(u)}u\|^2$.
\end{proposition*}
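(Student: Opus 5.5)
The argument is direct bookkeeping on one token at a time, using only the definitions of the four sets in Theorem~\ref{thm:decomp} and the Cauchy--Schwarz inequality. Fix a closed token $t\notin O_i$, so $z_i(x_t)=\theta-\gamma_t$ with $\gamma_t\ge0$. At the base inputs $\hat z_t=\theta-\gamma_t+a_t$.

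For (a), the token opens iff $\hat z_t>\theta$, that is, iff $a_t>\gamma_t$. A closed token that opens lies in $S_{-+}$ and one that does not lies in $S_{--}$. Tokens of $O_i$ are in $S_{++}\cup S_{+-}$ and never in $S_{-+}$. Hence $S_{-+}=\{t\notin O_i:a_t>\gamma_t\}$, and substituting $\hat z_t=\theta+a_t-\gamma_t$ into the closed$\to$open sum of Eq.~\eqref{eq:identity} gives the stated formula.

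For (b), Cauchy--Schwarz gives $a_t\le|a_t|\le\|u\|\,\|\tilde x_t\|$. If $t$ opens, then by (a) $\gamma_t<a_t\le\|u\|\,\|\tilde x_t\|$, so $t\in N(u)$. This is the same estimate already used in Proposition~\ref{prop:zero}(d).

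For (c), set $\theta=0$. The summand $(a_t-\gamma_t)^2\mathds 1[a_t>\gamma_t]$ equals $(a_t-\gamma_t)_+^2$, which also vanishes when $a_t\le\gamma_t$. So the term is $\sum_{t\notin O_i}(a_t-\gamma_t)_+^2$. By (b), every nonzero summand has $t\in N(u)$, so the sum can be restricted to $N(u)$. On that set, $\gamma_t\ge0$ and $a_t>\gamma_t$ give $0<a_t-\gamma_t\le a_t$, hence $(a_t-\gamma_t)_+^2\le a_t^2$. For the remaining $t\in N(u)$, $0\le a_t^2$ holds trivially. Summing over $N(u)$ gives $\sum_{t\in N(u)}a_t^2=\|\tilde X_{N(u)}u\|^2$.

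There is no real obstacle. The only points needing care are that the restriction to $N(u)$ uses the strict inequality in (b), and that the comparison $(a-\gamma)_+^2\le a^2$ uses $\gamma\ge0$ together with $a-\gamma>0$. Since $\mathcal X$ is a multiset, all sums are over indices $t$, so repeated tokens are counted with their multiplicity.
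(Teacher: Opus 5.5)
Your proposal is correct and follows the paper's proof essentially step for step. For (a) you write $\hat z_t=\theta-\gamma_t+a_t$ and identify the opened closed tokens with $S_{-+}$; for (b) you use Cauchy--Schwarz; and for (c) you bound each opened token's $(a_t-\gamma_t)^2$ by $a_t^2$ and enlarge the sum to $N(u)$. Your explicit checks that tokens of $O_i$ never enter $S_{-+}$ and that non-opening tokens of $N(u)$ contribute zero are details the paper leaves implicit.
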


\begin{proof}
\emph{(a)} For a closed token, $\hat z_t=z_i(x_t)+a_t=\theta-\gamma_t+a_t$, which exceeds $\theta$ if and only
if $a_t>\gamma_t$. The opened tokens form $S_{-+}$, and each contributes $\hat z_t^2=(\theta+a_t-\gamma_t)^2$.

\emph{(b)} By the Cauchy--Schwarz inequality $a_t\le\|u\|\,\|\tilde x_t\|$, so an opened token, which has
$a_t>\gamma_t$, satisfies $\gamma_t<\|u\|\,\|\tilde x_t\|$ and lies in $N(u)$.

\emph{(c)} At $\theta=0$, (a) gives the term $\sum_{a_t>\gamma_t}(a_t-\gamma_t)^2=\sum_{t\notin O_i}(a_t-\gamma_t)_+^2$.
An opened token has $0<a_t-\gamma_t\le a_t$, because $\gamma_t\ge0$, and lies in $N(u)$ by (b); so the term is at
most $\sum_{\mathrm{opened}}a_t^2\le\sum_{t\in N(u)}a_t^2=\|\tilde X_{N(u)}u\|^2$.
\end{proof}

\paragraph{Proposition~\ref{prop:firstorder}: first-order criteria inherit the partition.}
Here the layer's inputs $\tilde x_t$ are fixed; the matrix $Z=(z_{t,i})\in\mathbb R^{T\times m}$ has
$z_{t,i}=\langle\tilde w_i,\tilde x_t\rangle$, the post-activations are $h=(h_{t,i})=\act_\theta(Z)$
entrywise, and $L=g(h)$ for a function $g:\mathbb R^{T\times m}\to\mathbb R$, the downstream network per
token or per sequence, whose partial derivatives exist at $h$. With the convention $\act_\theta'(\theta):=0$ of automatic differentiation,
$\partial L/\partial z_{t,i}:=\act_\theta'(z_{t,i})\,\partial g/\partial h_{t,i}$ and
$\nabla_{\tilde w_i}L:=\sum_t(\partial L/\partial z_{t,i})\,\tilde x_t$ are the values that automatic
differentiation returns.

\begin{proposition*}[Full form of Proposition~\ref{prop:firstorder}]
\textbf{(a)} The derivative is $\partial L/\partial z_{t,i}=\mathds 1[z_{t,i}>\theta]\,\partial g/\partial
h_{t,i}$, the true partial derivative wherever $z_{t,i}\neq\theta$; it vanishes on every closed pair.
\par\textbf{(b)} The row gradient $\nabla_{\tilde w_i}L=\sum_{t\in O_i}(\partial L/\partial z_{t,i})\,\tilde x_t$
lies in the row space of $\tilde X_{O_i}$, and it is $0$ for a unit open on no token.
\par\textbf{(c)} The Fisher $F_i=\mathbb E[\nabla_{\tilde w_i}\ell\,\nabla_{\tilde w_i}\ell^\top]$ of row $i$,
over per-sample losses $\ell$ of this form, annihilates $\ker\tilde X_{O_i}$, with $O_i$ taken over the
Fisher data.
\par\textbf{(d)} EWC's diagonal $F_{i,jj}=\mathbb E[(\sum_{t\in O_i}\partial\ell/\partial z_{t,i}\,\tilde
x_{t,j})^2]$, $j\in[d+1]$, and its per-token form depend on the data only through open pairs: every term
they contain is indexed by an open pair of unit $i$. Both vanish for a unit open on no token.
\par\textbf{(e)} A selector that takes the $k_i$ largest $|\partial L/\partial z_{t,i}|$ of unit $i$ picks only
open pairs whenever at least $k_i$ open pairs of unit $i$ have $\partial g/\partial h_{t,i}\neq0$.
\end{proposition*}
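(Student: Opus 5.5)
The proof is a direct unwinding of the chain rule with the automatic-differentiation convention, so I will prove the five parts in order, each from the previous one.

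\emph{(a)} Since $\act_\theta(u)=u\cdot\mathds 1[u>\theta]$, on the open set $\{u>\theta\}$ the function is the identity, with derivative $1$. On the open set $\{u<\theta\}$ it is identically $0$, with derivative $0$. At $u=\theta$ the convention $\act_\theta'(\theta):=0$ applies. Hence $\act_\theta'(z_{t,i})=\mathds 1[z_{t,i}>\theta]$ everywhere. Wherever $z_{t,i}\neq\theta$, $\act_\theta$ is differentiable at $z_{t,i}$ and $h_{t,i}$ depends on $z_{t,i}$ alone. The one-variable chain rule then makes the product $\act_\theta'(z_{t,i})\,\partial g/\partial h_{t,i}$ the true partial derivative. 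Here I only need existence of the partials of $g$, because varying $z_{t,i}$ moves a single coordinate of $h$. A closed pair has $z_{t,i}\le\theta$, so the indicator kills it.

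\emph{(b)} Substitute (a) into the defining sum $\nabla_{\tilde w_i}L=\sum_t(\partial L/\partial z_{t,i})\tilde x_t$. The terms with $t\notin O_i$ vanish, so the gradient is a linear combination of the rows $\tilde x_t$, $t\in O_i$. It therefore lies in the row space of $\tilde X_{O_i}$, and it is the empty sum $0$ when $O_i=\varnothing$.

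\emph{(c)} Let $O_i$ be the open set over the Fisher data, so every sample's open pairs lie in it. By (b), each per-sample gradient $\nabla_{\tilde w_i}\ell$ lies in $\operatorname{row}\tilde X_{O_i}=(\ker\tilde X_{O_i})^\perp$. Take $v\in\ker\tilde X_{O_i}$. Then $F_iv=\mathbb E[\nabla\ell\,\langle\nabla\ell,v\rangle]=0$, since every inner product in the expectation is zero. The only care needed is the bookkeeping: if the Fisher is taken over several batches of tokens, $O_i$ must be the union of their open sets. That is what ``taken over the Fisher data'' means.

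\emph{(d)} Take the $j$-th coordinate of the expression in (b). This gives $F_{i,jj}=\mathbb E[(\sum_{t\in O_i}\partial\ell/\partial z_{t,i}\,\tilde x_{t,j})^2]$. Expanding the square gives only products of terms indexed by open pairs. The per-token form $\mathbb E[(\partial\ell/\partial z_{t,i})^2\tilde x_{t,j}^2]$ is zero on closed pairs by (a). If $O_i=\varnothing$, both forms vanish.

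\emph{(e)} By (a), every closed pair has $|\partial L/\partial z_{t,i}|=0$. By hypothesis, at least $k_i$ open pairs have a strictly positive value. The $k_i$ largest values are therefore all positive and are attained only by open pairs. The one delicate point is ties. A closed pair can tie only at the value $0$, and $0$ is strictly below the $k_i$-th largest value, so no tie-breaking rule can admit a closed pair. Part (e) is the only step with any subtlety, and this strict positivity is what settles it.
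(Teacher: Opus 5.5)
Your proposal is correct and follows the paper's proof essentially step for step. Both use the chain rule through a single coordinate of $h$ for (a), so only the partials of $g$ are needed, substitution into the row-gradient sum for (b), the per-sample row-space argument with $O_i$ taken as the union over the Fisher data for (c), and expansion of the diagonal entry for (d). For (e), both use the strict positivity of the $k_i$-th largest value to rule out ties with closed pairs; the only detail the paper adds is the explicit assumption that the expectation defining $F_i$ exists.
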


\begin{proof}
The inputs $\tilde x_t$ of the layer do not depend on its own weights, and $L$ depends on $Z$ only through
$h=\act_\theta(Z)$.

\emph{(a)} On $\{v\in\mathbb R:v\neq\theta\}$ the scalar map $\act_\theta$ is locally affine with derivative
$\mathds 1[v>\theta]$; at $v=\theta$ it jumps if $\theta>0$ and has a kink if $\theta=0$. Let
$z_{t,i}\neq\theta$. For $|\eta|<|z_{t,i}-\theta|$, changing $z_{t,i}$ to $z_{t,i}+\eta$ changes only the
entry $h_{t,i}$ of $h$, and by $\eta\,\mathds 1[z_{t,i}>\theta]$, so the chain rule gives
$\partial L/\partial z_{t,i}=\mathds 1[z_{t,i}>\theta]\,\partial g/\partial h_{t,i}$, which is $0$ when
$z_{t,i}<\theta$. With $\act_\theta'(\theta):=0$ the same formula holds at pairs with $z_{t,i}=\theta$ and
gives $0$ there, so the derivative vanishes on the whole closed set $\{z_{t,i}\le\theta\}$.

\emph{(b)} By definition and (a),
$\nabla_{\tilde w_i}L=\sum_t(\partial L/\partial z_{t,i})\tilde x_t=\sum_{t\in O_i}(\partial L/\partial
z_{t,i})\tilde x_t$, a linear combination of the rows of $\tilde X_{O_i}$, and the empty combination $0$
when $O_i=\varnothing$.

\emph{(c)} Each sample has its own tokens and downstream map, the expectation defining $F_i$ is assumed
to exist, and $O_i$ is the set of tokens, over all samples, on which unit $i$ is open, so that $\ker\tilde X_{O_i}=\{v:\langle v,\tilde x_t\rangle=0\text{ for
all }t\in O_i\}$. For such $v$, (b) applied to each sample gives $\langle\nabla_{\tilde w_i}\ell,v\rangle=0$
for every sample, hence $F_iv=\mathbb E[\nabla_{\tilde w_i}\ell\,\langle\nabla_{\tilde w_i}\ell,v\rangle]=0$.

\emph{(d)} The $j$-th diagonal entry of $F_i$ is $\mathbb E[(\nabla_{\tilde w_i}\ell)_j^2]$, which by (b)
equals $\mathbb E[(\sum_{t\in O_i}\partial\ell/\partial z_{t,i}\,\tilde x_{t,j})^2]$, with $O_i$ the open
set of the sample; every term is indexed by an open pair, and the entry is $0$ when unit $i$ is open on
no token. In the per-token form $\mathbb E_t[(\partial\ell_t/\partial z_{t,i}\,\tilde x_{t,j})^2]$ the terms of
closed pairs are $0$ by (a).

\emph{(e)} By (a), every closed pair has $|\partial L/\partial z_{t,i}|=0$, and an open pair has
$|\partial L/\partial z_{t,i}|=|\partial g/\partial h_{t,i}|$. If at least $k_i$ open pairs have
$\partial g/\partial h_{t,i}\neq0$, then the $k_i$-th largest value is positive, so every one of the $k_i$
selected pairs, whatever the tie-breaking, has a positive value and is open.
\end{proof}

\begin{lemma}[Gated units]\label{lem:glu}
Let $\theta\ge0$, and let unit $i$ of a gated layer output $y_t=\act_\theta(g_t)\,v_t$ with
$g_t=\langle\tilde w^g,\tilde x_t\rangle$ and $v_t=\langle\tilde w^v,\tilde x_t\rangle$ for gate and up
weights $\tilde w^g,\tilde w^v\in\mathbb R^{d+1}$, and let $O=\{t:g_t>\theta\}$. If the updates
$u^g,u^v\in\mathbb R^{d+1}$ satisfy $\tilde X_Ou^g=\tilde X_Ou^v=0$, then at the base inputs
$\hat y_t=y_t$ for every $t\in O$, and for $t\notin O$, $\hat y_t\neq y_t=0$ only if $\hat g_t>\theta$.
\end{lemma}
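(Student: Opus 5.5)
The proof works token by token at the base inputs, so the inputs $\tilde x_t$ are fixed and only the gate and up weights move. Write $\hat g_t=g_t+\langle u^g,\tilde x_t\rangle$ and $\hat v_t=v_t+\langle u^v,\tilde x_t\rangle$, and split the tokens into $O$ and its complement, exactly as the proof of Proposition~\ref{prop:zero}(b) does for a single ReLU unit.

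\emph{Open tokens.} Let $t\in O$. The hypothesis $\tilde X_Ou^g=0$ says that the row-$t$ entry $\langle u^g,\tilde x_t\rangle$ vanishes, so $\hat g_t=g_t>\theta$. The gate stays open with the same value, $\act_\theta(\hat g_t)=\hat g_t=g_t=\act_\theta(g_t)$; this is the in-cell reading of Theorem~\ref{thm:decomp} for the gate row, with zero difference. In the same way $\tilde X_Ou^v=0$ gives $\hat v_t=v_t$. Hence $\hat y_t=\act_\theta(\hat g_t)\hat v_t=g_tv_t=y_t$.

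\emph{Closed tokens.} Let $t\notin O$. Then $g_t\le\theta$, so $\act_\theta(g_t)=0$ and $y_t=0$. Suppose $\hat g_t\le\theta$. Then $\act_\theta(\hat g_t)=0$ as well, so $\hat y_t=0=y_t$ whatever $\hat v_t$ is. Taking the contrapositive, $\hat y_t\neq y_t$ forces $\hat g_t>\theta$. This is the closed$\to$open case of Proposition~\ref{prop:duality}(a) applied to the gate row.

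No step here is a real obstacle, since everything is pointwise algebra. The only care needed is to say where each hypothesis enters. The up-row constraint matters only on $O$: off $O$, the product with a zero gate kills any change in $v_t$. Also, the statement gives a necessary condition ("only if"), not an equivalence. Even when the gate opens, $\hat y_t$ could still equal $0$ if $\hat v_t=0$, so the proof should not claim the converse.
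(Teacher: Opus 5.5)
Your proof is correct and matches the paper's argument essentially line for line. On $O$, the two kernel conditions give $\hat g_t=g_t>\theta$ and $\hat v_t=v_t$; off $O$, $y_t=0$, and $\hat y_t=\act_\theta(\hat g_t)\hat v_t$ vanishes unless $\hat g_t>\theta$ (the links to Theorem~\ref{thm:decomp} and Proposition~\ref{prop:duality}(a) are commentary, not needed steps).
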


\begin{proof}
At the base inputs $\hat g_t=g_t+\langle u^g,\tilde x_t\rangle$, $\hat v_t=v_t+\langle u^v,\tilde
x_t\rangle$ and $\hat y_t=\act_\theta(\hat g_t)\hat v_t$. For $t\in O$ the hypothesis gives $\hat g_t=g_t>\theta$
and $\hat v_t=v_t$, so $\hat y_t=g_tv_t=y_t$. For $t\notin O$, $g_t\le\theta$ gives $y_t=0$, and
$\hat y_t=\act_\theta(\hat g_t)\hat v_t$ is $0$ unless $\hat g_t>\theta$.
\end{proof}

%% file: app_cl_setup.tex
\section{Fine-tuning setup and implementation}\label{app:cl:setup}

\paragraph{Models and text.} Every run updates only the \texttt{fc1} matrix (weights and bias) of every decoder
block of the pre-LN OPT-125m or OPT-1.3b model \citep{zhang2022opt}, in \texttt{fp32}; its rows are the wall units
of Definition~\ref{def:wall}. On OPT-6.7b it updates the \texttt{fc1} matrices of the last four of the $32$ blocks,
in \texttt{fp32}, with the rest of the model in \texttt{bfloat16}; its units fire on $6.9\%$ of the support tokens
(a mean of $140.7$ of $2\,048$, against $72.1$ on OPT-1.3b), and cap $128$ gives $\bar k=68.7$ and
preserves $67.4\%$ of the active post-activation energy exactly. Text sets are disjoint character ranges. The new
task trains on characters $[0,\,400\,000)$ of its corpus file and is evaluated on $[600\,000,\,1\,200\,000)$
($64$ windows of $128$ tokens); the old support $X_{\mathrm{old}}$, on which every projector is built, is MiniPile
$[0,\,400\mathrm{k})$ \citep{kaddour2023minipile}, $16\times128=2\,048$ tokens, and the old probe, on which
forgetting is scored, is MiniPile $[600\mathrm{k},\,1.0\mathrm{M})$, $8\times128=1\,024$ tokens.

\paragraph{Training and measurements.} The new task is $512$ fresh $128$-token windows presented once, four per
AdamW step (accumulation $4$, $128$ steps), with $10$ warmup steps, cosine decay to zero and peak learning rate
$10^{-4}$ on every corpus and method. Data order is permuted under the seed, so two methods on one seed see the same
windows in the same order; seeds are $0$--$2$, and $0$--$5$ on the settings with $n=6$. Every number is read at step
$128$: \emph{forgetting} is the old probe's NLL minus its base value and \emph{learning} the held-out new-task NLL
minus its base value (more negative is better), and every run logs the layer form of Eq.~\eqref{eq:identity} at
every evaluation (Appendix~\ref{app:identity}). Base-model difficulty is $0.80$ nats
per character on WikiText-103 \citep{merity2017pointer} (\texttt{wt103}) and $1.02$ on \texttt{python}, Python
files from the validation split of CodeParrot-clean \citep{codeparrot2022clean}. On \texttt{wt103} unconstrained
fine-tuning of OPT-1.3b forgets $+0.2684\pm0.0192$ and learns $-0.5878\pm0.0141$ over three seeds, and OPT-125m
forgets $+0.1958$; on \texttt{python} it forgets $+0.1154$, while every constrained method ends below base on the old
probe, so there a more negative forgetting number is a larger improvement.

\paragraph{The projector.} It is built once from the base model. For each wall the base inputs on
$X_{\mathrm{old}}$, with a ones column appended so that the bias is projected with the weights, form
$X_a\in\mathbb R^{T\times(d+1)}$, $T=2\,048$. For unit $i$, with base post-activations
$h^{(i)}_t=\act_\theta(z_i(x_t))$ and open set $O_i=\{t: z_i(x_t)>\theta\}$, $S_i$ holds the tokens of the $C$
largest positive $h^{(i)}_t$, so $k_i=|S_i|=\min(C,|O_i|)$ as in Eq.~\eqref{eq:ki}, and $A_i=X_a[S_i,:]$. The Gram
$\Gamma_i=A_iA_i^{\top}$ is gathered from $X_aX_a^{\top}$ into a $C\times C$ array
$\widetilde\Gamma_i=\operatorname{diag}(\Gamma_i,I)$ padded by the identity, so that its factorisation is defined
for every unit, and the ridge is $\rho_i=10^{-4}\,C^{-1}\operatorname{tr}\widetilde\Gamma_i$, that is $10^{-4}$
times the mean of the padded diagonal. The Cholesky factor of $\widetilde\Gamma_i+\rho_iI$ is stored per wall in
\texttt{fp16}, $2mC^2$ bytes, and cast to \texttt{fp32} for each solve; caps $256$ and $512$, and OPT-6.7b, re-factorise inside
each step in blocks of $1\,024$ units instead, and at $C=128$ the two paths agree to $12$ decimal places. After
each optimiser step, with $D$ the cumulative displacement of the wall and its bias (row $u_i^{\top}=(dW_i,db_i)$),
unit $i$'s entries of $DX_a^{\top}$ at $S_i$, padded with zeros, form $r_i$; $y_i=(\widetilde\Gamma_i+\rho_iI)^{-1}r_i$
is scattered back to the positions $S_i$ of a matrix $Y$; and $D\leftarrow D-\alpha\,YX_a$. Because the padding
block is decoupled and carries a zero right-hand side, row $i$ of this update is
\begin{equation*}
u_i\;\leftarrow\;u_i-\alpha\,A_i^{\top}\bigl(\Gamma_i+\rho_iI\bigr)^{-1}A_i\,u_i ,
\end{equation*}
which is Eq.~\eqref{eq:proj}, with $\alpha=1$ for every gate-aware run. The residual the ridge leaves at the
stored base inputs, the summed $|\hat z-z|$ on the constrained pairs as a share of the total $|\hat z-z|$ mass, is
$5.4\times10^{-9}$ to $4.0\times10^{-8}$ on OPT-1.3b at caps $16$ to $128$, between $3.8\times10^{-9}$ and
$3.5\times10^{-7}$ over all $48$ cap-ladder runs on both models, and at most $1.3\times10^{-8}$ on every run of the
necessity comparison (\S\ref{sec:necessity}). At fixed inputs every pair in $S_i$ keeps its base post-activation,
so the cap fixes the share $\sum_i\sum_{t\in S_i}(h^{(i)}_t)^2\big/\sum_i\sum_t(h^{(i)}_t)^2$ of the active
post-activation energy preserved exactly (per wall, averaged over walls): $59.6\%$ at $C=64$ and $71.7\%$ at
$C=128$ on OPT-1.3b.

\paragraph{Baselines and run names.} Adam-NSCL \citep{wang2021adamnscl}, restricted to the walls, diagonalises
the uncentred second moment $T^{-1}X_a^{\top}X_a$ of the same augmented support inputs, forbids its top
$k_{\mathrm{sh}}$ eigendirections in every row, and after each optimiser step replaces the cumulative displacement
by $(1-\alpha)D+\alpha DP$, with $P$ the projector onto their complement. The random-subspace control replaces the
data-derived subspace by a random orthonormal one of exactly the same dimension. Tables name the unconstrained
run \texttt{B1}, a shared-subspace run by its dimension (\texttt{B16k24}: $k_{\mathrm{sh}}=24$) and a gate-aware run
by its cap (\texttt{B17c64}: $C=64$); the selector controls, which keep each unit's $k_i$ and change only which
support tokens fill it, carry a one-letter suffix (Appendix~\ref{app:cl:necessity}).

\subsection{The freed term on held-out text}\label{app:cl:crossmass}

On the old probe, which neither the projector nor the new task sees, every constrained method lowers the freed
closed$\to$open mass to $16$--$25\%$ of unconstrained fine-tuning's in the mean, and lowers its share of the
post-activation change, on all three seeds (Table~\ref{tab:app:crossmass}).

\begin{table}[ht]\centering\footnotesize
\setlength{\tabcolsep}{5pt}
\caption{\textbf{Closed$\to$open mass on the held-out old probe.} OPT-1.3b, WikiText-103 at lr $10^{-4}$,
\texttt{fc1}, seeds $0$--$2$. ``co mass'' is the run's mean level; the difference columns are mean paired per-seed
differences against the unconstrained run, negative on $3$ of $3$ seeds in every row. ``Share'' is of the
probe's total squared post-activation change.}
\label{tab:app:crossmass}
\begin{tabular}{llrrrr}
\toprule
run & constraint & dims/row & co mass & $\Delta$ mass & $\Delta$ share \\
\midrule
\texttt{B1} & none & $0$ & $9\,239$ & --- & --- \\
\midrule
\texttt{B17c16}  & gate, $C=16$  & $9.1$  & $2\,286$ & $-6\,953$ & $-0.0537$ \\
\texttt{B17c64}  & gate, $C=64$  & $23.5$ & $1\,785$ & $-7\,454$ & $-0.0433$ \\
\texttt{B17c128} & gate, $C=128$ & $34.4$ & $1\,695$ & $-7\,544$ & $-0.0366$ \\
\midrule
\texttt{B16k9}   & shared, $k_{\mathrm{sh}}=9$   & $9.0$   & $2\,196$ & $-7\,043$ & $-0.0772$ \\
\texttt{B16k24}  & shared, $k_{\mathrm{sh}}=24$  & $24.0$  & $1\,622$ & $-7\,617$ & $-0.0910$ \\
\texttt{B16k34}  & shared, $k_{\mathrm{sh}}=34$  & $34.0$  & $1\,506$ & $-7\,733$ & $-0.0927$ \\
\bottomrule
\end{tabular}
\end{table}

\subsection{Statistics}\label{app:cl:stats}

Two methods on one model and corpus are compared by a paired $t$-test on their per-seed differences (mean, $95\%$
interval, $p$, number of seeds with the predicted sign), one-sided where the construction predicts the direction
(ours forgetting less than the method it is compared with, or scoring above it on the zero-shot tasks; the gate
forgetting less than a control selector at identical $k_i$) and two-sided otherwise, including every learning
comparison and the cap-$128$ capacity comparisons. Forgetting removed is the paired reduction against the
unconstrained run of the same model, divided by that run's forgetting. Equivalence is two one-sided tests (TOST)
with margin $\delta=0.0068$ nats, which is $2.5\%$ of the unconstrained forgetting of OPT-1.3b on WikiText-103
($0.2684$) and $5.4\%$ of that of OPT-6.7b ($0.1256$), and equals the probe noise floor of the repair
experiments (Appendix~\ref{app:noise}): the $90\%$ interval of the paired difference lies inside
$(-\delta,\delta)$. A table of several rows is also tested as a whole, by a
sign-flip permutation test over all its seed-pairs (each difference standardised by its row's paired standard
deviation, the sum referred to all $2^n$ sign assignments for $n\le22$ pairs and to $2\times10^6$ random ones
otherwise) and by a DerSimonian--Laird random-effects meta-analysis of the row means (pooled difference, $95\%$
interval, between-row variance $\tau^2$).

%% file: app_cl_budget.tex
\section{Matched budget and capacity in full}\label{app:cl:budget}

Table~\ref{tab:app:matched} pairs each cap, with mean per-unit dimension $\bar k$ (Eq.~\eqref{eq:ki}), with
Adam-NSCL at $k_{\mathrm{sh}}=\mathrm{round}(\bar k)$, so both remove the same number of directions per row
(training setup and tests of Appendix~\ref{app:cl:setup}). On OPT-1.3b the ratio of the two forgetting levels, means over each row's seeds, grows
from $1.1$ at $9$ dimensions per row to $4.3$ at $60$: between $15$ and $47$ dimensions per row Adam-NSCL stays at
$0.046$--$0.050$ nats (seeds $0$--$2$), while ours falls from $0.047$ to $0.011$ at $47$ and $0.010$ at $60$, and
learning differs by at most $0.0026$ nats. On OPT-6.7b, cap $128$ at $68.7$ dimensions per row forgets $0.0192$ against Adam-NSCL's $0.0184$ at $69$ (TOST at
$\delta=0.0068$: 90\% CI $[-0.0001,+0.0018]$, equivalent) and learns $0.0109$ nats more on all three seeds.

\begin{table}[ht]\centering\footnotesize
\setlength{\tabcolsep}{1.6pt}
\caption{\textbf{Matched constrained dimensions per row.} Ours at cap $C$ (mean per-unit dimension $\bar k$)
against Adam-NSCL at $k_{\mathrm{sh}}$, paired by seed; $\Delta$ is ours minus Adam-NSCL, so a negative $\Delta$
is ours ahead. Forgetting: one-sided paired $t$-test; learning: two-sided; $\Delta<0$ counts the seed-pairs on
which ours forgets less. Bold: the 95\% CI excludes zero with ours ahead. WikiText-103, \texttt{fc1}, lr
$10^{-4}$, $T=2\,048$; each block has its own no-constraint run, and the pooled lines combine the rows named.}
\label{tab:app:matched}
\begin{tabular}{rrrrrcrrrcr}
\toprule
 & & & & \multicolumn{4}{c}{forgetting} & \multicolumn{3}{c}{learning}\\
\cmidrule(lr){5-8}\cmidrule(lr){9-11}
$C$ & $\bar k$ & $k_{\mathrm{sh}}$ & $n$ & $\Delta$ & 95\% CI & $p$ & $\Delta<0$ & $\Delta$ & 95\% CI & $p$\\
\midrule
\multicolumn{11}{l}{\emph{OPT-1.3b} (gate density $0.0392$, ambient dim.\ $2\,049$; no constraint forgets $+0.2684$)}\\
$16$  & $9.067$  & $9$  & $6$ & $-0.0068$ & $[-0.0246,+0.0109]$ & $0.1839$ & $4/6$ & $+0.0011$ & $[-0.0011,+0.0033]$ & $0.2691$\\
$32$  & $15.048$ & $15$ & $3$ & $-0.0027$ & $[-0.0126,+0.0073]$ & $0.1845$ & $2/3$ & $+0.0011$ & $[-0.0082,+0.0104]$ & $0.6679$\\
$64$  & $23.519$ & $24$ & $6$ & $\mathbf{-0.0111}$ & $[-0.0147,-0.0074]$ & $0.0003$ & $6/6$ & $+0.0015$ & $[-0.0002,+0.0033]$ & $0.0769$\\
$128$ & $34.421$ & $34$ & $6$ & $\mathbf{-0.0195}$ & $[-0.0247,-0.0143]$ & $0.0001$ & $6/6$ & $+0.0025$ & $[-0.0002,+0.0051]$ & $0.0654$\\
$256$ & $47.257$ & $47$ & $3$ & $\mathbf{-0.0357}$ & $[-0.0481,-0.0232]$ & $0.0033$ & $3/3$ & $+0.0011$ & $[-0.0004,+0.0025]$ & $0.0844$\\
$512$ & $60.332$ & $60$ & $3$ & $\mathbf{-0.0325}$ & $[-0.0485,-0.0164]$ & $0.0065$ & $3/3$ & $+0.0026$ & $[+0.0012,+0.0040]$ & $0.0146$\\
\addlinespace[2pt]
\multicolumn{11}{l}{\emph{OPT-125m} (gate density $0.0606$, ambient dim.\ $769$; no constraint forgets $+0.1958$)}\\
$8$   & $5.700$  & $6$  & $3$ & $+0.0348$ & $[+0.0279,+0.0417]$ & $0.9990$ & $0/3$ & $+0.0066$ & $[+0.0008,+0.0124]$ & $0.0389$\\
$32$  & $17.022$ & $17$ & $6$ & $+0.0007$ & $[-0.0039,+0.0052]$ & $0.6373$ & $2/6$ & $\mathbf{-0.0226}$ & $[-0.0249,-0.0203]$ & $<10^{-4}$\\
$128$ & $42.080$ & $42$ & $6$ & $\mathbf{-0.0047}$ & $[-0.0073,-0.0021]$ & $0.0027$ & $6/6$ & $\mathbf{-0.0084}$ & $[-0.0100,-0.0069]$ & $<10^{-4}$\\
$256$ & $62.047$ & $62$ & $6$ & $\mathbf{-0.0102}$ & $[-0.0138,-0.0067]$ & $0.0003$ & $6/6$ & $-0.0003$ & $[-0.0024,+0.0019]$ & $0.7599$\\
\addlinespace[2pt]
\multicolumn{11}{l}{\emph{OPT-6.7b}, last four blocks (gate density $0.0677$, ambient dim.\ $4\,097$; no constraint forgets $+0.1256$)}\\
$128$ & $68.724$ & $69$ & $3$ & $+0.0008$ & $[-0.0006,+0.0023]$ & $0.9359$ & $0/3$ & $\mathbf{-0.0109}$ & $[-0.0115,-0.0103]$ & $0.0001$\\
\bottomrule
\end{tabular}

\smallskip
\parbox{\linewidth}{\footnotesize \emph{Pooled forgetting} (sign-flip permutation over seed-pairs,
one-sided; random effects, DerSimonian--Laird). OPT-1.3b, six rows: $24/27$ lower, $p=1.5\times10^{-6}$
($2\times10^6$ Monte Carlo); $\Delta=-0.0182$ $[-0.0279,-0.0086]$, $\tau^2=1.3\times10^{-4}$. OPT-125m
from $17$ dims/row, three rows: $14/18$ lower, exact $p=5.1\times10^{-4}$; $\Delta=-0.0049$
$[-0.0102,+0.0005]$.}
\end{table}

\subsection{Capacity}\label{app:cl:capacity}

On OPT-1.3b we set our caps against Adam-NSCL at a uniform $k_{\mathrm{sh}}=259$ in every row
(\texttt{B16k259}), the mean dimension GPM's energy criterion selects at $0.95$ on the same support ($259.2$), so
ours constrains $7.5\times$ fewer dimensions per row at cap $128$ ($\bar k=34.4$) and $5.5\times$ fewer at cap $256$
($\bar k=47.3$). At cap $256$ ours forgets $0.0110$ against Adam-NSCL's $0.0255$ on WikiText-103, a
paired difference of $-0.0145$ (95\% CI $[-0.0229,-0.0062]$, one-sided $p=0.0087$, $3$ of $3$), and learns
$0.0518$ nats more ($[-0.0576,-0.0460]$, $p=0.0007$). At cap $128$, on Python ours forgets $0.0216$ nats less
(95\% CI $[-0.0343,-0.0090]$, two-sided $p=0.0180$) and learns $0.0265$ nats more ($[-0.0380,-0.0151]$,
$p=0.0099$), each on $3$ of $3$ seed-pairs; on WikiText-103, over $6$ seed-pairs, it learns $0.0523$ nats more
($[-0.0546,-0.0499]$, $p<10^{-4}$, $6$ of $6$) at a forgetting difference of $-0.0065$ ($[-0.0136,+0.0006]$,
two-sided $p=0.0645$), negative on $5$ of $6$. Against no constraint, cap $256$ removes $95.9\%$ of OPT-1.3b's
forgetting, where Adam-NSCL removes $82.6\%$ at $47$ dimensions per row and $90.5\%$ at $259$, and $97.7\%$ of
OPT-125m's, each method on $3$ of $3$ seeds.

\subsection{The reachable budget}\label{app:cl:density}

\begin{table}[ht]\centering\footnotesize
\caption{\textbf{The reachable budget: base gate density and per-unit open sets.} $p$ is the fraction of
(token, unit) pairs open at the base weights on the $1\,024$-token old probe. $\mathrm{open}_i=|O_i|$ is
counted directly on the $T=2\,048$-token support that builds the projector; its mean
$\overline{\mathrm{open}}$ over units is the right-hand side of Eq.~\eqref{eq:cl:ceiling} and bounds
$\bar k$ ($p\,T$ is about $11\%$ higher because $p$ is measured on the probe). ``Ambient'' is the row
dimension $d+1$.}
\label{tab:app:density}
\begin{tabular}{lrrrrrrrrr}
\toprule
 & & & & & \multicolumn{5}{c}{$\mathrm{open}_i$ over units}\\
\cmidrule(lr){6-10}
model & $p$ & ambient & $\overline{\mathrm{open}}$ & $p\,T$ & median & p25 & p99 & never open & mean/median\\
\midrule
OPT-1.3b & $0.0392$ & $2\,049$ & $72.1$  & $80$  & $11$ & $1$ & $954$    & $19.5\%$ & $6.6$\\
OPT-125m & $0.0606$ & $769$    & $111.3$ & $124$ & $16$ & $2$ & $1\,320$ & $14.8\%$ & $7.0$\\
\bottomrule
\end{tabular}
\end{table}

Since $k_i\le|O_i|$ for every unit, the mean constrained dimension per row is bounded by the mean open-set size
on the support,
\begin{equation}\label{eq:cl:ceiling}
\bar k\;=\;\frac1m\sum_{i=1}^{m}k_i\;\le\;\frac1m\sum_{i=1}^{m}|O_i|\;=\;p_{\mathrm{sup}}\,T ,
\end{equation}
with $p_{\mathrm{sup}}$ the base gate density counted on the support: at $T=2\,048$ this reachable budget is
$72.1$ dimensions per row on OPT-1.3b and $111.3$ on OPT-125m (Table~\ref{tab:app:density}), while a shared
$k_{\mathrm{sh}}$ ranges up to $d+1$. The cap binds
on a heavy tail: on OPT-1.3b the $90$th percentile unit fires on $187$ support tokens and the $99$th on $954$, and
$7.5\%$ of units fire on more than $259$. The budget is fixed by the data alone: from a single source at
$T=2\,048$ the projector's setup returns $259.2$ dimensions per row for the shared subspace at energy fraction
$0.95$ and $34.4$ for our cap $128$, and our cap $64$ rises from $23.5$ dimensions per row at $2\,048$ tokens to
$55.8$ at $49\,152$.

%% file: app_cl_mechanism.tex
\section{What the criterion is: the selector controls in full}\label{app:cl:necessity}

\begin{figure}[tp]
\centering
\includegraphics[width=0.75\textwidth]{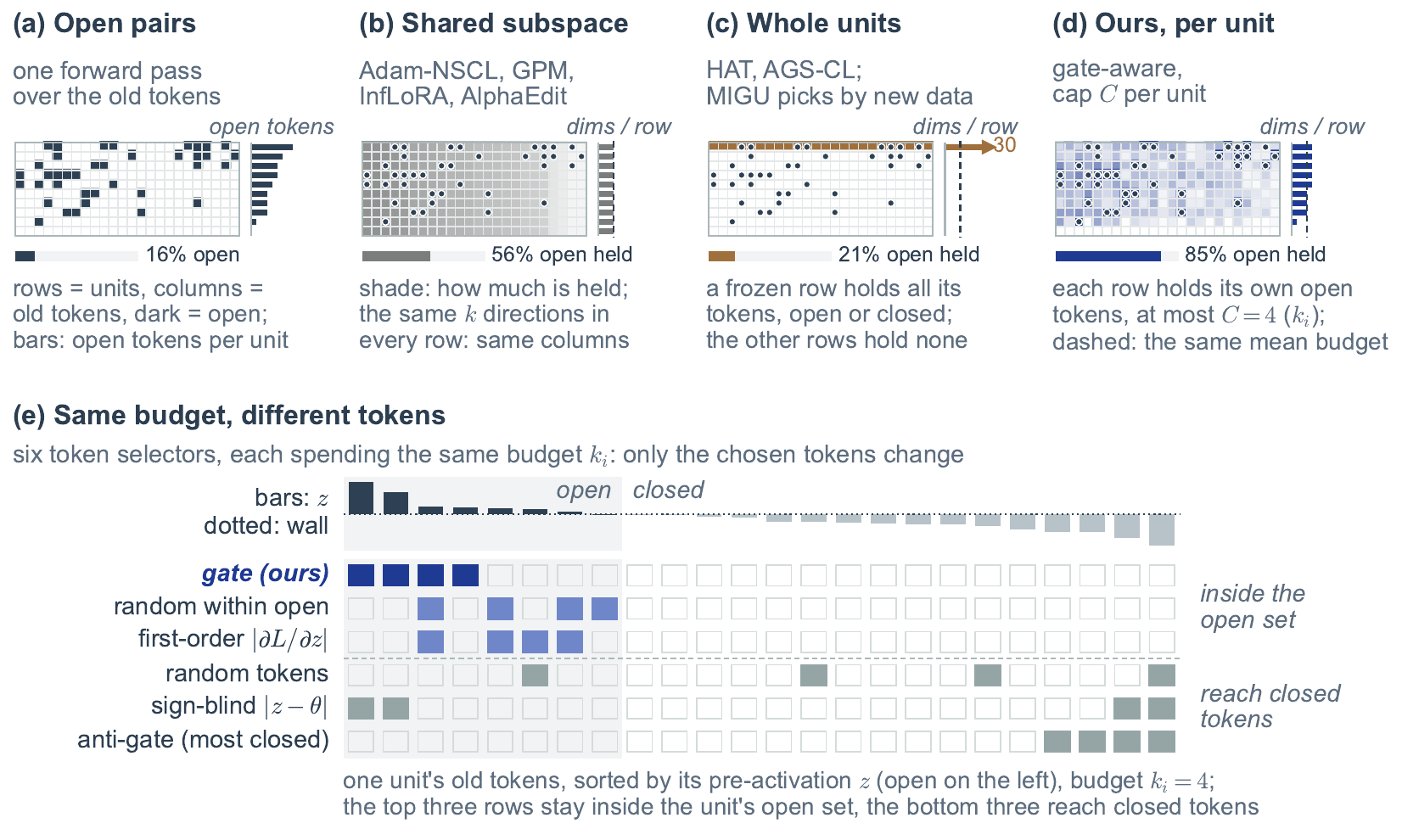}
\caption{\textbf{Which (token, unit) pairs a constraint holds, at one budget.} A random ReLU layer with
$d=30$ inputs, $m=10$ units and $T=24$ old tokens.
\textbf{How to read a grid.} Each row is one unit, that is one row of the layer's first weight matrix (\texttt{fc1}
in OPT), and each column is one old token, so the cell in row $i$ and column $t$ is the pair $(t,i)$. Rows are sorted by how many tokens the unit
fires on, most active on top. Columns are sorted by how much the shared subspace of (b) holds them, most held
on the left. The order is the same in every grid, so a given cell sits at the same place in (a)--(d).
\textbf{(a)} The gate pattern of the base model, read off one forward pass over the old tokens: a cell is dark
when unit $i$ is open on token $t$, that is $z_i(x_t)>\theta$, with $\theta$ the gate threshold ($0$ for
ReLU). The bar to the right of a row counts that
unit's open tokens. The top unit fires on $8$ of the $24$ tokens, the next on $6$, and the bottom unit on
none. Only $38$ of the $240$ pairs, $16\%$, are open; at OPT-1.3b the share is $4\%$.
\textbf{(b)--(d)} A cell's shade is the fraction of an isotropic weight update's effect on that pair's
pre-activation that the constraint removes: white means the pair is left free, and full colour means it is
held exactly. The dots repeat the open pairs of (a). The bar to the right of a row is the number of directions
the constraint forbids in that row, and the dashed line marks the mean, which is $3$ for every family. The bar
under a grid is the mean shade over the open pairs. Holding a closed pair guards only against its opening.
\textbf{(b)} The shared subspace of Adam-NSCL, GPM, InfLoRA and AlphaEdit forbids the top $3$ eigendirections
of the old inputs' second moment, and it forbids them in every row. Every row therefore spends $3$ directions,
and a cell's shade depends only on its column: the leftmost token is held at $83\%$ in every row and the
rightmost at $8\%$. The top unit's $8$ open tokens are held at between $9\%$ and $78\%$. Overall $56\%$ of the
open pairs' effect is held.
\textbf{(c)} Methods that protect whole units freeze entire rows; HAT and AGS-CL choose them by old-task use,
and MIGU by new-data magnitude. A frozen row spends all $30$ of its directions (arrow), so a mean of $3$ buys
one frozen row out of ten. The top row is frozen. All $24$ of its tokens are held, including the $16$ on which
it is closed, and no other row holds anything, so $21\%$ of the open pairs' effect is held.
\textbf{(d)} Our projector gives unit $i$ the budget $k_i=\min(C,\#\text{open})$ with cap $C=4$, and spends it
on that unit's own open tokens with the largest activations. The top unit therefore holds its $4$ most active
open tokens exactly, and its other $4$ open tokens only in part, at $54\%$, $22\%$, $14\%$ and $10\%$. A unit
with $3$ open tokens holds all $3$ exactly, and the bottom unit, which never fires, receives $k_i=0$ and holds
nothing. The budgets, top to bottom, are $4,4,4,4,4,3,3,3,1,0$, which also average $3$, and $85\%$ of the open
pairs' effect is held.
\textbf{(e)} The selector controls of \S\ref{sec:necessity}, drawn for the top unit. Its $24$ old tokens are
sorted by pre-activation $z$, shown by the bars with the open tokens on the left, and the dotted line is
the wall. Eight are open, and its
budget is $k_i=4$. Each row marks the $4$ tokens that one selector spends the budget on. The gate takes the
$4$ most active open tokens. Random-within-open takes $4$ of the $8$ open tokens at random, here sharing one
with the gate. The first-order criterion takes the $4$ tokens with the largest $|\partial L/\partial z|$;
since that derivative is zero on closed tokens, it also stays inside the open set. Random tokens draws $4$ of
all $24$, and here lands on $1$ open and $3$ closed tokens. The sign-blind criterion takes the $4$ largest
$|z-\theta|$, which are the $2$ most open and the $2$ most deeply closed tokens, and the anti-gate takes the
$4$ most deeply closed. Every row spends the same $k_i$; only the tokens change.}
\label{fig:app:projectors}
\end{figure}

\begin{table}[t]\centering\footnotesize
\setlength{\tabcolsep}{3pt}
\caption{\textbf{The selector controls at identical per-unit constraint dimension.} OPT-1.3b, wt103, $C=64$,
$\bar k=23.5$. \textbf{(a)} The selected set on the base model (\emph{overlap} with the gate set, share \emph{open},
share of the active post-activation \emph{energy} preserved exactly) and outcomes, means over $n$ seeds.
\textbf{(b)} Gate minus rank-matched random tokens on OPT-125m, on Python (OPT-1.3b) and on OPT-6.7b
($\bar k=68.7$), paired by seed, $95\%$ interval and one-sided $p$; the first row pools the three off-partition
contrasts of Table~\ref{tab:necessity} (random-effects difference, exact sign-flip $p$).}
\label{tab:app:necessity}
\vspace{2pt}
\textbf{(a)}\quad
\begin{tabular}{llrrrrrr}
\toprule
& & \multicolumn{3}{c}{selected set, base model} & & \multicolumn{2}{c}{outcome} \\
\cmidrule(lr){3-5}\cmidrule(lr){7-8}
run & selector & overlap & open & energy & $n$ & forget & learn \\
\midrule
\texttt{B1}      & no constraint     & ---      & ---      & ---      & $3$ & $+0.2684$ & $-0.5878$ \\
\midrule
\texttt{B17c64}  & \texttt{gate}     & $100\%$  & $100\%$  & $59.6\%$ & $6$ & $+0.0247$ & $-0.6240$ \\
\texttt{B17c64o} & \texttt{openrand} & $69.9\%$ & $100\%$  & $35.2\%$ & $3$ & $+0.0324$ & $-0.6223$ \\
\texttt{B17c64g} & \texttt{grad}     & $70.1\%$ & $99.6\%$ & $35.8\%$ & $6$ & $+0.0222$ & $-0.6216$ \\
\texttt{B17c64r} & random            & $2.3\%$  & $8.3\%$  & $2.8\%$  & $6$ & $+0.0563$ & $-0.6285$ \\
\texttt{B17c64n} & \texttt{absz}     & $3.0\%$  & $3.0\%$  & $10.4\%$ & $6$ & $+0.0821$ & $-0.6270$ \\
\texttt{B17c64x} & \texttt{closed}   & $0.0\%$  & $0.2\%$  & $0.1\%$  & $6$ & $+0.0906$ & $-0.6275$ \\
\bottomrule
\end{tabular}

\vspace{6pt}
\textbf{(b)}\quad\setlength{\tabcolsep}{2.2pt}
\begin{tabular}{lrrrrr}
\toprule
setting & $n$ & $\Delta$forget $[95\%\ \mathrm{CI}]$ & test & seeds & $\Delta$learn $[95\%\ \mathrm{CI}]$ \\
\midrule
1.3b, pooled & $18$ & $-0.0515$ $[-0.0740,-0.0289]$ & $p=3.81{\times}10^{-6}$ & $18/18$ & --- \\
125m, $C=32$ & $3$  & $-0.0114$ $[-0.0162,-0.0067]$ & $p=0.0046$ & $3/3$ & $+0.0254$ $[+0.0154,+0.0355]$ \\
python, $C=64$ & $6$  & $-0.0224$ $[-0.0320,-0.0129]$ & $p=0.0009$ & $6/6$ & $+0.0069$ $[+0.0034,+0.0104]$ \\
6.7b, $C=128$ & $3$  & $-0.0098$ $[-0.0150,-0.0047]$ & $p=0.0072$ & $3/3$ & $+0.0082$ $[+0.0070,+0.0094]$ \\
\bottomrule
\end{tabular}
\end{table}

Figure~\ref{fig:app:projectors} draws, on one random layer at one mean budget per row, the (token, unit) pairs on
which each constraint family of \S\ref{sec:families} spends its budget, and in panel~(e) the tokens that each
selector control of \S\ref{sec:necessity} picks for one unit. The controls hold each unit's $k_i$ at the gate's value
and vary only which of its support tokens fill it; they share one no-constraint run \texttt{B1} (seeds $0$--$2$),
run on seeds $0$--$5$ (\texttt{openrand} on $0$--$2$) and use the tests of Appendix~\ref{app:cl:stats}.
\texttt{gate} takes the $k_i$ largest base post-activations (Eq.~\eqref{eq:ki}); \texttt{openrand} draws $k_i$
tokens uniformly from the unit's open set; \texttt{grad} takes the largest $|\partial L_{\mathrm{old}}/\partial z|$
at the base model; \texttt{absz} the largest $|z-\theta|$ over all tokens, which lands almost entirely on closed
pairs because the negative tail of $z$ is far longer; \texttt{closed} the most closed tokens; and the random selector
$k_i$ tokens uniformly from the whole support.

\paragraph{Reading the table.} In Table~\ref{tab:app:necessity}, off the partition the gate forgets less than each
of the three selectors on all $18$ seed-pairs; on it, \texttt{openrand} and \texttt{grad} are TOST-equivalent to the gate
(Table~\ref{tab:necessity}) while preserving $35.2\%$ and $35.8\%$ of the active energy against $59.6\%$; and
the open share of the selected set separates the three selectors on the partition from the three off it and
orders the latter by forgetting, while preserved energy does not (\texttt{absz}
preserves more than the random selector and forgets more), so the partition matters, not a ranking inside it or the
energy it covers. Learning differences
across the five OPT-1.3b contrasts lie between $-0.0024$ and $+0.0045$ nats, and the gate learns more than
\texttt{grad} on all six seeds, by $0.0024$ nats ($95\%$ CI $[-0.0039,-0.0008]$, two-sided $p=0.0107$).
\texttt{grad} inherits the partition through the chain rule (Proposition~\ref{prop:firstorder}), placing $99.6\%$
of its pairs on open pairs, but needs a backward pass and a $T\times m$ gradient tensor per wall
($2\,048\times8\,192$ in \texttt{fp32}, $1.6$\,GB over the $24$ walls), where the gate reads the sign of $z-\theta$ off one
forward pass.

\subsection{Freezing whole units}\label{app:cl:rowfreeze}

\begin{table}[ht]\centering\scriptsize
\setlength{\tabcolsep}{2.8pt}
\caption{\textbf{Freezing whole units against the per-unit projector.} OPT-1.3b, WikiText-103,
\texttt{fc1}, lr $10^{-4}$. $\Delta$ is ours minus the freezing run, paired by seed; negative means ours forgets
less (one-sided $p$) or learns more (two-sided $p$); the fraction counts seed pairs with $\Delta<0$.}
\label{tab:app:rowfreeze}
\begin{tabular}{llrrrrr}
\toprule
freezing setting & matched to & dims/row & $n$ & forget, learn & $\Delta$ forget ($p$) & $\Delta$ learn ($p$) \\
\midrule
top $1.15\%$ of units & cap $64$ ($23.5$)   & $23.51$ & $6$ & $+0.2101$, $-0.5851$ & $-0.1855$ ($6/6$, $<10^{-4}$) & $-0.0389$ ($6/6$, $<10^{-4}$) \\
top $1.68\%$ of units & cap $128$ ($34.4$)  & $34.52$ & $3$ & $+0.1951$, $-0.5853$ & $-0.1713$ ($3/3$, $0.0014$)   & $-0.0357$ ($3/3$, $0.027$) \\
$59.6\%$ of active energy & cap $64$ (energy) & $69.34$ & $3$ & $+0.1719$, $-0.5871$ & $-0.1380$ ($3/3$, $0.0006$) & $-0.0367$ ($3/3$, $0.054$) \\
\bottomrule
\end{tabular}
\end{table}

We freeze the \texttt{fc1} rows (weights and bias) of the units with the largest old-support activity
$\sum_t h_i(t)^2$ and train the rest without constraint. A frozen row removes all $d+1$ of its directions, so
freezing $1.15\%$ of units ($94$ per wall) costs $23.51$ dimensions per row, the budget of cap $64$, $1.68\%$
($138$ units) costs $34.52$, that of cap $128$, and the smallest top set carrying the $59.6\%$ of active energy
that cap $64$ preserves exactly needs $2$ to $464$ units per wall, $69.34$ dimensions per row. At equal budget
freezing forgets about eight times as much as the per-unit projector on the shared seeds and learns
$0.036$--$0.039$ nats less, and at three times the budget it still forgets five times as much
(Table~\ref{tab:app:rowfreeze}).

\section{Comparison with other method families}\label{app:cl:families}

\begin{table}[t]\centering\scriptsize
\setlength{\tabcolsep}{4pt}
\caption{\textbf{Method families on \texttt{fc1}.} OPT-1.3b, WikiText-103, seeds $0$--$2$, forgetting and learning
as in Appendix~\ref{app:cl:setup}. $\Delta$ is our cap $128$ minus the row, paired by seed; negative means ours
forgets less (one-sided $p$) or learns more (two-sided $p$).}
\label{tab:app:families}
\input{tab_families}
\end{table}

\begin{table}[ht]\centering\footnotesize
\setlength{\tabcolsep}{4pt}
\caption{\textbf{WiSE-FT on each method.} OPT-1.3b, WikiText-103, seeds $0$--$2$. \emph{End of training} is
$\alpha=1$; \emph{zero forgetting} is the learning each method's WiSE-FT curve reaches where the old-probe NLL returns
to its base value, and its gain is paired by seed against WiSE-FT on unconstrained fine-tuning (negative = learns
more; seeds on which it is negative).}
\label{tab:app:wiseft}
\input{tab_wiseft}
\end{table}

Every family runs under the training setup of Appendix~\ref{app:cl:setup} on the \texttt{fc1} walls of OPT-1.3b with
WikiText-103 as the new task; under a single transition and without its intra-task distillation term, AdNS~\citep{kong2022adns} reduces to
Adam-NSCL with a rank cap and one scalar strength. A random orthonormal subspace of exactly the data-derived dimension per
wall (\texttt{B16e99r}), at matched $\|\Delta W\|_F$ ($12.1$ against $12.2$), forgets $+0.2176$, against $+0.2684$
for no constraint and $-0.0045$ for the real projector at the same rank, and at the $259$-dimension rank the
corresponding random control forgets $+0.2318$ against the projector's $+0.0227$. Two-sided activation anchoring on the
walls (LDIFS \citep{mukhoti2024ldifs}, $\lambda=100$) forgets $+0.1720$, fresh old-corpus replay
\citep{ibrahim2024continual} $+0.1721$ on a quarter of the micro-batches and $+0.0314$ on every one, and replaying
the $2\,048$ support tokens themselves $+0.7247$. Relaxing Adam-NSCL at $k_{\mathrm{sh}}=259$ to
$(1-\alpha)D+\alpha DP$ with $\alpha\in\{1,0.8,0.5\}$ moves its learning from $-0.5671$ to $-0.5697$ and $-0.5719$
(seeds $0$--$2$), recovering $0.005$ of the $0.054$ nats by which our cap $128$ learns more
(Table~\ref{tab:app:families}).

\subsection{EWC, MIGU, LoRA and LoRA-Null}\label{app:cl:families:more}

Table~\ref{tab:app:families} gives every operating point of Figure~\ref{fig:families} except Adam-NSCL at
$k_{\mathrm{sh}}\in\{9,24,128\}$ and our caps $16$ and $512$, whose forgetting is in Figure~\ref{fig:pareto}, each
paired by seed against our cap $128$. The figure shows EWC across $\lambda\in\{4,\dots,4\times10^5\}$, LoRA
at ranks $8$, $32$, $128$ and learning rates $2\times10^{-4}$ and $10^{-3}$ (and rank $32$ at
$5\times10^{-4}$), LoRA-Null at ranks $32$ and $128$ ($2\times10^{-4}$), MIGU at its default mask ratio
$0.7$, L2-to-init at $\lambda\in\{10^{-3},10^{-2},0.1,0.3\}$, Adam-NSCL across $k_{\mathrm{sh}}$ and with its dimension set by GPM's energy criterion at $0.95$, and
ours at caps $C\in\{16,64,128,256,512\}$. Five operating points lie outside its range, each learning less
than unconstrained fine-tuning or than ours by a wide margin: MIGU at mask ratio $0.9$ (old-probe NLL falls
by $0.027$; $0.17$ nats less learning than ours), Adam-NSCL at $k_{\mathrm{sh}}=585$ and at GPM energy
$0.99$ (forgetting $0.004$ and $-0.005$; $0.054$ and $0.046$ nats less learning than unconstrained), EWC
at $\lambda=4\times10^6$ ($-0.011$; $0.021$ nats less), and L2-to-init at $\lambda=1$ (old-probe NLL falls by
$0.021$; $0.148$ nats less learning than ours).

\paragraph{Implementations.} \emph{EWC} adds $\tfrac{\lambda}{2}\sum_j F_j(w_j-w_j^0)^2$ over the \texttt{fc1}
weights, the diagonal Fisher estimated on the $16$ old-support windows ($\lambda=0$ reproduces \texttt{B1}), and
\emph{L2-to-init} sets $F_j=1$; \emph{MIGU} masks, on each batch, the gradient of the fraction $\tau$ of units with
the smallest mean $|z_i|$; \emph{LoRA} trains $s\,BA$ with $s=\alpha/r$, $\alpha=2r$, $B$ initialised at zero and the
bias trained, which O-LoRA equals at the first task; \emph{LoRA-Null} initialises the adapter from the thin SVD of $W_0UU^\top$, $U$ the
$r$ smallest-eigenvalue eigenvectors of $X_{\mathrm{old}}^\top X_{\mathrm{old}}$, on a frozen residual that makes the
effective weight $W_0$ at initialisation.

\paragraph{Reading the table.} EWC's forgetting falls with $\lambda$, from $+0.27$ at $\lambda\le40$ to
$+0.094$ at $\lambda=4\times10^4$; up to there ours forgets less at $p\le0.019$ and learns at least as
well. At $\lambda=4\times10^5$ EWC reaches our frontier ($+0.0175$, $-0.6241$; neither paired difference
is significant against cap $128$ or cap $256$), as expected of a criterion whose diagonal Fisher inherits
the open/closed partition (Proposition~\ref{prop:firstorder}). MIGU at $\tau=0.7$ forgets $2\times$ more than
ours and learns $0.051$ nats less ($p=0.0005$). Every LoRA configuration forgets $5.7$--$22\times$ as much as
ours; the four that learn significantly more do so by $0.035$--$0.042$ nats, at $12$--$16\times$ the forgetting,
and LoRA at $r=8$, lr $2\times10^{-4}$ and at $r=128$, lr $10^{-3}$ is dominated outright. LoRA-Null at lr $2\times10^{-4}$
forgets $3.0\times$ less than LoRA at rank $32$ ($0.085$ against $0.256$) and $2.3\times$ less at rank
$128$ ($0.122$ against $0.284$); against our cap $128$ it forgets $3.6\times$ and $5.1\times$ more, is
dominated at rank $32$ ($0.061$ nats less learning, $p=0.003$), and at rank $128$ learns $0.012$ nats
more than ours ($p=0.055$). L2-to-init at $\lambda=0.3$ forgets the same as our cap $256$ (paired
difference $+0.0007$, TOST-equivalent at $\delta=0.0068$) and learns $0.049$ nats less ($3/3$ seeds,
$p=0.020$). Adam-NSCL with GPM's energy criterion at $0.95$ (mean $259$ dimensions per row) forgets about as
much as our cap $128$ ($+0.0011$, $p=0.57$) and learns $0.014$ nats more; our cap $256$, at $47.3$
dimensions per row, forgets $0.0117$ less than it ($3/3$ seeds, one-sided $p=0.03$) and learns $0.016$
nats less ($p=0.013$).

\subsection{WiSE-FT and zero-shot accuracy}\label{app:cl:wiseft}

\begin{table}[ht]\centering\footnotesize
\caption{\textbf{Zero-shot accuracy after one fine-tuning transition.} OPT-1.3b fine-tuned on WikiText-103
(\texttt{fc1}, lr $10^{-4}$), mean over seeds $0$--$2$, in \%. Accuracy for LAMBADA (OpenAI variant) and
Winogrande, length-normalised accuracy for the others. \emph{Mean drop} is the base model's mean accuracy over the
five tasks minus the run's. Cap $128$ against no constraint: $+1.92$ points of mean accuracy, positive on
$3$ of $3$ seeds, one-sided paired $p=0.0125$ ($n=3$).}
\label{tab:zs}
\setlength{\tabcolsep}{3pt}
\input{tab_zs}
\end{table}

WiSE-FT \citep{wortsman2022wiseft} interpolates a finished run's \texttt{fc1} weights toward the base,
$W_0+\alpha(W-W_0)$. On the checkpoints of seeds $0$--$2$ at $\alpha\in\{0.4,0.5,0.6,0.7,0.75,0.8,0.85,0.9,0.95,1\}$
each curve is read at a fixed forgetting by linear interpolation between neighbouring $\alpha$; $\alpha$ is swept
on the evaluation sets, so this compares trade-off curves, not deployable choices of $\alpha$. Every constrained run
composes with it (Table~\ref{tab:app:wiseft}), and the matched-budget ordering holds: at zero forgetting
composed cap $64$ learns $0.0074$ nats more than composed $k_{\mathrm{sh}}=24$ and composed cap $128$ $0.0084$
more than composed $k_{\mathrm{sh}}=34$, on $3/3$ seeds each ($p=0.08$), while Adam-NSCL with GPM's energy
criterion, at $7.5$--$11\times$ their dimensions, learns $0.005$--$0.008$ nats more than either composed cap
($p=0.19$, $0.09$). Without interpolation, each projector learns more on the seed mean than WiSE-FT on the
unconstrained run read at the same forgetting: by $0.006$ and $0.008$ nats at caps $64$ and $128$, by $0.001$--$0.003$ for
Adam-NSCL at the matched budgets and by $0.023$ for its energy criterion ($3$ seeds, $p=0.13$--$0.88$).
Zero-shot accuracy (Table~\ref{tab:zs}) is scored on the full evaluation splits of the five tasks.

%% file: tab_families.tex
\begin{tabular}{llcrrrr}
\toprule
family & setting & lr & forget & learn & $\Delta$ forget ($p$) & $\Delta$ learn ($p$)\\
\midrule
none & --- & $10^{-4}$ & $+0.2684$ & $-0.5878$ & $-0.2447$ ($0.0018$) & $-0.0332$ ($0.0436$) \\
\addlinespace[2pt]
EWC & $\lambda=4$ & $10^{-4}$ & $+0.2695$ & $-0.5850$ & $-0.2457$ ($0.0007$) & $-0.0360$ ($0.0261$) \\
EWC & $\lambda=40$ & $10^{-4}$ & $+0.2743$ & $-0.5863$ & $-0.2505$ ($0.0001$) & $-0.0347$ ($0.0118$) \\
EWC & $\lambda=400$ & $10^{-4}$ & $+0.2578$ & $-0.5863$ & $-0.2341$ ($0.0004$) & $-0.0347$ ($0.0051$) \\
EWC & $\lambda=4\times10^3$ & $10^{-4}$ & $+0.2092$ & $-0.5962$ & $-0.1854$ ($<10^{-4}$) & $-0.0248$ ($0.0047$) \\
EWC & $\lambda=4\times10^4$ & $10^{-4}$ & $+0.0937$ & $-0.6144$ & $-0.0700$ ($0.0190$) & $-0.0066$ ($0.3399$) \\
EWC & $\lambda=4\times10^5$ & $10^{-4}$ & $+0.0175$ & $-0.6241$ & $+0.0063$ ($0.7618$) & $+0.0032$ ($0.2396$) \\
EWC & $\lambda=4\times10^6$ & $10^{-4}$ & $-0.0114$ & $-0.5668$ & $+0.0352$ ($0.9866$) & $-0.0541$ ($0.0069$) \\
\addlinespace[2pt]
MIGU & mask $0.7$ & $10^{-4}$ & $+0.0486$ & $-0.5698$ & $-0.0248$ ($0.0039$) & $-0.0512$ ($0.0005$) \\
MIGU & mask $0.9$ & $10^{-4}$ & $-0.0267$ & $-0.4484$ & $+0.0505$ ($0.9896$) & $-0.1726$ ($0.0003$) \\
\addlinespace[2pt]
LoRA & $r=8$ & $2\times10^{-4}$ & $+0.1360$ & $-0.5553$ & $-0.1122$ ($0.0003$) & $-0.0657$ ($0.0080$) \\
LoRA & $r=8$ & $10^{-3}$ & $+0.3601$ & $-0.6567$ & $-0.3363$ ($0.0007$) & $+0.0357$ ($0.0360$) \\
LoRA & $r=32$ & $2\times10^{-4}$ & $+0.2558$ & $-0.6320$ & $-0.2320$ ($0.0007$) & $+0.0110$ ($0.1448$) \\
LoRA & $r=32$ & $5\times10^{-4}$ & $+0.3319$ & $-0.6611$ & $-0.3081$ ($0.0004$) & $+0.0401$ ($0.0101$) \\
LoRA & $r=32$ & $10^{-3}$ & $+0.3875$ & $-0.6563$ & $-0.3637$ ($0.0039$) & $+0.0353$ ($0.0015$) \\
LoRA & $r=128$ & $2\times10^{-4}$ & $+0.2841$ & $-0.6626$ & $-0.2604$ ($0.0002$) & $+0.0416$ ($0.0168$) \\
LoRA & $r=128$ & $10^{-3}$ & $+0.5328$ & $-0.5927$ & $-0.5090$ ($0.0027$) & $-0.0283$ ($0.0351$) \\
\addlinespace[2pt]
LoRA-Null & $r=32$ & $2\times10^{-4}$ & $+0.0853$ & $-0.5602$ & $-0.0615$ ($0.0028$) & $-0.0608$ ($0.0026$) \\
LoRA-Null & $r=128$ & $2\times10^{-4}$ & $+0.1221$ & $-0.6325$ & $-0.0983$ ($0.0055$) & $+0.0115$ ($0.0551$) \\
\addlinespace[2pt]
L2-to-init & $\lambda=10^{-3}$ & $10^{-4}$ & $+0.2648$ & $-0.5944$ & $-0.2410$ ($0.0003$) & $-0.0266$ ($0.0309$) \\
L2-to-init & $\lambda=10^{-2}$ & $10^{-4}$ & $+0.1893$ & $-0.6328$ & $-0.1655$ ($0.0022$) & $+0.0118$ ($0.0425$) \\
L2-to-init & $\lambda=0.1$ & $10^{-4}$ & $+0.0558$ & $-0.6252$ & $-0.0321$ ($0.0517$) & $+0.0042$ ($0.5019$) \\
L2-to-init & $\lambda=0.3$ & $10^{-4}$ & $+0.0103$ & $-0.5694$ & $+0.0134$ ($0.9557$) & $-0.0516$ ($0.0169$) \\
L2-to-init & $\lambda=1$ & $10^{-4}$ & $-0.0207$ & $-0.4733$ & $+0.0445$ ($0.9947$) & $-0.1477$ ($0.0005$) \\
\addlinespace[2pt]
Adam-NSCL & $k_{\mathrm{sh}}=34$ & $10^{-4}$ & $+0.0462$ & $-0.6229$ & $-0.0224$ ($0.0075$) & $+0.0019$ ($0.2644$) \\
Adam-NSCL & $k_{\mathrm{sh}}=259$ & $10^{-4}$ & $+0.0255$ & $-0.5671$ & $-0.0018$ ($0.2146$) & $-0.0539$ ($0.0002$) \\
Adam-NSCL & GPM energy $0.95$ & $10^{-4}$ & $+0.0227$ & $-0.6346$ & $+0.0011$ ($0.5669$) & $+0.0136$ ($0.0217$) \\
\addlinespace[2pt]
ours & $C=64$ & $10^{-4}$ & $+0.0339$ & $-0.6237$ & $-0.0101$ ($0.0181$) & $+0.0027$ ($0.1251$) \\
ours & $C=128$ & $10^{-4}$ & $+0.0238$ & $-0.6210$ & \multicolumn{2}{c}{reference} \\
ours & $C=256$ & $10^{-4}$ & $+0.0110$ & $-0.6189$ & $+0.0128$ ($0.9731$) & $-0.0021$ ($0.1189$) \\
\bottomrule
\end{tabular}

%% file: tab_wiseft.tex
\begin{tabular}{lrrrrc}
\toprule
 & & \multicolumn{2}{c}{end of training} & \multicolumn{2}{c}{WiSE-FT, zero forgetting}\\
\cmidrule(lr){3-4}\cmidrule(lr){5-6}
method & dims/row & forget & learn & learn & gain over none (seeds) \\
\midrule
no constraint & --- & $+0.2684$ & $-0.5878$ & $-0.5904$ & --- \\
ours, $C=64$ & 23.5 & $+0.0339$ & $-0.6237$ & $-0.6289$ & $-0.0385$ ($3/3$) \\
ours, $C=128$ & 34.4 & $+0.0238$ & $-0.6210$ & $-0.6262$ & $-0.0359$ ($3/3$) \\
Adam-NSCL, $k_{\mathrm{sh}}=24$ & 24 & $+0.0468$ & $-0.6252$ & $-0.6215$ & $-0.0311$ ($3/3$) \\
Adam-NSCL, $k_{\mathrm{sh}}=34$ & 34 & $+0.0462$ & $-0.6229$ & $-0.6178$ & $-0.0275$ ($3/3$) \\
Adam-NSCL, GPM energy $0.95$ & 259.2 & $+0.0227$ & $-0.6346$ & $-0.6337$ & $-0.0434$ ($3/3$) \\
\bottomrule
\end{tabular}

%% file: tab_zs.tex
\begin{tabular}{lcccccc}
\toprule
method & LAMBADA & HellaSwag & ARC-e & Winogrande & SciQ & mean drop \\
\midrule
base & 58.9 & 53.7 & 51.3 & 59.1 & 78.2 & --- \\
no constraint & 53.7 & 51.1 & 48.8 & 58.6 & 74.4 & 2.94 \\
Adam-NSCL, $k_{\mathrm{sh}}=34$ & 56.1 & 52.6 & 51.3 & 58.5 & 77.3 & 1.08 \\
ours, $C=128$ ($\bar k=34.4$) & 57.3 & 52.6 & 50.8 & 58.1 & 77.4 & 1.02 \\
Adam-NSCL, GPM energy $0.95$ ($259$) & 57.0 & 53.2 & 51.4 & 58.4 & 76.9 & 0.85 \\
\bottomrule
\end{tabular}

%% file: app_repair.tex
\theoremstyle{plain}
\ifcsname lemma*\endcsname\else\newtheorem*{lemma*}{Lemma}\fi
\ifcsname theorem*\endcsname\else\newtheorem*{theorem*}{Theorem}\fi
\ifcsname proposition*\endcsname\else\newtheorem*{proposition*}{Proposition}\fi
\theoremstyle{remark}
\providecommand{\proofidea}{\par\smallskip\noindent\textit{Proof idea.}\ }
\providecommand{\KX}{\mathcal K_{\mathcal X}}
\providecommand{\Esc}{\operatorname{Esc}}
\makeatletter\providecommand{\tabcaptype}{\def\@captype{table}}\makeatother

\section{Repair after pruning: the full development}\label{app:repair}

\begin{figure}[tb]
\centering
\includegraphics[width=0.75\textwidth]{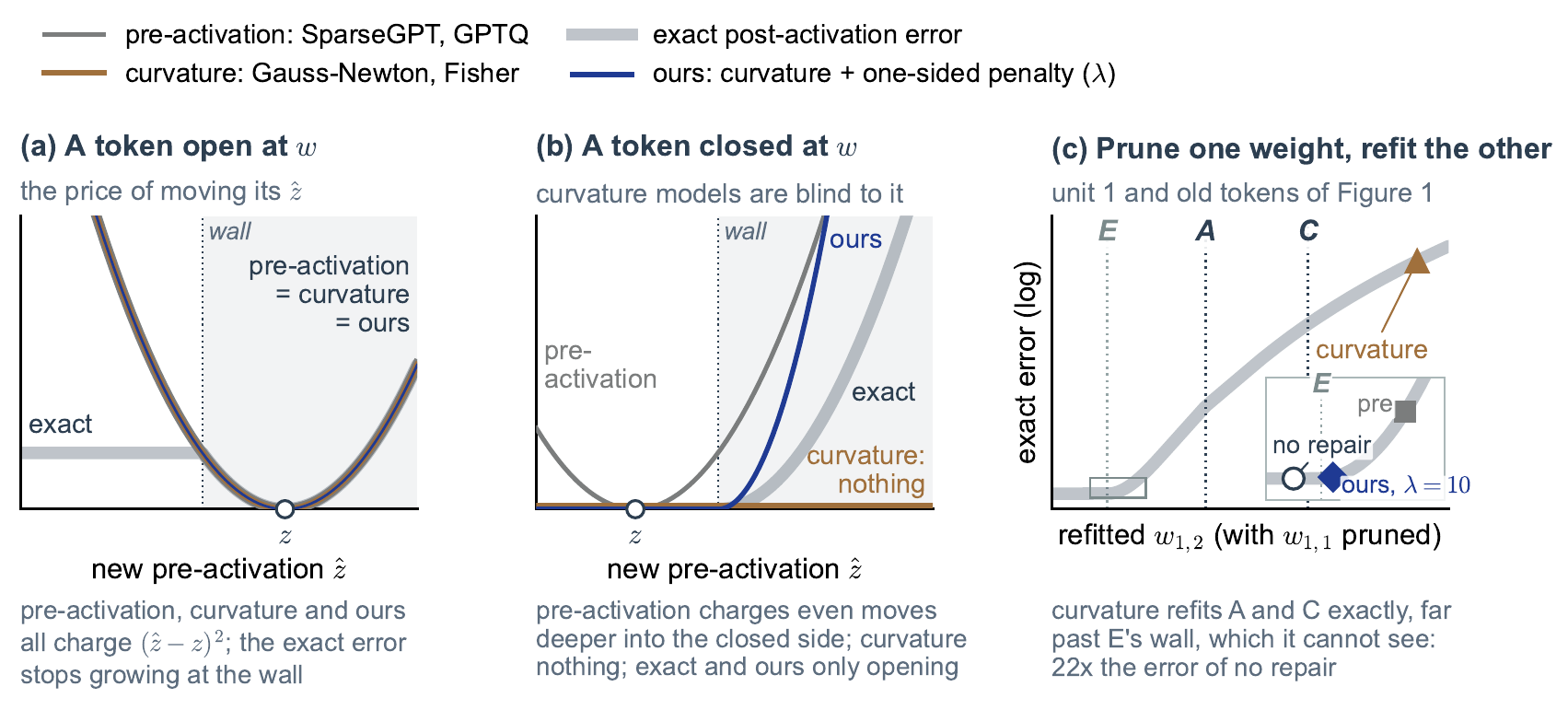}
\caption{\textbf{What each repair objective charges for, on open and closed tokens.} Four objectives are
compared, as in the legend: the pre-activation error $\sum_x(\hat z-z)^2$, which SparseGPT and GPTQ fit; the
Gauss--Newton and Fisher models of the post-activation error, which reduce to the in-cell error on the open
tokens, $\gated$ (Lemma~\ref{lem:blind}); the exact post-activation error $\post$; and ours, $\gatedh$, which
adds to that in-cell error a penalty of weight $\lambda$ on closed tokens that open, the one-sided penalty of
Theorem~\ref{thm:confine}.
\textbf{(a)} The price each objective charges one token that is open at $w$, here with $z=1$, as its new
pre-activation $\hat z$ moves; shading marks the open side of the wall. The pre-activation, curvature and our
objectives charge the same parabola $(\hat z-z)^2$, drawn as three coincident curves. The exact error agrees
with them while the token stays open, and stops growing once it closes. A move to $\hat z=-1$ costs $4$ under
the three quadratics but only $1$ under the exact error, because the unit's output cannot fall below $0$.
\textbf{(b)} The same for a token closed at $w$, here with $z=-1$. The pre-activation objective still charges
$(\hat z-z)^2$, so a move to $\hat z=-2$, deeper into the closed side and with no effect on the output, costs
$1$. The curvature models charge nothing anywhere, even after the token opens. The exact error charges only
opening, $\max(\hat z,0)^2$, and ours charges $\lambda$ times that, with $\lambda=3$ here. A move to
$\hat z=1$ thus costs $4$ under pre-activation, $0$ under curvature, $1$ under the exact error and $3$ under
ours.
\textbf{(c)} What each objective does to a pruned row. Take unit~1 of Figure~\ref{fig:duality}, with
$w=(1,0.4)$, old tokens $A$ and $C$ open and $E$ and $D$ closed. Prune $w_{1,1}$ to $0$ and let each
objective refit $w_{1,2}$ alone. The horizontal axis is the refitted $w_{1,2}$, and the curve is the exact
error along it (log scale). The dotted lines mark where $E$ opens ($w_{1,2}=0.50$) and where $A$ and $C$ open
again ($1.65$ and $2.85$). No point of this line keeps every gate: $A$ and $C$ need $w_{1,2}>2.85$, while $E$
needs $w_{1,2}\le0.50$. In this toy no refit therefore improves on leaving the pruned row alone at
$w_{1,2}=0.4$, where $A$, $C$ and $E$ are all closed and the exact error is $2.45$; the objectives differ in
how much they lose. The curvature fit sees only $A$ and $C$ and restores both exactly, by moving to
$w_{1,2}=4.13$. That opens $E$ to $z_E=7.26$, and the exact error there is $52.8$, $22\times$ that of no
repair. The pre-activation fit, at $0.81$, keeps $A$ and $C$ closed and opens $E$ to $0.61$, for an exact
error of $2.83$. Ours at $\lambda=10$ stops at $0.54$, just past $E$'s wall with $z_E=0.09$, and its exact
error of $2.46$ equals that of no repair to within $1\%$. The inset, on a linear scale, enlarges the
three fits next to $E$'s wall.}
\label{fig:app:repair}
\end{figure}

This appendix proves Lemma~\ref{lem:blind} and Theorem~\ref{thm:confine}, whose body statements condense
parts (b)--(c) and (a)--(f) of the full forms below, and derives the signatures they predict;
Theorem~\ref{thm:escape} is proved in Appendix~\ref{app:proof}, the experiments are in
Appendix~\ref{app:repair:exp}, and Figure~\ref{fig:app:repair} reads the objectives of \S\ref{sec:repair} one
token at a time and on one pruned row of the toy layer of Figure~\ref{fig:duality}.

\subsection{Notation and curvature blindness}\label{sec:partIV}

\paragraph{Notation for one row.} Throughout this appendix and Appendix~\ref{app:proof} we fix one unit
of a wall matrix. Its dense weights are $w\in\mathbb R^d$, its bias is $b\in\mathbb R$ and its threshold
is $\theta\ge0$, so that $z_{\hat w}(x)=\langle\hat w,x\rangle+b$ for all $\hat w,x\in\mathbb R^d$. The
mask $M\in\{0,1\}^d$ keeps the index set $K\subseteq\{1,\dots,d\}$, with $k=|K|$ and complement $K^c$; for
$v\in\mathbb R^d$, the vectors $v_K\in\mathbb R^k$ and $v_{K^c}$ are its restrictions to $K$ and $K^c$. The
repair moves only the kept weights, and the bias is held at $b$. A repaired row is a vector
$\hat w\in\mathbb R^d$ with $\hat w=\hat w\odot M$, which we call \emph{masked}. We parametrise it by
$\delta:=\hat w_K-w_K\in\mathbb R^k$, so that the mask-only weights $w\odot M$ are $\delta=0$, and we write a
function of a masked $\hat w$ as a function of $\delta$ where convenient. Because the bias is fixed, we
write $\hat w\in\PX$ for $(\hat w,b)\in\PX$ and $\hat w\in\KX$ for $(\hat w,b)\in\KX$
(Definition~\ref{def:poly}); the cells $\mathcal C_>$ and $\mathcal C_\le$ and the sets $\PX$ and $\KX$ are
those of the dense unit $(w,b)$ on the calibration tokens. These tokens are $x_1,\dots,x_n\in\mathbb R^d$,
forming the multiset $\mathcal X$, and for every $\hat w\in\mathbb R^d$ we write
\[
z_t=z_w(x_t),\qquad z^M_t=z_{w\odot M}(x_t),\qquad \hat z_t=z_{\hat w}(x_t),
\]
so that $\hat z_t=z^M_t+\langle\delta,x_{t,K}\rangle$ for a masked $\hat w$, where $x_{t,K}\in\mathbb R^k$ is
the restriction of $x_t$ to $K$. Let $I_+=\{t:z_t>\theta\}$, the tokens in $\mathcal C_>$, and
$I_-=\{t:z_t\le\theta\}$, with $n_\pm=|I_\pm|$. Let $X_K\in\mathbb R^{n\times k}$ have rows $x_{t,K}^\top$,
and let $A\in\mathbb R^{n_+\times k}$ and $\tilde A\in\mathbb R^{n_-\times k}$ consist of its rows indexed by
$I_+$ and by $I_-$; if $n_+=0$ or $n_-=0$ the corresponding matrix is empty, with kernel $\mathbb R^k$. Put
\[
\zeta:=(z_t-z^M_t)_{t\in I_+}=\big(\langle w_{K^c},x_{t,K^c}\rangle\big)_{t\in I_+}\in\mathbb R^{n_+},\qquad
c:=(\theta-z^M_t)_{t\in I_-}\in\mathbb R^{n_-},
\]
and let $y_+$ denote the componentwise positive part of a vector $y$. For a masked $\hat w$,
$\hat z_t-z_t=(A\delta-\zeta)_t$ when $t\in I_+$ and $\hat z_t-\theta=(\tilde A\delta-c)_t$ when $t\in I_-$, so
the objectives of \S\ref{sec:repair} are
\[
\gated=\|A\delta-\zeta\|^2,\qquad \Esc=\|(\tilde A\delta-c)_+\|^2,
\]
and in particular $\|\zeta\|^2=\gated(w\odot M)$ and $E_0=\Esc(w\odot M)=\|(-c)_+\|^2$. The masked vectors that satisfy the inactive block of $\PX$ form $\QM=\{\delta:\tilde A\delta\le c\}$, and
$\PX\cap\{\hat w=\hat w\odot M\}=\{\delta:A\delta\ge\theta\mathbf 1-z^M_{I_+},\ \tilde A\delta\le c\}$, so
$\QM\supseteq\PX\cap\{\hat w=\hat w\odot M\}$. The escape is at most the closed$\to$open term of
Eq.~\eqref{eq:identity}, because both sum over the closed tokens with $\hat z_t>\theta$ and
$(\hat z_t-\theta)^2\le\hat z_t^{\,2}$ there, with equality for every such token exactly when $\theta=0$.

\begin{lemma*}[Full form of Lemma~\ref{lem:blind}]
Let $\theta\ge0$, let $(w,b)\in\mathbb R^d\times\mathbb R$ be one unit's dense weights and bias, and let
$\mathcal X\subseteq\mathbb R^d$ be a finite multiset of $n$ calibration tokens. Assume that no token lies on
the wall, that is, $z_w(x)\neq\theta$ for every $x\in\mathcal X$. Put
$m_{\mathcal X}:=\min_{x\in\mathcal X}|z_w(x)-\theta|$, which is positive, and
$R:=\max_{x\in\mathcal X}\|x\|$, with $m_{\mathcal X}/R:=+\infty$ if $R=0$. Throughout, $\hat w$
ranges over $\mathbb R^d$ with the bias held at $b$, and
$\gated(\hat w)=\sum_{x\in\mathcal X\cap\mathcal C_>}(z_{\hat w}(x)-z_w(x))^2$.
\par\textbf{(a)} $\post=\gated$ on $\KX$.
\par\textbf{(b)} Every $\hat w$ with $\|\hat w-w\|<m_{\mathcal X}/R$ lies in $\KX$. Hence $\post=\gated$ on
the open ball $B(w,m_{\mathcal X}/R)$, and both crossing terms of Eq.~\eqref{eq:identity} vanish on it.
\par\textbf{(c)} $\post$ and $\gated$ have the same derivatives of every order at $w$; in particular
$\post(w)=0$, $\nabla\post(w)=0$ and $\nabla^2\post(w)=2\sum_{x\in\mathcal X\cap\mathcal C_>}xx^\top$. The
Gauss--Newton matrix of $\post$ at $w$ is $J^\top J=\sum_{x\in\mathcal X\cap\mathcal C_>}xx^\top=
\frac12\nabla^2\post(w)$, where $J$ has one row $J_x=\mathds 1[x\in\mathcal C_>]\,x^\top$ per token, and for
every $\sigma>0$ the Fisher matrix at $\hat w=w$ of the Gaussian model
$y\sim\mathcal N(\act_\theta(z_{\hat w}(x)),\sigma^2)$, summed over the tokens, is $\sigma^{-2}J^\top J$. K-FAC
\citep{martens2015kfac} replaces $J^\top J$ by
$\big(\frac1n\sum_{x\in\mathcal X}\mathds 1[x\in\mathcal C_>]\big)\sum_{x\in\mathcal X}xx^\top$, which
approximates $J^\top J$ and differs from it in general.
\end{lemma*}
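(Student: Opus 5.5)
Part (a) follows from Corollary~\ref{cor:weight}(c) specialised to the fixed bias $b$. For $\hat w\in\KX$ every token keeps its base gate bit, so $S_{+-}=S_{-+}=\varnothing$ and $S_{++}=\mathcal X\cap\mathcal C_>$. Theorem~\ref{thm:decomp} then reduces $\post$ to its in-cell sum $\sum_{x\in\mathcal X\cap\mathcal C_>}(z_{\hat w}(x)-z_w(x))^2$, which is $\gated(\hat w)$ by definition. No hypothesis on the wall is needed for (a).

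For (b), let $\|\hat w-w\|<m_{\mathcal X}/R$. For each token, Cauchy--Schwarz gives $|z_{\hat w}(x)-z_w(x)|=|\langle\hat w-w,x\rangle|\le\|\hat w-w\|\,R<m_{\mathcal X}\le|z_w(x)-\theta|$, so $z_{\hat w}(x)-\theta$ has the same strict sign as $z_w(x)-\theta$. Every open token therefore stays strictly open and every closed token stays strictly closed, so $(\hat w,b)\in\KX$. The degenerate case $R=0$ is trivial: all pre-activations equal $b$, so the ball is all of $\mathbb R^d$ and nothing moves. Part (a) then gives $\post=\gated$ on the ball, and both crossing index sets are empty there, so the crossing terms vanish identically.

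For (c), two functions that agree on an open neighbourhood of $w$ have equal derivatives of every order at $w$, and $\gated$ is a quadratic polynomial. The next step is to compute $\gated(w)=0$, $\nabla\gated(w)=2\sum_{\mathcal C_>}(z_w(x)-z_w(x))x=0$ and $\nabla^2\gated=2\sum_{\mathcal C_>}xx^\top$. For the Gauss--Newton matrix, I write $\post$ as the squared norm of the residual map $r_x(\hat w)=\act_\theta(z_{\hat w}(x))-\act_\theta(z_w(x))$. Near $w$ this map is smooth, because no token sits on the wall: it is affine with Jacobian row $\mathds 1[x\in\mathcal C_>]x^\top$. Hence $J^\top J=\sum_{\mathcal C_>}xx^\top$. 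For the Gaussian Fisher, the score is $\sigma^{-2}(y-\mu_x)\nabla_{\hat w}\mu_x$ with $\mu_x=\act_\theta(z_{\hat w}(x))$ differentiable at $w$ with gradient $J_x^\top$. Taking the expectation of the squared score under the model gives $\sigma^{-2}J_x^\top J_x$, and summing over tokens gives $\sigma^{-2}J^\top J$.

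The K-FAC clause is a one-line substitution: K-FAC factorises $\mathbb E[gg^\top\otimes xx^\top]$ as $\mathbb E[g^2]\,\mathbb E[xx^\top]$ with $g=\mathds 1[x\in\mathcal C_>]$, giving the stated matrix. A two-token example with one open and one closed token and non-parallel inputs shows it differs from $J^\top J$ in general. The only delicate point in the whole argument is the smoothness of $\act_\theta$ at $\theta>0$, where it jumps. This is handled by the strict margin $m_{\mathcal X}>0$, which keeps every pre-activation off $\theta$ on the ball, so all derivatives are classical.
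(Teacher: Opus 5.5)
Your proposal is correct and follows the paper's proof essentially step for step. Part (a) uses the empty crossing sets in Theorem~\ref{thm:decomp}, part (b) uses Cauchy--Schwarz with the margin $m_{\mathcal X}$, and part (c) uses agreement on the open ball plus direct computation of the quadratic, the Gauss--Newton rows and the Gaussian score. The differences are cosmetic: you treat $R=0$ as a separate case where the paper handles $x=0$ token by token, and the paper makes your K-FAC example concrete with $w=e_1$, $b=0$, $\theta=0$ and tokens $e_1$, $e_2-e_1$.
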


\begin{proof}
\emph{(a)} Let $\hat w\in\KX$. Every token then keeps its dense gate bit, so the sets $S_{+-}$ and $S_{-+}$
of Theorem~\ref{thm:decomp} are empty and $S_{++}=\mathcal X\cap\mathcal C_>$; Eq.~\eqref{eq:identity}
reduces to its in-cell term, which is $\gated(\hat w)$.

\emph{(b)} Let $\|\hat w-w\|<m_{\mathcal X}/R$ and $x\in\mathcal X$. By the Cauchy--Schwarz inequality,
$|z_{\hat w}(x)-z_w(x)|=|\langle\hat w-w,x\rangle|\le\|\hat w-w\|\,\|x\|$. If $x=0$ this is $0$; otherwise
$0<\|x\|\le R$, and it is less than $(m_{\mathcal X}/R)\,R=m_{\mathcal X}$. In both cases
$|z_{\hat w}(x)-z_w(x)|<m_{\mathcal X}\le|z_w(x)-\theta|$, so $z_{\hat w}(x)-\theta$ is non-zero and has the
sign of $z_w(x)-\theta$: the token keeps its gate bit. Hence $\hat w\in\KX$, part (a) gives
$\post(\hat w)=\gated(\hat w)$, and the crossing sets, hence both crossing terms, are empty.

\emph{(c)} Two functions that agree on an open set containing $w$ have the same derivatives of every order
at $w$, because a derivative at a point depends only on the values on a neighbourhood of that point. By
(b) this applies to $\post$ and $\gated$. The function $\gated(\hat w)=\sum_{x\in\mathcal X\cap\mathcal
C_>}\langle\hat w-w,x\rangle^2$ is a quadratic with value $0$, gradient $0$ and Hessian
$2\sum_{x\in\mathcal X\cap\mathcal C_>}xx^\top$ at $w$, which gives the three stated derivatives of $\post$.
For the Gauss--Newton matrix, write $\post(\hat w)=\sum_{x}\rho_x(\hat w)^2$ with residuals
$\rho_x(\hat w)=\act_\theta(z_{\hat w}(x))-\act_\theta(z_w(x))$. The map $\act_\theta$ is differentiable at
every $u\neq\theta$, with derivative $\mathds 1[u>\theta]$, and $z_w(x)\neq\theta$ by hypothesis, so $\rho_x$
is differentiable at $w$ with gradient $\mathds 1[x\in\mathcal C_>]\,x$. These gradients are the rows
$J_x$, and $J^\top J=\sum_x\mathds 1[x\in\mathcal C_>]\,xx^\top=\frac12\nabla^2\post(w)$. For the Gaussian
model, the score of one observation is
$\sigma^{-2}\big(y-\act_\theta(z_{\hat w}(x))\big)\nabla_{\hat w}\act_\theta(z_{\hat w}(x))$; at $\hat w=w$ its
second moment under the model is $\sigma^{-4}\cdot\sigma^2\cdot J_x^\top J_x=\sigma^{-2}J_x^\top J_x$, and
summing over the tokens gives $\sigma^{-2}J^\top J$. Finally, the K-FAC matrix differs from $J^\top J$ in
general. Take $d=2$, $w=e_1$, $b=0$, $\theta=0$ and the two tokens $e_1$ and $e_2-e_1$, whose
pre-activations are $1$ and $-1$. Then $J^\top J=e_1e_1^\top$ has $(2,2)$ entry $0$, while the K-FAC matrix
$\frac12\big(e_1e_1^\top+(e_2-e_1)(e_2-e_1)^\top\big)$ has $(2,2)$ entry $\frac12$.
\end{proof}

\subsection{Pricing the faces back}\label{sec:partV}

\begin{theorem*}[Full form of Theorem~\ref{thm:confine}]
Use the notation for one row of \S\ref{sec:partIV}, let $\lambda>0$ be real, and write
\[
F_\lambda(\delta):=\gatedh(\hat w,\lambda)=\|A\delta-\zeta\|^2+\lambda\,\|(\tilde A\delta-c)_+\|^2,
\qquad\delta\in\mathbb R^k .
\]
\par\textbf{(a)} $F_\lambda$ is convex and continuously differentiable, with Lipschitz gradient.
\par\textbf{(b)} Its penalty is the sum of the squared violations of the inactive constraints. It is zero
exactly on $\QM$, which contains $\PX\cap\{\hat w=\hat w\odot M\}$, so it is an exterior penalty for the
inactive block, finite everywhere.
\par\textbf{(c)} $F_\lambda$ attains its minimum, and $A\delta$ and $(\tilde A\delta-c)_+$ take the same
value at every minimiser; in particular $\gated$ and $\Esc$ take single values $g(\lambda)$ and
$e(\lambda)$ on the minimiser set.
\par\textbf{(d)} Every $\delta$ with $F_\lambda(\delta)\le F_\lambda(0)$ (every minimiser, and the output
of any method whose output has objective at most that of its starting point $w\odot M$) satisfies
\begin{equation}\label{eq:massbound}
\sum_{x\in\mathcal X\cap\mathcal C_{\le}}\max\big(z_{\hat w}(x)-\theta,0\big)^2
\ \le\ \frac{\|\zeta\|^2}{\lambda}+E_0 ,\qquad
E_0:=\!\!\sum_{x\in\mathcal X\cap\mathcal C_{\le}}\!\!\max\big(z_{w\odot M}(x)-\theta,0\big)^2 .
\end{equation}
If $\QM\neq\varnothing$, then $G^\star:=\min_{\delta\in\QM}\gated(\delta)$ is attained, and every $\delta$
with $F_\lambda(\delta)\le G^\star$, every minimiser in particular, satisfies
\begin{equation}\label{eq:massbound2}
\sum_{x\in\mathcal X\cap\mathcal C_{\le}}\max\big(z_{\hat w}(x)-\theta,0\big)^2
\ \le\ \frac{G^\star}{\lambda}\ \xrightarrow[\lambda\to\infty]{}\ 0 .
\end{equation}
\par\textbf{(e)} On $(0,\infty)$, $\lambda\mapsto e(\lambda)$ is non-increasing and $\lambda\mapsto
g(\lambda)$ is non-decreasing.
\par\textbf{(f)} $e(\lambda)\to0$ as $\lambda\to\infty$ if and only if $\QM\neq\varnothing$. If
$\QM=\varnothing$, then $\min_\delta\Esc(\delta)$ is attained and positive, and $e(\lambda)\ge
\min_\delta\Esc(\delta)$ for every $\lambda>0$.
\par For $\theta=0$ the escape is exactly the closed$\to$open term of Eq.~\eqref{eq:identity}, and $E_0$ is
that term at the mask-only weights.
\end{theorem*}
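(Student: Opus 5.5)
The plan is to treat $F_\lambda$ as a sum of two convex pieces composed with affine maps, and to derive every later part from (a) and the elementary comparison $F_\lambda(\delta)\le F_\lambda(\text{reference})$.

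\emph{Part (a).} The map $\delta\mapsto\|A\delta-\zeta\|^2$ is a convex quadratic. The map $s\mapsto(s)_+^2$ is convex and $C^1$, with derivative $2s_+$, which is $2$-Lipschitz. Composing it with the affine map $\delta\mapsto(\tilde A\delta-c)_t$ and summing gives a convex $C^1$ function with gradient $2\tilde A^\top(\tilde A\delta-c)_+$. That gradient is Lipschitz with constant $2\|\tilde A\|_2^2$.

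\emph{Part (b).} This is a reading of the notation of \S\ref{sec:partIV}. The penalty vanishes iff $\tilde A\delta\le c$, which is the definition of $\QM$, and the inclusion was already noted there.

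\emph{Part (c).} This is the main obstacle, because $F_\lambda$ need not be coercive: $\ker A\cap\ker\tilde A$ may be nonzero, and along recession directions the penalty may be flat. I would first quotient out $N=\ker A\cap\ker\tilde A$, on which $F_\lambda$ is constant, and restrict to $N^\perp$. There I would argue existence via the Frank--Wolfe theorem in its extension to convex piecewise-quadratic functions. A more self-contained route writes $F_\lambda(\delta)=\phi(A\delta,\tilde A\delta)$ with $\phi(p,q)=\|p-\zeta\|^2+\lambda\|(q-c)_+\|^2$. One then checks that the image $L=\{(A\delta,\tilde A\delta)\}$ is a closed subspace, and that $\phi$ restricted to $L$ attains its infimum. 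For the latter, note that $\phi$ is a convex function that is the maximum-free sum of a strictly convex quadratic in $p$ and a convex polyhedral-quadratic in $q$. On each of the finitely many orthant pieces (fixing which components of $q-c$ are positive), $\phi$ is a convex quadratic on a polyhedron, bounded below. A convex quadratic bounded below on a polyhedron attains its minimum by Frank--Wolfe, and taking the minimum over pieces gives attainment.

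For uniqueness of $A\delta$ and $(\tilde A\delta-c)_+$, take two minimisers $\delta_0,\delta_1$. Convexity makes $F_\lambda$ constant on the segment between them. Each summand is convex in $\delta$, so each is affine along the segment. The first summand is a quadratic in $\delta$ with leading term $\|A(\delta_1-\delta_0)\|^2s^2$; it is affine only if $A\delta_0=A\delta_1$. For each inactive $t$, the term $(s_t)_+^2$ is affine along a segment of $s_t$ only if it is constant or $s_t$ stays in $\le0$. The strictly convex part on $s>0$ therefore forces equal positive parts. Hence $g(\lambda)$ and $e(\lambda)$ are well defined.

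\emph{Part (d).} From $\lambda\Esc(\delta)\le F_\lambda(\delta)\le F_\lambda(0)=\|\zeta\|^2+\lambda E_0$, dividing by $\lambda$ gives the first bound. If $\QM\neq\varnothing$, I would attain $G^\star$ by Frank--Wolfe: $\gated$ is a quadratic bounded below on the polyhedron $\QM$. At a point of $\QM$ achieving $G^\star$ the penalty vanishes, so $F_\lambda\le G^\star$ for minimisers, and $\lambda\Esc\le F_\lambda\le G^\star$.

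\emph{Part (e).} I would use the standard exchange argument. For $\lambda<\mu$, add the optimality inequalities $g(\lambda)+\lambda e(\lambda)\le g(\mu)+\lambda e(\mu)$ and $g(\mu)+\mu e(\mu)\le g(\lambda)+\mu e(\lambda)$. The sum gives $(\mu-\lambda)(e(\lambda)-e(\mu))\ge0$, so $e$ is non-increasing. Substituting back into the first inequality gives $g(\lambda)\le g(\mu)$.

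\emph{Part (f).} The forward direction is (d). Conversely, if $\QM=\varnothing$, $\Esc$ is positive everywhere. Its minimum is attained, again by the orthant-piece Frank--Wolfe argument, or because it is the squared distance of $c$ from the closed set $\tilde A\mathbb R^k+\mathbb R^{n_-}_{\ge0}$, a finitely generated cone translate. That minimum therefore lower-bounds $e(\lambda)$.

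\emph{The final sentence.} This follows from the remark in \S\ref{sec:partIV} that the escape equals the closed$\to$open term at $\theta=0$.
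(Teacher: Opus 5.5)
Your proof is correct. Parts (a), (b), (d) and (e) match the paper's almost line for line; the paper packages your exchange argument for (e) as Lemma~\ref{lem:exchange}. Your uniqueness argument in (c) is the paper's idea in different form. You note that a convex sum that is constant on the segment forces each convex summand to be affine there. The paper reaches the same conclusion through the parallelogram identity and a midpoint gap for $\psi(u)=u_+^2$, checked in three cases.

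Where you genuinely differ is attainment. The paper lifts to a single quadratic programme in $(\delta,s)$ with $s\ge\tilde A\delta-c$ and $s\ge0$. It applies Frank--Wolfe once and recovers $F_\lambda$ by minimising out $s$. It reuses the same lift, with objective $\|s\|^2$, to show that $\min_\delta\Esc$ is attained in (f). You instead cover $\mathbb R^k$ by the finitely many sign-pattern polyhedra on which $F_\lambda$ is a single quadratic, and apply Frank--Wolfe on each. For this, take the pieces closed: $(\tilde A\delta-c)_j\ge0$ for $j\in S$ and $\le0$ otherwise. The formula still holds on their shared boundaries, whereas ``components positive'' gives non-closed sets. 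With this route your preliminary quotient by $\ker A\cap\ker\tilde A$ is unnecessary.

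For (f), your identity $\min_\delta\Esc=\operatorname{dist}\bigl(c,\tilde A\mathbb R^k+\mathbb R^{n_-}_{\ge0}\bigr)^2$ is a good alternative. The set is a finitely generated convex cone (a cone, not a translate of one), hence closed. So the distance is attained, and it is positive exactly when $c$ lies outside the cone, that is, when $\QM=\varnothing$. This Farkas-type reading of the obstruction is something the paper's lift does not make explicit. The paper's lift buys one uniform device for (c) and (f); yours avoids auxiliary variables and characterises geometrically when confinement is impossible.
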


The proof of (e) uses one elementary lemma.

\begin{lemma}[Exchange]\label{lem:exchange}
Let $\mathcal D$ be a set, let $g,P:\mathcal D\to\mathbb R$ be functions, and let $\lambda_1<\lambda_2$ be
real numbers. If $x_i\in\mathcal D$ minimises $g+\lambda_iP$ over $\mathcal D$ for $i=1,2$, then
$P(x_2)\le P(x_1)$. If in addition $\lambda_1\ge0$, then $g(x_2)\ge g(x_1)$.
\end{lemma}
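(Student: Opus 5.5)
The plan is the standard exchange argument: add the two optimality inequalities and cancel the terms in $g$.

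Since $x_1$ minimises $g+\lambda_1P$, comparing with $x_2$ gives
\[
g(x_1)+\lambda_1P(x_1)\le g(x_2)+\lambda_1P(x_2).
\]
Since $x_2$ minimises $g+\lambda_2P$, comparing with $x_1$ gives
\[
g(x_2)+\lambda_2P(x_2)\le g(x_1)+\lambda_2P(x_1).
\]
All four values are finite reals because $g$ and $P$ are real-valued, so the two inequalities can be added. The $g$ terms cancel and leave
\[
(\lambda_2-\lambda_1)\bigl(P(x_2)-P(x_1)\bigr)\le0.
\]
Since $\lambda_2-\lambda_1>0$, this gives $P(x_2)\le P(x_1)$, the first claim.

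For the second claim, rearrange the first inequality as $g(x_1)-g(x_2)\le\lambda_1\bigl(P(x_2)-P(x_1)\bigr)$. The bracket is $\le0$ by the first claim, and $\lambda_1\ge0$, so the right-hand side is $\le0$. Hence $g(x_2)\ge g(x_1)$.

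Nothing here is a real obstacle. The only point to check is that the sign hypothesis $\lambda_1\ge0$ is needed only in the second step. If $\lambda_1<0$, multiplying the non-positive bracket by $\lambda_1$ would flip the sign, and the conclusion about $g$ could fail. This hypothesis is why the lemma is stated for $\lambda_1\ge0$.

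In the application to Theorem~\ref{thm:confine}(e), take $g=\gated$ and $P=\Esc$. Both values are single-valued on the minimiser sets by part (c), so $e$ and $g$ inherit the stated monotonicity on $(0,\infty)$.
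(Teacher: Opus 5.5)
Your proof is correct and is the paper's argument: you add the two optimality inequalities to get $(\lambda_2-\lambda_1)\bigl(P(x_2)-P(x_1)\bigr)\le0$, then rewrite the first inequality as $g(x_1)-g(x_2)\le\lambda_1\bigl(P(x_2)-P(x_1)\bigr)\le0$ using $\lambda_1\ge0$. Your side remark that the claim about $g$ can fail when $\lambda_1<0$ is also right, though the lemma does not need it.
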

\begin{proof}
Optimality of $x_1$ at $\lambda_1$ and of $x_2$ at $\lambda_2$ gives
$g(x_1)+\lambda_1P(x_1)\le g(x_2)+\lambda_1P(x_2)$ and $g(x_2)+\lambda_2P(x_2)\le g(x_1)+\lambda_2P(x_1)$.
Adding the two inequalities gives $(\lambda_2-\lambda_1)\big(P(x_2)-P(x_1)\big)\le0$, and since
$\lambda_2-\lambda_1>0$, $P(x_2)\le P(x_1)$. The first inequality can be rewritten as
$g(x_1)-g(x_2)\le\lambda_1\big(P(x_2)-P(x_1)\big)$. When $\lambda_1\ge0$ the right-hand side is the product
of a non-negative and a non-positive number, hence at most $0$, so $g(x_2)\ge g(x_1)$.
\end{proof}

\begin{proof}[Proof of Theorem~\ref{thm:confine}]
Throughout, $\psi(u):=u_+^2$ for $u\in\mathbb R$, so that the penalty is
$\sum_{j\in I_-}\psi\big((\tilde A\delta-c)_j\big)$.

\emph{(a)} The function $\psi$ is convex and non-decreasing, being the composition of the convex map
$u\mapsto u_+$ with the map $s\mapsto s^2$, which is convex and non-decreasing on $[0,\infty)$. It is
continuously differentiable with $\psi'(u)=2u_+$, and $|\psi'(u)-\psi'(u')|\le2|u-u'|$, so $\psi'$ is
$2$-Lipschitz. Each penalty summand is $\psi$ composed with an affine function of $\delta$, hence convex and
continuously differentiable with Lipschitz gradient, and $\|A\delta-\zeta\|^2$ is a convex quadratic whose
gradient $2A^\top(A\delta-\zeta)$ is Lipschitz. A non-negative combination of such functions, here with weights
$1$ and $\lambda>0$, has the same three properties.

\emph{(b)} Since $\psi\ge0$ and $\psi(u)=0$ if and only if $u\le0$, the penalty vanishes if and only if
$(\tilde A\delta-c)_j\le0$ for every $j$, that is, on $\QM$; elsewhere it is the sum of the squared positive
violations. The containment $\QM\supseteq\PX\cap\{\hat w=\hat w\odot M\}$ holds because $\QM$ imposes only
the inactive block of $\PX$ (\S\ref{sec:partIV}).

\emph{(c), existence.} Consider the quadratic programme
\[
\min\big\{\|A\delta-\zeta\|^2+\lambda\|s\|^2:\ (\delta,s)\in\mathbb R^k\times\mathbb R^{n_-},\ s\ge\tilde
A\delta-c,\ s\ge0\big\}.
\]
Its feasible set is a polyhedron, and it is non-empty because it contains $(0,(-c)_+)$; its objective is a
quadratic bounded below by $0$ on it. By the Frank--Wolfe theorem \citep{frank1956algorithm}, a quadratic
function bounded below on a non-empty polyhedron attains its infimum there, so the programme has a minimiser
$(\bar\delta,\bar s)$. For
fixed $\delta$ the constraints on $s$ are the componentwise bounds $s_j\ge\max((\tilde A\delta-c)_j,0)$, and
$s_j^2$ is increasing on $[0,\infty)$, so the minimum over $s$ is attained at $s=(\tilde A\delta-c)_+$ and
equals $F_\lambda(\delta)$. Hence, for every $\delta$,
$F_\lambda(\delta)\ge\|A\bar\delta-\zeta\|^2+\lambda\|\bar s\|^2\ge F_\lambda(\bar\delta)$, where the second
inequality holds because $\bar s$ is feasible for $\bar\delta$. So $\bar\delta$ minimises $F_\lambda$.

\emph{(c), invariance.} Let $\delta_1,\delta_2$ be minimisers, let $F^\star$ be the minimum and let
$\delta_m=\frac12(\delta_1+\delta_2)$. Put $y_i=A\delta_i-\zeta$ and $u_i=\tilde A\delta_i-c$ for $i=1,2$. The
parallelogram identity gives
$\|\frac12(y_1+y_2)\|^2=\frac12\|y_1\|^2+\frac12\|y_2\|^2-\frac14\|y_1-y_2\|^2$, and for each $j$ the midpoint
gap $\nu_j:=\frac12\psi(u_{1j})+\frac12\psi(u_{2j})-\psi\big(\frac12(u_{1j}+u_{2j})\big)$ is non-negative
because $\psi$ is convex. Adding these,
\[
F^\star\le F_\lambda(\delta_m)=F^\star-\tfrac14\|y_1-y_2\|^2-\lambda\sum_j\nu_j .
\]
Since $\lambda>0$ and every subtracted term is non-negative, $A\delta_1=A\delta_2$ and $\nu_j=0$ for every
$j$. We show that $\nu_j=0$ forces $(u_{1j})_+=(u_{2j})_+$. Write $a=\min(u_{1j},u_{2j})$ and
$a'=\max(u_{1j},u_{2j})$, and consider three exhaustive cases. If $a'\le0$, both positive parts are $0$. If
$a\ge0$, then $\psi(u)=u^2$ on $[a,a']$ and $\nu_j=\frac14(a'-a)^2$, so $a=a'$. If $a<0<a'$, the midpoint
$\bar u=\frac12(a+a')$ satisfies $\bar u<\frac12a'$, so $\psi(\bar u)\le\frac14a'^2$ and
$\nu_j\ge\frac12a'^2-\frac14a'^2>0$, which is excluded. Hence $(\tilde A\delta_1-c)_+=(\tilde A\delta_2-c)_+$,
and $\gated$ and $\Esc$, which are functions of $A\delta$ and of $(\tilde A\delta-c)_+$, take one value each
on the minimiser set.

\emph{(d)} At $\delta=0$, $F_\lambda(0)=\|\zeta\|^2+\lambda E_0$ (\S\ref{sec:partIV}). Let
$F_\lambda(\delta)\le F_\lambda(0)$. Since $\gated(\delta)\ge0$,
$\lambda\Esc(\delta)\le F_\lambda(\delta)\le\|\zeta\|^2+\lambda E_0$, and dividing by $\lambda>0$ gives
\eqref{eq:massbound}. Now let $\QM\neq\varnothing$. Then $\QM$ is a non-empty polyhedron on which the
quadratic $\gated$ is bounded below by $0$, so by the Frank--Wolfe theorem $G^\star$ is attained, at some
$\delta_Q\in\QM$; by (b) the penalty vanishes there, so $F_\lambda(\delta_Q)=G^\star$. Let
$F_\lambda(\delta)\le G^\star$. Then $\lambda\Esc(\delta)\le G^\star-\gated(\delta)\le G^\star$, which is
\eqref{eq:massbound2}, and a minimiser $\delta_\lambda$ qualifies because
$F_\lambda(\delta_\lambda)\le F_\lambda(\delta_Q)=G^\star$.

\emph{(e)} Let $0<\lambda_1<\lambda_2$. By (c) both $F_{\lambda_1}$ and $F_{\lambda_2}$ have minimisers, and
Lemma~\ref{lem:exchange} with $\mathcal D=\mathbb R^k$, $g=\gated$ and $P=\Esc$ gives
$e(\lambda_2)\le e(\lambda_1)$ and $g(\lambda_2)\ge g(\lambda_1)$; by (c) these values do not depend on which
minimisers are chosen.

\emph{(f)} If $\QM\neq\varnothing$, (d) gives $0\le e(\lambda)\le G^\star/\lambda\to0$. Conversely, let
$\QM=\varnothing$. The programme $\min\{\|s\|^2:(\delta,s)\in\mathbb R^k\times\mathbb R^{n_-},\ s\ge\tilde
A\delta-c,\ s\ge0\}$ attains its minimum by the Frank--Wolfe theorem, as in (c), and by the same partial
minimisation over $s$ that minimum equals $\min_\delta\Esc(\delta)$, which is therefore attained. It is
positive: a minimiser $\delta'$ with $\Esc(\delta')=0$ would satisfy $\tilde A\delta'\le c$, that is,
$\delta'\in\QM=\varnothing$. Hence $e(\lambda)\ge\min_\delta\Esc>0$ for every $\lambda>0$, and $e(\lambda)$
does not tend to $0$.

Finally, for $\theta=0$ the escape $\sum_{t\in I_-}(\hat z_t)_+^2$ is the sum of $\hat z_t^{\,2}$ over the
closed tokens with $\hat z_t>0$, which is the closed$\to$open term, and at $\delta=0$ it is $E_0$.
\end{proof}

\begin{remark}[Certifying the feasibility hypothesis]\label{rem:E0}
The bound that tends to zero is \eqref{eq:massbound2}, and its hypothesis is cheap to certify: $\QM\neq\varnothing$
is the feasibility of the linear system $\tilde A\delta\le c$, and the vector $\hat w=0$ is a witness whenever
the unit's bias satisfies $b\le\theta$, since every token then has $z_{\hat w}(x)=b\le\theta$. At $\theta=0$
this holds for $81.2\%$ of OPT-1.3b and $95.7\%$ of OPT-125m wall units (a property of the biases alone). This
witness certifies membership of $\QM$, which asks only the inactive block;
when both blocks are non-empty, $\hat w=0$ lies in $\PX$ itself only when $b=\theta$, which is why the
theorem is stated over $\QM$. By Theorem~\ref{thm:confine}(f), a masked weight vector respecting the
closed side is exactly what confinement requires.
\end{remark}

\begin{remark}[The $\theta>0$ case, and why the penalty is written with $\max(u-\theta,0)$]
\label{rem:fatconvex}
Writing the penalty as $\act_\theta(z_{\hat w})^2$ instead, the form our implementation uses,
gives $u^2\,\mathds 1[u>\theta]$, which for $\theta>0$ jumps from $0$ to $\theta^2$ at $u=\theta$ and is
neither continuous nor convex. For $\theta=0$ the two forms coincide, so every ReLU result optimises the
analysed objective. For $\theta>0$ the implemented penalty
$P(\hat w)=\sum_{\mathcal X\cap\mathcal C_\le}\act_\theta(z_{\hat w})^2$ is exactly the
closed$\to$open term of Eq.~\eqref{eq:identity}, it dominates the escape pointwise and it is zero on
$\QM$. The argument of Theorem~\ref{thm:confine}(d) uses only these facts and non-negativity, so for the
implemented objective $\gated+\lambda P$: every $\hat w$ with objective at most that of $w\odot M$ has
closed$\to$open mass at most $\|\zeta\|^2/\lambda+P(w\odot M)$, and, if $\QM\neq\varnothing$, every $\hat w$
with objective at most $G^\star$ has closed$\to$open mass at most $G^\star/\lambda$. These statements use
$\act_\theta(u)=u\,\mathds 1[u>\theta]$ of Definition~\ref{def:wall}. Under the implementation's
convention $u\ge\theta$, the penalty at a point of $\QM$ equals $\theta^2$ times the number of inactive
tokens on the wall, so it vanishes on $\{\tilde A\delta<c\}$; when that set is non-empty it is dense in
$\QM$, so for every $\epsilon>0$ some $\hat w$ has objective at most $G^\star+\epsilon$, and every such
$\hat w$ has closed$\to$open mass at most $(G^\star+\epsilon)/\lambda$.
\end{remark}

The objectives $\postw(\lambda)$ price the exact objective in the same way, for the model's activation $\act$:
$\postw(\lambda)(\hat w)=\sum_{t\in I_+}(\act(\hat z_t)-\act(z_t))^2+\lambda\sum_{t\in I_-}(\act(\hat z_t)-\act(z_t))^2$,
so that $\postw(1)$ is the exact post-activation error; for $\act=\act_\theta$ it is
$\sum_{t\in I_+}(\act_\theta(\hat z_t)-z_t)^2+\lambda\sum_{t\in I_-}\act_\theta(\hat z_t)^2$ and
$\postw(1)=\post$. On a SwiGLU block the suffix $v$ ($\gatedhv$, $\postwv$) weights every token term by
$v_t^2$, where $v_t$ is the unit's up-projection output on token $t$, which the repair does not change.

\subsection{Predicted activation-pattern signatures}\label{sec:signature}

With the mask fixed and only the functional varied, Theorem~\ref{thm:escape} leaves the inactive faces
unpriced by $\gated$, which should therefore show a closed$\to$open share far above the other objectives, while
Theorem~\ref{thm:confine} and Remark~\ref{rem:fatconvex} bound that mass for $\gatedh$, and every $\hat w$
with $\post(\hat w)\le\post(w\odot M)$ has closed$\to$open mass at most $\post(w\odot M)$, since each term of
Eq.~\eqref{eq:identity} is non-negative.
Measured against the dense pattern (Table~\ref{tab:bucketsfull}, OPT-1.3b, $s=0.7$), $\gated$'s
crossings are closed$\to$open by $1.15\%$ to $0.18\%$ of pairs, and $\pre$'s run the other way,
open$\to$closed by $0.52\%$ to $0.34\%$; the closed$\to$open term holds $0.372$ of $\gated$'s residual,
against $0.171$ for $\gatedh(1)$, $0.111$ for $\post$ and $0.049$ for $\gatedh(30)$.

%% file: app_theory.tex
\section{Proof of Theorem~\ref{thm:escape}}\label{app:proof}

We use the notation for one row of \S\ref{sec:partIV}: the bias is fixed at $b$,
$\delta=\hat w_K-w_K\in\mathbb R^k$, $\gated=\|A\delta-\zeta\|^2$ and $\Esc=\|(\tilde A\delta-c)_+\|^2$ with
$c=(\theta-z^M_t)_{t\in I_-}$. For a matrix $B$, $B^+$ denotes its Moore--Penrose pseudo-inverse.

\begin{theorem*}[Full form of Theorem~\ref{thm:escape}]
Let $\theta\ge0$, and let one unit, its mask and its calibration tokens be as in \S\ref{sec:partIV}.
\par\textbf{(i)} The objective $\gated(\delta)=\|A\delta-\zeta\|^2$ depends on the calibration tokens only
through $A$ and $\zeta$. Changing the inactive rows $\tilde A$ or the inactive pre-activations $z^M_{I_-}$
changes neither $\gated$ nor its minimiser set, which is $A^+\zeta+\ker A$.
\par\textbf{(ii)} Assume $\ker A\not\subseteq\ker\tilde A$. This holds if $\operatorname{rank}X_K=k$ and
$\operatorname{rank}A<k$, and $\operatorname{rank}A<k$ holds whenever $n_+<k$. Then there is $v\in\ker A$
such that, for every minimiser $\delta$ of $\gated$ and every $\tau\ge0$, the point $\delta+\tau v$ is a
minimiser, and $\Esc(\delta+\tau v)\to\infty$ as $\tau\to\infty$.
\par\textbf{(iii)} Let the calibration tokens $x_1,\dots,x_n$ be i.i.d.\ draws from a distribution $D$ on
$\mathbb R^d$ with $\mathbb E\|x\|^2<\infty$, let the mask be fixed, and assume
$\Sigma_+:=\mathbb E\big[\mathds 1[z_w(x)>\theta]\,x_Kx_K^\top\big]\succ0$. Put
$h:=\mathbb E\big[\mathds 1[z_w(x)>\theta]\,x_K\langle w_{K^c},x_{K^c}\rangle\big]\in\mathbb R^k$ and
$\delta^\star:=\Sigma_+^{-1}h$, let $w^\star$ be the masked vector with $w^\star_K=w_K+\delta^\star$, and put
\[
e^\star:=\mathbb E\Big[\mathds 1[z_w(x)\le\theta]\,\big(z_{w^\star}(x)-\theta\big)_+^2\Big].
\]
Then, almost surely, $A$ has full column rank for all large $n$, so that the minimiser
$\delta_n=(A^\top A)^{-1}A^\top \zeta$ of $\gated$ is unique; $\delta_n\to\delta^\star$; and
$\frac1n\Esc(\delta_n)\to e^\star$. The point $w^\star$ depends on $D$ only through its restriction to
$\mathcal C_>$, while $e^\star$ depends on its restriction to $\mathcal C_\le$ as well. Finally, $e^\star\ge0$,
and $e^\star>0$ if and only if $D\big(z_w(x)\le\theta<z_{w^\star}(x)\big)>0$.
\end{theorem*}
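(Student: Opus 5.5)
The plan is to prove the three parts in order, working throughout in the $\delta$-parametrisation of \S\ref{sec:partIV}, where $\gated=\|A\delta-\zeta\|^2$ and $\Esc=\|(\tilde A\delta-c)_+\|^2$.

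\textbf{Part (i).} This part is bookkeeping. $A$ collects the kept coordinates of the active tokens. $\zeta_t=\langle w_{K^c},x_{t,K^c}\rangle$ is indexed by $I_+$ only. Neither depends on $\tilde A$ or on $z^M_{I_-}$. The minimiser set of a least-squares objective is the solution set of the normal equations $A^\top A\delta=A^\top\zeta$, which is $A^+\zeta+\ker A$.

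\textbf{Part (ii).} First, if $\operatorname{rank}X_K=k$ and $A$ is rank-deficient, take any $0\ne v\in\ker A$. Then $X_Kv\ne0$, and since $Av=0$ this forces $\tilde Av\ne0$. So $\ker A\not\subseteq\ker\tilde A$. Also $n_+<k$ gives $\operatorname{rank}A\le n_+<k$.

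Next, pick $v\in\ker A\setminus\ker\tilde A$. Replacing $v$ by $-v$ if needed, some coordinate $j$ has $(\tilde Av)_j>0$. Then $\delta+\tau v$ stays in the affine minimiser set, and $\gated$ is constant there. Meanwhile
\[
\Esc(\delta+\tau v)\ge\bigl((\tilde A\delta-c)_j+\tau(\tilde Av)_j\bigr)_+^2\to\infty .
\]
The sign choice is independent of $\delta$, so one $v$ works for every minimiser.

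\textbf{Part (iii).} This is the main work. The plan is the strong law of large numbers applied to
\[
\tfrac1nA^\top A=\tfrac1n\sum_t\mathds 1[z_w(x_t)>\theta]x_{t,K}x_{t,K}^\top\to\Sigma_+,
\qquad
\tfrac1nA^\top\zeta\to h .
\]
The required integrability follows from $\mathbb E\|x\|^2<\infty$ and Cauchy--Schwarz. Since $\Sigma_+\succ0$ and eigenvalues are continuous, $\frac1nA^\top A$ is almost surely invertible for large $n$. Continuity of inversion then gives $\delta_n\to\delta^\star$.

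For the escape, write $f_\delta(x)=\mathds 1[z_w(x)\le\theta](z_{w\odot M}(x)+\langle\delta,x_K\rangle-\theta)_+^2$, so that $\frac1n\Esc(\delta_n)=\frac1n\sum_tf_{\delta_n}(x_t)$. The obstacle is that $\delta_n$ is random and data-dependent. I would handle it with a Lipschitz-type bound: $u\mapsto u_+^2$ satisfies $|a_+^2-b_+^2|\le|a-b|(|a|+|b|)$. This gives
\[
|f_\delta(x)-f_{\delta^\star}(x)|\le\|\delta-\delta^\star\|\,\|x\|\,\bigl(2|z_{w\odot M}(x)-\theta|+(\|\delta\|+\|\delta^\star\|)\|x\|\bigr).
\]
The empirical average of the right-hand side's coefficient is almost surely bounded by the SLLN, because $|z_{w\odot M}(x)|\le\|w\|\|x\|+|b|$ has finite second moment. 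Hence $\frac1n\sum_t|f_{\delta_n}-f_{\delta^\star}|\to0$. Separately, $\frac1n\sum_tf_{\delta^\star}(x_t)\to e^\star$ by the SLLN, since $f_{\delta^\star}$ is integrable.

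The dependence claims are then read off the formulas. $\Sigma_+$ and $h$ carry the indicator of $\mathcal C_>$, so $w^\star$ depends only on $D$ restricted to $\mathcal C_>$. $e^\star$ integrates over $\mathcal C_\le$, so it depends on that restriction as well.

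Finally, the integrand of $e^\star$ is nonnegative. It is positive exactly on the event $\{z_w(x)\le\theta<z_{w^\star}(x)\}$. A nonnegative function has positive expectation if and only if it is positive on a set of positive measure, which gives the equivalence.
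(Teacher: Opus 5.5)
Your proposal is correct and follows the paper's proof step for step: the normal-equations description $A^+\zeta+\ker A$ in (i); the kernel direction $v$, sign-flipped so that $(\tilde Av)_j>0$, in (ii); and in (iii) the strong law for $\frac1nA^\top A$ and $\frac1nA^\top\zeta$, then the bound $|a_+^2-b_+^2|\le|a-b|(|a|+|b|)$ to handle the data-dependent $\delta_n$, with positivity read off the nonnegative integrand. The one point you leave implicit is that $\sup_n\|\delta_n\|<\infty$, which follows from $\delta_n\to\delta^\star$ and is needed to bound the empirical average of your coefficient; the paper makes this explicit.
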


The body's affine subspace in (ii) is the minimiser set of (i), and in body (iii) ``a point $w^\star$ the
active tokens determine'' means that $w^\star$ depends on $D$ only through its restriction to $\mathcal C_>$.

\begin{proof}
\emph{(i)} The formula $\gated(\delta)=\|A\delta-\zeta\|^2$ involves $A$ and $\zeta$ only. Its minimisers are the
solutions of the normal equations $A^\top A\delta=A^\top \zeta$. The vector $A^+\zeta$ is one of them, because
$A^\top AA^+=A^\top$, and two solutions differ by an element of $\ker(A^\top A)=\ker A$, since $A^\top Au=0$
implies $\|Au\|^2=u^\top A^\top Au=0$; so the minimiser set is $A^+\zeta+\ker A$. Neither $\tilde A$ nor
$z^M_{I_-}$ occurs in any of this: they enter only the inactive constraints
$(\tilde A\delta)_j\le\theta-z^M_j$, $j\in I_-$. Finally $\ker A\cap\ker\tilde A=\ker X_K$, because the rows
of $X_K$ are those of $A$ together with those of $\tilde A$.

\emph{(ii)} Since $\ker A\not\subseteq\ker\tilde A$, there is $v\in\ker A$ with $\tilde Av\neq0$, and so an
index $j\in I_-$ with $(\tilde Av)_j\neq0$; replacing $v$ by $-v\in\ker A$ if necessary, $(\tilde Av)_j>0$.
For the sufficient condition: if $\operatorname{rank}X_K=k$ and $\operatorname{rank}A<k$, then $\ker A$
contains some $u\neq0$, and $u\notin\ker\tilde A$ because $\ker A\cap\ker\tilde A=\ker X_K=\{0\}$; and
$\operatorname{rank}A\le n_+$, so $n_+<k$ gives $\operatorname{rank}A<k$. Now let $\delta$ be a minimiser
and $\tau\ge0$. Since $A(\delta+\tau v)=A\delta$, the point $\delta+\tau v$ has the minimal value of
$\gated$, so it is a minimiser. Token $j$'s pre-activation along the ray is
$\hat z_j(\tau)=z^M_j+(\tilde A\delta)_j+\tau(\tilde Av)_j$, affine and strictly increasing in $\tau$.
Hence it exceeds $\theta$ for every $\tau$ beyond a finite threshold, so beyond it the ray violates the
$j$-th inactive constraint and lies outside $\QM$ and $\PX$; and
\[
\Esc(\delta+\tau v)\ \ge\ \big(\hat z_j(\tau)-\theta\big)_+^2\ \xrightarrow[\tau\to\infty]{}\ \infty .
\]

\emph{(iii), the minimiser.} By the strong law of large numbers,
$\frac1nA^\top A=\frac1n\sum_t\mathds 1[z_t>\theta]\,x_{t,K}x_{t,K}^\top\to\Sigma_+$ and
$\frac1nA^\top \zeta=\frac1n\sum_t\mathds 1[z_t>\theta]\,x_{t,K}\langle w_{K^c},x_{t,K^c}\rangle\to h$ almost
surely; both summands are integrable, since their norms are at most $\|x\|^2$ and $\|w\|\,\|x\|^2$ and
$\mathbb E\|x\|^2<\infty$. The smallest eigenvalue is a continuous function of a symmetric matrix and
$\Sigma_+\succ0$, so $\frac1nA^\top A\succ0$ for all large $n$. Then $A$ has full column rank, $\gated$ is
strictly convex, and its unique minimiser is $\delta_n=(\frac1nA^\top A)^{-1}\frac1nA^\top \zeta$, which
converges to $\delta^\star$ because inversion is continuous on invertible matrices. The population
objective $\mathbb E\big[\mathds 1[z_w(x)>\theta]\,(\langle\delta,x_K\rangle-\langle
w_{K^c},x_{K^c}\rangle)^2\big]$ is a convex quadratic in $\delta$ with Hessian $2\Sigma_+\succ0$ and unique
minimiser $\delta^\star$; its indicator is evaluated at the dense pre-activation and does not depend on
$\delta$. Both $\Sigma_+$ and $h$ are expectations of functions that vanish off $\mathcal C_>$, so
$w^\star$ depends on $D$ only through its restriction to $\mathcal C_>$; the indicator
$\mathds 1[z_w(x)\le\theta]$ in $e^\star$ brings in the restriction to $\mathcal C_\le$.

\emph{(iii), the escape.} For $x\in\mathbb R^d$ and $\delta\in\mathbb R^k$ let
$f(x,\delta)=\mathds 1[z_w(x)\le\theta]\,\big(z_{w\odot M}(x)+\langle\delta,x_K\rangle-\theta\big)_+^2$, so
that $\Esc(\delta)=\sum_tf(x_t,\delta)$. For reals $a$ and $a'$,
$|a_+^2-(a')_+^2|\le|a-a'|\,(|a|+|a'|)$, and $|z_{w\odot M}(x)-\theta|\le|b-\theta|+\|w\|\,\|x\|$. Hence, for
$\delta,\delta'\in\mathbb R^k$,
\[
|f(x,\delta)-f(x,\delta')|\le\|\delta-\delta'\|\,\|x\|\,\big(2|b-\theta|+(2\|w\|+\|\delta\|+\|\delta'\|)\|x\|\big)
\le C(\delta,\delta')\,\|\delta-\delta'\|\,(1+\|x\|^2),
\]
with $C(\delta,\delta')=2|b-\theta|+2\|w\|+\|\delta\|+\|\delta'\|$, using $\|x\|\le1+\|x\|^2$ and
$\|x\|^2\le1+\|x\|^2$. Since $\delta_n\to\delta^\star$ almost surely, the sequence $\delta_n$ is bounded, so
$C(\delta_n,\delta^\star)\le\bar C$ for a finite $\bar C$ and all large $n$, and
\[
\Big|\frac1n\sum_tf(x_t,\delta_n)-\frac1n\sum_tf(x_t,\delta^\star)\Big|\le
\bar C\,\|\delta_n-\delta^\star\|\,\frac1n\sum_t(1+\|x_t\|^2)\ \longrightarrow\ 0
\]
almost surely, because the average converges to $1+\mathbb E\|x\|^2<\infty$. The bound holds for every
realisation of the data, so the dependence of $\delta_n$ on the same tokens does not matter. Finally
$\frac1n\sum_tf(x_t,\delta^\star)\to\mathbb E f(x,\delta^\star)=e^\star$ by the strong law, since
$0\le f(x,\delta^\star)\le\big(|b-\theta|+(\|w\|+\|\delta^\star\|)\|x\|\big)^2$ is integrable.

\emph{(iii), positivity.} The integrand of $e^\star$ is non-negative, so $e^\star\ge0$, and the positivity
clause is Lemma~\ref{lem:positive} below with $w^\circ=w^\star$.
\end{proof}

\begin{lemma}[When the limit escape is positive]\label{lem:positive}
In the setting of (iii), let $w^\circ\in\mathbb R^d$ be any masked weight vector and
$e(w^\circ):=\mathbb E\big[\mathds 1[z_w(x)\le\theta]\,(z_{w^\circ}(x)-\theta)_+^2\big]$, so that
$e^\star=e(w^\star)$. Then $e(w^\circ)>0$ if and only if $D\big(z_w(x)\le\theta<z_{w^\circ}(x)\big)>0$.
\end{lemma}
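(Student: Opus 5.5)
The lemma is a statement about when the expectation of a non-negative random variable vanishes. I will reduce it to the standard fact that $\mathbb E[Y]=0$ for $Y\ge0$ holds if and only if $Y=0$ almost surely. Put
\[
Y:=\mathds 1[z_w(x)\le\theta]\,\big(z_{w^\circ}(x)-\theta\big)_+^2 ,
\]
so that $e(w^\circ)=\mathbb E[Y]$. The first task is to check that $e(w^\circ)$ is well defined and finite. As in the escape step of the proof of Theorem~\ref{thm:escape}(iii), since $z_{w^\circ}(x)=\langle w^\circ,x\rangle+b$ we have $0\le Y\le(|b-\theta|+\|w^\circ\|\,\|x\|)^2$. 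This bound is integrable because $\mathbb E\|x\|^2<\infty$. The random variable $Y$ is also measurable, being a product of an indicator of a half-space and a continuous function of $x$.

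The key pointwise observation is that $Y(x)>0$ holds exactly when both factors are non-zero. The indicator is non-zero if and only if $z_w(x)\le\theta$. The factor $(z_{w^\circ}(x)-\theta)_+^2$ is positive if and only if $z_{w^\circ}(x)>\theta$. Hence
\[
\{Y>0\}=\{x:\ z_w(x)\le\theta<z_{w^\circ}(x)\}=:E .
\]

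With this identification I will argue each direction separately. If $D(E)=0$, then $Y=0$ almost surely, and therefore $e(w^\circ)=0$. If $D(E)>0$, I write $E=\bigcup_{j\ge1}\{Y>1/j\}$ as an increasing union. Continuity of measure then gives some $j$ with $D(Y>1/j)>0$, and so $\mathbb E[Y]\ge j^{-1}D(Y>1/j)>0$. Taking $w^\circ=w^\star$ yields the positivity clause of Theorem~\ref{thm:escape}(iii).

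There is no real obstacle here. The only care needed is in the step $D(E)>0\Rightarrow\mathbb E[Y]>0$: positivity of $Y$ on a set of positive measure must be upgraded to a uniform lower bound on a set of positive measure, which the countable union argument does. I also need to be precise about the strict versus non-strict inequalities at $\theta$, so that $\{Y>0\}$ is exactly $E$.
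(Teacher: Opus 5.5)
Your proof is correct and follows the paper's argument. The integrand is non-negative and positive exactly on $\{z_w(x)\le\theta<z_{w^\circ}(x)\}$, and a non-negative function has positive expectation if and only if it is positive on a set of positive measure; you additionally write out the integrability bound and the continuity-of-measure step, which the paper leaves implicit.
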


\begin{proof}
The integrand of $e(w^\circ)$ is non-negative and is positive exactly on the set
$\{z_w\le\theta<z_{w^\circ}\}$; an expectation of a non-negative function is positive if and only if the
function is positive on a set of positive measure, which gives the equivalence.
\end{proof}

\paragraph{The regime of our measurements.} At OPT-1.3b, $s=0.7$, the closed-form solver of
Appendix~\ref{app:repair:exp} fits $57\,344$ tokens; a row of gate density $0.035$ has $n_+\approx2\,007$
active tokens against $k\approx614$ kept columns, yet $39.2\%$ of rows have $n_+<k$, so both (ii) and (iii) occur.

\section{Experimental setup and noise floors}\label{app:setup}

\begin{table}[!htb]\centering\small
\caption{Models. The \emph{wall matrix} is the one whose output passes through the activation and which the
repair modifies; $\theta$ is recovered at load time by bisection on the resolved \texttt{act\_fn}; the density
column is the mask-only gate density at sparsity $s$ (Pythia-160m has none). Activations: ReLU, GELU
\citep{hendrycks2016gelu} and the thresholded FATReLU \citep{kurtz2020fatrelu}; SwiGLU blocks
\citep{shazeer2020glu}.}
\label{tab:models}
{\setlength{\tabcolsep}{4pt}%
\begin{tabular}{lllcrrr}
\toprule
model & activation & block & $\theta$ & $n_{\mathrm{wall}}$ & $s$ & mask-only density \\
\midrule
OPT-125m \citep{zhang2022opt}         & ReLU    & plain FFN & $0$      & 12 & $0.7$ & $0.0589$ \\
OPT-350m         & ReLU    & plain FFN & $0$      & 24 & $0.7$ & $0.0848$ \\
OPT-1.3b         & ReLU    & plain FFN & $0$      & 24 & $0.7$ & $0.0352$ \\
Pythia-160m \citep{biderman2023pythia} & GELU  & plain FFN & $0$      & 12 & $0.7$ & --- \\
Pythia-410m      & GELU    & plain FFN & $0$      & 24 & $0.5$ & $0.1428$ \\
MiniCPM-S-1B-sft \citep{song2025prosparse} & FATReLU & SwiGLU & $0.03$ & 52 & $0.5$ & $0.1899$ \\
ProSparse-7B \citep{song2025prosparse}    & FATReLU & SwiGLU    & $0.01$   & 32 & $0.6$ & $0.163$ \\
ReluLLaMA-7B \citep{sparsellm2023relullamafix} & ReLU  & SwiGLU    & $0$      & 32 & $0.6$ & $0.339$ \\
\bottomrule
\end{tabular}}
\end{table}

\paragraph{Activation and masks.} The activation is resolved from the model config and verified against the
loaded \texttt{act\_fn} on $8192$ random inputs, and the LLaMA-family models require a BOS token (ProSparse
scores $6.07$ nats on our wiki probe without it and $3.54$ with it). The mask is a Wanda-style saliency,
$\mathrm{sal}_{ij}=|W_{ij}|\cdot\|X_j\|_2$, ranked within each output row and thresholded at the target
sparsity. The $\|X_j\|_2$ statistics are accumulated in a single forward pass over the calibration set on the
\emph{dense} model, and the mask is frozen and shared by every objective in a cell. All \texttt{Linear} modules
inside transformer blocks are masked, and a $2{:}4$ mask keeps the top $2$ of every $4$ consecutive input
channels.

\paragraph{Repair.} Repair is applied to the wall matrices alone (\texttt{fc1} on OPT,
\texttt{dense\_h\_to\_4h} on GPT-NeoX, \texttt{gate\_proj} on LLaMA-family blocks), the matrices whose output
passes through the activation; attention and the output matrix are pruned and left mask-only for every objective.
Every objective is solved identically: Adam, learning rate
$3\times10^{-4}$, batch $1024$ reconstruction tokens, step cap $3000$, early stopping on a held-out slice of the
calibration tokens ($20\%$, at most $8\,192$ tokens) scored with its \emph{own} objective, the mask re-applied after every
step so that all objectives optimise the same free parameters, and the loss normalised by the target's own mean
square so that a shared learning rate does not favour one objective. The input $X$ to layer $\ell$ is taken from
the current, already pruned and repaired model (with dense inputs, $\pre$ and $\post$ are worse than no repair by
$0.08$ and $0.04$ nats on OPT-125m at $s=0.5$), and the target is the dense weights on those same inputs, the
convention of Eq.~\eqref{eq:identity}.

\paragraph{Data.} Calibration is drawn from a slice of MiniPile \citep{kaddour2023minipile}, a subset of The
Pile \citep{gao2020pile}; the held-out in-domain probe comes from the same file past an offset, with a guard
that the calibration span does not reach it, and the out-of-domain probe from a WikiText-103 slice.
WikiText-2 is the raw test set at context $2048$, cut into non-overlapping windows with no BOS per window,
on which our evaluation code reproduces the published dense perplexities, $27.663$ vs $27.65$ (OPT-125m) and
$14.624$ vs $14.62$ (OPT-1.3b).

\paragraph{Noise floors and evaluation.}\label{app:noise} At calibration seeds $4000$ and $91000$ the
no-repair baseline reproduces exactly, because the seed controls the reconstruction minibatch order, not the
saliency; the largest spread of a repair objective on the $256$-token probe, $\gatedh(30)$ on OPT-1.3b at
$s=0.7$, is the probe floor of \NoiseFloor{} nats used throughout and the equivalence margin $\delta$ of
Table~\ref{tab:necessity}. On WikiText-2 at context $2048$, calibration seeds $4000$, $77\,000$ and $91\,000$
give a spread over repair objectives with median $0.0079$ and maximum \WTFloorMax{} nats, whose 90th percentile,
\WTFloorVal{} nats, is the WikiText-2 floor. Evaluated inside the same run on identical repaired weights, the
$256$-token probe understates the repair effect of the published protocol on MiniCPM-S-1B by $2.02\times$ at
$s=0.4$ and $4.18\times$ at $s=0.5$, and on OPT-1.3b at $s=0.6$--$0.8$ the two evaluations rank $\pre$, $\gated$,
$\post$, $\gatedh$ and $\postw$ identically; WikiText-2 figures use the published protocol.

\section{Verification of the identity}\label{app:identity}

Eq.~\eqref{eq:identity} is an identity, so a check verifies the implementation. Every fine-tuning run forms both
sides of Eq.~\eqref{eq:identity} on every wall at every evaluation and stops if their relative gap reaches
$10^{-4}$; on OPT the logged gap is exactly zero. On pruned layers (the Wanda-style mask alone, no repair), each
pair's first-order residual $\act_\theta(z_{\hat w})-\act_\theta(z_w)-\mathds 1[z_w>\theta](z_{\hat w}-z_w)$,
formed on real activations, is compared with $|z_{\hat w}|$ on the pairs that change cell and with $0$ elsewhere,
the case split behind Eq.~\eqref{eq:identity}; the relative gap is at floating-point level in every cell
(Table~\ref{tab:identitygap}).

\begin{center}\tabcaptype\small
\caption{Relative gap of the case split behind Eq.~\eqref{eq:identity} on pruned layers (mask alone): the summed
absolute first-order residual against the summed $|z_{\hat w}|$ of the pairs that change cell, maximum over
the cells of each family.}
\label{tab:identitygap}
\input{tab_identitygap_compact}
\end{center}

\section{Repair experiments}\label{app:repair:exp}

\begin{table}[!htb]\centering\small
\caption{WikiText-2 perplexity at context 2048, OPT-1.3b, dense $=14.62$. One frozen mask per row; only
the functional changes. $\post<\pre$ at every sparsity, and at $3$ of the $4$ by more than the
WikiText-2 noise floor \WTFloorVal{} nats.}
\label{tab:main}
\begin{tabular}{lrrrrrr}
\toprule
$s$ & no repair & $\pre$ & $\gated$ & $\post$ & $\gatedh(30)$ & $\postw(10)$ \\
\midrule
$0.5$ & $19.76$ & $20.76$ & $20.41$ & $20.54$ & $\mathbf{20.34}$ & $20.54$ \\
$0.6$ & $29.37$ & $34.33$ & $31.56$ & $31.46$ & $\mathbf{30.04}$ & $30.91$ \\
$0.7$ & $97.25$ & $153.64$ & $100.66$ & $100.07$ & $\mathbf{84.50}$ & $94.12$ \\
$0.8$ & $2643.77$ & $2334.70$ & $879.56$ & $\mathbf{740.52}$ & $902.77$ & $776.50$ \\
\bottomrule
\end{tabular}
\end{table}

\begin{table}[!htb]\centering\small
\caption{Shares of the post-activation residual by term, and the fraction of $(t,i)$ pairs crossing in each
direction, OPT-1.3b $s=0.7$, averaged over the 24 wall matrices.}
\label{tab:bucketsfull}
\begin{tabular}{lrrrrr}
\toprule
& \multicolumn{3}{c}{residual share} & \multicolumn{2}{c}{flips} \\
\cmidrule(lr){2-4}\cmidrule(lr){5-6}
objective & in-cell & o$\to$c & c$\to$o & o$\to$c & c$\to$o \\
\midrule
$\gated$        & $.602$ & $.027$ & $\mathbf{.372}$ & $.0018$ & $.0115$ \\
$\gatedh(1)$    & $.767$ & $.061$ & $.171$ & $.0027$ & $.0070$ \\
$\post$         & $.747$ & $.143$ & $.111$ & $.0044$ & $.0046$ \\
$\pre$          & $.805$ & $.137$ & $.058$ & $.0052$ & $.0034$ \\
$\postw(10)$    & $.724$ & $.224$ & $\mathbf{.053}$ & $.0071$ & $.0025$ \\
$\gatedh(30)$   & $.802$ & $.150$ & $\mathbf{.049}$ & $.0065$ & $.0028$ \\
\bottomrule
\end{tabular}
\end{table}

\paragraph{Design.} The objectives compared in a cell share one frozen Wanda-style mask and everything else of
Appendix~\ref{app:setup}, so only the functional changes; probe gaps are compared with the floor of
\NoiseFloor{} nats and WikiText-2 gaps with WikiText-2's own (Appendix~\ref{app:noise}), and
Table~\ref{tab:main} gives the WikiText-2 perplexities on OPT-1.3b.

\paragraph{The degeneracy is in the objective, not the solver.} A degenerate minimiser is a claim about the
objective, so it is tested on the minimiser. Solving $\gated$ in closed form in \texttt{fp64} (ridge
$10^{-7}$ of the mean Gram diagonal), per row, with no step limit, costs $+1.9416$ nats against not
repairing on OPT-1.3b at $s=0.7$, and adds $237\%$ of the error it was asked to remove. Inside the same
solver, the identical \texttt{fp64} normal equations reproduce the iteratively solved pre-activation fit to
$-0.0029$ nats. The early-stopped $\gated$ fit is at $-0.01$ nats. Results reported for this family in the
pruning regime are therefore mediated by early stopping: the early-stopped fit has already left $\PX$, at
$1.271\times$ the mask-only gate density, but halts long before the
escape the closed-form solve reaches. $\pre$ stays accurate under the closed-form solve because it is a least-squares
problem in every calibration token, not because it respects $\PX$; $\gated$ discards the inactive rows.

\paragraph{Movement and stopping controls.} At the $3000$-step budget on OPT-1.3b, $s=0.7$, $\post$ reaches
$-0.1098$ nats at relative weight movement $0.255$, while \texttt{prematch}, which descends $\pre$ but halts
at $\post$'s movement, reaches $+0.2894$, and \texttt{pre\_pval}, which descends $\pre$ but selects its
checkpoint on the post-activation validation loss, reaches $+0.2258$.

\paragraph{Crossing directions.} Table~\ref{tab:bucketsfull} is the source of the crossing rates of
\S\ref{sec:signature}. Because Eq.~\eqref{eq:identity} is exact, each objective's residual also splits by term:
$\gated$ carries $37.2\%$ of its residual in closed$\to$open, $2.2\times$ the next-largest objective ($\gatedh$ at
$\lambda=1$, $17.1\%$), and the \emph{least} open$\to$closed mass ($2.7\%$). Both follow from what it charges:
nothing for a closed$\to$open token, since it sums only over $\mathcal C_{>}$, and, at $\theta=0$, at least
the true cost $z_w^2$ for an open$\to$closed one, since it charges $(z_{\hat w}-z_w)^2\ge z_w^2$ when
$z_{\hat w}\le0<z_w$.

\section{The \texorpdfstring{$\lambda$}{lambda} sweep and the \texorpdfstring{$\post<\pre$}{post < pre} counts}\label{app:lambda}

\begin{table}[!htb]\centering\small
\caption{Held-out probe loss of the best $\lambda>1$ member of an objective family against its $\lambda=1$
member, sharing configuration, no-repair anchor, calibration seed and budgets. \textbf{Bold}: the gap exceeds
$3\times$ the seed-noise floor (Appendix~\ref{app:noise}) and a second calibration seed reproduces its sign;
``seed 2 gap'' is the gap of the same $\lambda$ at that seed, ``other groups'' the gap in the other
comparability groups of the configuration.}
\label{tab:lambda}
\resizebox{\linewidth}{!}{\input{tab_lambda}}
\end{table}

Of the configurations in which a $\lambda=1$ and a $\lambda>1$ member of one family are comparable, a
$\lambda>1$ member attains lower held-out loss in \LamWin{}, in \LamWinFloor{} by more than the noise floor,
on plain-FFN ReLU, GELU and a SwiGLU block with a thresholded ReLU (Table~\ref{tab:lambda}); \LamRows{} rows
are bold, at $5$ to $27$ times the floor, the second seed reproduces the sign in \LamSeedTwo{} cases, and the
best $\lambda$ runs from $3$ to $30$. Orderings are read within runs, because across runs that measure one
configuration under different protocols $\post-\pre$ spreads by up to $1.04$ nats.

\begin{center}\small
\begin{tabular}{lrl}
\toprule
population & $\post<\pre$ & exception \\
\midrule
probe cells, $256$-token evaluation           & \PostPreProbe & OPT-125m $s{=}0.3$, $+0.0058$ \\
configurations, under \emph{every} protocol    & \PostPreConfig & the same one \\
WikiText-2 cells, $2048$-token evaluation    & \PostPreWT & --- \\
7B ReLU-family cells                         & \SevenB & --- \\
\bottomrule
\end{tabular}
\end{center}

The probe cells span OPT-125m, OPT-350m, OPT-1.3b and two GELU models, the WikiText-2 cells the three OPT
models and a SwiGLU block with a thresholded ReLU, and the 7B cells ReluLLaMA-7B and ProSparse-7B over four
unstructured sparsities and a $2{:}4$ mask, at two calibration seeds. The 7B gaps run $0.0086$--$0.1594$ nats, all beyond that
family's seed-noise floor of \SevenBFloorVal{} nats; collapsing the two seeds, which share a mask, leaves $10$
independent configurations, all negative, a sign test at $p=2^{-10}=9.8\times10^{-4}$.

%% file: tab_identitygap_compact.tex
\begin{tabular}{llrrr}
\toprule
family & models & $s$ & cells & max.\ identity gap \\
\midrule
OPT & OPT-125m, OPT-350m, OPT-1.3b & $0.1$--$0.9$ & $27$ & $5.80\times10^{-5}$ \\
7B ReLU family & ReluLLaMA-7B, ProSparse-7B & $0.1$--$0.9$ & $18$ & $9.02\times10^{-5}$ \\
\bottomrule
\end{tabular}

%% file: tab_lambda.tex
\begin{tabular}{llrrlrr}
\toprule
model & activation & $\lambda=1$ & best $\lambda>1$ & attained at & seed 2 gap & other groups \\
\midrule
MiniCPM-S-1B, $s{=}0.5$ & SwiGLU$+$FATReLU & $3.4636$ & $3.4594$ & $\postwv(10)$ & --- & $+0.0043$ \\
OPT-1.3b, $s{=}0.5$ & ReLU & $3.6847$ & $3.6805$ & $\postw(10)$ & --- & --- \\
OPT-1.3b, $s{=}0.6$ & ReLU & $4.0404$ & $\mathbf{3.9915}$ & $\gatedh(30)$ & $+0.0437$ & $+0.0283$ to $+0.0489$ \\
OPT-1.3b, $s{=}0.7$ & ReLU & $4.9442$ & $\mathbf{4.7631}$ & $\gatedh(30)$ & $+0.1715$ & $+0.1072$ to $+0.1811$ \\
OPT-1.3b, $s{=}0.8$ & ReLU & $6.4546$ & $6.4381$ & $\postw(3)$ & --- & $+0.0062$ to $+0.0165$ \\
OPT-125m, $s{=}0.7$ & ReLU & $5.6261$ & $5.6121$ & $\postw(10)$ & $+0.0086$ & --- \\
OPT-350m, $s{=}0.7$ & ReLU & $5.9919$ & $\mathbf{5.8906}$ & $\gatedh(30)$ & $+0.0950$ & $+0.0812$ to $+0.1014$ \\
Pythia-410m, $s{=}0.5$ & GELU & $4.6791$ & $\mathbf{4.6407}$ & $\postw(10)$ & $+0.0401$ & --- \\
Pythia-410m, $s{=}0.6$ & GELU & $5.8639$ & $\mathbf{5.8273}$ & $\postw(10)$ & $+0.0727$ & --- \\
\bottomrule
\end{tabular}

%% file: main.bbl
\begin{thebibliography}{63}
\providecommand{\natexlab}[1]{#1}
\providecommand{\url}[1]{\texttt{#1}}
\expandafter\ifx\csname urlstyle\endcsname\relax
  \providecommand{\doi}[1]{doi: #1}\else
  \providecommand{\doi}{doi: \begingroup \urlstyle{rm}\Url}\fi

\bibitem[Abbasi et~al.(2022)Abbasi, Nooralinejad, Braverman, Pirsiavash, and
  Kolouri]{abbasi2022shd}
Ali Abbasi, Parsa Nooralinejad, Vladimir Braverman, Hamed Pirsiavash, and
  Soheil Kolouri.
\newblock Sparsity and heterogeneous dropout for continual learning in the null
  space of neural activations.
\newblock In \emph{Conference on Lifelong Learning Agents (CoLLAs)}, volume 199
  of \emph{PMLR}, pp.\  617--628, 2022.

\bibitem[Aghasi et~al.(2017)Aghasi, Abdi, Nguyen, and
  Romberg]{aghasi2017nettrim}
Alireza Aghasi, Afshin Abdi, Nam Nguyen, and Justin Romberg.
\newblock {Net-Trim}: Convex pruning of deep neural networks with performance
  guarantee.
\newblock In \emph{Advances in Neural Information Processing Systems},
  volume~30, 2017.

\bibitem[Biderman et~al.(2024)Biderman, Portes, Gonzalez~Ortiz, Paul,
  Greengard, Jennings, King, Havens, Chiley, Frankle, Blakeney, and
  Cunningham]{biderman2024loraforgets}
Dan Biderman, Jacob Portes, Jose~Javier Gonzalez~Ortiz, Mansheej Paul, Philip
  Greengard, Connor Jennings, Daniel King, Sam Havens, Vitaliy Chiley, Jonathan
  Frankle, Cody Blakeney, and John~P. Cunningham.
\newblock {LoRA} learns less and forgets less.
\newblock \emph{Transactions on Machine Learning Research}, 2024.

\bibitem[Biderman et~al.(2023)Biderman, Schoelkopf, Anthony, Bradley, O'Brien,
  Hallahan, Khan, Purohit, Prashanth, Raff, Skowron, Sutawika, and van~der
  Wal]{biderman2023pythia}
Stella Biderman, Hailey Schoelkopf, Quentin Anthony, Herbie Bradley, Kyle
  O'Brien, Eric Hallahan, Mohammad~Aflah Khan, Shivanshu Purohit, USVSN~Sai
  Prashanth, Edward Raff, Aviya Skowron, Lintang Sutawika, and Oskar van~der
  Wal.
\newblock Pythia: A suite for analyzing large language models across training
  and scaling.
\newblock In \emph{Proceedings of the 40th International Conference on Machine
  Learning}, volume 202 of \emph{Proceedings of Machine Learning Research}.
  PMLR, 2023.

\bibitem[Botev et~al.(2017)Botev, Ritter, and Barber]{botev2017practical}
Aleksandar Botev, Hippolyt Ritter, and David Barber.
\newblock Practical {Gauss-Newton} optimisation for deep learning.
\newblock In \emph{International Conference on Machine Learning (ICML)}, 2017.
\newblock arXiv:1706.03662.

\bibitem[Clark et~al.(2018)Clark, Cowhey, Etzioni, Khot, Sabharwal, Schoenick,
  and Tafjord]{clark2018arc}
Peter Clark, Isaac Cowhey, Oren Etzioni, Tushar Khot, Ashish Sabharwal, Carissa
  Schoenick, and Oyvind Tafjord.
\newblock Think you have solved question answering? {Try} {ARC}, the {AI2}
  reasoning challenge, 2018.
\newblock arXiv:1803.05457.

\bibitem[{CodeParrot}(2022)]{codeparrot2022clean}
{CodeParrot}.
\newblock {CodeParrot} dataset cleaned, validation split.
\newblock
  \url{https://huggingface.co/datasets/codeparrot/codeparrot-clean-valid},
  2022.
\newblock Python files from public GitHub repositories, deduplicated and
  filtered.

\bibitem[DerSimonian \& Laird(1986)DerSimonian and Laird]{dersimonian1986meta}
Rebecca DerSimonian and Nan Laird.
\newblock Meta-analysis in clinical trials.
\newblock \emph{Controlled Clinical Trials}, 7\penalty0 (3):\penalty0 177--188,
  1986.

\bibitem[Du et~al.(2019)Du, Zhai, P{\'o}czos, and Singh]{du2019gradient}
Simon~S. Du, Xiyu Zhai, Barnab{\'a}s P{\'o}czos, and Aarti Singh.
\newblock Gradient descent provably optimizes over-parameterized neural
  networks.
\newblock In \emph{International Conference on Learning Representations
  (ICLR)}, 2019.
\newblock arXiv:1810.02054.

\bibitem[Du et~al.(2024)Du, Cheng, Luo, Qiu, Huang, Cheung, Cheng, and
  Fu]{du2024migu}
Wenyu Du, Shuang Cheng, Tongxu Luo, Zihan Qiu, Zeyu Huang, Ka~Chun Cheung,
  Reynold Cheng, and Jie Fu.
\newblock Unlocking continual learning abilities in language models.
\newblock In \emph{Findings of the Association for Computational Linguistics:
  EMNLP}, 2024.

\bibitem[Fang et~al.(2025)Fang, Jiang, Wang, Ma, Jie, Wang, He, and
  Chua]{fang2025alphaedit}
Junfeng Fang, Houcheng Jiang, Kun Wang, Yunshan Ma, Shi Jie, Xiang Wang,
  Xiangnan He, and Tat-seng Chua.
\newblock {AlphaEdit}: Null-space constrained knowledge editing for language
  models.
\newblock In \emph{International Conference on Learning Representations
  (ICLR)}, 2025.
\newblock arXiv:2410.02355.

\bibitem[Farajtabar et~al.(2020)Farajtabar, Azizan, Mott, and
  Li]{farajtabar2020ogd}
Mehrdad Farajtabar, Navid Azizan, Alex Mott, and Ang Li.
\newblock Orthogonal gradient descent for continual learning.
\newblock In \emph{International Conference on Artificial Intelligence and
  Statistics (AISTATS)}, 2020.
\newblock arXiv:1910.07104.

\bibitem[Frank \& Wolfe(1956)Frank and Wolfe]{frank1956algorithm}
Marguerite Frank and Philip Wolfe.
\newblock An algorithm for quadratic programming.
\newblock \emph{Naval Research Logistics Quarterly}, 3\penalty0
  (1--2):\penalty0 95--110, 1956.

\bibitem[Frantar \& Alistarh(2023)Frantar and Alistarh]{frantar2023sparsegpt}
Elias Frantar and Dan Alistarh.
\newblock {SparseGPT}: Massive language models can be accurately pruned in
  one-shot.
\newblock In \emph{Proceedings of the 40th International Conference on Machine
  Learning}, volume 202 of \emph{Proceedings of Machine Learning Research}.
  PMLR, 2023.

\bibitem[Frantar et~al.(2023)Frantar, Ashkboos, Hoefler, and
  Alistarh]{frantar2023gptq}
Elias Frantar, Saleh Ashkboos, Torsten Hoefler, and Dan Alistarh.
\newblock {GPTQ}: Accurate post-training quantization for generative
  pre-trained transformers.
\newblock In \emph{International Conference on Learning Representations}, 2023.

\bibitem[Gao et~al.(2020)Gao, Biderman, Black, Golding, Hoppe, Foster, Phang,
  He, Thite, Nabeshima, Presser, and Leahy]{gao2020pile}
Leo Gao, Stella Biderman, Sid Black, Laurence Golding, Travis Hoppe, Charles
  Foster, Jason Phang, Horace He, Anish Thite, Noa Nabeshima, Shawn Presser,
  and Connor Leahy.
\newblock The {Pile}: An 800{GB} dataset of diverse text for language modeling.
\newblock \emph{arXiv preprint arXiv:2101.00027}, 2020.

\bibitem[Gao et~al.(2024)Gao, Tow, Abbasi, Biderman, Black, DiPofi, Foster,
  Golding, Hsu, Le~Noac'h, Li, McDonell, Muennighoff, Ociepa, Phang, Reynolds,
  Schoelkopf, Skowron, Sutawika, Tang, Thite, Wang, Wang, and
  Zou]{gao2023lmeval}
Leo Gao, Jonathan Tow, Baber Abbasi, Stella Biderman, Sid Black, Anthony
  DiPofi, Charles Foster, Laurence Golding, Jeffrey Hsu, Alain Le~Noac'h,
  Haonan Li, Kyle McDonell, Niklas Muennighoff, Chris Ociepa, Jason Phang,
  Laria Reynolds, Hailey Schoelkopf, Aviya Skowron, Lintang Sutawika, Eric
  Tang, Anish Thite, Ben Wang, Kevin Wang, and Andy Zou.
\newblock The language model evaluation harness, 2024.
\newblock Zenodo record of v0.4.3; we use v0.4.12.

\bibitem[Hendrycks \& Gimpel(2016)Hendrycks and Gimpel]{hendrycks2016gelu}
Dan Hendrycks and Kevin Gimpel.
\newblock Gaussian error linear units ({GELU}s).
\newblock \emph{arXiv preprint arXiv:1606.08415}, 2016.

\bibitem[Heo et~al.(2019)Heo, Lee, Yun, and Choi]{heo2019boundary}
Byeongho Heo, Minsik Lee, Sangdoo Yun, and Jin~Young Choi.
\newblock Knowledge transfer via distillation of activation boundaries formed
  by hidden neurons.
\newblock In \emph{AAAI Conference on Artificial Intelligence}, 2019.
\newblock arXiv:1811.03233.

\bibitem[Hu et~al.(2022)Hu, Shen, Wallis, Allen-Zhu, Li, Wang, Wang, and
  Chen]{hu2022lora}
Edward~J. Hu, Yelong Shen, Phillip Wallis, Zeyuan Allen-Zhu, Yuanzhi Li, Shean
  Wang, Lu~Wang, and Weizhu Chen.
\newblock {LoRA}: Low-rank adaptation of large language models.
\newblock In \emph{International Conference on Learning Representations
  (ICLR)}, 2022.

\bibitem[Ibrahim et~al.(2024)Ibrahim, Th{\'e}rien, Gupta, Richter, Anthony,
  Lesort, Belilovsky, and Rish]{ibrahim2024continual}
Adam Ibrahim, Benjamin Th{\'e}rien, Kshitij Gupta, Mats~L. Richter, Quentin
  Anthony, Timoth{\'e}e Lesort, Eugene Belilovsky, and Irina Rish.
\newblock Simple and scalable strategies to continually pre-train large
  language models.
\newblock \emph{Transactions on Machine Learning Research (TMLR)}, 2024.

\bibitem[Jiang et~al.(2025)Jiang, Fang, Zhang, Bi, Zhang, Wang, Liang, and
  Wang]{jiang2025nse}
Houcheng Jiang, Junfeng Fang, Tianyu Zhang, Baolong Bi, An~Zhang, Ruipeng Wang,
  Tao Liang, and Xiang Wang.
\newblock Neuron-level sequential editing for large language models.
\newblock In \emph{Annual Meeting of the Association for Computational
  Linguistics (ACL)}, pp.\  16678--16702, 2025.

\bibitem[Jung et~al.(2020)Jung, Ahn, Cha, and Moon]{jung2020agscl}
Sangwon Jung, Hongjoon Ahn, Sungmin Cha, and Taesup Moon.
\newblock Continual learning with node-importance based adaptive group sparse
  regularization.
\newblock In \emph{Advances in Neural Information Processing Systems
  (NeurIPS)}, 2020.
\newblock arXiv:2003.13726.

\bibitem[Kaddour(2023)]{kaddour2023minipile}
Jean Kaddour.
\newblock The {MiniPile} challenge for data-efficient language models.
\newblock \emph{arXiv preprint arXiv:2304.08442}, 2023.

\bibitem[Kim et~al.(2025)Kim, El~Halabi, Park, Schaefer, Lee, Park, Lee, and
  Song]{kim2025guidedquant}
Jinuk Kim, Marwa El~Halabi, Wonpyo Park, Clemens~Js Schaefer, Deokjae Lee,
  Yeonhong Park, Jae~W. Lee, and Hyun~Oh Song.
\newblock {GuidedQuant}: Large language model quantization via exploiting end
  loss guidance.
\newblock In \emph{Proceedings of the 42nd International Conference on Machine
  Learning}, volume 267 of \emph{Proceedings of Machine Learning Research},
  pp.\  30011--30037. PMLR, 2025.

\bibitem[Kirkpatrick et~al.(2017)Kirkpatrick, Pascanu, Rabinowitz, Veness,
  Desjardins, Rusu, Milan, Quan, Ramalho, Grabska-Barwinska, Hassabis, Clopath,
  Kumaran, and Hadsell]{kirkpatrick2017ewc}
James Kirkpatrick, Razvan Pascanu, Neil Rabinowitz, Joel Veness, Guillaume
  Desjardins, Andrei~A. Rusu, Kieran Milan, John Quan, Tiago Ramalho, Agnieszka
  Grabska-Barwinska, Demis Hassabis, Claudia Clopath, Dharshan Kumaran, and
  Raia Hadsell.
\newblock Overcoming catastrophic forgetting in neural networks.
\newblock \emph{Proceedings of the National Academy of Sciences}, 114\penalty0
  (13):\penalty0 3521--3526, 2017.

\bibitem[Kong et~al.(2022)Kong, Liu, Wang, and Tao]{kong2022adns}
Yajing Kong, Liu Liu, Zhen Wang, and Dacheng Tao.
\newblock Balancing stability and plasticity through advanced null space in
  continual learning.
\newblock In \emph{European Conference on Computer Vision (ECCV)}, 2022.
\newblock arXiv:2207.12061.

\bibitem[Kurtz et~al.(2020)Kurtz, Kopinsky, Gelashvili, Matveev, Carr, Goin,
  Leiserson, Moore, Shavit, and Alistarh]{kurtz2020fatrelu}
Mark Kurtz, Justin Kopinsky, Rati Gelashvili, Alexander Matveev, John Carr,
  Michael Goin, William Leiserson, Sage Moore, Nir Shavit, and Dan Alistarh.
\newblock Inducing and exploiting activation sparsity for fast inference on
  deep neural networks.
\newblock In \emph{Proceedings of the 37th International Conference on Machine
  Learning}, volume 119 of \emph{Proceedings of Machine Learning Research}.
  PMLR, 2020.

\bibitem[Li et~al.(2021)Li, Gong, Tan, Yang, Hu, Zhang, Yu, Wang, and
  Gu]{li2021brecq}
Yuhang Li, Ruihao Gong, Xu~Tan, Yang Yang, Peng Hu, Qi~Zhang, Fengwei Yu, Wei
  Wang, and Shi Gu.
\newblock {BRECQ}: Pushing the limit of post-training quantization by block
  reconstruction.
\newblock In \emph{International Conference on Learning Representations}, 2021.

\bibitem[Liang \& Li(2024)Liang and Li]{liang2024inflora}
Yan-Shuo Liang and Wu-Jun Li.
\newblock {InfLoRA}: Interference-free low-rank adaptation for continual
  learning.
\newblock In \emph{IEEE/CVF Conference on Computer Vision and Pattern
  Recognition (CVPR)}, 2024.

\bibitem[Lin et~al.(2022)Lin, Yang, Fan, and Zhang]{lin2022trgp}
Sen Lin, Li~Yang, Deliang Fan, and Junshan Zhang.
\newblock {TRGP}: Trust region gradient projection for continual learning.
\newblock In \emph{International Conference on Learning Representations
  (ICLR)}, 2022.
\newblock arXiv:2202.02931.

\bibitem[Loshchilov \& Hutter(2019)Loshchilov and Hutter]{loshchilov2019adamw}
Ilya Loshchilov and Frank Hutter.
\newblock Decoupled weight decay regularization.
\newblock In \emph{International Conference on Learning Representations
  (ICLR)}, 2019.
\newblock arXiv:1711.05101.

\bibitem[Maclagan \& Sturmfels(2015)Maclagan and
  Sturmfels]{maclagan2015tropical}
Diane Maclagan and Bernd Sturmfels.
\newblock \emph{Introduction to Tropical Geometry}, volume 161 of
  \emph{Graduate Studies in Mathematics}.
\newblock American Mathematical Society, 2015.

\bibitem[Maragos et~al.(2021)Maragos, Charisopoulos, and
  Theodosis]{maragos2021tropical}
Petros Maragos, Vasileios Charisopoulos, and Emmanouil Theodosis.
\newblock Tropical geometry and machine learning.
\newblock \emph{Proceedings of the IEEE}, 109\penalty0 (5), 2021.

\bibitem[Martens \& Grosse(2015)Martens and Grosse]{martens2015kfac}
James Martens and Roger Grosse.
\newblock Optimizing neural networks with {Kronecker}-factored approximate
  curvature.
\newblock In \emph{International Conference on Machine Learning (ICML)},
  volume~37 of \emph{PMLR}, pp.\  2408--2417, 2015.

\bibitem[Merity et~al.(2017)Merity, Xiong, Bradbury, and
  Socher]{merity2017pointer}
Stephen Merity, Caiming Xiong, James Bradbury, and Richard Socher.
\newblock Pointer sentinel mixture models.
\newblock In \emph{International Conference on Learning Representations}, 2017.

\bibitem[Mirzadeh et~al.(2024)Mirzadeh, Alizadeh, Mehta, Del~Mundo, Tuzel,
  Samei, Rastegari, and Farajtabar]{mirzadeh2024relu}
Iman Mirzadeh, Keivan Alizadeh, Sachin Mehta, Carlo~C Del~Mundo, Oncel Tuzel,
  Golnoosh Samei, Mohammad Rastegari, and Mehrdad Farajtabar.
\newblock {ReLU} strikes back: Exploiting activation sparsity in large language
  models.
\newblock In \emph{International Conference on Learning Representations}, 2024.

\bibitem[Misiakos et~al.(2022)Misiakos, Smyrnis, Retsinas, and
  Maragos]{misiakos2022tropical}
Panagiotis Misiakos, Georgios Smyrnis, George Retsinas, and Petros Maragos.
\newblock Neural network approximation based on {Hausdorff} distance of
  tropical zonotopes.
\newblock In \emph{International Conference on Learning Representations
  (ICLR)}, 2022.

\bibitem[Mont{\'u}far et~al.(2014)Mont{\'u}far, Pascanu, Cho, and
  Bengio]{montufar2014linear}
Guido Mont{\'u}far, Razvan Pascanu, Kyunghyun Cho, and Yoshua Bengio.
\newblock On the number of linear regions of deep neural networks.
\newblock In \emph{Advances in Neural Information Processing Systems},
  volume~27, 2014.

\bibitem[Mukhoti et~al.(2024)Mukhoti, Gal, Torr, and Dokania]{mukhoti2024ldifs}
Jishnu Mukhoti, Yarin Gal, Philip H.~S. Torr, and Puneet~K. Dokania.
\newblock Fine-tuning can cripple your foundation model; preserving features
  may be the solution.
\newblock \emph{Transactions on Machine Learning Research (TMLR)}, 2024.

\bibitem[Pan et~al.(2025)Pan, Wang, Cao, Shi, Yang, Li, and Wang]{pan2025fine}
Haowen Pan, Xiaozhi Wang, Yixin Cao, Zenglin Shi, Xun Yang, Juanzi Li, and Meng
  Wang.
\newblock Precise localization of memories: A fine-grained neuron-level
  knowledge editing technique for {LLMs}.
\newblock In \emph{International Conference on Learning Representations
  (ICLR)}, 2025.
\newblock arXiv:2503.01090.

\bibitem[Paperno et~al.(2016)Paperno, Kruszewski, Lazaridou, Pham, Bernardi,
  Pezzelle, Baroni, Boleda, and Fern{\'a}ndez]{paperno2016lambada}
Denis Paperno, Germ{\'a}n Kruszewski, Angeliki Lazaridou, Quan~Ngoc Pham,
  Raffaella Bernardi, Sandro Pezzelle, Marco Baroni, Gemma Boleda, and Raquel
  Fern{\'a}ndez.
\newblock The {LAMBADA} dataset: Word prediction requiring a broad discourse
  context.
\newblock In \emph{Annual Meeting of the Association for Computational
  Linguistics (ACL)}, 2016.

\bibitem[Pilanci \& Ergen(2020)Pilanci and Ergen]{pilanci2020convex}
Mert Pilanci and Tolga Ergen.
\newblock Neural networks are convex regularizers: Exact polynomial-time convex
  optimization formulations for two-layer networks.
\newblock In \emph{Proceedings of the 37th International Conference on Machine
  Learning}, volume 119 of \emph{Proceedings of Machine Learning Research},
  pp.\  7695--7705. PMLR, 2020.

\bibitem[Saha \& Roy(2023)Saha and Roy]{saha2023sgp}
Gobinda Saha and Kaushik Roy.
\newblock Continual learning with scaled gradient projection.
\newblock In \emph{AAAI Conference on Artificial Intelligence}, 2023.
\newblock arXiv:2302.01386.

\bibitem[Saha et~al.(2021)Saha, Garg, and Roy]{saha2021gpm}
Gobinda Saha, Isha Garg, and Kaushik Roy.
\newblock Gradient projection memory for continual learning.
\newblock In \emph{International Conference on Learning Representations
  (ICLR)}, 2021.

\bibitem[Sakaguchi et~al.(2020)Sakaguchi, Le~Bras, Bhagavatula, and
  Choi]{sakaguchi2020winogrande}
Keisuke Sakaguchi, Ronan Le~Bras, Chandra Bhagavatula, and Yejin Choi.
\newblock {WinoGrande}: An adversarial {W}inograd schema challenge at scale.
\newblock In \emph{AAAI Conference on Artificial Intelligence}, 2020.

\bibitem[Schuirmann(1987)]{schuirmann1987tost}
Donald~J. Schuirmann.
\newblock A comparison of the two one-sided tests procedure and the power
  approach for assessing the equivalence of average bioavailability.
\newblock \emph{Journal of Pharmacokinetics and Biopharmaceutics}, 15\penalty0
  (6):\penalty0 657--680, 1987.

\bibitem[Serr{\`a} et~al.(2018)Serr{\`a}, Sur{\'i}s, Miron, and
  Karatzoglou]{serra2018hat}
Joan Serr{\`a}, D{\'i}dac Sur{\'i}s, Marius Miron, and Alexandros Karatzoglou.
\newblock Overcoming catastrophic forgetting with hard attention to the task.
\newblock In \emph{International Conference on Machine Learning (ICML)},
  volume~80, pp.\  4548--4557, 2018.
\newblock arXiv:1801.01423.

\bibitem[Shazeer(2020)]{shazeer2020glu}
Noam Shazeer.
\newblock {GLU} variants improve transformer.
\newblock \emph{arXiv preprint arXiv:2002.05202}, 2020.

\bibitem[Smyrnis \& Maragos(2020)Smyrnis and Maragos]{smyrnis2020multiclass}
Georgios Smyrnis and Petros Maragos.
\newblock Multiclass neural network minimization via tropical {Newton} polytope
  approximation.
\newblock In \emph{International Conference on Machine Learning (ICML)}, volume
  119 of \emph{PMLR}, 2020.

\bibitem[Song et~al.(2025)Song, Han, Zhang, Hu, Shi, Li, Chen, Liu, Li, Yang,
  and Sun]{song2025prosparse}
Chenyang Song, Xu~Han, Zhengyan Zhang, Shengding Hu, Xiyu Shi, Kuai Li, Chen
  Chen, Zhiyuan Liu, Guangli Li, Tao Yang, and Maosong Sun.
\newblock {ProSparse}: Introducing and enhancing intrinsic activation sparsity
  within large language models.
\newblock In \emph{Proceedings of the 31st International Conference on
  Computational Linguistics (COLING)}, 2025.
\newblock arXiv:2402.13516.

\bibitem[{SparseLLM Team}(2023)]{sparsellm2023relullamafix}
{SparseLLM Team}.
\newblock {ReluLLaMA-7B}.
\newblock \url{https://huggingface.co/SparseLLM/ReluLLaMA-7B}, 2023.
\newblock A ReLU-activated fine-tune of Llama-2-7B.

\bibitem[Sun et~al.(2024)Sun, Liu, Bair, and Kolter]{sun2024wanda}
Mingjie Sun, Zhuang Liu, Anna Bair, and J.~Zico Kolter.
\newblock A simple and effective pruning approach for large language models.
\newblock In \emph{International Conference on Learning Representations}, 2024.

\bibitem[Tang et~al.(2026)Tang, Hu, Liu, Ding, Zhang, Wu, and
  Zhang]{tang2026loranull}
Pengwei Tang, Xiaolin Hu, Yong Liu, Lizhong Ding, Dongjie Zhang, Xing Wu, and
  Debing Zhang.
\newblock Put the space of {LoRA} initialization to the extreme to preserve
  pre-trained knowledge.
\newblock In \emph{AAAI Conference on Artificial Intelligence}, 2026.
\newblock arXiv:2503.02659.

\bibitem[Tao et~al.(2023)Tao, Nawas, Mitchell, and Thakur]{tao2023aprnn}
Zhe Tao, Stephanie Nawas, Jacqueline Mitchell, and Aditya~V. Thakur.
\newblock Architecture-preserving provable repair of deep neural networks.
\newblock \emph{Proceedings of the ACM on Programming Languages}, 7\penalty0
  (PLDI):\penalty0 124, 2023.
\newblock arXiv:2304.03496.

\bibitem[Wang et~al.(2021)Wang, Li, Sun, and Xu]{wang2021adamnscl}
Shipeng Wang, Xiaorong Li, Jian Sun, and Zongben Xu.
\newblock Training networks in null space of feature covariance for continual
  learning.
\newblock In \emph{IEEE/CVF Conference on Computer Vision and Pattern
  Recognition (CVPR)}, 2021.
\newblock arXiv:2103.07113.

\bibitem[Wang et~al.(2023)Wang, Chen, Ge, Xia, Bao, Zheng, Zhang, Gui, and
  Huang]{wang2023olora}
Xiao Wang, Tianze Chen, Qiming Ge, Han Xia, Rong Bao, Rui Zheng, Qi~Zhang, Tao
  Gui, and Xuanjing Huang.
\newblock Orthogonal subspace learning for language model continual learning.
\newblock In \emph{Findings of the Association for Computational Linguistics:
  EMNLP}, 2023.

\bibitem[Welbl et~al.(2017)Welbl, Liu, and Gardner]{welbl2017sciq}
Johannes Welbl, Nelson~F. Liu, and Matt Gardner.
\newblock Crowdsourcing multiple choice science questions.
\newblock In \emph{Workshop on Noisy User-generated Text (W-NUT)}, 2017.

\bibitem[Wortsman et~al.(2022)Wortsman, Ilharco, Kim, Li, Kornblith, Roelofs,
  Gontijo-Lopes, Hajishirzi, Farhadi, Namkoong, and
  Schmidt]{wortsman2022wiseft}
Mitchell Wortsman, Gabriel Ilharco, Jong~Wook Kim, Mike Li, Simon Kornblith,
  Rebecca Roelofs, Raphael Gontijo-Lopes, Hannaneh Hajishirzi, Ali Farhadi,
  Hongseok Namkoong, and Ludwig Schmidt.
\newblock Robust fine-tuning of zero-shot models.
\newblock In \emph{IEEE/CVF Conference on Computer Vision and Pattern
  Recognition (CVPR)}, 2022.

\bibitem[Yang et~al.(2024)Yang, Li, Zhou, Song, Wu, Nie, and
  Ghanem]{yang2024corda}
Yibo Yang, Xiaojie Li, Zhongzhu Zhou, Shuaiwen~Leon Song, Jianlong Wu, Liqiang
  Nie, and Bernard Ghanem.
\newblock {CorDA}: Context-oriented decomposition adaptation of large language
  models for task-aware parameter-efficient fine-tuning.
\newblock In \emph{Advances in Neural Information Processing Systems
  (NeurIPS)}, 2024.
\newblock arXiv:2406.05223.

\bibitem[Zellers et~al.(2019)Zellers, Holtzman, Bisk, Farhadi, and
  Choi]{zellers2019hellaswag}
Rowan Zellers, Ari Holtzman, Yonatan Bisk, Ali Farhadi, and Yejin Choi.
\newblock {HellaSwag}: Can a machine really finish your sentence?
\newblock In \emph{Annual Meeting of the Association for Computational
  Linguistics (ACL)}, 2019.

\bibitem[Zhang et~al.(2018)Zhang, Naitzat, and Lim]{zhang2018tropical}
Liwen Zhang, Gregory Naitzat, and Lek-Heng Lim.
\newblock Tropical geometry of deep neural networks.
\newblock In \emph{Proceedings of the 35th International Conference on Machine
  Learning}, volume~80 of \emph{Proceedings of Machine Learning Research}, pp.\
   5824--5832. PMLR, 2018.

\bibitem[Zhang et~al.(2022)Zhang, Roller, Goyal, Artetxe, Chen, Chen, Dewan,
  Diab, Li, Lin, Mihaylov, Ott, Shleifer, Shuster, Simig, Koura, Sridhar, Wang,
  and Zettlemoyer]{zhang2022opt}
Susan Zhang, Stephen Roller, Naman Goyal, Mikel Artetxe, Moya Chen, Shuohui
  Chen, Christopher Dewan, Mona Diab, Xian Li, Xi~Victoria Lin, Todor Mihaylov,
  Myle Ott, Sam Shleifer, Kurt Shuster, Daniel Simig, Punit~Singh Koura, Anjali
  Sridhar, Tianlu Wang, and Luke Zettlemoyer.
\newblock {OPT}: Open pre-trained transformer language models.
\newblock \emph{arXiv preprint arXiv:2205.01068}, 2022.

\end{thebibliography}
